\documentclass{article}

\usepackage{microtype}
\usepackage{graphicx}
\usepackage{subcaption}
\usepackage{booktabs}
\usepackage{tabularx}
\usepackage{multirow}
\usepackage[export]{adjustbox}

\usepackage{amsmath}
\usepackage{amssymb}
\usepackage{mathtools}
\usepackage{amsthm}

\usepackage{algorithm}
\usepackage{algorithmic}

\usepackage{pifont}
\usepackage{comment}

\usepackage{hyperref}
\hypersetup{
  colorlinks=true,
  linkcolor=blue,
  citecolor=blue,
  urlcolor=blue,
  filecolor=blue,
  pdftitle={GOTabPFN: From Feature Ordering to Compact Tokenization for Tabular Foundation Models on High-Dimensional Data},
  pdfauthor={Al Zadid Sultan Bin Habib, Md Younus Ahamed, Prashnna Gyawali, Gianfranco Doretto, Donald A. Adjeroh},
  pdfkeywords={tabular foundation models, high-dimensional data, feature ordering, compact tokenization, TabPFN}
}

\usepackage[accepted]{icml2026}

\usepackage[accsupp]{axessibility}

\usepackage[capitalize,noabbrev]{cleveref}

\hypersetup{
  colorlinks=true,
  linkcolor=blue,
  citecolor=blue,
  urlcolor=blue,
  pdftitle={GOTabPFN: From Feature Ordering to Compact Tokenization for Tabular Foundation Models on High-Dimensional Data},
  pdfauthor={Al Zadid Sultan Bin Habib, Md Younus Ahamed, Prashnna Gyawali, Gianfranco Doretto, Donald A. Adjeroh},
  pdfsubject={ICML 2026 camera-ready paper},
  pdfkeywords={tabular foundation models, high-dimensional data, feature ordering, compact tokenization, TabPFN}
}

\pdfcatalog{/Lang (en-US)}

\newcommand{\acc}[2]{\ensuremath{#1_{\scriptscriptstyle #2}}}

\theoremstyle{plain}
\newtheorem{theorem}{Theorem}[section]
\newtheorem{proposition}[theorem]{Proposition}
\newtheorem{lemma}[theorem]{Lemma}
\newtheorem{corollary}[theorem]{Corollary}

\theoremstyle{definition}
\newtheorem{definition}[theorem]{Definition}

\theoremstyle{remark}
\newtheorem{remark}[theorem]{Remark}

\usepackage[disable,textsize=tiny]{todonotes}

\icmltitlerunning{GOTabPFN: From Feature Ordering to Compact Tokenization for Tabular Foundation Models on High-Dimensional Data}

\begin{document}

\twocolumn[

  \icmltitle{GOTabPFN: From Feature Ordering to Compact Tokenization for\\ Tabular Foundation Models on High-Dimensional Data}




  \begin{icmlauthorlist}
  \icmlauthor{Al Zadid Sultan Bin Habib}{wvu}
  \icmlauthor{Md Younus Ahamed}{wvu}
  \icmlauthor{Prashnna Kumar Gyawali}{wvu}
  \icmlauthor{Gianfranco Doretto}{utah}
  \icmlauthor{Donald A. Adjeroh}{wvu}
\end{icmlauthorlist}

\icmlaffiliation{wvu}{
Lane Department of Computer Science and Electrical Engineering,
West Virginia University, Morgantown, WV 26506, USA
}

\icmlaffiliation{utah}{
Scientific Computing and Imaging Institute \& Department of Biomedical Informatics,
The University of Utah, Salt Lake City, UT 84112, USA
}

\icmlcorrespondingauthor{Al Zadid Sultan Bin Habib}{ah00069@mix.wvu.edu}




  \icmlkeywords{Tabular foundation models, high-dimensional data, feature ordering, compact tokenization, TabPFN}

  \vskip 0.3in
]



\printAffiliationsAndNotice{}  

\begin{abstract}
We investigate how to make small tabular foundation models effective for High-Dimensional, Low-Sample Size (HDLSS) tabular prediction without retraining large backbones. We introduce Graph-guided Ordering with Local Refinement (GO-LR), show its equivalence to weighted Minimum Linear Arrangement, and interpret the practical solver as a TSP-path-style surrogate. We propose GOTabPFN,which builds on GO-LR, and a Neuro-Inspired Subunit Compression (NSC) unit to pool locally adjacent ordered features into meta-features, yielding a compact representation that makes TabPFN-style prediction practical in HDLSS regimes. Across tabular benchmarks, GOTabPFN improves stability and accuracy under tight token budgets. 
\end{abstract}
\section{Introduction}
\label{intro}
High-Dimensional, Low-Sample Size (HDLSS) tabular prediction remains a challenge: when $m\gg n$ (with $m$=no. of features, $n$=no. of samples), both learning and representation become costly. Tabular foundation models such as TabPFN and its variants are strong general-purpose baselines, but popular versions (e.g., TabPFN-2.5~\cite{tabpfn2.5}) are designed and benchmarked for inputs with up to roughly $2{,}000$ features, leaving many HDLSS domains (e.g., gene expression with $m \gg 2{,}000$) outside their intended operating range without prior feature selection or compression. This motivates representation strategies that reduce dimensionality under tight sample budgets while preserving predictive structure, so TabPFN-style learners remain effective in truly high-dimensional regimes. 

Permutation learning seeks an ordering of a finite set that improves a downstream objective, typically via differentiable relaxations that approximate discrete permutations in end-to-end neural training~\cite{p1,p2}. For tabular data, the lack of inherent spatial or temporal structure weakens inductive bias relative to vision or language, especially in HDLSS settings. Although tree-based methods remain strong baselines, learning cross-feature dependencies without overfitting is difficult; even simple models (e.g., MLPs or Lasso) can outperform advanced tabular approaches in $n \ll m$ regimes (ProtoGate~\cite{protogate}). This suggests that feature selection alone is often insufficient; we also need a learnable feature ordering that organizes correlated features into neighborhoods amenable to structured compression. We therefore formulate the Column Permutation Problem (CPP)~\cite{b6, cpp, matrix, cpp2, cpp3}: learn a data-driven column order that reduces redundancy, reveals long-range dependencies, and induces a useful sequential structure for downstream modules. In practice, CPP can be tackled via attention-based pointer mechanisms and graph-aware variants that generate permutations while encoding relational structure~\cite{pointer,gpn,pgn}.

Feature ordering has a long history in pattern recognition and is central to Incremental Attribute Learning (IAL), where features arrive sequentially and must be ranked before training~\cite{f1}. Unlike set-based models that assume order invariance~\cite{deepsets}, column order can expose redundancy and shape how models capture dependencies; even simple Fisher/correlation/entropy rankings reduce interference and error over unordered baselines~\cite{f2,f4}, motivating learned, task-aware ordering~\cite{f3,f7}. In deep tabular learning, Mambular~\cite{mambular} underscored the impact of ordering and~\citet{tabseq, dynatab} introduced explicit ordering algorithms in TabSeq and DynaTab, respectively. Other related efforts  show brittleness to column permutations, prompting permutation-invariant architectures~\citep{ack1,ack2} and TabICL~\citep{tabicl}, which ensembles across permutations. Beyond supervised prediction, COPER~\citep{coper} uses a permutation-based correlation objective for multi-view (image-table) clustering, and ROTATOR-LLM~\cite{wang2025advancing} studies feature ordering for LLM-based tabular inference.

While ordering can expose local structure, HDLSS tables introduce a second bottleneck: even a “good’’ permutation still leaves $m$ raw features to process, which is prohibitive when $m\gg n$. To make TabPFN-style predictors practical in this regime without changing the backbone, we introduce Neuro-Inspired Subunit Compression (NSC), motivated by subunit-style integration in cortical dendrites~\cite{b8,b9,b10,b11,b12,b16,b17}. NSC groups adjacent features along the GO-LR (Graph-guided Ordering with Local Refinement) axis into contiguous subunits and pools each into a meta-feature, reducing dimensionality from $m$ to $M$ ($M\ll m$), with $M$ tied to intrinsic dimension estimates from the covariance spectrum~\cite{b13,b14,b15}. Naïve compression often produces latent components without a stable coordinate system, yielding run and subsample-dependent representations that are 
not effective for TabPFN-style models, which assume a fixed, consistently parameterized input space~\cite{tabpfn,tabpfnv2}. We therefore design a structure-constrained compression interface that yields reproducible latent features within the feature budgets targeted by recent TabPFN variants~\cite{tabpfn2.5,beta,tabpfnwide}. 

\textbf{Our contributions:} 
\begin{itemize}
    \item We cast feature ordering as a combinatorial optimization problem, prove its NP-hardness, and propose MinLA-grounded ordering via GO-LR.
    \item We introduce scalable HDLSS compression via NSC, a neuro-inspired subunit-style pooling that is  controlled by intrinsic-dimension estimates.
    \item Building on the above, we propose GOTabPFN for analyzing HDLSS tabular data. Across HDLSS benchmarks, GOTabPFN improves accuracy and stability under tight feature budgets in high-dimensions.
\end{itemize}
\begin{figure}[t]
    \centering
    \includegraphics[width=\linewidth]{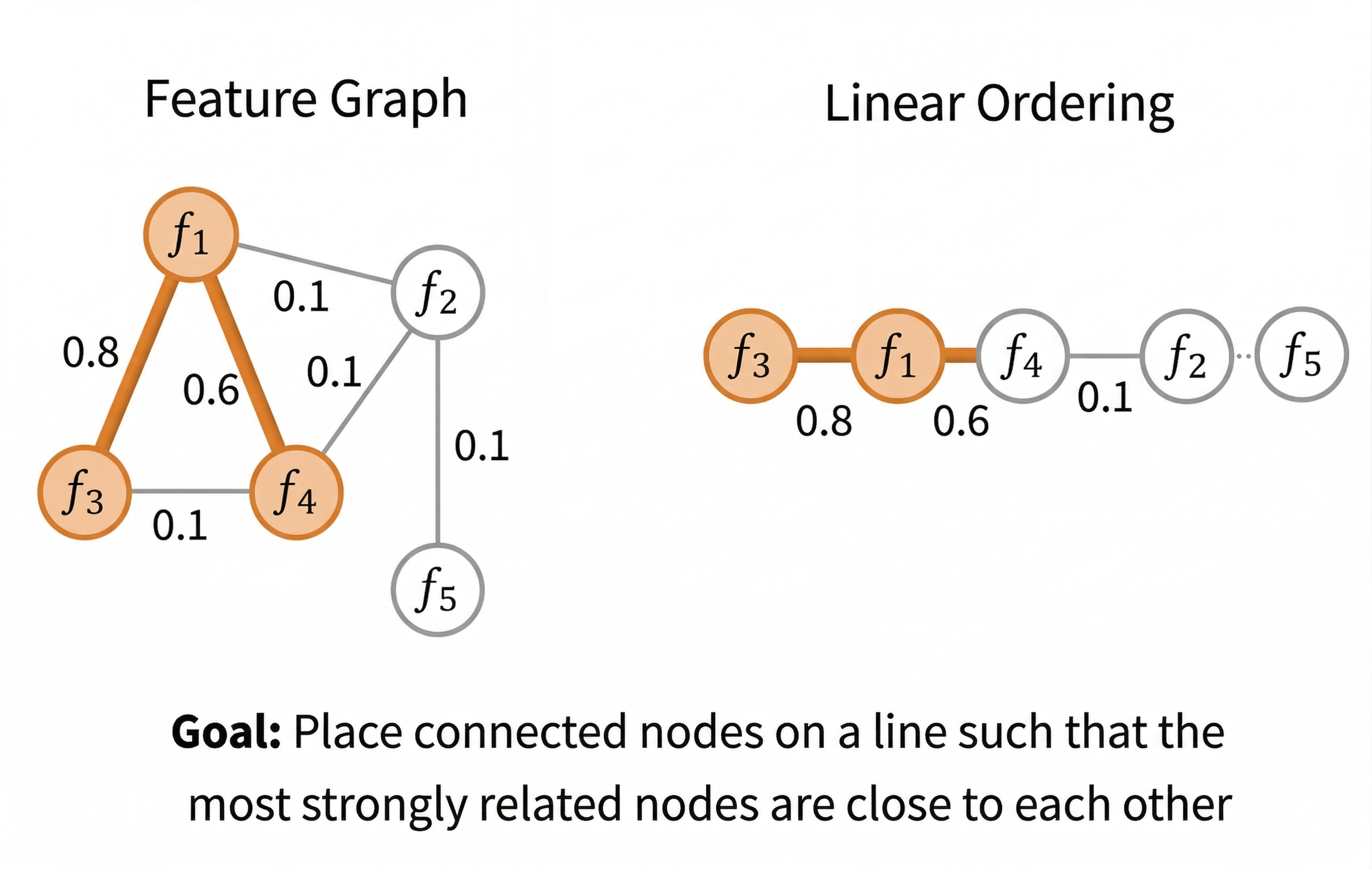}
    \caption{\textbf{Graph-based feature ordering.}
GO-LR linearizes a weighted feature graph to keep related features nearby for local segmentation and compression. It uses NNPath for local initialization, then refines the order with a global MinLA-style objective over pairwise placements. See Appendix~\ref{addc} for more clarifications.}
    \label{fig:fo_intuition}
    \vspace{-0.25in}
\end{figure}
\section{Related Work}
\label{relatedwork}
In Appendix~\ref{morerelated}, we 
provide more details on related work, including on tabular foundation models, the TabPFN family, HDLSS-specific models, and LLM-based tabular models. 

Existing approaches often struggle in HDLSS settings with $m\!\gg\!n$, since they either assume moderate feature counts or rely primarily on feature selection and task-specific tuning to cope with  very high dimensionality. GOTabPFN bridges this gap by coupling MinLA-grounded ordering (GO-LR) with subunit-style compression (NSC), yielding stable, low-dimensional representations that enable TabPFN-style predictors to operate effectively in truly high-dimensional regimes without modifying the TabPFN backbone.
\begin{figure*}[t]
    \centering
    \includegraphics[width=0.90\textwidth]{meta.png}
    \caption{
\textbf{Meta-feature construction.}
GO-LR first orders features globally; NSC then segments the ordered axis into contiguous neighborhoods and compresses each segment by PCA into a scalar meta-feature. The final vector \(Z(x)=(z_1,\ldots,z_M)\) is passed to the frozen TabPFN-2.5 head. See Appendix~\ref{addc} for additional clarifications.
}
    \label{fig:meta-feature}
    \vspace{-10pt}
\end{figure*}
\section{Methodology}
\textbf{Problem formulation.}
Let \(X\in\mathbb{R}^{n\times m}\) be the input matrix with \(n\) samples and \(m\) features. We define the sample partition \(\{I_c\}_{c=1}^k\) obtained by clustering the samples, and the cluster-restricted matrices in Eq.~\ref{prob}.
\begin{equation}
\label{prob}
X^{(c)}=X[I_c,:]\in\mathbb{R}^{n_c\times m},\qquad n_c=|I_c|
\end{equation}
For each \(X^{(c)}\), we construct the corresponding cluster-wise feature graph \(G_c=(V,E,w^{(c)})\), where \(V=\{1,\dots,m\}\) is the shared feature set and \(w^{(c)}_{ij}\) measures feature dissimilarity within cluster \(c\). The local permutation \(\pi_c\) is obtained by minimizing a MinLA-style dispersion objective on \(G_c\), and the final global permutation \(\Pi^\ast\) is obtained by aggregating local ranks across clusters. All permutations are over features, and GO-LR outputs a single global feature ordering \(\Pi^\ast\), not separate feature spaces that must later be rearranged across clusters. \(\Pi^\ast\) is then used for NSC segmentation and compression. Figs.~\ref{fig:fo_intuition}, \ref{fig:meta-feature}, and~\ref{fig:gotab_architecture} summarize the pipeline: GO-LR linearizes feature graphs, NSC segments and compresses contiguous ordered neighborhoods into meta-features, and the resulting tokens are passed to a frozen TabPFN-2.5 head within GOTabPFN.
\subsection{Feature Ordering as a Combinatorial Optimization Problem.}
\label{nphard}
\textbf{Problem Setup: Feature Ordering by Graph Dispersion.}
In this section, we show that GO-LR-based feature ordering corresponds to the Minimum Linear Arrangement (MinLA) problem, is NP-hard, and strictly generalizes TSP-path. Here, TSP-path refers to the Traveling Salesman (TSP) path problem: given a complete weighted graph, find a Hamiltonian path $\sigma$ that minimizes $\mathrm{PathCost}(\sigma)=\sum_{t=1}^{m-1} d_{\sigma_t,\sigma_{t+1}}$. We further show that the practical GO-LR algorithm provides a TSP-path style initialization, which is then locally refined under the dispersion objective. We connect GO-LR-based feature ordering to classical combinatorial optimization, including linear arrangement and seriation problems~\cite{b3,b5, b6}. It is MinLA (NP-hard), admits a TSP-path heuristic implementation, and strictly generalizes TSP-path via an exact embedding.
\begin{theorem}[Theoretical Characterization of GO-LR]
GO-LR-based feature ordering corresponds to a weighted MinLA problem, is NP-hard in the number of features, and strictly generalizes the TSP-path problem.
\end{theorem}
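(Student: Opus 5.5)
The plan has three parts, one for each claim in the theorem. First I fix the GO-LR objective concretely. For one cluster graph $G_c$, call a permutation a bijection $\pi:V\to\{1,\dots,m\}$. The dispersion objective is
\[
J_c(\pi)=\sum_{i<j} s^{(c)}_{ij}\,\lvert \pi(i)-\pi(j)\rvert ,
\]
where $s^{(c)}_{ij}\ge 0$ is a nonnegative affinity obtained as a monotone decreasing transform of the dissimilarity $w^{(c)}_{ij}$. The effect is that strongly related features are penalized for being placed far apart. Minimizing $J_c$ over all permutations is, by definition, weighted MinLA on the complete graph with edge weights $s^{(c)}$. If the objective is instead written directly with $w^{(c)}$, it is still a weighted linear arrangement form, so the correspondence is immediate either way. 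For the aggregated global order $\Pi^\ast$, I will note that with $k=1$ (a single cluster) the global problem reduces exactly to the local one. Hence the general GO-LR problem contains weighted MinLA as a special case, which is all that the hardness part needs.

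\textbf{NP-hardness.} I will reduce from unweighted MinLA, which is NP-hard by Garey, Johnson and Stockmeyer. Given a graph $H=(V,E)$ on $m$ vertices, I build a feature matrix whose induced affinities satisfy $s_{ij}=1$ for $ij\in E$ and $s_{ij}=0$ otherwise. Alternatively, I simply allow GO-LR to take arbitrary nonnegative weight matrices, which is the abstract form of the problem. Optimal GO-LR orders are then exactly the optimal MinLA layouts. The reduction is polynomial in $m$, so GO-LR is NP-hard in the number of features.

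\textbf{Main obstacle.} The step I expect to be hardest is realizability: showing that an arbitrary $0/1$ (or rational) affinity pattern can be produced by actual data columns $X$ under the paper's chosen dissimilarity. I would first try a construction with near-orthogonal columns plus shared latent components placed along the edges of $H$. A cleaner fallback is to state hardness for the weighted-graph formulation, which is standard in the combinatorial optimization setting.

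\textbf{Strict generalization of TSP-path.} The target is an exact embedding. Given TSP-path distances $d$, I would choose weights that depend on adjacency in the layout. The plain linear-arrangement cost $|\pi(i)-\pi(j)|$ does not encode Hamiltonian-path cost directly, so the natural route is the truncated variant already suggested by the local-refinement objective. That variant charges $d_{ij}$ only when $|\pi(i)-\pi(j)|=1$, a banded dispersion with bandwidth $1$, and it reproduces $\mathrm{PathCost}(\sigma)$ exactly with $\sigma=\pi^{-1}$. Strictness then follows by exhibiting a small instance (e.g., $m=4$) where the full dispersion objective with long-range terms ranks orders differently from every path cost. I will also note that the NNPath initialization is precisely the nearest-neighbor heuristic for TSP-path, which justifies the surrogate interpretation.
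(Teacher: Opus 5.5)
Your plan is the paper's proof. Lemma~\ref{lem:GOLR_minLA} is the same identity of objectives, and Lemma~\ref{lem:NP_hardness} inherits hardness from weighted MinLA with the weights treated as arbitrary input, which is exactly your fallback. Theorem~\ref{theorem:exactEquivalence} embeds TSP-path through the adjacency-by-position objective $D_{\text{path}}$, i.e.\ your bandwidth-1 truncation; that comes from the separate path-edge definition, not from SweepRefine, which uses the full dispersion.

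Your two additions go beyond the paper, which asserts strictness from the embedding alone and never discusses realizability. Both work. The $m=4$ witness exists: take $D(\pi)=|\pi(1)-\pi(2)|$. A path cost inducing the same ranking must tie all orders with $D=1$. This forces every $d_{ij}$ with $\{i,j\}\neq\{1,2\}$ to a common value $a$, and then all orders with $D\in\{2,3\}$ tie at $3a$. For realizability, note that $\sum_{i<j}|\pi(i)-\pi(j)|=\binom{m+1}{3}$ for every $\pi$. Hence any two-valued weight pattern that is larger on $E(H)$ has the same minimizers as unweighted MinLA on $H$, so you do not need an exact $0/1$ pattern. It suffices, for example, to use a correlation matrix with unit diagonal and off-diagonal entries in $\{0,\epsilon\}$, which is positive definite for $\epsilon<1/(m-1)$. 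Place $\epsilon$ on $E(H)$ for your affinity form, and off $E(H)$ for the paper's $w=1-|\mathrm{corr}|$.
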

\begin{proof}[Proof sketch]
Theorem follows from Lemma \ref{lem:GOLR_minLA}, Lemma \ref{lem:NP_hardness}, and Theorem \ref{theorem:exactEquivalence} as described below.
\end{proof}
Moreover, the practical GO-LR algorithm uses a nearest-neighbor TSP-path heuristic for initialization and then applies a local refinement step (direction selection and adjacent swaps) that monotonically decreases the MinLA dispersion objective.
\noindent
The remainder of this section establishes this characterization through a sequence of equivalence and reduction results.
\begin{definition}[Local Feature Graph]
Given cluster $c$ with samples $X^{(c)}\in\mathbb{R}^{n_c\times m}$, we define a weighted feature graph $G_c=(V,E,w)$ where $V=\{1,\dots,m\}$ indexes features and $w_{ij}\ge 0$ quantifies dissimilarity between features $i$ and $j$ computed from $X^{(c)}$ (e.g., $1-|{\rm corr}|$; see App.~\ref{addc}, JS(Jensen-Shannon divergence)/KL(Kullback-Leibler divergence), cosine/Euclidean/Manhattan). We write $(i,j)\in E$ whenever a pair is included (typically $E=V\times V\setminus\{(i,i)\}$ for a complete graph, or a sparse neighborhood graph).
\end{definition}
\begin{definition}[Dispersion Objective (GO-LR Local Ordering)]
A local ordering is a bijection $\pi:V\to\{0,\dots,m-1\}$ assigning each feature to a position. The cluster-wise dispersion of $\pi$ is in Eq.~\ref{eq:disp}. The GO-LR local ordering problem is to compute the local order with minimum dispersion, as shown in 
Eq.~\ref{eq:go-local}.
\begingroup
\setlength{\abovedisplayskip}{4pt}
\setlength{\belowdisplayskip}{4pt}
\setlength{\abovedisplayshortskip}{2pt}
\setlength{\belowdisplayshortskip}{2pt}

\begin{equation}
\label{eq:disp}
D_{G_c}(\pi)=\sum_{(i,j)\in E} w_{ij}\,|\pi(i)-\pi(j)|
\end{equation}
\begin{equation}
\label{eq:go-local}
\pi_c^\ast \in \arg\min_{\pi} D_{G_c}(\pi)
\end{equation}
\endgroup
\end{definition}
\begin{definition}[GO-LR Local Refinement Operator]
Let $\pi^{(0)} \leftarrow \mathrm{NNPath}(G_c)$ be the nearest-neighbor initialization (a permutation of $V$). We define $\mathrm{rev}(\pi)$ as the reversed permutation and let $\mathcal{N}(\pi)$ denote the set of permutations obtained by one adjacent transposition (Eq.~\ref{eq:refine}). GO-LR first performs direction selection (Eq.~\ref{eq:dir_select}) and then applies $P$ passes of adjacent-swap descent (Eq.~\ref{eq:adj_refine}), with early stopping if $\pi^{(p+1)}=\pi^{(p)}$. The refined local ordering is $\pi_c \leftarrow \pi^{(P)}$. In Eq.~\ref{eq:refine}, $\mathrm{swap}_t(\pi)$ is the adjacent-transposition operator that returns the permutation obtained by swapping the entries at positions $t$ and $t+1$ in $\pi$.
\begingroup
\setlength{\abovedisplayskip}{4pt}
\setlength{\belowdisplayskip}{4pt}
\setlength{\abovedisplayshortskip}{2pt}
\setlength{\belowdisplayshortskip}{2pt}

\begin{equation}
\label{eq:refine}
\mathcal{N}(\pi)=\{\mathrm{swap}_t(\pi): t=0,\dots,m-2\}
\end{equation}

\begin{equation}
\label{eq:dir_select}
\pi^{(0)} \leftarrow \arg\min\Big\{D_{G_c}(\pi^{(0)}),\, D_{G_c}(\mathrm{rev}(\pi^{(0)}))\Big\}
\end{equation}

\begin{equation}
\label{eq:adj_refine}
\pi^{(p+1)} \leftarrow \mathrm{SweepRefine}\!\left(\pi^{(p)},G_c\right),
\qquad p=0,\dots,P-1
\end{equation}
\endgroup
\noindent\textbf{SweepRefine.}
Initialize $\tilde\pi \leftarrow \pi^{(p)}$ and scan $t=0,\dots,m-2$. Compute swap gain $\Delta_t := D_{G_c}(\mathrm{swap}_t(\tilde\pi)) - D_{G_c}(\tilde\pi)$ via an incremental update (no full recomputation). If $\Delta_t < 0$, set $\tilde\pi \leftarrow \mathrm{swap}_t(\tilde\pi)$ immediately. Return $\pi^{(p+1)} \leftarrow \tilde\pi$ and early-stop if a full sweep makes no changes.
\end{definition}
\begin{remark}
Each update in Eqs.~\eqref{eq:dir_select}--\eqref{eq:adj_refine} is chosen to not increase $D_{G_c}$; hence GO-LR yields $D_{G_c}(\pi_c)\le D_{G_c}(\pi^{(0)})$.
\end{remark}
\begin{remark}
Eq.~\eqref{eq:disp} (together with Eq.~\eqref{eq:go-local}) defines the dispersion objective used in GO-LR. It is a standard linear arrangement / seriation-type criterion~\cite{b3,b6}: pairs with larger weights $w_{ij}$ are penalized more when placed far apart, hence the ordering tends to place high-weight pairs closer in their index.
\end{remark}
\textbf{Complexity: Equivalence to Minimum Linear Arrangement.}
The MinLA problem is a classical graph layout problem, and has been well studied~\cite{b2}.
\begin{definition}[Weighted Minimum Linear Arrangement (MinLA)]
Given a weighted graph $G=(V,E,w)$, the weighted MinLA problem is
\begin{equation}
\label{eq:minla}
\min_{\pi:V\to\{0,\dots,|V|-1\}\ \text{bijective}}\;\sum_{(i,j)\in E} w_{ij}\,|\pi(i)-\pi(j)|
\end{equation}
\end{definition}

\begin{lemma}[GO-LR Local Ordering is MinLA]
\label{lem:GOLR_minLA}

For each cluster $c$, the GO-LR local ordering objective in Eq.~\eqref{eq:go-local} is exactly the weighted MinLA objective on $G_c$.
\end{lemma}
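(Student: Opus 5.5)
The plan is to compare the two optimization problems term by term, since the claim is essentially definitional. First I fix the instance. For cluster $c$, take the local feature graph $G_c=(V,E,w^{(c)})$ from the Local Feature Graph definition and feed exactly this triple into the weighted MinLA definition, Eq.~\eqref{eq:minla}. Both problems range over the same feasible set: bijections $\pi:V\to\{0,\dots,|V|-1\}$ with $|V|=m$. The GO-LR local ordering uses $\{0,\dots,m-1\}$, and MinLA uses $\{0,\dots,|V|-1\}$, which is the same set.

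Next I verify that the objective functions coincide pointwise. For every feasible $\pi$, the dispersion $D_{G_c}(\pi)$ in Eq.~\eqref{eq:disp} and the MinLA objective are both the sum over $(i,j)\in E$ of $w_{ij}|\pi(i)-\pi(j)|$. They therefore agree as functions on the feasible set. It follows that the two argmin sets are equal and so are the optimal values. In particular $\pi_c^\ast$ in Eq.~\eqref{eq:go-local} is an optimal MinLA arrangement of $G_c$, and conversely.

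The only step that needs care is the edge convention. The definition allows $E=V\times V\setminus\{(i,i)\}$, which counts each unordered pair twice, whereas classical MinLA sums over unordered edges. I would handle this explicitly. If $E$ contains ordered pairs and $w$ is symmetric (as for $1-|\mathrm{corr}|$, cosine, or Euclidean), then
\[
D_{G_c}(\pi)=2\sum_{\{i,j\}} w_{ij}|\pi(i)-\pi(j)|.
\]
A positive constant factor leaves the argmin unchanged.

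If $w$ is asymmetric (for example KL), I would symmetrize with $w'_{ij}=w_{ij}+w_{ji}$. Since $|\pi(i)-\pi(j)|$ is symmetric in $i$ and $j$, this gives identical objective values. The problem is thus exactly weighted MinLA on the undirected graph with weights $w'$.

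Finally, sparse neighborhood graphs correspond to MinLA on the graph with those edges. Equivalently, they correspond to setting $w_{ij}=0$ off $E$ on the complete graph. With these conventions recorded, the lemma follows with no approximation.
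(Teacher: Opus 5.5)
Your proposal is correct and follows the same route as the paper. Both arguments rest on two facts: the feasible set of bijections $\pi:V\to\{0,\dots,m-1\}$ is the same, and the objective $\sum_{(i,j)\in E} w_{ij}|\pi(i)-\pi(j)|$ agrees pointwise, so the two problems coincide. Your added discussion of ordered versus unordered edges, symmetrizing asymmetric weights, and sparse graphs goes beyond the paper's sketch. It is harmless but not needed, because substituting $G_c$ directly into Eq.~\eqref{eq:minla} uses the identical $E$ and $w$.
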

\begin{proof}[Proof sketch]
Both problems optimize over bijections $\pi:V\to\{0,\dots,m-1\}$ and share the identical objective $\sum_{(i,j)\in E} w_{ij}|\pi(i)-\pi(j)|$ (Eq.~\eqref{eq:disp} and Eq.~\eqref{eq:minla}). Hence they are the same optimization problem.
\end{proof}
\begin{lemma}[NP-hardness]
\label{lem:NP_hardness}
The GO-LR local feature ordering problem (Eq.~\eqref{eq:go-local}) is NP-hard in $m$.
\end{lemma}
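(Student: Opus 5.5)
The plan is a reduction from the unweighted Minimum Linear Arrangement problem, which is NP-hard by the classical result of Garey, Johnson and Stockmeyer. By Lemma~\ref{lem:GOLR_minLA}, the GO-LR local ordering problem in Eq.~\eqref{eq:go-local} coincides with weighted MinLA on $G_c$. So it is enough to show that every unweighted MinLA instance can be encoded, in time polynomial in $m$, as a GO-LR instance on some feature graph $G_c$.

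\textbf{Step 1 (source problem).} Take the decision version. Given a graph $H=(V,E_H)$ with $|V|=m$ and an integer $K$, decide whether there is a bijection $\pi:V\to\{0,\dots,m-1\}$ with
\[
\sum_{\{i,j\}\in E_H}|\pi(i)-\pi(j)|\le K.
\]
This problem is NP-complete.

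\textbf{Step 2 (encoding).} Build $G_c=(V,E,w)$ on the same vertex set. Use the sparse-neighborhood option allowed in the definition of the Local Feature Graph, with $E=E_H$ and $w_{ij}=1$ on every edge. The construction is clearly polynomial, and $D_{G_c}(\pi)$ equals the MinLA cost of $\pi$ on $H$ for every $\pi$. It follows that an optimal solution of Eq.~\eqref{eq:go-local} decides the MinLA instance.

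If one insists on the complete graph $E=V\times V\setminus\{(i,i)\}$, the encoding changes slightly. Set $w_{ij}=1$ for $\{i,j\}\in E_H$ and $w_{ij}=0$ otherwise. Zero weights are permitted because the definition only requires $w_{ij}\ge 0$. The objective is then unchanged. If ordered pairs are summed twice, every cost is multiplied by $2$, and the reduction uses threshold $2K$ instead of $K$.

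\textbf{Step 3 (realizability).} This is the main obstacle. The weights $w_{ij}$ are meant to come from data $X^{(c)}$ through a dissimilarity such as $1-|\mathrm{corr}|$, so one might object that arbitrary $0/1$ weights cannot arise. The plan is to state the lemma for the optimization problem over general nonnegative weights, as the definition permits, which makes the reduction above sufficient.

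If realizability is required anyway, the first thing I would try is rescaling. Minimizers of Eq.~\eqref{eq:go-local} are invariant under positive scaling of $w$ and under adding a constant to all complete-graph weights. Adding a constant $\alpha$ to every pair shifts the objective by $\alpha\sum_{i<j}|\pi(i)-\pi(j)|$, and this sum is the same for every bijection $\pi$. So any weight matrix of the form $\alpha+\beta\,\mathbf 1[\{i,j\}\notin E_H]$ with $\beta>0$ leads to a problem equivalent to one with a $0/1$ matrix, up to complementation and a sign. Because complementing the weights flips the sign of the objective, the target becomes realizing a matrix proportional to the $0/1$ weight matrix itself. Doing so with a correlation matrix would need a further step, for example an embedding argument, which I would treat as secondary to the main claim.
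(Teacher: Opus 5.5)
Your proposal is correct and takes the paper's route. The paper uses Lemma~\ref{lem:GOLR_minLA} to identify Eq.~\eqref{eq:go-local} with weighted MinLA and then cites NP-hardness of weighted MinLA; you make explicit that unweighted MinLA is the unit-weight sparse-graph (or $0/1$-weight complete-graph) special case, which is exactly why that citation applies. Like the paper, you treat $w$ as an arbitrary nonnegative matrix rather than one realized from data, so your optional Step~3 goes beyond what the paper's proof attempts.
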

\begin{proof}[Proof sketch]
Weighted MinLA is NP-hard~\cite{b1}; since GO-LR local ordering is exactly MinLA, it is NP-hard.
\end{proof}
\textbf{An Exact Equivalence Case: TSP-path as a Special Case of Feature Ordering.}
\begin{definition}[TSP-path Objective on a Complete Graph]
Given a complete weighted graph $\mathcal{K}=(V,\binom{V}{2},d)$ with edge weights $d_{ij}\ge 0$, define the path cost of a permutation $\sigma=(\sigma_1,\dots,\sigma_m)$ by
\begingroup
\setlength{\abovedisplayskip}{4pt}
\setlength{\belowdisplayskip}{4pt}
\setlength{\abovedisplayshortskip}{2pt}
\setlength{\belowdisplayshortskip}{2pt}

\begin{equation}
\label{eq:tsp-path}
\mathrm{PathCost}(\sigma)=\sum_{t=1}^{m-1} d_{\sigma_t,\sigma_{t+1}}
\end{equation}
\endgroup
\end{definition}

First, we establish GO-LR as a TSP-path heuristic that outputs a Hamiltonian path. (see Appendix~\ref{theory-go-lr}).  Then, below, we show the connection between our feature ordering and TSP-path.  

The GO-LR objective in Eq.~\eqref{eq:disp} is a general seriation / linear arrangement criterion. We now exhibit a non-circular special case in which Feature Ordering becomes exactly the TSP-path problem, implying that Feature Ordering strictly generalizes TSP-path~\cite{b7}.
\begin{definition}[Path-Edge Feature Ordering (Adjacency-by-Position)]
Fix $m$ and define the path-edge set on positions
\begingroup
\setlength{\abovedisplayskip}{4pt}
\setlength{\belowdisplayskip}{4pt}
\setlength{\abovedisplayshortskip}{2pt}
\setlength{\belowdisplayshortskip}{2pt}

\begin{equation}
\label{eq:path-edges}
E_{\text{path}}=\{(t,t+1): t=1,\dots,m-1\}
\end{equation}

\endgroup
Given a complete weighted graph $\mathcal{K}=(V,\binom{V}{2},d)$ with $|V|=m$, a permutation $\sigma=(\sigma_1,\dots,\sigma_m)$ induces an ordering map $\pi_\sigma:V\to\{1,\dots,m\}$ via $\pi_\sigma(\sigma_t)=t$. We define the path-edge feature ordering objective:
\begingroup
\setlength{\abovedisplayskip}{4pt}
\setlength{\belowdisplayskip}{4pt}
\setlength{\abovedisplayshortskip}{2pt}
\setlength{\belowdisplayshortskip}{2pt}

\begin{equation}
\label{eq:fo-path}
D_{\text{path}}(\pi_\sigma)=\sum_{t=1}^{m-1} d_{\sigma_t,\sigma_{t+1}}
\end{equation}

\endgroup
\end{definition}
\noindent
This is equivalent to restricting Eq.~\eqref{eq:disp} to adjacency-by-position interactions.
\begin{theorem}[Exact Equivalence to TSP-path]
\label{theorem:exactEquivalence}

Minimizing the path-edge feature ordering objective in Eq.~\eqref{eq:fo-path} over all permutations $\sigma$ is exactly the TSP-path problem on $\mathcal{K}$ with path cost given by  $\mathrm{PathCost}(\sigma)$ in Eq.~\eqref{eq:tsp-path}.
\end{theorem}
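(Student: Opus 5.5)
The plan is to show that the two optimization problems have the same feasible set, the same objective value on each feasible point, and therefore the same optimal value and the same set of minimizers. I would organize this into three checks.

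\textbf{Step 1: identify the feasible sets.} Feasible points of the TSP-path problem on $\mathcal{K}$ are Hamiltonian paths, written as sequences $\sigma=(\sigma_1,\dots,\sigma_m)$ listing each vertex of $V$ exactly once. These are the permutations of $V$. For the path-edge feature ordering problem, I would verify that $\sigma\mapsto\pi_\sigma$, with $\pi_\sigma(\sigma_t)=t$, is a bijection from such sequences onto the bijections $V\to\{1,\dots,m\}$. Its inverse is $\pi\mapsto(\pi^{-1}(1),\dots,\pi^{-1}(m))$. So both problems range over the same index set.

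\textbf{Step 2: compare objectives pointwise.} For every $\sigma$, Eq.~\eqref{eq:fo-path} and Eq.~\eqref{eq:tsp-path} are literally the same sum $\sum_{t=1}^{m-1} d_{\sigma_t,\sigma_{t+1}}$. Hence $D_{\text{path}}(\pi_\sigma)=\mathrm{PathCost}(\sigma)$. The minimum values therefore coincide, and $\sigma^\ast$ minimizes one problem if and only if it minimizes the other.

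\textbf{Step 3: justify the phrase ``restriction of Eq.~\eqref{eq:disp}''.} The paper's remark says the path-edge objective is Eq.~\eqref{eq:disp} restricted to adjacency-by-position interactions. To support this, I would note that a pair $\{i,j\}$ contributes only when $|\pi(i)-\pi(j)|=1$, and then $|\pi(i)-\pi(j)|$ equals $1$. So $D_{\text{path}}(\pi)$ equals $\sum_{\{i,j\}} d_{ij}\,\mathbf{1}[|\pi(i)-\pi(j)|=1]\,|\pi(i)-\pi(j)|$, which is a dispersion-type objective whose edge set is induced by positions. Undirectedness of $\mathcal{K}$, i.e.\ $d_{ij}=d_{ji}$, ensures each adjacent pair is counted once. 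It also ensures that reversing $\sigma$ preserves cost, so the correspondence respects the TSP-path's reversal symmetry.

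\textbf{Main obstacle.} The only delicate point is making ``exactly'' precise given the indexing mismatch between $\{0,\dots,m-1\}$ and $\{1,\dots,m\}$, and given that Eq.~\eqref{eq:disp} uses a fixed edge set on features whereas $E_{\text{path}}$ lives on positions. I would handle this by stating the equivalence at the level of permutations, with identical objective values, and treating the shift of index by one as a trivial relabeling.
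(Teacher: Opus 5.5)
Your proposal is correct and follows the paper's own argument. The paper simply notes that, for every permutation $\sigma$, the objective in Eq.~\eqref{eq:fo-path} is by definition the same sum $\sum_{t=1}^{m-1} d_{\sigma_t,\sigma_{t+1}}=\mathrm{PathCost}(\sigma)$, so the minimizers coincide; this is your Steps~1--2, with the bijection $\sigma\mapsto\pi_\sigma$ made explicit. Your Step~3 supports the surrounding remark about restricting Eq.~\eqref{eq:disp} (where, as you note, the ``edge set'' depends on $\pi$) and is not needed for the theorem itself.
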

\begin{proof}[Proof sketch]
For any permutation $\sigma$, Eq.~\eqref{eq:fo-path} can be re-written as
$\sum_{t=1}^{m-1} d_{\sigma_t,\sigma_{t+1}}=\mathrm{PathCost}(\sigma)$ by definition. Thus, the minimizers coincide.
\end{proof}
\begin{corollary}[TSP-path embeds into Feature Ordering] 
TSP-path is a special case of feature ordering. Consequently, feature ordering (strictly) generalizes TSP-path.
\end{corollary}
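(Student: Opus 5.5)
The corollary has two parts: every TSP-path instance is literally an instance of feature ordering, and the embedding is proper. I will prove them separately.

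\textbf{Embedding.} Given an arbitrary TSP-path instance, a complete graph $\mathcal{K}=(V,\binom{V}{2},d)$ with $|V|=m$, build the path-edge feature ordering instance on the same $V$ with dissimilarities $d$. The map $\sigma\mapsto\pi_\sigma$, with $\pi_\sigma(\sigma_t)=t$, is a bijection between sequences $(\sigma_1,\dots,\sigma_m)$ and bijections $V\to\{1,\dots,m\}$. Its inverse is $\sigma_t=\pi^{-1}(t)$. So the two problems range over the same feasible set, and by Theorem~\ref{theorem:exactEquivalence} they have the same objective, $D_{\text{path}}(\pi_\sigma)=\mathrm{PathCost}(\sigma)$. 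Hence optimal values coincide and an optimizer of either is an optimizer of the other. The reduction is the identity on the data and so is trivially polynomial. Any algorithm for feature ordering therefore solves TSP-path, which gives the ``special case'' claim.

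\textbf{Strictness.} I need feature ordering instances that are not TSP-path instances, i.e.\ whose objective cannot be written as $\mathrm{PathCost}$ for any weights. The natural candidates are instances of the general dispersion objective Eq.~\eqref{eq:disp} whose cost depends on non-adjacent placements. Take $m=3$, $V=\{a,b,c\}$, with the single edge $(a,c)$ of weight $1$ and all other weights $0$. Then $D(\pi)=|\pi(a)-\pi(c)|$, which takes the value $2$ on orderings where $a,c$ sit at the ends and $1$ otherwise.

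Suppose some symmetric nonnegative $d$ gave $\mathrm{PathCost}(\sigma)=D(\pi_\sigma)$ for every $\sigma$. The values then force the following equations:
\[
d_{ab}+d_{bc}=2,\qquad d_{ac}+d_{cb}=1,\qquad d_{ba}+d_{ac}=1 .
\]
These come from the three unordered paths $a\!-\!b\!-\!c$, $a\!-\!c\!-\!b$ and $b\!-\!a\!-\!c$. Subtracting the third from the second gives $d_{bc}=d_{ab}$, so both equal $1$, and then $d_{ac}=0$. That system is consistent, so this instance fails to separate the classes. I therefore need an argument that does not rely on a single small counterexample.

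\textbf{Main obstacle.} The main obstacle is making ``strictly'' rigorous, since small cases can coincide as just shown. I would first try a dimension count. For general $m$, MinLA weight functions $w$ on pairs induce objective functions on $S_m$ via $\sum w_{ij}|\pi(i)-\pi(j)|$. The span of the TSP-path objectives is generated by the indicators $[|\pi(i)-\pi(j)|=1]$. The span of the MinLA objectives is generated by the functions $|\pi(i)-\pi(j)|$, which involve every distance. For $m=4$, I would exhibit the single-edge instance $w_{ac}=1$ and show that no $d$ fits the $12$ unordered paths, checking this through a small linear system. I expect that system to be inconsistent because $|\pi(a)-\pi(c)|=3$ must be reproduced by a pure adjacency sum. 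If the verification becomes heavy, I would fall back on the weaker but safe reading of ``strictly'': Eq.~\eqref{eq:fo-path} is a restriction of the general objective Eq.~\eqref{eq:disp} to adjacency-by-position interactions, so the general class contains the TSP-path class and additionally admits distance-dependent penalties.
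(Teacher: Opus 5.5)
Your embedding step is the paper's argument: the bijection $\sigma\leftrightarrow\pi_\sigma$ together with $D_{\text{path}}(\pi_\sigma)=\mathrm{PathCost}(\sigma)$ from Theorem~\ref{theorem:exactEquivalence}. You depart from the paper on ``strictly.''

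The paper offers nothing beyond your fallback. It only observes that Eq.~\eqref{eq:fo-path} restricts the ordering framework of Eq.~\eqref{eq:disp} to adjacency-by-position interactions. That establishes containment but not that the containment is proper. Note also that the path-edge objective $\sum_{i<j}d_{ij}[|\pi(i)-\pi(j)|=1]$ uses the adjacency indicator rather than the penalty $|\pi(i)-\pi(j)|$ of \eqref{eq:disp}. So the inclusion is into the broader family of ordering objectives, not into MinLA itself.

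Your $m=4$ plan does close the gap, and the system is indeed inconsistent. Take $V=\{a,b,c,e\}$ and $w_{ac}=1$.
\begin{itemize}
\item Equal cost on the orderings $(a,c,b,e)$, $(c,a,b,e)$, $(a,c,e,b)$, $(c,a,e,b)$ forces $d_{ab}=d_{bc}=d_{ce}=d_{ae}=:y$.
\item Comparing $(b,a,c,e)$ with $(a,c,b,e)$ then gives $d_{be}=y$.
\item Hence $(a,b,c,e)$ and $(a,b,e,c)$ both have path cost $3y$, whereas $|\pi(a)-\pi(c)|$ equals $2$ and $3$ there.
\end{itemize}
This uses only that the cost is constant on the orderings with $a,c$ adjacent. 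So it excludes not just exact equality but any strictly increasing (in particular positive affine) reparametrization of the objective, and it needs no sign condition on $d$.

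That is the right level at which to separate the two classes. Agreement of argmin sets would not suffice: your single-edge instance has the same minimizers as the TSP instance with $d_{ac}=0$ and all other weights $1$. A complexity-theoretic reading of ``strictly'' is also unavailable, because both decision problems are NP-complete and hence reduce to each other.

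Your route turns the paper's syntactic remark into an actual proof that the containment is proper, at the cost of a small explicit computation. The paper's route is shorter but leaves ``strictly'' unproved.
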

\begin{remark}
This equivalence holds for the path-edge special case. In GO-LR, the practical objective remains the full dispersion in Eq.~\eqref{eq:disp} (MinLA), while the nearest-neighbor constructor corresponds to a TSP-path heuristic for initialization, and GO-LR then applies local refinement under the full dispersion objective in Eq.~\eqref{eq:disp} on a complete graph built from the chosen dissimilarity metric.
\end{remark}
\begin{figure}[t]
    \centering
    \vspace{-0.5em}
    \includegraphics[width=\linewidth]{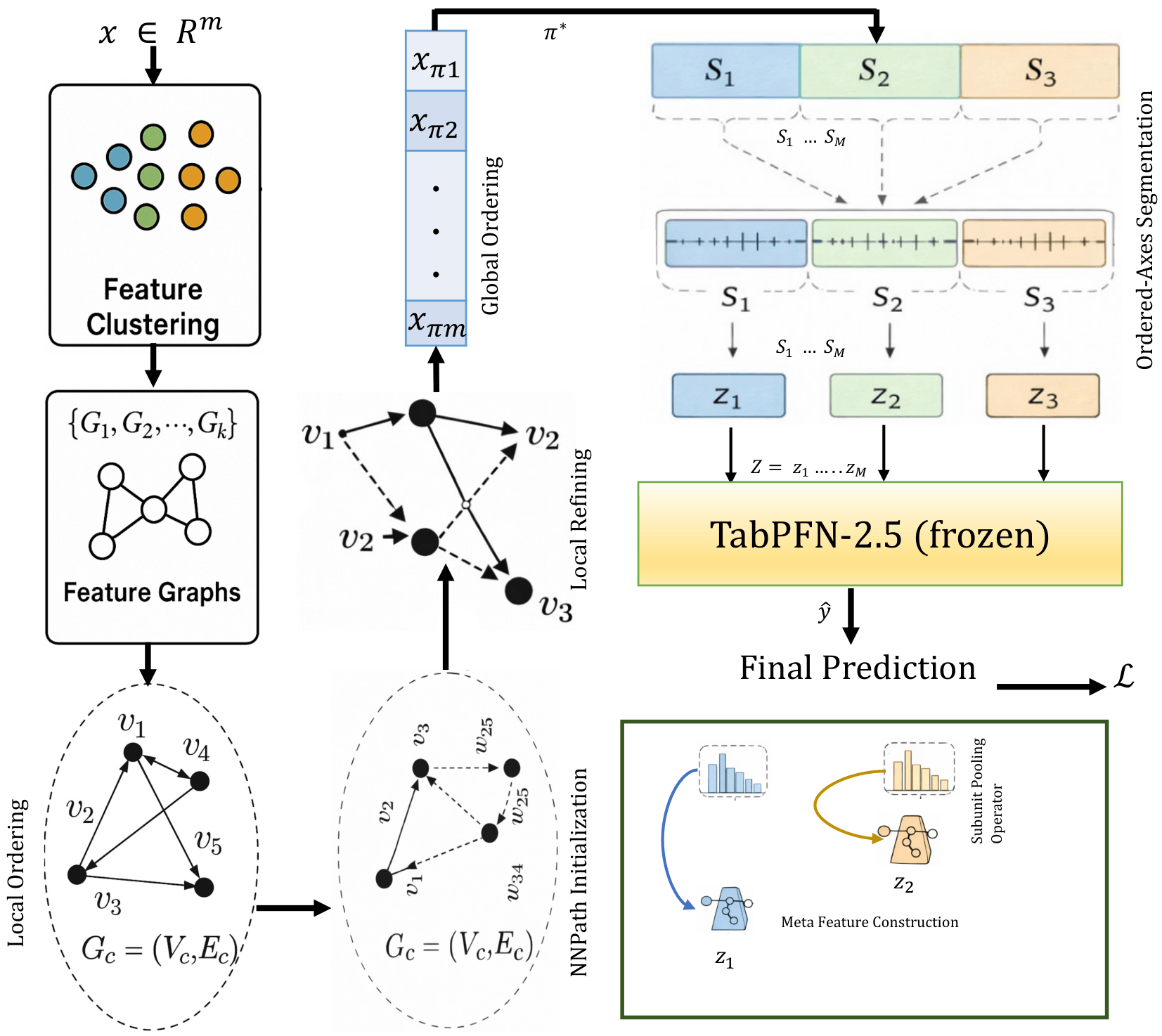}
    \vspace{-0.8em}
    \caption{
\textbf{End-to-end architecture of GOTabPFN.}
The feature clustering block denotes the discovery of local feature-dependence groups, implemented by estimating cluster-wise feature graphs \(G_c\) from local sample contexts; GO-LR then obtains a global order \(\Pi^\ast\), and NSC compresses contiguous ordered segments into meta-features \(Z(x)\), which are passed to a frozen TabPFN-2.5 head.
}
    \label{fig:gotab_architecture}
    \vspace{-1.6em}
\end{figure}
\textbf{Global Aggregation (Mean-Rank Integration).}
\label{sec:go_global}
Let $\pi_c$ be a local ordering for cluster $c$ and let $r_c(j)$ be the rank (position) of feature $j$ in $\pi_c$. With cluster weights $\alpha_c\ge 0$ and $\sum_{c=1}^k \alpha_c=1$, GO-LR forms a global order by Eq.~\ref{eq:global}. This aggregation produces a single global permutation consistent with the set of local cluster-wise permutations.
\begingroup
\setlength{\abovedisplayskip}{1pt}
\setlength{\belowdisplayskip}{1pt}
\setlength{\abovedisplayshortskip}{0pt}
\setlength{\belowdisplayshortskip}{0pt}
\begin{equation}
\label{eq:global}
\bar r(j)=\sum_{c=1}^k \alpha_c r_c(j),
\quad
\Pi^\ast=\operatorname*{argsort}_{j=1}^{m}\bar r(j)
\end{equation}
\endgroup
Algorithm ~\ref{alg:go_lr_short} captures the steps in the proposed GO-LR algorithm.
\begin{algorithm}[t]
\caption{Graph-guided Ordering with Local Refinement (GO-LR)}
\label{alg:go_lr_short}
\begin{algorithmic}[1]
\REQUIRE $X\in\mathbb{R}^{n\times m}$, clusters $k$, metric $\phi$, passes $P$
\ENSURE Global order $\Pi^\ast$ and local orders $\{\pi_c\}_{c=1}^k$
\STATE $\{X^{(c)}\}_{c=1}^k \gets \mathrm{Cluster}(X,k)$; \ \ $\mu^{(c)}\gets \mathrm{mean}(X^{(c)})$
\FOR{$c=1$ to $k$}
  \STATE $G_c \gets \mathrm{Sym}(\mathrm{FeatureDissimilarity}(X^{(c)},\phi))\in\mathbb{R}^{m\times m}$ \hfill {\footnotesize// undirected}
  \STATE $\pi_c \gets \mathrm{NNPath}(G_c)$
  \STATE $\pi_c \gets \mathrm{Refine}(\pi_c,G_c,P)$ \hfill {\footnotesize// direction-select + $P$ passes}
  \STATE $r_c(j)\gets \mathrm{rank}(j\ \mathrm{in}\ \pi_c)$ \textbf{for} $j=1,\dots,m$
\ENDFOR
\STATE $\tilde{\alpha}_c \gets \Big(\varepsilon + \mathrm{mean}_{c'} \lVert \mu^{(c)}-\mu^{(c')} \rVert_2\Big)^{-1}$;
$\alpha_c \gets \tilde{\alpha}_c / \sum_{c'} \tilde{\alpha}_{c'}$
\STATE $\bar r(j) \gets \sum_{c=1}^k \alpha_c\, r_c(j)$;
$\Pi^\ast \gets \operatorname*{argsort}_{j}\bar r(j)$
\STATE \textbf{return} Global feature order $\Pi^\ast$ and local orders $\{\pi_c\}_{c=1}^k$
\end{algorithmic}
\end{algorithm}
\setlength{\textfloatsep}{4pt plus 1pt minus 2pt}
\setlength{\floatsep}{4pt plus 1pt minus 2pt}
\setlength{\intextsep}{4pt plus 1pt minus 2pt}
\begin{algorithm}[t]
\caption{Neuro-Inspired Subunit Compression (NSC)}
\label{alg:nsc_psp}
\begin{algorithmic}[1]
\REQUIRE Training matrix $X_{\text{train}}\in\mathbb{R}^{n\times m}$, sample $x\in\mathbb{R}^{m}$, global order $\Pi^{*}$, ID threshold $\tau\in(0,1)$, bypass threshold $m_{0}$, $M$-rule hyperparameters $(\gamma,M_{\min},M_{\max})$, segmentation rule $\mathrm{Seg}(\cdot)$ with $\ell_{\min}$, and (if transition-aware) dissimilarities $\Delta\in\mathbb{R}^{m-1}_{+}$.
\ENSURE Compressed tokens $Z(x)\in\mathbb{R}^{M}$.

\STATE \textbf{Reorder:} $X^{\Pi}\gets X_{\text{train}}[:,\Pi^{*}]$, $x^{\Pi}\gets x[\Pi^{*}]$.
\STATE \textbf{PCA-ID:} compute $G=\frac{1}{n-1}X^{\Pi}(X^{\Pi})^{\top}$; let $\{\lambda_i\}$ be eigenvalues of $G$ (descending).
\STATE $\hat d \gets \min\{k:\sum_{i=1}^{k}\lambda_i \ge \tau\sum_{i}\lambda_i\}$; \hspace{0.5em} 
$\mathrm{IDF}\gets \hat d/m$
\IF{$m \le m_{0}$}
  \STATE $M \gets m$
\ELSE
  \STATE $M \gets \mathrm{clip}(\lceil 2\hat d\rceil,\; M_{\min},\; \min(M_{\max},m))$ \hfill // or $\lceil \gamma\hat d\rceil$ / IDF-rule
\ENDIF

\STATE \textbf{Segment:} $\{\mathcal{S}_t\}_{t=1}^{M} \gets \mathrm{Seg}(m,M,\Delta,\ell_{\min})$ \hfill // uniform / equal-mass / largest-jump
\FOR{$t=1$ to $M$}
  \STATE \textbf{Fit (once):} center $X^{\Pi}_{[:,\mathcal{S}_t]}$ to get mean $\mu_t$; compute first PC direction $v_t$ (CPU SVD), fix sign deterministically.
  \STATE \textbf{Tokenize:} $z_t(x) \gets \left(x^{\Pi}_{\mathcal{S}_t}-\mu_t\right)^{\top} v_t$ \hfill // scalar
\ENDFOR
\STATE \textbf{return} $Z(x) \gets (z_1(x),\ldots,z_M(x))$
\end{algorithmic}
\end{algorithm}
\subsection{Neuro-Inspired Subunit Compression (NSC)}
\label{sec:nsc}
\textbf{Motivation.}
We design a representation interface that allows TabPFN to scale to HDLSS tabular data without retraining or architectural modification. Cortical pyramidal neurons receive on the order of $20{,}000$-$30{,}000$ synaptic inputs~\cite{b8}, yet these inputs are not integrated as a single linear sum~\cite{b10}. Instead, inputs are organized into multiple dendritic subunits~\cite{b11}, each acting as a nonlinear integration compartment. Here, correlated synapses may exhibit local clustering~\cite{b16} and trigger N-methyl-D-aspartate (NMDA)-mediated plateau potentials~\cite{b9} that pool dozens to hundreds of inputs into a single subunit-level signal~\cite{b12}. This subunit-based organization provides a canonical biological mechanism~\cite{b36} for compressing extremely high-dimensional inputs into a compact set of functional representations~\cite{b17}. This locality-driven compression view is also consistent with prior signal-compression work, where edge-aware prediction has been used to exploit local structure in high-dimensional hyperspectral imagery~\cite{edge}. We adopt this principle as an algorithmic inductive bias for HDLSS tabular data~\cite{b37}. See
Alg. ~\ref{alg:nsc_psp} for  steps in NSC. \newline
\textbf{Ordered-Axis Segmentation.}
Let $x\in\mathbb{R}^m$ denote a tabular sample and let $\Pi^{*}$ be the global feature permutation produced by GO-LR (Section~\ref{sec:go_global}). We define the reordered feature vector in Eq.~\ref{eq:vec}. Given a target number of meta-features $M$, we set the segment length $s=\lceil m/M\rceil$ (Eq.~\ref{eq:seg}) and define contiguous segments $\{\mathcal{S}_t\}_{t=1}^M$ by Eq.~\ref{eq:seg2}, which partition $\{1,\dots,m\}$ into ordered neighborhoods (subunits).\newline
\begin{equation}
\label{eq:vec}
x^{\Pi} = \big(x_{\Pi^{*}(1)},\, x_{\Pi^{*}(2)},\, \dots,\, x_{\Pi^{*}(m)}\big)
\end{equation}
\begin{equation}
\label{eq:seg}
s = \left\lceil \frac{m}{M} \right\rceil
\end{equation}
\begin{equation}
\label{eq:seg2}
\mathcal{S}_t
=
\{(t-1)s+1,\dots,\min(ts,m)\},
\qquad t=1,\dots,M 
\end{equation}
\textbf{Adaptive segmentation.}
To let segment boundaries follow “transitions’’ in the ordered feature axis, we first summarize pairwise dissimilarities into a 1D signal. We reuse the global feature dissimilarity matrix $\bar W\in\mathbb{R}^{m\times m}$ already computed for GO-LR on the dataset (and keep it fixed at inference), e.g., $\bar W := \mathrm{FeatureDissimilarity}(X,\phi)$ or $\bar W := \sum_{c=1}^k \alpha_c W_c$, so NSC itself does not introduce any additional $O(m^2)$ cost. For adjacent positions along the GO-LR order, we define the transition dissimilarity $\delta_t := \bar W_{\Pi^\ast(t),\,\Pi^\ast(t+1)}$ for $t=1,\dots,m-1$; large $\delta_t$ indicates a sharp change between neighboring features. We then form the cumulative transition mass $c_t := \sum_{i=1}^{t-1}\delta_i$ for $t=1,\dots,m$ with total $C := c_m = \sum_{i=1}^{m-1}\delta_i$.
Given a desired number of segments $M$, we place cutpoints $1 \le \tau_1 < \cdots < \tau_{M-1} < m$ along this 1D signal in two ways: (i) a largest-jump rule that selects the indices of the $M{-}1$ largest $\delta_t$ values (subject to a minimum segment length $\ell_{\min}$), and (ii) an equal-mass rule that treats $c_t$ as a discrete CDF and chooses cutpoints by Eq.~\ref{eq:eqmass}.
\begin{equation}
\label{eq:eqmass}
\begin{aligned}
\tau_\ell &:= \min\Big\{ t \in \{2,\dots,m-1\} : c_t \ge (\ell/M)\,C \Big\},\\
          &\ell = 1,\dots,M-1 
\end{aligned}
\end{equation}
Again enforcing $\ell_{\min}$ with a uniform fallback if needed. Finally, we materialize segments as
$\mathcal{S}_1=\{1,\dots,\tau_1\}$, $\mathcal{S}_t=\{\tau_{t-1}+1,\dots,\tau_t\}$ for $t=2,\dots,M-1$, and
$\mathcal{S}_M=\{\tau_{M-1}+1,\dots,m\}$, so that each subunit is an ordered neighborhood bounded by large transitions in the feature axis.\newline
\textbf{Subunit Pooling and Meta-Feature Construction.}

\textbf{Segment descriptors.}
Beyond mean and variance, we optionally summarize each segment $u_t$ using a richer descriptor $\psi(u_t)$ that includes higher-order and robust statistics (e.g., skewness, kurtosis, median, and interquartile range), enabling NSC to capture distributional shape within each ordered region at negligible extra cost.

Let $\psi:\mathbb{R}^{|\mathcal{S}_t|}\rightarrow\mathbb{R}^{q}$ denote a (possibly learn-free) segment descriptor of dimension $q$, and let $g_\theta:\mathbb{R}^{q}\rightarrow\mathbb{R}^{d}$ be a shared lightweight pooling network that maps each descriptor to a $d$-dimensional meta-feature. In practice, $g_\theta$ may be a shallow MLP (or linear map) applied to $\psi(u_t)$; the same $g_\theta$ is reused across segments to enforce parameter sharing and stability. The $t$-th meta-feature is defined in Eq.~\ref{eq:meta}. NSC outputs the compressed meta-feature sequence $Z(x)=(z_1,\dots,z_M)$, which is subsequently provided to the TabPFN predictor head.\newline
\begin{equation}
\label{eq:meta}
z_t \;=\; g_\theta\!\Big(\psi(u_t)\Big),
\qquad
u_t := x^{\Pi}_{\mathcal{S}_t}
\end{equation}
\begin{equation}
\label{eq:token}
Z(x) = (z_1,\dots,z_M)\in\mathbb{R}^{M\times d}
\end{equation}
\textbf{NSC variants.}
We instantiate NSC in four variants (details in Appendix~\ref{supp:nsc_dr}): (i) NSC: uniform segments + learned pooling, (ii) NSC-P: same with PCA-based intrinsic-dimension rule for $M$, (iii) NSC-SP: PCA-based segment (SegPCA) pooling with a fixed $M$, and (iv) NSC-pSP: PCA-based intrinsic-dimension rule for $M$ combined with SegPCA pooling. GOTabPFN uses NSC-pSP in the experiments.\newline
\textbf{Choosing the Number of Meta-Features (NSC-pSP).}
\label{sec:nsc_M_main}
To adapt the compression level to dataset complexity, we tie the meta-feature budget $M$ to an estimate of the intrinsic dimensionality $\hat d$ of the training data.  Let $\tilde X\in\mathbb{R}^{n\times m}$ denote the standardized training matrix (zero mean, unit variance per feature), and let $\Sigma = \frac{1}{n-1}\tilde X^\top \tilde X$ be its empirical covariance (or correlation) matrix with nonzero eigenvalues $\{\lambda_i\}_{i=1}^{r}$, $r\le \min(n,m)$.

For the NSC-pSP variant used in our main experiments, we estimate $\hat d$ via a PCA cumulative-variance rule~\cite{b38}.  We define the explained-variance ratio and its cumulative sum in Eqns.~\ref{eq:evr} and \ref{eq:cumvar}. Given a target variance-retention level $\tau\in(0,1)$ (e.g., $\tau\in\{0.90,0.95,0.99,0.9975\}$), the PCA-based intrinsic dimension is defined by Eq.~\ref{eq:dpca} and NSC-pSP sets $\hat d = \hat d_{\mathrm{PCA}}(\tau)$. We then choose the meta-feature budget via Eq.~\ref{eq:defaultM} where $\mathrm{clip}(x,a,b)=\min(\max(x,a),b)$.  For non-HDLSS regimes (e.g., $m\le 400$), we bypass compression by setting $M=m$.  This rule ensures that the number of meta-features scales with intrinsic, rather than ambient, dimensionality, yielding aggressive compression in highly redundant HDLSS settings while avoiding unnecessary bottlenecks when features are already compact. Implementation details and alternative intrinsic-dimension rules used by the other NSC variants are given in Appendix~\ref{nscop}.\newline
\begin{equation}
\label{eq:evr}
\mathrm{EVR}_i \;=\; \frac{\lambda_i}{\sum_{j=1}^{r}\lambda_j},
\qquad i=1,\dots,r
\end{equation}
\begin{equation}
\label{eq:cumvar}
\mathrm{CUM}(k) \;=\; \sum_{i=1}^{k}\mathrm{EVR}_i,
\qquad k=1,\dots,r
\end{equation}
\begin{equation}
\label{eq:dpca}
\hat d_{\mathrm{PCA}}(\tau)
\;=\;
\min\Big\{k\in\{1,\dots,r\}:\ \mathrm{CUM}(k)\ge \tau\Big\}
\end{equation}
\begin{equation}
\label{eq:defaultM}
M = \mathrm{clip}\big(\lceil 2\hat d \rceil,\; 32,\; \min(512,m)\big)
\end{equation}
\textbf{PCA-centric post-segmentation pooling (SegPCA).}
For the PCA-centric NSC variants (NSC-SP and NSC-pSP), once the meta-feature budget $M$ is determined (Sec.~\ref{sec:nsc_M_main}) and the ordered segments $\{\mathcal{S}_t\}_{t=1}^{M}$ are formed (Eqs.~\ref{eq:seg}-\ref{eq:seg2}), we construct one scalar token per segment by projecting onto a segment specific first principal direction learned on the training set. Let $u_t(x):=x^{\Pi}_{\mathcal{S}_t}\in\mathbb{R}^{|\mathcal{S}_t|}$ and let $X^{\Pi}\in\mathbb{R}^{n\times m}$ denote the standardized training matrix after applying $\Pi^\ast$.  We define the training submatrix for segment $t$ as $X_t := X^{\Pi}_{:\,,\mathcal{S}_t}\in\mathbb{R}^{n\times |\mathcal{S}_t|}$. We compute the segment mean and covariance by Eq.~\ref{eq:segpca_mu_cov} and take the first principal direction using Eq.~\ref{eq:segpca_v1}. The $t$-th meta-feature is then the centered projection (Eq.~\ref{eq:segpca_token}), yielding a $d{=}1$ token sequence $Z_{\mathrm{SegPCA}}(x)$ (Eq.~\ref{eq:segpca_Z}). Optionally, we apply a deterministic sign convention to $v_t$ (e.g., flipping $v_t$ so that segment scores positively correlate with a fixed reference such as the within-segment sample mean), which leaves the subspace unchanged but improves reproducibility.\newline
\begin{equation}
\label{eq:segpca_mu_cov}
\begin{aligned}
\mu_t \;&=\; \frac{1}{n}\sum_{i=1}^{n} X_{t,i:}\in\mathbb{R}^{|\mathcal{S}_t|},\\
\Sigma_t \;&=\; \frac{1}{n-1}\big(X_t-\mathbf{1}\mu_t^\top\big)^\top
                 \big(X_t-\mathbf{1}\mu_t^\top\big)
\end{aligned}
\end{equation}
\begin{equation}
\label{eq:segpca_v1}
v_t \;=\; \arg\max_{\|v\|_2=1} v^\top \Sigma_t v
\;\in\;\mathbb{R}^{|\mathcal{S}_t|}
\end{equation}
\begin{equation}
\label{eq:segpca_token}
z_t(x)\;=\;\big(u_t(x)-\mu_t\big)^\top v_t \;\in\;\mathbb{R},
\qquad t=1,\dots,M
\end{equation}
\begin{equation}
\label{eq:segpca_Z}
Z_{\mathrm{SegPCA}}(x) \;=\; (z_1(x),\dots,z_M(x)) \in \mathbb{R}^{M\times 1}
\end{equation}
\textbf{Summary.}
NSC acts as a shared piecewise pooling operator, defined by Prop.~\ref{prop:nsc_pooling} (App.~\ref{nscop}). NSC transforms GO-LR-ordered high-dimensional tabular inputs into a compact sequence of structured meta-features through contiguous segmentation and shared pooling, introducing an HDLSS-friendly inductive bias inspired by subunit-based cortical computation while remaining purely statistical and computationally efficient. By compressing $m$ raw features into $M \ll m$ meta-features, NSC reduces effective sequence length presented to TabPFN-style backbones (e.g., TabPFN-2.5~\cite{tabpfn2.5} or other variants), yielding lower compute \& memory cost while preserving order-induced locality to make original TabPFN versions usable for HDLSS regime. Per sample, NSC is $O(m)$ when $\psi$ uses linear-time statistics (e.g., moments); robust summaries such as quantiles can be computed approximately in linear time/exactly with a mild $O(|\mathcal{S}_t|\log|\mathcal{S}_t|)$ overhead if sorting is used.\newline
\textbf{TabPFN-2.5 Head (non-differentiable).} NSC module compresses each sample into a fixed-dimensional representation $Z(x)\in\mathbb{R}^{M}$ (or $Z(x)\in\mathbb{R}^{M\times d}$, flattened to $\mathbb{R}^{Md}$). We then use TabPFN-2.5 as the predictor head: for each train/validation split, we fit TabPFN-2.5 on $\{(Z(x_i),y_i)\}_{i\in\mathcal{I}_{\text{train}}}$ and evaluate on $Z(x_j)$ for $j\in\mathcal{I}_{\text{val}}$ without backpropagation through the head. This design treats NSC as a compression interface that maps HDLSS inputs into a feature budget compatible with TabPFN variants, while retaining strong tabular foundation models by \citet{tabpfn,tabpfnv2,tabpfn2.5}.
\begin{table*}[t]
\caption{
Top-10 performance on 8 HDLSS datasets (mean accuracy with subscripted standard deviation over $5\times5$ CV).
\textbf{Bold} denotes the best result per dataset and \underline{underline} denotes the second-best.
Rank is the average rank across datasets (lower is better), computed with standard tie-breaking.
Dataset abbreviations:
COL = Colon,
LNG = Lung,
GLI = GLI-85,
SMK = SMK\_CAN\_187,
AML = ALLAML,
PRS = Prostate-GE,
ARC = Arcene,
TOX = TOX-171.
Model abbreviations:
*GOTabPFN = our method,
TabPFN-W = TabPFN Wide,
TTables = TuneTables,
BETA = TabPFN Unleashed.
See Table~\ref{tab:hdlss-top-models} in Appendix~\ref{app:comp} for full results against 55 baselines.
}
\label{tab1:hdlss-top-models}
\vspace{-0.7em}
\begin{center}
\setlength{\tabcolsep}{3pt}
\begin{footnotesize}
\begin{sc}
\begin{tabular}{lccccccccc}
\toprule
Model/DB 
& COL & LNG & GLI & SMK & AML & PRS & ARC & TOX & Rank \\
\midrule
\#Samples
  & 62 & 203 & 85 & 187 & 72 & 102 & 200 & 171 & -- \\
\#Features
  & 2000 & 3312 & 22283 & 19993 & 7129 & 5966 & 10000 & 5748 & -- \\
\#Classes
  & 2 & 5 & 2 & 2 & 2 & 2 & 2 & 4 & -- \\
\midrule
*GOTabPFN
  & \acc{\textbf{88.18}}{\pm10.05}
  & \acc{\textbf{97.44}}{\pm2.32}
  & \acc{\textbf{93.82}}{\pm5.81}
  & \acc{\textbf{74.23}}{\pm5.17}
  & \acc{\textbf{97.54}}{\pm3.86}
  & \acc{\textbf{93.37}}{\pm4.48}
  & \acc{\textbf{90.60}}{\pm3.97}
  & \acc{\textbf{93.33}}{\pm4.74}
  & \acc{\textbf{1.00}}{\pm0.00} \\
TANDEM
  & \acc{86.15}{\pm7.75}
  & \acc{96.46}{\pm2.88}
  & \acc{\underline{91.53}}{\pm6.02}
  & \acc{\underline{72.72}}{\pm5.69}
  & \acc{95.81}{\pm5.53}
  & \acc{91.55}{\pm4.32}
  & \acc{86.90}{\pm6.34}
  & \acc{93.08}{\pm2.61}
  & \acc{\underline{3.63}}{\pm1.32} \\
TabPFN-W
  & \acc{\underline{87.85}}{\pm7.28}
  & \acc{\underline{96.55}}{\pm2.15}
  & \acc{88.47}{\pm5.75}
  & \acc{68.78}{\pm8.60}
  & \acc{\underline{97.16}}{\pm4.10}
  & \acc{93.10}{\pm5.92}
  & \acc{\underline{88.00}}{\pm5.20}
  & \acc{89.35}{\pm4.95}
  & \acc{3.75}{\pm2.38} \\
TabDPT
  & \acc{86.26}{\pm7.27}
  & \acc{96.05}{\pm2.57}
  & \acc{87.76}{\pm5.86}
  & \acc{71.99}{\pm7.32}
  & \acc{96.32}{\pm4.15}
  & \acc{90.94}{\pm5.72}
  & \acc{82.10}{\pm6.48}
  & \acc{\underline{93.25}}{\pm3.44}
  & \acc{4.88}{\pm1.69} \\
TabICL
  & \acc{84.62}{\pm10.52}
  & \acc{96.36}{\pm2.61}
  & \acc{87.06}{\pm6.23}
  & \acc{68.73}{\pm7.28}
  & \acc{95.52}{\pm5.59}
  & \acc{90.17}{\pm5.93}
  & \acc{82.60}{\pm6.14}
  & \acc{88.78}{\pm5.92}
  & \acc{7.63}{\pm2.29} \\
BETA
  & \acc{84.73}{\pm9.36}
  & \acc{94.38}{\pm3.34}
  & \acc{86.21}{\pm8.91}
  & \acc{70.21}{\pm5.61}
  & \acc{95.67}{\pm7.54}
  & \acc{87.53}{\pm4.67}
  & \acc{86.45}{\pm5.92}
  & \acc{90.38}{\pm6.42}
  & \acc{8.13}{\pm4.31} \\
TTables
  & \acc{86.80}{\pm2.14}
  & \acc{94.37}{\pm2.35}
  & \acc{89.66}{\pm3.12}
  & \acc{70.28}{\pm6.46}
  & \acc{95.80}{\pm2.14}
  & \acc{\underline{93.31}}{\pm2.83}
  & \acc{81.40}{\pm3.66}
  & \acc{77.96}{\pm2.55}
  & \acc{8.38}{\pm7.70} \\
Lasso
  & \acc{79.40}{\pm10.18}
  & \acc{94.47}{\pm4.39}
  & \acc{85.88}{\pm4.71}
  & \acc{61.19}{\pm13.72}
  & \acc{87.24}{\pm3.39}
  & \acc{91.18}{\pm6.39}
  & \acc{81.00}{\pm3.39}
  & \acc{91.86}{\pm6.03}
  & \acc{11.13}{\pm5.06} \\
MLP
  & \acc{83.95}{\pm9.80}
  & \acc{96.47}{\pm2.69}
  & \acc{85.41}{\pm8.00}
  & \acc{59.05}{\pm7.44}
  & \acc{89.98}{\pm9.17}
  & \acc{89.20}{\pm6.07}
  & \acc{78.40}{\pm4.05}
  & \acc{92.48}{\pm4.28}
  & \acc{11.63}{\pm5.45} \\
ProtoGate
  & \acc{83.95}{\pm9.82}
  & \acc{93.44}{\pm6.37}
  & \acc{82.48}{\pm5.68}
  & \acc{60.16}{\pm5.10}
  & \acc{86.12}{\pm3.34}
  & \acc{90.58}{\pm5.72}
  & \acc{81.50}{\pm5.10}
  & \acc{92.34}{\pm5.67}
  & \acc{12.06}{\pm4.77} \\
\bottomrule
\end{tabular}
\end{sc}
\end{footnotesize}
\end{center}
\vspace{-1.0em}
\end{table*}
\begin{table*}[htbp]
\caption{
Performance on 8 cross-domain datasets (mean accuracy with subscripted standard deviation over $5\times5$ CV).
\textbf{Bold} denotes the best result per dataset and \underline{underline} denotes the second-best.
Rank is the avg. rank across datasets (lower is better), computed with standard tie-breaking; Some datasets use only 50- 60 Optuna trials due to compute limits; ``-'' denotes OOM/unsupported runs, ranked last. ProtoGate targets very few sample and scales less favorably with larger \(n\).
Dataset abbreviations:
ORL = orlraws10P,
BAS = BASEHOCK,
REL = RELATHE,
PCM = PCMAC,
CCY = Cell Cycle,
CIF = CIFAR-10,
DF-R = DrivFace-Regression,
DF-C = DrivFace-Classification,
REG = Regression ($R^2$).
Model abbreviations:
*GOTabPFN = our method,
TabPFN-W = TabPFN Wide,
TTables = TuneTables.
}
\label{tab:icml_rebuttal_acc}
\vspace{-0.7em}
\begin{center}
\setlength{\tabcolsep}{2.4pt}
\begin{footnotesize}
\begin{sc}
\begin{tabular}{lccccccccc}
\toprule
Model/DB
& ORL & BAS & REL & PCM & CCY & CIF & DF-R & DF-C & Rank \\
\midrule
\#Samples
  & 100 & 1993 & 1427 & 1943 & 1067 & 11000 & 606 & 606 & -- \\
\#Features
  & 10304 & 4862 & 4322 & 3289 & 42728 & 2048 & 6400 & 6400 & -- \\
\#Classes
  & 10 & 2 & 2 & 2 & 3 & 10 & Reg. & 7 & -- \\
\midrule
*GOTabPFN
  & \acc{\textbf{100.00}}{\pm0.00}
  & \acc{\textbf{97.11}}{\pm1.00}
  & \acc{88.87}{\pm1.32}
  & \acc{\textbf{89.51}}{\pm2.24}
  & \acc{\textbf{79.94}}{\pm2.53}
  & \acc{\textbf{88.45}}{\pm0.89}
  & \acc{\textbf{0.6548}}{\pm0.0992}
  & \acc{\textbf{86.70}}{\pm2.48}
  & \acc{\textbf{1.25}}{\pm0.66} \\
TabDPT
  & \acc{97.32}{\pm0.68}
  & \acc{97.00}{\pm0.68}
  & \acc{88.26}{\pm1.49}
  & \acc{87.58}{\pm0.96}
  & \acc{77.52}{\pm2.44}
  & \acc{88.00}{\pm0.40}
  & \acc{\underline{0.6505}}{\pm0.0820}
  & \acc{83.57}{\pm3.04}
  & \acc{4.44}{\pm1.49} \\
TabPFN-W
  & \acc{96.10}{\pm6.52}
  & \acc{\underline{97.04}}{\pm0.72}
  & \acc{87.84}{\pm3.90}
  & \acc{88.56}{\pm0.71}
  & \acc{\underline{78.67}}{\pm2.42}
  & \acc{88.12}{\pm0.60}
  & \acc{0.6430}{\pm0.0772}
  & \acc{85.42}{\pm2.66}
  & \acc{\underline{3.88}}{\pm1.62} \\
TTables
  & \acc{93.14}{\pm1.02}
  & \acc{93.14}{\pm1.02}
  & \acc{88.26}{\pm2.31}
  & \acc{84.27}{\pm3.06}
  & \acc{78.30}{\pm1.13}
  & \acc{78.05}{\pm0.15}
  & \acc{0.6332}{\pm0.0675}
  & \acc{84.62}{\pm2.43}
  & \acc{5.94}{\pm1.91} \\
TabICL
  & \acc{\underline{99.20}}{\pm1.87}
  & \acc{96.84}{\pm0.75}
  & \acc{88.15}{\pm1.68}
  & \acc{88.68}{\pm1.94}
  & -
  & \acc{87.61}{\pm1.00}
  & -
  & \acc{\underline{85.55}}{\pm2.45}
  & \acc{5.31}{\pm2.46} \\
TANDEM
  & \acc{99.00}{\pm2.45}
  & \acc{96.72}{\pm0.86}
  & \acc{\textbf{89.42}}{\pm1.42}
  & \acc{\underline{88.99}}{\pm1.33}
  & \acc{77.31}{\pm1.97}
  & \acc{87.80}{\pm0.36}
  & \acc{0.6488}{\pm0.0770}
  & \acc{84.42}{\pm2.55}
  & \acc{4.00}{\pm1.94} \\
Lasso
  & \acc{96.00}{\pm3.82}
  & \acc{96.74}{\pm0.88}
  & \acc{86.87}{\pm1.46}
  & \acc{88.85}{\pm1.80}
  & \acc{77.86}{\pm2.42}
  & \acc{86.42}{\pm1.10}
  & \acc{0.3194}{\pm0.0668}
  & \acc{77.95}{\pm3.03}
  & \acc{6.13}{\pm1.69} \\
MLP
  & \acc{91.00}{\pm8.37}
  & \acc{96.91}{\pm1.15}
  & \acc{\underline{89.12}}{\pm1.85}
  & \acc{87.71}{\pm1.95}
  & \acc{76.38}{\pm1.42}
  & \acc{\underline{88.15}}{\pm0.54}
  & \acc{0.5682}{\pm0.1258}
  & \acc{80.59}{\pm3.72}
  & \acc{5.25}{\pm2.17} \\
ProtoGate
  & \acc{66.40}{\pm11.32}
  & -
  & \acc{58.40}{\pm3.15}
  & -
  & -
  & -
  & \acc{-0.1636}{\pm0.0738}
  & \acc{66.70}{\pm10.20}
  & \acc{8.81}{\pm0.35} \\
\bottomrule
\end{tabular}
\end{sc}
\end{footnotesize}
\end{center}
\vspace{-1em}
\end{table*}
\setlength{\textfloatsep}{6pt plus 1pt minus 2pt}
\setlength{\floatsep}{6pt plus 1pt minus 2pt}
\setlength{\intextsep}{6pt plus 1pt minus 2pt}
\setlength{\abovecaptionskip}{2pt}
\setlength{\belowcaptionskip}{-2pt}
\begin{figure}[t]
    \centering
    \includegraphics[width=0.96\linewidth]{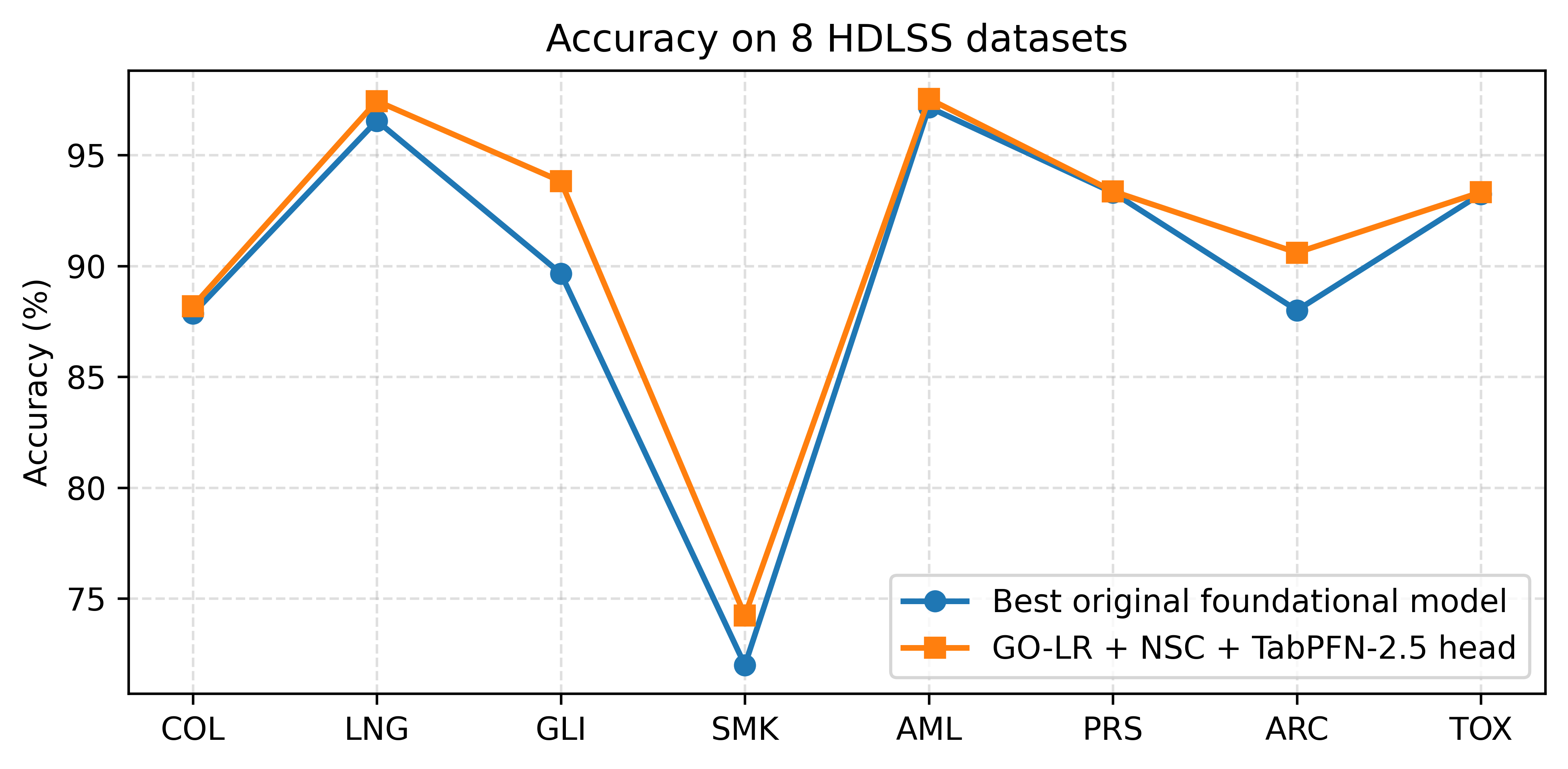}
    \vspace{-0.5em}
    \caption{\textbf{HDLSS ablation accuracy.}
    GOTabPFN vs. tabular foundation models on 8 HDLSS datasets.}
    \label{fig:golr-nsc-accuracies}
    \vspace{-0.6em}
\end{figure}

\begin{figure}[t]
    \centering
    \includegraphics[width=0.96\linewidth]{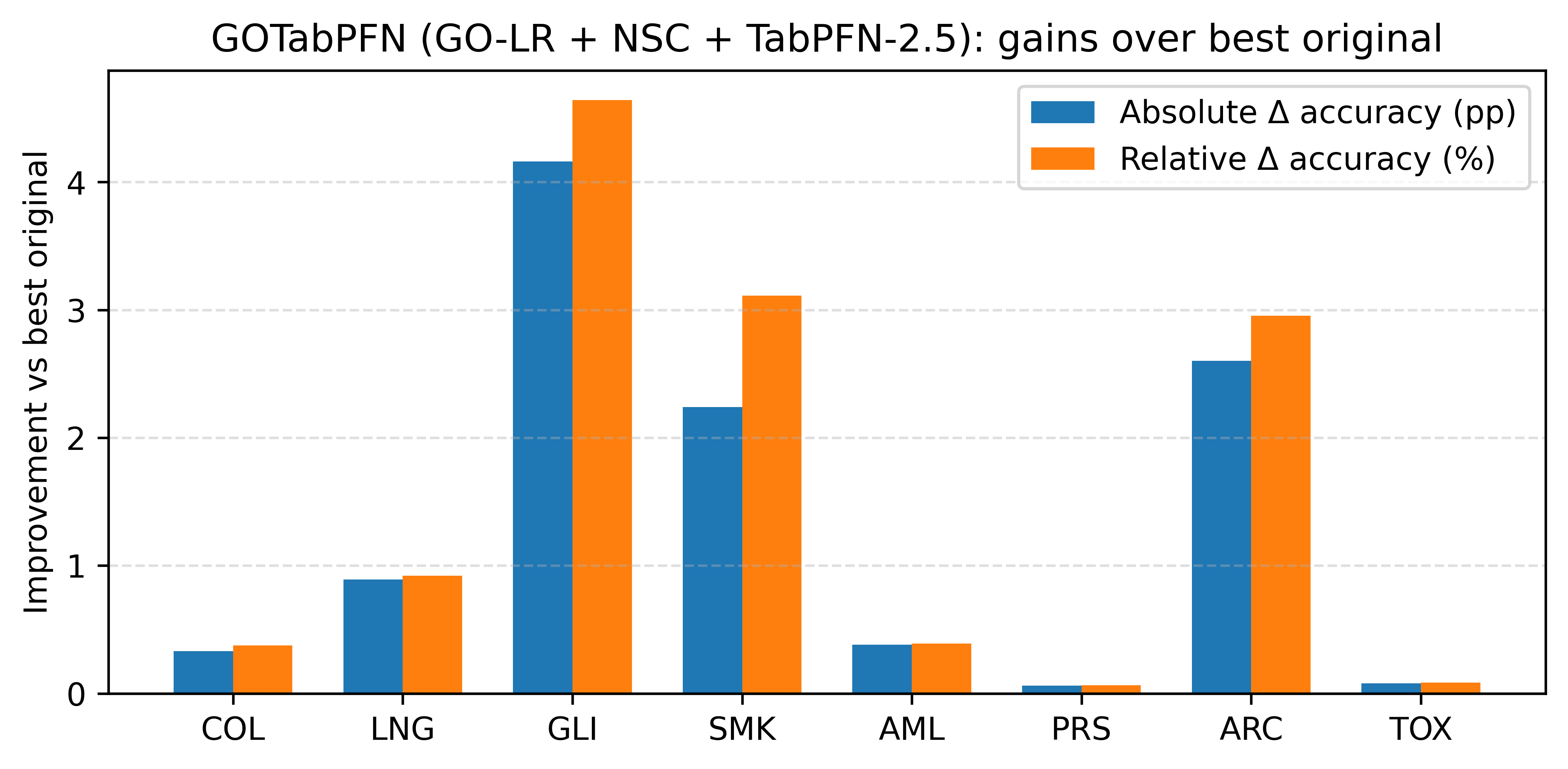}
    \vspace{-0.5em}
    \caption{\textbf{Ablation gains.}
    Absolute and relative gains of GOTabPFN over the best original foundation-model head.}
    \label{fig:golr-nsc-gains}
    \vspace{-0.4em}
\end{figure}
\setlength{\textfloatsep}{6pt plus 1pt minus 2pt}
\setlength{\floatsep}{6pt plus 1pt minus 2pt}
\setlength{\intextsep}{6pt plus 1pt minus 2pt}
\setlength{\abovecaptionskip}{2pt}
\setlength{\belowcaptionskip}{-2pt}
\begin{figure}[t]
    \centering
    \includegraphics[width=0.95\linewidth]{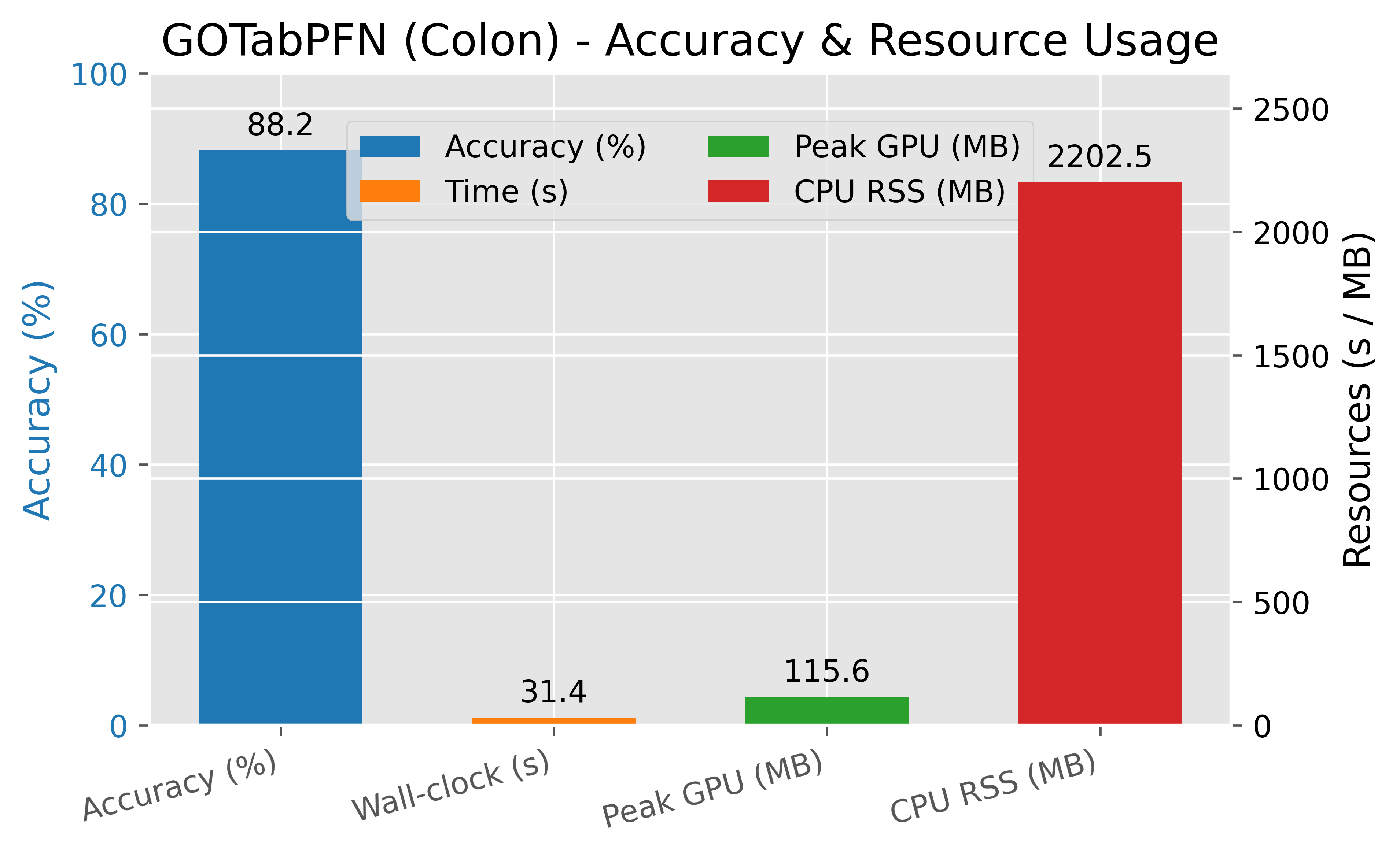}
    \vspace{-0.6em}
    \caption{\textbf{Accuracy-resource profile on Colon.}
    Wall-clock time, peak GPU memory, and CPU RSS for GOTabPFN.
    }
    \label{fig:gotabpfn-colon-resources}
    \vspace{-0.6em}
\end{figure}

\begin{figure}[t]
    \centering
    \includegraphics[width=0.95\linewidth]{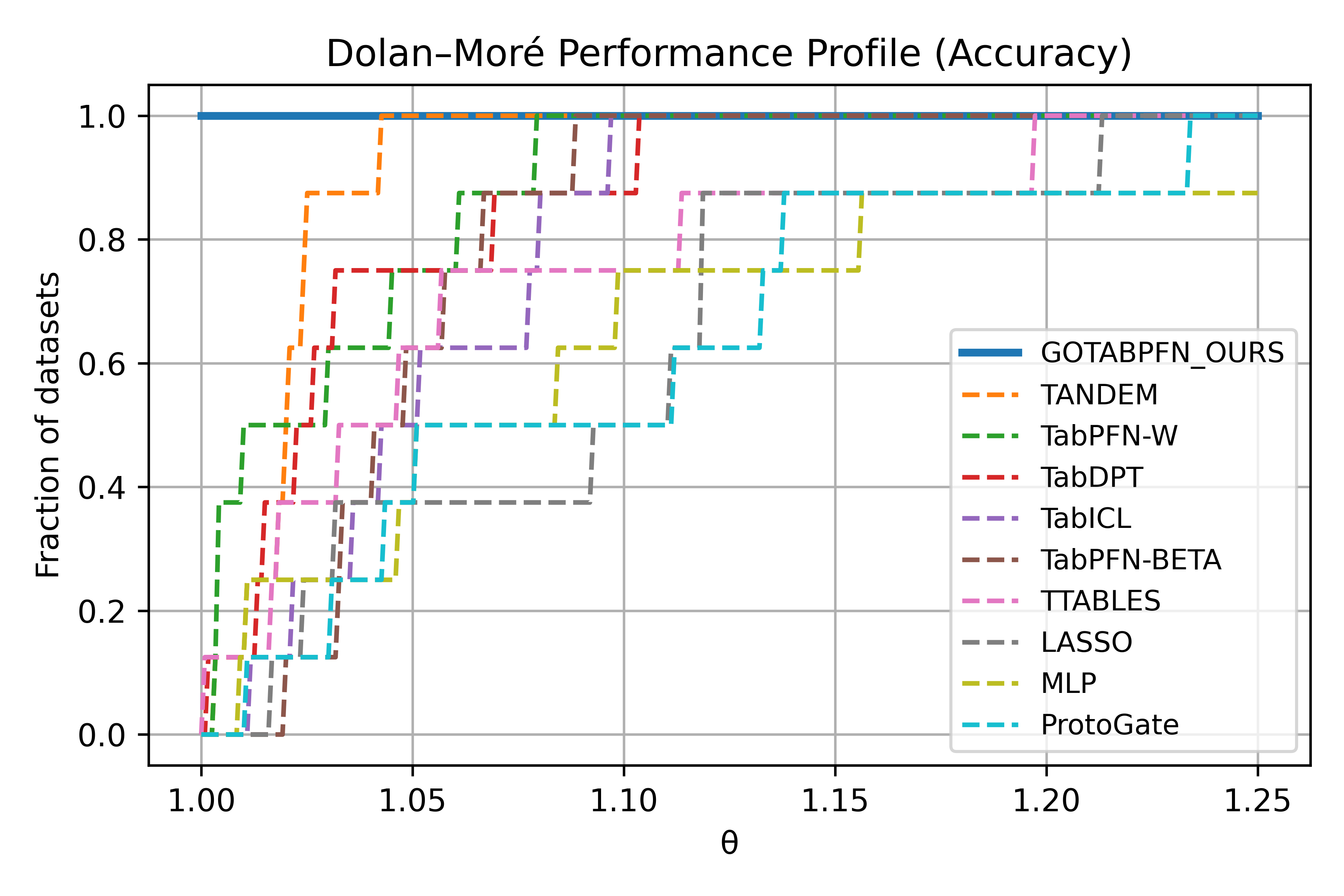}
    \vspace{-0.6em}
    \caption{\textbf{Dolan-Moré profiles.}
    Performance profiles over 8 HDLSS datasets against the top-10 baselines.
    }
    \label{fig:gotabpfn-dolan-more}
    \vspace{-0.4em}
\end{figure}
\begin{table}[t]
\caption{\textbf{Colon ablation.} Accuracy over \(5\times5\) CV. All NSC variants use NSC-pSP unless noted.}
\label{tab:gotabpfn_colon_ablation}
\vspace{-0.25em}
\centering
\setlength{\tabcolsep}{3pt}
\renewcommand{\arraystretch}{0.90}
\footnotesize
\begin{tabular}{@{}lc@{}}
\toprule
Variant & Acc.~(\%) \\
\midrule
GO-LR+NSC-pSP (tokens)        & \textbf{88.18} $\pm$ 10.05 \\
No NSC (TabPFN-2.5)           & 86.85 $\pm$ 9.16 \\
Rand.\ order + NSC-pSP        & 84.21 $\pm$ 10.34 \\
GO-LR+NSC-pSP + LogReg        & 82.67 $\pm$ 11.05 \\
Id.\ order + NSC-pSP          & 81.67 $\pm$ 11.91 \\
GO-LR+NSC-pSP (uniform seg.)  & 80.00 $\pm$ 11.30 \\
Mean-pool NSC-pSP, no GO-LR   & 64.59 $\pm$ 3.67 \\
\bottomrule
\end{tabular}
\vspace{-0.8em}
\end{table}
\section{Experimental Results}
\label{exp}
\textbf{Algorithms and datasets used.}
We use eight biomedical HDLSS datasets (e.g., Arcene, Colon, GLI-85, Lung, etc.) from the repository of \citet{hdlsss}, also used by ProtoGate~\cite{protogate}. We follow the standard HDLSS setting where features far exceed samples ($m \gg n$). To study when feature ordering helps, we propose an empirical locality-based criterion; see Appendix~\ref{app:when_ordering_locality} for the criterion and usage guidance. We compare against 55 baselines spanning classical ML/GBDT, HDLSS feature selection, deep tabular models, and small tabular foundation models. Modern methods including TANDEM~\cite{tandem}, TabPFN Wide~\cite{tabpfnwide}, TabDPT~\cite{tabdpt}, TabICL~\cite{tabicl}, BETA~\cite{beta}, TuneTables~\cite{tune}, and ProtoGate perform strongly on HDLSS classification, while classical baselines (MLP, Lasso) remain competitive, consistent with ProtoGate. Full baseline details are in Appendix~\ref{app:comp}.\newline
\textbf{Experimental set up.}
We use 5$\times$5 nested cross-validation (25 repeats) on the HDLSS datasets, matching ProtoGate’s protocol, for all baselines. Experiments run on the TITAN cluster (x86\_64 CPU, 188,GB RAM, TITAN RTX 24,GB) with PyTorch~2.4.1+cu121. We disable AMP for GOTabPFN, but some transformer baselines require AMP and/or DP/DDP to fit GPU memory. We tune only GO-LR and NSC in GOTabPFN via Optuna~\cite{b35} (150 trials/dataset), following standard tabular tuning practice~\cite{tabm,tabr,mlpplr,b72}; the TabPFN-2.5 head remains frozen. For baselines, we use authors’ recommended settings when tuning is unnecessary; when tuning is recommended, we also run Optuna (150 trials) for fairness. Among TabPFN-based models, TuneTables likewise uses lightweight Optuna tuning. See Fig.~\ref{fig:tuning-regime-venn}.\newline
\textbf{HDLSS classification performance.}
Table~\ref{tab1:hdlss-top-models} reports mean accuracy ($\pm$std) over $5\times5$ repeated CV on eight HDLSS benchmarks. GOTabPFN is best on all datasets (8/8) with the lowest average rank ($1.00\pm0.00$), indicating consistent dominance. Relative to the strongest TabPFN variants (TabPFN-Wide, TuneTables, BETA), gains are largest on harder/noisier datasets (e.g., SMK, TOX) and smaller on near-saturated ones (e.g., ALLAML, Prostate), where headroom is limited. GOTabPFN also shows comparable or lower split-to-split variance, suggesting improved robustness in the low-sample regime. Full results against 55 baselines are in Table~\ref{tab:hdlss-top-models} (Appendix~\ref{app:comp}). Across the 8 cross-domain(App.~\ref{addc}) high-dimensional datasets (Table~\ref{tab:icml_rebuttal_acc}), GOTabPFN achieves the best average rank (\(1.25\pm0.66\)) and obtains the top result on 7/8 tasks, including image-derived, biological, text-like, and camera-sensor datasets. The strongest competing methods are TabPFN-W, TANDEM, TabDPT, and MLP, but their gains are less consistent across domains. These results suggest that GO-LR+NSC provides a robust ordering-aware compression interface for diverse HDLSS and related high-dimensional regimes.\newline
\textbf{Statistical significance.}
Across 8 HDLSS datasets, GOTabPFN achieves the best average rank and is separated from competing baselines by Friedman~\cite{fried}/Nemenyi~\cite{nemen} critical-difference analysis (Fig.~\ref{fig:gotabpfn-cd}).  While paired Wilcoxon signed-rank tests~\cite{wilcoxon} yield consistent directional improvements (all $p_{\text{raw}}=0.00781$), significance does not always survive Holm correction due to the small number of datasets and the resulting conservativeness of multiple-comparison control (Table~\ref{tab:significance_holm}).  Additional statistical details are in Appendix~\ref{ab5}.
\newline
\textbf{Runtime and computational complexity.}
For \(n\) samples, \(m\) features, \(k\) sample-clusters, and \(M\) NSC tokens, GO-LR costs \(\mathcal{O}(nmkI + m^2 n)\), plus \(\mathcal{O}(k m^2 b)\) for KL graph construction; refinement/integration adds \(\mathcal{O}(kPm^2 + k^2m)\), where \(P\) is no. of Sweep Refine passes. In HDLSS (\(n\!\ll\!m\)), NSC fits per-segment PC1 directions in \(\mathcal{O}(n^2m)\) and tokenizes in \(\mathcal{O}(nm)\). The TabPFN-2.5 head on \(M\) tokens is dominated by attention over \(n\) context points, scaling as \(\tilde{\mathcal{O}}(n^2M)\) per split. On Colon, GOTabPFN achieves \(88.2\%\) accuracy in \(31.4\)s with modest peak GPU use (\(115.6\)MB; Fig.~\ref{fig:gotabpfn-colon-resources}); memory is primarily CPU-side (RSS \(\approx 2202.5\)MB), consistent with GO-LR graph construction/refinement.\newline
\textbf{Ablations.}
Figs.~\ref{fig:golr-nsc-accuracies}-\ref{fig:golr-nsc-gains} and Table~\ref{tab:gotabpfn_gain_vs_best_orig} quantify adding GO-LR ordering and NSC compression before a frozen TabPFN-2.5 head: GOTabPFN matches or exceeds the best original tabular foundation model on all 8 HDLSS datasets, with largest gains on GLI-85 (+4.16 pp), Arcene (+2.60 pp), and SMK (+2.24 pp), and near-saturated improvements on ALLAML/Prostate. On Colon (Table~\ref{tab:gotabpfn_colon_ablation}), removing NSC lowers accuracy, and replacing GO-LR with identity/random orders drops more, indicating NSC benefits from structure-revealing orderings; transition-aware segmentation and PCA-based token embeddings outperform uniform segmentation or mean-pooling, while swapping TabPFN-2.5 for logistic regression substantially degrades performance, suggesting both the ordering+compression pipeline and a strong TabPFN-style predictor are needed. The Dolan-Mor'e profile~\cite{dolan} (Fig.~\ref{fig:gotabpfn-dolan-more}, following TANDEM~\cite{tandem}) shows stronger cross-dataset consistency: GOTabPFN stays closest to the per-dataset best (curve at 1.0), whereas others need larger tolerance $\theta$. Additional ablations appear in App.~\ref{ab6}.\newline
\textbf{Limitations.}
GOTabPFN inherits constraints from its frozen TabPFN-2.5 backbone, including its limit of up to 10 classes and sample-size limit of 50K samples. GO-LR+NSC adds ordering and compression before TabPFN inference; runtime can increase for larger sample sizes. Thus, GOTabPFN is most suitable for HDLSS and related low-sample, high-dimensional regimes, rather than high-sample regimes.
\begin{table}[t]
\caption{\textbf{GOTabPFN gains over foundation-model heads.}
Accuracy on 8 HDLSS datasets under \(5\times5\) CV. ``Best orig'' is the best among TabDPT, TabPFN-Wide, BETA, TuneTables, and TabICL.}
\label{tab:gotabpfn_gain_vs_best_orig}
\vspace{-0.1em}
\centering
\setlength{\tabcolsep}{2.7pt}
\renewcommand{\arraystretch}{0.88}
\footnotesize
\begin{tabular}{lcccc}
\toprule
Dataset & Best orig & GOTabPFN & $\Delta_{\rm abs}$ & $\Delta_{\rm rel}$ \\
\midrule
Colon    & 87.85 & 88.18 & 0.33 & 0.38 \\
Lung     & 96.55 & 97.44 & 0.89 & 0.92 \\
GLI85    & 89.66 & 93.82 & 4.16 & 4.64 \\
SMK      & 71.99 & 74.23 & 2.24 & 3.11 \\
ALLAML   & 97.16 & 97.54 & 0.38 & 0.39 \\
Prostate & 93.31 & 93.37 & 0.06 & 0.06 \\
Arcene   & 88.00 & 90.60 & 2.60 & 2.95 \\
TOX      & 93.25 & 93.33 & 0.08 & 0.09 \\
\bottomrule
\end{tabular}
\vspace{-0.8em}
\end{table}
\section{Conclusion}
\label{con}
We present \textbf{GOTabPFN}, which makes TabPFN-style small tabular foundation models effective in HDLSS regimes. GOTabPFN couples  MinLA-grounded feature ordering (GO-LR) with a neuro-inspired stable, locality-preserving compression interface (NSC) that converts high-dimensional tables into compact token sequences. Without retraining or modifying the TabPFN-2.5 backbone, this ordering-to-tokenization pipeline improves accuracy and robustness under tight token budgets across diverse HDLSS benchmarks. GOTabPFN provides a theory-grounded, practical route to scalable in-context tabular prediction when \(m \gg n\).
\section*{Acknowledgements}
This work was supported in part by the US National Science Foundation under Awards \#1920920, \#2125872, and \#2223793. We thank the anonymous ICML reviewers for their valuable feedback and suggestions.
\section*{Impact Statement}
GOTabPFN aims to make tabular foundation models more usable in HDLSS settings by introducing an ordering-aware compression interface that reduces feature dimensionality while preserving local structure. This can benefit scientific and biomedical domains where data are scarce but feature spaces are large. However, as with any predictive model, deployment in sensitive domains should include careful validation, bias assessment, and domain-expert oversight.
\section*{Software and Data}
The project webpage is available at \url{https://www.zadidhabib.com/gotabpfn.html}. Code, notebooks, and installation instructions are available at \url{https://github.com/zadid6pretam/GOTabPFN}; the package can also be installed with \texttt{pip install gotabpfn}. Our experiments use TabPFN-2.5 as the frozen backbone with \texttt{tabpfn==6.3.1}; newer default \texttt{tabpfn} installations may install TabPFN-3 or later, so reproducing our results requires \texttt{pip install tabpfn==6.3.1} and may require Prior Labs/Hugging Face checkpoint access. Most HDLSS datasets are from the scikit-feature dataset repository~\citep{hdlsss} (\url{https://jundongl.github.io/scikit-feature/datasets.html}); DrivFace is from UCI~\citep{drivface}; CIFAR-10 embeddings are derived from the Kaggle CIFAR-10 dataset~\citep{cifa}; and Cell Cycle is from~\citet{cell1} via GEO~\citep{cell2}.
\bibliography{example_paper}
\bibliographystyle{icml2026}

\newpage
\appendix
\onecolumn

This supplementary document supports our main paper \textit{GOTabPFN: From Feature Ordering to Compact Tokenization for Tabular Foundation Models on High-Dimensional Data} (Submitted to the Fourty-Third International Conference on Machine Learning (ICML) 2026). Specifically, it includes:
\begin{itemize}
    \item Detailed Related Work in Sec.~\ref{morerelated}
    \item Theoretical Characterization of GO-LR: Complexity and TSP Connections in Sec.~\ref{theory-go-lr}
    \item NSC Variants and NSC as a Shared Piecewise Pooling Operator in Sec.~\ref{nscop}
    \item NSC as a Structured Dimensionality Reduction Layer in Sec.~\ref{supp:nsc_dr}
    \item Why Feature Ordering? Local Neighborhoods Enable Structure-Aware Compression in Sec.~\ref{app:why_ordering}
    \item Feature Ordering - When to Use? Through the Lens of Locality in Sec.~\ref{app:when_ordering_locality}
    \item Detailed Comparative Results in Sec.~\ref{app:comp}
    \item GOTabPFN Hyperparameters in Sec.~\ref{app:base}
    \item Statistical Significance Analysis in Sec.~\ref{ab5}
    \item Additional Ablation Analysis in Sec.~\ref{ab6}
    \item Representation Quality via t-SNE in Sec.~\ref{ab7}
    \item Inference Level Ablation on Calibration and Robustness in Sec.~\ref{ab8}
    \item Sanity and Stress Diagnostics in Sec.~\ref{ab9}
    \item Additional Reliability and Interpretability Diagnostics in Sec.~\ref{ab10}
    \item Theory-Inspired Representation Diagnostics in Sec.~\ref{ab11}
    \item OOD and Local Sensitivity Diagnostics in Sec.~\ref{ab13}
    \item Deployment-Oriented Triage Diagnostics in Sec.~\ref{ab14}
    \item Extension beyond TabPFN in Sec.~\ref{tabicl}
    \item TabPFN Seed Sensitivity in Sec.~\ref{tabpfn}
    \item Additional Clarifications in Sec.~\ref{addc}
\end{itemize}

\renewcommand{\thesection}{A}
\renewcommand{\thesubsection}{A.\arabic{subsection}}
\setcounter{figure}{0}\renewcommand{\thefigure}{A.\arabic{figure}}
\setcounter{table}{0}\renewcommand{\thetable}{A.\arabic{table}}
\section{Detailed Related Work}
\label{morerelated}
\textbf{Tabular Models.} Recent tabular deep learning has also advanced rapidly beyond early attention/MLP baselines. In-context learning-based model such as TabICL~\cite{tabicl} aim to provide strong out-of-the-box performance (especially in low-data regimes), while methods like TabR~\cite{tabr} and TabM~\cite{tabm} improve classical deep tabular pipelines via nearest-neighbor augmentation or parameter-efficient ensembling. Hybrid and low-label settings are explored by TANDEM~\cite{tandem}, and context optimization for scalable prior-fitted networks is studied in TuneTables~\cite{tune}. TabDPT pre-trains a row-token tabular foundation model by combining retrieval-based in-context learning with self-supervised column-masking on large-scale real datasets, enabling strong generalization to unseen tasks without per-dataset tuning~\cite{tabdpt}. At the same time, strong “simple” baselines remain highly competitive: carefully pre-tuned MLPs such as Real-MLP~\cite{realmlp} can be surprisingly hard to beat, and gradient-boosted decision trees, especially XGBoost~\cite{xgb}, LightGBM~\cite{lgbm}, and CatBoost~\cite{catboost} continue to thrive as robust, high-performing workhorses on many tabular benchmarks.\newline
\textbf{Feature Selection-based HDLSS Specific Models.} Feature-selection methods are particularly relevant for HDLSS tabular learning, where selecting a compact, informative subset of features is often more critical than scaling model capacity. ProtoGate~\cite{protogate} addresses this regime with prototype-guided gating that selects features by aligning samples with class prototypes, improving robustness when samples are scarce. Several works perform instance-wise or differentiable subset selection: INVASE~\cite{invase} learns a sample-dependent feature selector trained with prediction and selection regularization, while STG~\cite{stg} uses stochastic gates to enable end-to-end feature selection with sparsity control. LSPIN/LLSPIN~\cite{spin} further emphasizes interpretable, per-instance gating for tabular inputs. Complementarily, L2X~\cite{l2x} and REAL-X~\cite{realx} learn differentiable selection/explanation mechanisms that identify a small set of features sufficient for prediction, offering a principled way to trade accuracy for sparsity and interpretability in low-sample settings. RKNN-FS~\cite{fs1} addresses HDLSS learning through feature selection, using a random \(k\)-nearest neighbor (RKNN) strategy to identify informative features before prediction.\newline
\textbf{LLMs for Tabular Data.} Motivated by general-purpose LLMs~\cite{b18,b19,b20,b21}, recent work adapts them to tabular prediction via fine-tuning (e.g., LIFT~\cite{b22} on GPT-3~\cite{b18}, TabLLM~\cite{b23} on T0~\cite{b29}) and tabular-centric pretraining for transfer/instruction following (e.g., TP-BERTa~\cite{b24}, GTL~\cite{b25}). Other lines use prompting or hybrid pipelines, including correlated-text semi-supervision (P2T~\cite{b26}), weak-learner boosting (Summary~\cite{b30}), feature synthesis (FeatLLM~\cite{b27}), synergy learning with tabular backbones (SERSAL~\cite{b28}), LLM-guided rule/feature generation~\cite{b31,b32}, rule refinement without LLM fine-tuning~\cite{b33}, metadata-driven distillation~\cite{b34}, and order-bias mitigation (ROTATOR-LLM~\cite{wang2025advancing}). Despite encouraging efforts, current tabular LLMs remain poorly suited to HDLSS and very high-dimensional tables, since attention-based architectures incur quadratic cost in the token/feature dimension.\newline
\textbf{TabPFN and Variants.} TabPFN introduced the idea of a tabular foundation model that performs in-context learning for small tabular classification problems~\cite{tabpfn}. This was later substantially extended and validated at scale in TabPFN v2~\cite{tabpfnv2}. Recent follow-ups push accuracy and scalability along multiple axes: TabPFN-2.5 advances the state of the art in tabular foundation models~\cite{tabpfn2.5}, TabPFN Unleashed (BETA) targets practical scalability and effectiveness on broader settings~\cite{beta}, and TabPFN-Wide explores continued pre-training for extreme feature counts~\cite{tabpfnwide}. Orthogonally, LoCalPFN investigates retrieval and fine-tuning mechanisms for in-context tabular models~\cite{localpfn}. Our work is complementary to these efforts: rather than modifying TabPFN itself, we develop a stable, structure-aware compression interface that enables TabPFN-style predictors to operate reliably in HDLSS regimes with $m\!\gg\!n$ and very large feature counts.\newline
\textbf{Other Models.} MLP-PLR is a strong MLP baseline for tabular data that replaces raw continuous inputs with learnable piecewise-linear (PLR) feature embeddings, improving expressivity and performance over standard MLPs~\cite{mlpplr}. Other tabular models such as TabNet~\cite{tabnet}, TabTransformer~\cite{b71}/FT-Transformer~\cite{b72}, SAINT~\cite{b73}, AutoInt~\cite{autoint}, and interaction-centric networks (DeepFM~\cite{deepfm}, DCN~\cite{dcn}) differ primarily in how they represent columns and capture cross-feature dependencies: TabNet performs step-wise attentive feature selection via sparse masks for interpretability, while transformer families tokenize columns (especially categorical features) and use self-attention to model contextual interactions, with FT-Transformer providing a lighter mixed-type variant; SAINT further strengthens tabular transformers via augmentation and contrastive/self-supervised pretraining, and AutoInt targets high-order interactions directly through attention. A complementary line treats tabular inputs as feature sequences, including ordering-based approaches (TabSeq~\cite{tabseq}) and recurrent processing (TabulaRNN~\cite{mamb}), which impose position-aware inductive bias but may depend on the quality of the chosen order. More recent sequence backbones replace attention with state-space modeling e.g., Mambular~\cite{mambular}, MambaTab~\cite{mambatab}, and hybrids like MambAttention~\cite{mamb} leveraging Mamba-style selective state-space layers for efficient long-range dependency modeling. Tree-inspired inductive biases remain competitive through differentiable ensembles (NODE~\cite{node}, ENODE~\cite{deeptabular}, NDTF~\cite{ndtf}) that emulate decision trees with end-to-end training, alongside classical ensembles (Random Forest, AdaBoost, GBM) that provide strong, stable baselines. Finally, representation and regularization advances such as DANets~\cite{b81}, ResNetTabular~\cite{deeptabular}, CategoryEmbedding~\cite{mlpplr}, TANGOS~\cite{tangos}, and metric/contrastive methods (ModernNCA~\cite{modern}, Trompt~\cite{trompt}) improve robustness and optimization on noisy, small-sample settings, while standard baselines (Naive Bayes, KNN, SVM, Decision Tree, Lasso, MLP, 1-D CNN) remain important reference points due to their interpretability and well-characterized trade-offs. GeoAggregator~\citep{geoaggregator} and ZAYAN~\citep{zayan} represent recent domain-specific geospatial tabular deep learning models, with GeoAggregator targeting spatially aware geospatial regression and ZAYAN focusing on feature-level contrastive learning for tabular remote sensing and environmental data.\newline
\textbf{Feature Ordering and Permutation Sensitivity.} Mambular~\citep{mambular} first highlighted that random permutations of tabular feature order can induce brittleness in predictive performance, even under fixed seeds. Concurrently, TabSeq~\citep{tabseq} explicitly introduced a feature ordering algorithm for tabular deep learning, establishing column permutation as a learnable design choice rather than a nuisance factor. Later ROTATOR-LLM~\citep{wang2025advancing} extended this direction by studying feature ordering for LLM-based tabular inference. DynaTab~\citep{dynatab} systematically studied when feature ordering matters in high-dimensional tabular learning, introducing an Intrinsic Dimensionality Factor (IDF) and feature-to-sample ratio $\rho=m/n$ based categorization of dataset regimes. It proposed a neuroscience-inspired Dynamic Feature Ordering (DFO) algorithm and showed that sequence-sensitive backbones such as Transformers, LSTMs, denoising autoencoders, and SSM-style models can benefit from adaptive ordering. However, its use of vanilla sequence backbones incurs substantial memory and runtime costs, motivating more compact ordering-aware tokenization and compression strategies. We use Fig.~\ref{fig:tuning-regime-venn} to position GOTabPFN within this landscape according to hyperparameter tuning requirements for tabular models: unlike PFN/ICL-style models that require no dataset-specific tuning and conventional tabular learners that are fully tuned, GOTabPFN occupies the middle regime by tuning only the GO-LR+NSC front-end while retaining a frozen TabPFN-2.5 backbone.
\begin{figure}[t]
    \centering
    \includegraphics[width=\linewidth]{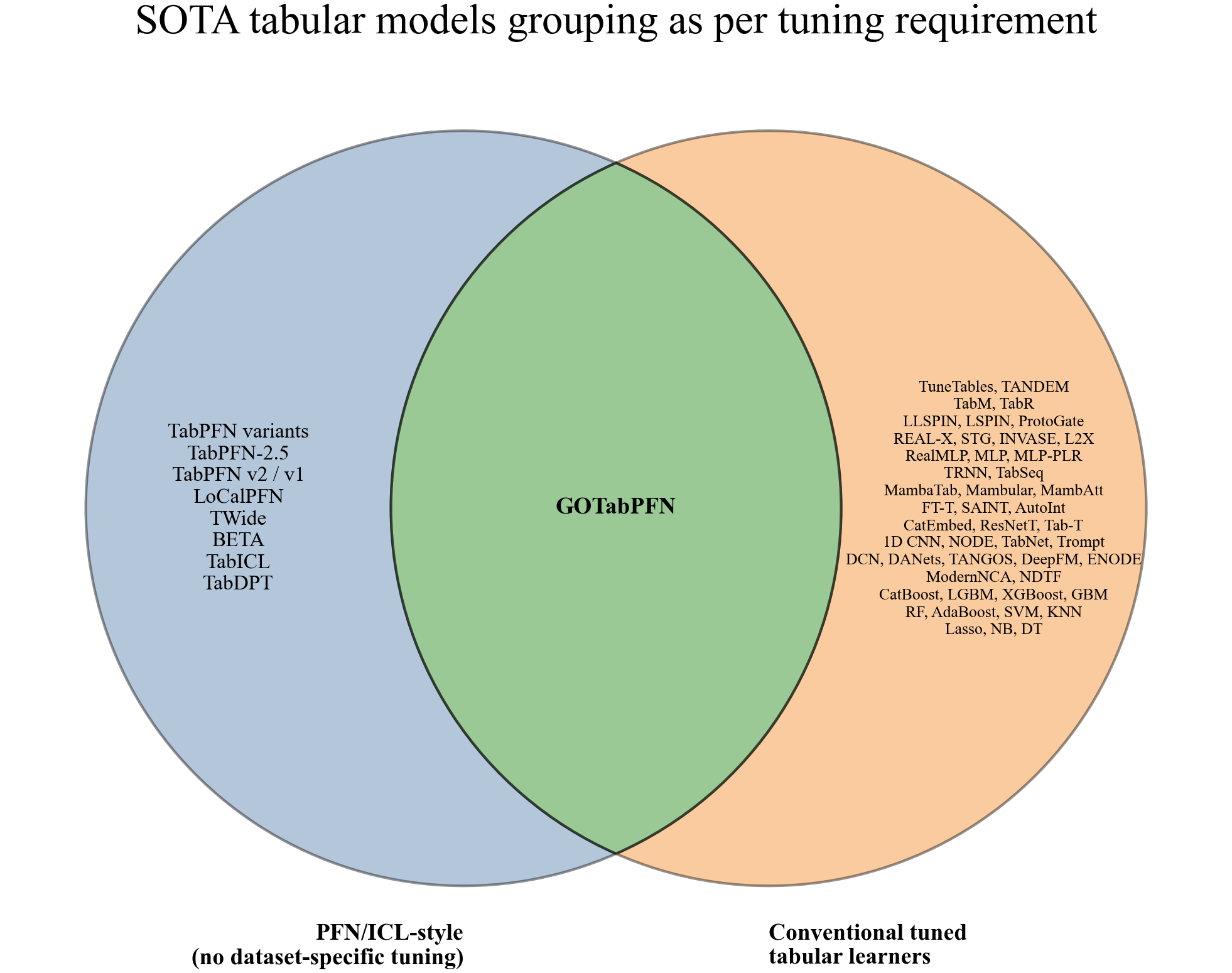}
    \caption{\textbf{GOTabPFN tuning regime.}
GOTabPFN sits between frozen PFN/ICL-style tabular foundation models and fully tuned tabular learners: only the GO-LR+NSC front-end is tuned, while TabPFN-2.5 remains frozen. See Appendix~\ref{addc} for more clarifications.}
    \label{fig:tuning-regime-venn}
\end{figure}

\renewcommand{\thesection}{B}
\renewcommand{\thesubsection}{B.\arabic{subsection}}
\setcounter{figure}{0}\renewcommand{\thefigure}{B.\arabic{figure}}
\setcounter{table}{0}\renewcommand{\thetable}{B.\arabic{table}}
\section{Theoretical Characterization of GO-LR: Complexity and TSP Connections}
\label{theory-go-lr}


\subsection{GO-LR as a TSP-Style Initialization with Local Refinement.}
GO-LR does not solve MinLA exactly; instead, our implementation constructs a permutation by greedy seriation over a pairwise dissimilarity matrix. The initialization coincides with a nearest-neighbor heuristic for a TSP-path objective defined on a complete graph, and GO-LR then locally refines the resulting permutation under the dispersion objective.
\begin{definition}[TSP-path Objective on a Complete Graph]
Given a complete weighted graph $\mathcal{K}=(V,\binom{V}{2},d)$ with edge weights $d_{ij}\ge 0$, we define the path cost of a permutation $\sigma=(\sigma_1,\dots,\sigma_m)$ by Eq.~\ref{eq:tsp-path}, repeated below as 
Eq.~\ref{eq:tsp-path_apx} for brevity.

\begin{equation}
\label{eq:tsp-path_apx}
\mathrm{PathCost}(\sigma)\;=\;\sum_{t=1}^{m-1} d_{\sigma_t,\sigma_{t+1}}
\end{equation}
\end{definition}
\begin{lemma}[Nearest-Neighbor Heuristic]\label{lem:nn}
The greedy procedure ``start at $\arg\min_i \sum_j d_{ij}$ and repeatedly append the nearest unvisited node'' returns a Hamiltonian path in $\mathcal{K}$ and is a standard nearest-neighbor heuristic for minimizing Eq.~\eqref{eq:tsp-path}~\cite{b4}.
\end{lemma}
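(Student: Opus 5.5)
The claim has two parts: the procedure outputs a Hamiltonian path in $\mathcal{K}$, and it is the standard nearest-neighbor heuristic for minimizing Eq.~\eqref{eq:tsp-path}. The first part is a finite induction on the constructed sequence. The second is an identification with the classical heuristic of~\cite{b4}, not an approximation guarantee.

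\textbf{Well-definedness.} Since $V$ is finite with $|V|=m\ge 1$, the start node $\sigma_1\in\arg\min_i\sum_j d_{ij}$ exists. Ties are broken by any fixed rule, such as the smallest index, so the output is deterministic. Similarly, at each step $t<m$ the set $U_t=V\setminus\{\sigma_1,\dots,\sigma_t\}$ is nonempty and finite. Hence $\sigma_{t+1}\in\arg\min_{u\in U_t} d_{\sigma_t,u}$ exists, again with a fixed tie-break. Completeness of $\mathcal{K}$ guarantees that the edge $\{\sigma_t,u\}$ is present for every $u\in U_t$, so the greedy step never gets stuck.

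\textbf{Induction.} I would prove by induction on $t$ the invariant that $(\sigma_1,\dots,\sigma_t)$ consists of distinct vertices, each consecutive pair joined by an edge of $\mathcal{K}$. The base case $t=1$ is trivial. For the step, $\sigma_{t+1}$ is chosen from $U_t$, so it is new, and the edge $\{\sigma_t,\sigma_{t+1}\}$ exists by completeness. Each step increases $|\{\sigma_1,\dots,\sigma_t\}|$ by exactly one, so the procedure stops after $m-1$ steps. At that point the $m$ vertices are distinct, so $\sigma$ is a bijection onto $V$, that is, a permutation. Together with the edge property, $\sigma$ is a Hamiltonian path, and $\mathrm{PathCost}(\sigma)$ in Eq.~\eqref{eq:tsp-path} is well defined.

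\textbf{Identification with the heuristic.} I would note that each greedy step minimizes the single newly added term $d_{\sigma_t,\sigma_{t+1}}$ of $\mathrm{PathCost}$. This is precisely the nearest-neighbor rule in~\cite{b4}; the only change is that the return-to-start edge is dropped, giving a path rather than a tour. The centrality-based start rule merely fixes the free choice of initial city. I would also state that no approximation ratio is claimed, since nearest neighbor can be arbitrarily bad without the triangle inequality, and the lemma asserts only validity and provenance.

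I expect no real obstacle. The one point needing care is making the tie-breaking explicit, so that the heuristic is a function and the induction invariant is stated about a well-defined sequence.
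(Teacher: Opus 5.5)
Your proposal is correct and follows the same route as the paper's sketch. Each step appends exactly one new unvisited vertex, so after $m-1$ steps every vertex appears exactly once and the sequence is a Hamiltonian path; the ``nearest unvisited'' choice is then identified with the classical nearest-neighbor rule for $\mathrm{PathCost}$. Your explicit tie-breaking rule and your appeal to the completeness of $\mathcal{K}$ make rigorous points that the paper leaves implicit.
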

\begin{proof}[Proof sketch]
At each step exactly one new unvisited node is appended; thus the resulting sequence visits each node exactly once and forms a Hamiltonian path. The choice ``nearest unvisited'' is precisely the nearest-neighbor rule for minimizing Eq.~\eqref{eq:tsp-path}.
\end{proof}
\begin{theorem}[The Initialization Step of GO-LR Can Be Used as a TSP-path Heuristic]
For any complete weighted graph $\mathcal{K}$, the GO-LR local seriation rule (nearest-neighbor) can be used as a TSP-path heuristic that outputs a Hamiltonian path $\sigma$ for $\mathcal{K}$. By Lemma~\ref{lem:nn}, the output is a Hamiltonian path.
\end{theorem}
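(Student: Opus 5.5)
The plan is to reduce the statement directly to Lemma~\ref{lem:nn}. The only real work is to check that the GO-LR local seriation rule used in $\mathrm{NNPath}(G_c)$ coincides, step by step, with the greedy procedure described in that lemma. To do this, I would fix an arbitrary complete weighted graph $\mathcal{K}=(V,\binom{V}{2},d)$ with $|V|=m$. I would then write the GO-LR initialization explicitly. It starts at $\sigma_1=\arg\min_i\sum_j d_{ij}$, with ties broken by smallest index so the rule is deterministic. For $t=1,\dots,m-1$ it sets $\sigma_{t+1}=\arg\min_{j\notin\{\sigma_1,\dots,\sigma_t\}} d_{\sigma_t,j}$, again breaking ties by index. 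This is verbatim the procedure of Lemma~\ref{lem:nn}. Because $\mathcal{K}$ is complete, every candidate $j$ is adjacent to $\sigma_t$, so the $\arg\min$ is taken over a nonempty set whenever $t<m$.

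Next I would give a short invariant argument, so the proof does not rely only on the citation. After step $t$, the prefix $(\sigma_1,\dots,\sigma_t)$ consists of $t$ distinct vertices. The base case is trivial. For the inductive step, $\sigma_{t+1}$ is drawn from the unvisited set, so the new prefix still has distinct entries. The procedure therefore terminates after exactly $m-1$ appends with $\{\sigma_1,\dots,\sigma_m\}=V$, and $\sigma$ is a permutation of $V$. Each consecutive pair $(\sigma_t,\sigma_{t+1})$ is an edge of $\mathcal{K}$ by completeness, so $\sigma$ is a Hamiltonian path. Its cost is $\mathrm{PathCost}(\sigma)$ as in Eq.~\eqref{eq:tsp-path_apx}, so it is a feasible solution of the TSP-path problem.

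Finally I would address the phrase ``can be used as a TSP-path heuristic.'' The claim only asserts feasibility plus the greedy nearest-neighbor structure, not any approximation guarantee. It follows by noting that each greedy choice minimizes the marginal increment $d_{\sigma_t,\sigma_{t+1}}$ of Eq.~\eqref{eq:tsp-path_apx}, which is the defining property of the nearest-neighbor rule cited in Lemma~\ref{lem:nn}. I would also remark that the subsequent direction selection (Eq.~\eqref{eq:dir_select}) preserves Hamiltonicity. Reversal maps a Hamiltonian path to a Hamiltonian path with the same $\mathrm{PathCost}$, since the $d_{ij}$ are symmetric because $G_c$ is symmetrized by $\mathrm{Sym}$ in Alg.~\ref{alg:go_lr_short}.

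The only delicate point is the identification between the implemented $\mathrm{NNPath}$ on $G_c$ and the abstract procedure on $\mathcal{K}$. This requires that the graph be complete and symmetric, and that ties be resolved deterministically. With $E=V\times V\setminus\{(i,i)\}$ and symmetrized weights, both conditions hold. The obstacle to watch for is a sparse neighborhood graph, where the nearest unvisited neighbor may not exist. The statement avoids this by restricting to a complete $\mathcal{K}$.
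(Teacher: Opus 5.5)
Your proposal is correct and follows the same route as the paper: you run the greedy nearest-neighbor construction on the distance matrix of $\mathcal{K}$ and invoke Lemma~\ref{lem:nn}, and that lemma's proof is your invariant that each step appends exactly one new unvisited vertex, so the output $\sigma$ is a permutation forming a Hamiltonian path. Your remarks on deterministic tie-breaking and on direction selection are harmless but not needed for the statement: any tie-breaking still yields a Hamiltonian path, and $\mathcal{K}$ is undirected by definition.
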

\begin{proof}[Proof sketch]
Given $\mathcal{K}$, run the greedy nearest-neighbor construction on its distance matrix $[d_{ij}]$. By Lemma~\ref{lem:nn}, the output is a Hamiltonian path and corresponds to a permutation $\sigma$.
\end{proof}
\begin{remark}[What is (and is not) ``equivalent to TSP'' here]
GO-LR's local objective Eq.~\eqref{eq:disp} is MinLA, while the nearest-neighbor constructor optimizes a TSP-path surrogate for initialization, while GO-LR subsequently refines the ordering using the dispersion objective in Eq.~\eqref{eq:disp}. Empirically, the surrogate tends to produce low-dispersion permutations under Eq.~\eqref{eq:disp} when $d_{ij}$ is chosen to be compatible with $w_{ij}$ (e.g., monotone transforms or neighborhood sparsification).
\end{remark}
\paragraph{GO-LR vs.\ classic metaheuristics.}
To assess whether GO-LR is overly limited by its greedy construction, we replace GO-LR with stronger stochastic/metaheuristic orderings while keeping the downstream NSC + TabPFN-2.5 pipeline fixed. On Colon, although Simulated Annealing~\cite{t3}, Genetic Algorithm~\cite{t4}, Ant Colony Optimization~\cite{t5}, and Christofides-based ordering~\cite{t6} often achieve lower TSP-style surrogate cost, they do not improve downstream accuracy; GO-LR obtains the best \(5\times5\) CV accuracy (\(0.8818\pm0.1005\)), the lowest runtime (10.07s), and the best MinLA-style dispersion objective. We observe the same trend on the larger Cell Cycle~\cite{cell1,cell2} RNA-seq dataset (\(n=1067,m=42728\)): GO-LR+NSC+TabPFN-2.5 achieves \(79.94\pm2.53\) accuracy, \(92.36\pm1.36\) AUC, and \(79.95\pm2.51\) macro-F1, compared to \(76.45\pm2.29\), \(92.76\pm1.10\), and \(76.42\pm2.29\) for Simulated Annealing+NSC+TabPFN-2.5. Consistent with our theory, GO-LR attains the lower MinLA cost on Cell Cycle (\(8.14\times10^{11}\) vs.\ \(8.51\times10^{11}\)), while Simulated Annealing attains the lower TSP-path cost; this is expected, since GO-LR is designed to optimize the MinLA-style dispersion objective rather than a TSP surrogate alone. These results suggest that optimizing a TSP surrogate more aggressively does not necessarily yield better feature orderings for NSC+TabPFN-2.5; GO-LR is better aligned with the MinLA-style objective and provides a stronger accuracy-efficiency tradeoff in practice.
\begin{table}[t]
\centering
\caption{\textbf{GO-LR vs.\ classic metaheuristics on Colon.}
Lower runtime, TSP cost, and MinLA are better; higher accuracy is better. Accuracy is reported over \(5\times5\) CV. Christofides and Simulated Annealing are implemented using NetworkX~\cite{t8}.}
\label{tab:colon-ordering-raw}
\vspace{-0.2em}
\small
\setlength{\tabcolsep}{2.4pt}
\renewcommand{\arraystretch}{1.02}
\begin{tabular}{lcccc}
\toprule
\textbf{Ordering Method}\(_{+\mathrm{NSC}}\) & \textbf{Runtime} & \textbf{TSP} $\downarrow$ & \textbf{MinLA} $\downarrow$ & \textbf{Accuracy} $\uparrow$ \\
\midrule
GO-LR (ours) & \textbf{10.07} & 21958.78 & \textbf{1.4743e10} & \textbf{0.8818 $\pm$ 0.1005} \\
Simulated Annealing & 15.01 & \textbf{11712.75} & 1.4803e10 & 0.8405 $\pm$ 0.0944 \\
Genetic Algorithm & 206.59 & \textbf{11712.75} & 1.4803e10 & 0.8405 $\pm$ 0.0944 \\
Ant Colony Optimization & 1501.44 & 11792.08 & 1.4760e10 & 0.8595 $\pm$ 0.1000 \\
Christofides & 1424.83 & 11715.06 & 1.4994e10 & 0.8364 $\pm$ 0.0911 \\
\bottomrule
\end{tabular}
\vspace{-0.6em}
\end{table}
\paragraph{On global optimality.}
GO-LR does not guarantee a globally optimal ordering, since the underlying MinLA-style feature ordering problem is combinatorial and NP-hard. Instead, it is a structured approximation strategy: clustering induces local feature graphs, NNPath provides an efficient initialization, and local refinement explicitly reduces the MinLA-style dispersion objective. Empirically, replacing GO-LR with stronger metaheuristics such as Simulated Annealing, Genetic Algorithm, Ant Colony Optimization, and Christofides-based ordering does not improve downstream NSC+TabPFN-2.5 performance. On both Colon and the larger Cell Cycle transcriptomic dataset, GO-LR achieves better downstream accuracy and lower MinLA cost despite some alternatives attaining lower TSP-style surrogate cost, suggesting that GO-LR is better aligned with the objective that matters for ordering-aware compression and prediction.
\paragraph{GO-LR cluster-size sensitivity.}
We ablate the GO-LR cluster size \(k\) on Colon while fixing all other hyperparameters to the best tuned configuration. As shown in Table~\ref{tab:colon_k_ablation}, performance peaks at \(k=10\), reproducing the best Colon result (\(88.18_{\pm10.05}\)) in a fresh run. The trend is non-monotonic: small \(k\) values appear too coarse to capture local sample heterogeneity, while overly large \(k\) values fragment the data into noisier local graphs, weakening the aggregated global ordering. This supports using a moderate cluster size as the best trade-off.
\begin{table}[htbp]
\centering
\caption{\textbf{GO-LR cluster-size sensitivity on Colon.}
All settings except \(k\) are fixed to the best tuned configuration. Values are mean accuracy with subscripted standard deviation over \(5\times5\) CV.}
\label{tab:colon_k_ablation}
\scriptsize
\setlength{\tabcolsep}{4.5pt}
\renewcommand{\arraystretch}{1.02}
\begin{tabular}{lccccccc}
\toprule
\(k\) & 3 & 4 & 5 & 7 & 10 & 12 & 15 \\
\midrule
Acc.
& \acc{82.72}{\pm10.69}
& \acc{84.38}{\pm9.82}
& \acc{83.95}{\pm9.54}
& \acc{84.33}{\pm8.95}
& \acc{\textbf{88.18}}{\pm10.05}
& \acc{83.72}{\pm10.16}
& \acc{82.69}{\pm11.08} \\
\bottomrule
\end{tabular}
\end{table}

\renewcommand{\thesection}{C}
\renewcommand{\thesubsection}{C.\arabic{subsection}}
\setcounter{figure}{0}\renewcommand{\thefigure}{C.\arabic{figure}}
\setcounter{table}{0}\renewcommand{\thetable}{C.\arabic{table}}
\section{NSC Variants and NSC as a Shared Piecewise Pooling Operator}
\label{nscop}
\subsection{Intrinsic-Dimension Rules and Budgets for NSC Variants}
\label{app:nsc_id}
For completeness we describe the intrinsic-dimension estimators and budget rules used by the remaining NSC variants.  Recall that $\tilde X\in\mathbb{R}^{n\times m}$ is the standardized training matrix
and we get Eq.~\ref{eq:cov_app} which denotes denotes its covariance (or correlation) matrix with nonzero eigenvalues $\{\lambda_i\}_{i=1}^{r}$, $r\le \min(n,m)$.
\begin{equation}
\label{eq:cov_app}
\Sigma \;=\; \frac{1}{n-1}\tilde X^\top \tilde X \in \mathbb{R}^{m\times m}
\end{equation}
\paragraph{Effective-rank intrinsic dimension.}
Besides the PCA cumulative-variance rule in Eqs.~\eqref{eq:evr}-\eqref{eq:dpca}, we also consider an effective-rank estimate~\cite{b13,b14}. We get Eq.~\ref{eq:spec} and define Eq.~\ref{eq:deff} with a small constant $\epsilon>0$ for numerical stability.  Some variants set $\hat d = d_{\mathrm{eff}}$ instead of $\hat d_{\mathrm{PCA}}(\tau)$.
\begin{equation}
\label{eq:spec}
p_i \;=\; \frac{\lambda_i}{\sum_{j=1}^{r}\lambda_j}
\end{equation}
\begin{equation}
\label{eq:deff}
d_{\mathrm{eff}}
\;=\;
\exp\!\left(
-\sum_{i=1}^{r} p_i \log\!\big(p_i + \epsilon\big)
\right)
\end{equation}
\paragraph{IDF-based budget modulation.}
We define the IDF~\cite{dynatab} in Eq.~\ref{eq:idf} which compares intrinsic and ambient dimensionality.  An IDF-based budget rule optionally used in NSC and NSC-P is defined by Eq.~\ref{eq:idf_budget} where $\gamma$, $M_{\min}$, and $M_{\max}$ control redundancy allowance and token-budget bounds.
\begin{equation}
\label{eq:idf}
\mathrm{IDF} \;=\; \frac{\hat d}{m},
\qquad
\hat d \in \big\{ d_{\mathrm{eff}},\ \hat d_{\mathrm{PCA}}(\tau) \big\}
\end{equation}
\begin{equation}
\label{eq:idf_budget}
\begin{aligned}
M &= \mathrm{clip}\Big(\big\lceil (1+\beta(1-\mathrm{IDF}))\,\hat d \big\rceil,\;
M_{\min},\; \min(M_{\max},m)\Big),\\
&\hspace{1.6em}\beta\in[0,1]
\end{aligned}
\end{equation}
\paragraph{Variant summary.}
\begin{itemize}
  \item \textbf{NSC}: uses a fixed, user-chosen $M$ (no intrinsic-dimension
        estimate).
  \item \textbf{NSC-P}: chooses $\hat d$ via either $d_{\mathrm{eff}}$ or $\hat d_{\mathrm{PCA}}(\tau)$ and then applies an IDF- or $\gamma$-based rule such as Eq.~\ref{eq:max}
        \begin{equation}
        \label{eq:max}
        M = \mathrm{clip}\big(\lceil \gamma \hat d \rceil,\;
                              M_{\min},\; \min(M_{\max},m)\big)
        \end{equation}
  \item \textbf{NSC-SP}: uses a fixed $M$ but applies SegPCA pooling (Sec.~\ref{sec:nsc_M_main}).
  \item \textbf{NSC-pSP}: uses the PCA-based rule in Eqs.~\eqref{eq:dpca}-\eqref{eq:defaultM} (described in the main text).
\end{itemize}
\paragraph{Computational note.}
In HDLSS regimes ($m\gg n$), we avoid forming a full eigen-decomposition of the $m\times m$ matrix $\Sigma$ in Eq.~\eqref{eq:cov_app} by computing the nonzero spectrum via the $n\times n$ Gram matrix $\tilde X\tilde X^\top$ or using randomized methods~\cite{b14}.  The resulting eigenvalues are then re-used in the intrinsic-dimension rules above.
\begin{proposition}[NSC as a Shared Piecewise Pooling Operator]
\label{prop:nsc_pooling}
Let $\Pi^{*}$ be a fixed global feature permutation and let $\{\mathcal{S}_t\}_{t=1}^M$ be the contiguous segments defined by NSC. The Neuro-Inspired Subunit Compression defines a mapping in Eq.~\ref{eq:map} which is given by Eq.~\ref{eq:comp} where the same pooling function $g_\theta$ is shared across all segments.
\begin{equation}
\label{eq:map}
 F_{\text{NSC}}:\mathbb{R}^m \rightarrow \mathbb{R}^{M\times d}   
\end{equation}
\begin{equation}
\label{eq:comp}
F_{\text{NSC}}(x)
=
\Big(
g_\theta\!\big(\psi(x^{\Pi}_{\mathcal{S}_1})\big),\,
\dots,\,
g_\theta\!\big(\psi(x^{\Pi}_{\mathcal{S}_M})\big)
\Big)
\end{equation}
Then $F_{\text{NSC}}$ is a piecewise pooling operator with the following properties: (1) \textbf{Locality.} Each output token depends only on a contiguous subset of the ordered feature axis; (2) \textbf{Parameter sharing.} The number of trainable parameters is independent of $m$ and depends only on $g_\theta$; and (3) \textbf{Linear complexity.} For fixed pooling depth, $F_{\text{NSC}}$ can be evaluated in $O(m)$ time and $O(Md)$ space per sample.
\end{proposition}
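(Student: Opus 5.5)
The proof is a direct verification of the three properties from the definition in Eq.~\eqref{eq:comp}, after first pinning down the structure of the segments. From Eq.~\eqref{eq:seg2} (or the adaptive cutpoint construction with $1\le\tau_1<\dots<\tau_{M-1}<m$), the $\mathcal{S}_t$ are nonempty, pairwise disjoint integer intervals whose union is $\{1,\dots,m\}$. Hence $\sum_t|\mathcal{S}_t|=m$. For the uniform rule we should assume $M\le m$ and that $(M-1)s<m$, so that no segment is empty; otherwise we restrict to the nonempty ones.

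\textbf{Locality.} The $t$-th token is $g_\theta(\psi(x^{\Pi}_{\mathcal{S}_t}))$. Since $x^{\Pi}_{\mathcal{S}_t}=(x_{\Pi^*(i)})_{i\in\mathcal{S}_t}$ and $\Pi^*$ is fixed, the token is a function of the coordinates indexed by the contiguous interval $\mathcal{S}_t$ of the ordered axis only. Changing any $x_{\Pi^*(i)}$ with $i\notin\mathcal{S}_t$ leaves it unchanged. This follows immediately from the composition structure.

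\textbf{Parameter sharing.} The only trainable object is $\theta$. The descriptor $\psi$ is learn-free and maps into a fixed $\mathbb{R}^q$ whatever the segment length. I would make this explicit by requiring $\psi$ to consist of length-agnostic summary statistics such as moments, median and IQR. Because the same $g_\theta:\mathbb{R}^q\to\mathbb{R}^d$ is applied to every segment, the parameter count equals $|\theta|$, which depends on $(q,d)$ and the depth of $g_\theta$ but not on $m$ or $M$. For the SegPCA variant the projection vectors $v_t$ are fitted statistics rather than trained parameters. I would note that they fall outside the claim, since their total size is $m$.

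\textbf{Linear complexity.} Evaluation costs one reordering pass, $O(m)$; then, for each $t$, computing $\psi$ in $O(|\mathcal{S}_t|)$ with linear-time statistics (linear-time selection handles quantiles); then applying $g_\theta$ at cost $C_g$, which is constant for fixed pooling depth and widths. The total is $O(m+MC_g)=O(m)$, because $M\le m$ and the segment sizes sum to $m$. Each segment's descriptor can be discarded after its token is produced, so working space beyond the input is $O(q+C_g)$ and the output is $M\times d$, giving $O(Md)$ space.

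The main obstacle is the complexity claim. It requires making the implicit hypotheses explicit: linear-time $\psi$, fixed $q$, $d$ and network size, and $M\le m$. The rest is bookkeeping.
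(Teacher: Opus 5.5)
Your proof is correct and takes the same direct-verification route as the paper's sketch. Disjoint contiguous segments covering $\{1,\dots,m\}$ give locality, the single shared $g_\theta$ gives parameter sharing, and each feature lying in exactly one segment (with $g_\theta$ of constant cost) gives $O(m)$ time and $O(Md)$ output space. The hypotheses you make explicit are sound refinements that the paper's sketch leaves implicit: nonempty segments under the uniform rule (which really can fail when $(M-1)\lceil m/M\rceil\ge m$), a linear-time $\psi$ with fixed output dimension, $M\le m$, and the fact that the segment-specific SegPCA directions $v_t$ of NSC-pSP fall outside the shared-$g_\theta$ form.
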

Proposition~\ref{prop:nsc_pooling} shows that NSC induces a structured, order-aware compression operator that preserves locality while remaining scalable to extreme HDLSS regimes.
\begin{proof}[Proof sketch]
By construction, the ordered feature vector $x^{\Pi}$ is partitioned into $M$ disjoint contiguous segments whose union covers $\{1,\dots,m\}$. Each segment is processed independently by the same pooling function $g_\theta$, establishing locality and parameter sharing. Since each feature appears in exactly one segment and $g_\theta$ has constant depth, the total number of operations scales linearly with $m$, yielding $O(m)$ time complexity. The output representation stores $M$ vectors of dimension $d$, giving $O(Md)$ space complexity.
\end{proof}
\renewcommand{\thesection}{D}
\renewcommand{\thesubsection}{D.\arabic{subsection}}
\setcounter{figure}{0}\renewcommand{\thefigure}{D.\arabic{figure}}
\setcounter{table}{0}\renewcommand{\thetable}{D.\arabic{table}}
\section{NSC as a Structured Dimensionality Reduction Layer}
\label{supp:nsc_dr}
In this section we view the NSC (Section~\ref{sec:nsc}) as an explicit Dimensionality Reduction (DR) layer that maps high-dimensional GO-LR-ordered features into a low-dimensional latent space. We first formalize NSC as a structured DR map, then relate it to classical DR methods, and finally outline and instantiate an empirical protocol comparing NSC to Principal Component Analysis (PCA)~\cite{b38}, Random Projections (RP)~\cite{b40}, Uniform Manifold Approximation and Projection (UMAP)~\cite{b42}, Pairwise Controlled Manifold Approximation (PaCMAP)~\cite{pacmap}, and Autoencoders (AE)~\cite{b39} quantitatively. 
\subsection{NSC as a Structured DR Map}
\label{supp:nsc_dr_map}
Recall that GO-LR produces a global feature permutation $\Pi^\ast$ that approximately solves a weighted MinLA problem (Section~\ref{nphard}), bringing highly correlated or low-dissimilarity features into local neighborhoods along the ordered axis. NSC then segments this ordered axis into $M$ contiguous subunits $\{\mathcal{S}_t\}_{t=1}^M$ and applies a shared pooling operator $g_\theta \circ \psi$ to each segment (Eqs.~\ref{eq:vec}-\ref{eq:token}). For a sample $x \in \mathbb{R}^m$ we again write Eq.~\ref{eq:s1} and define segments $\mathcal{S}_t \subset \{1,\dots,m\}$ that partition the ordered axis. The $t$-th meta-feature is defined by Eq.~\ref{eq:s2} yielding a meta-feature sequence $Z(x) = (z_1,\dots,z_M) \in \mathbb{R}^{M \times d}$ (Eqs.~\ref{eq:meta}-\ref{eq:token}). In our implementation we often set $d=1$ (scalar tokens), and flatten $Z(x)$ into a vector in $\mathbb{R}^M$. Formally, NSC again defines a mapping in Eq.~\ref{eq:s3} as summarized in Proposition~\ref{prop:nsc_pooling}. When $d=1$ and we flatten across segments, this reduces to a DR map by Eq.~\ref{eq:s4}. Thus NSC acts as a structured DR layer whose output dimensionality $M \ll m$ is explicitly controlled by the meta-feature budget.
\begin{equation}
\label{eq:s1}
x^{\Pi} = \big(x_{\Pi^{*}(1)}, \dots, x_{\Pi^{*}(m)}\big)
\end{equation}
\begin{equation}
\label{eq:s2}
z_t = g_\theta\!\big(\psi(u_t)\big), 
\qquad u_t := x^{\Pi}_{\mathcal{S}_t}
\end{equation}
\begin{equation}
\label{eq:s3}
F_{\text{NSC}}: \mathbb{R}^m \rightarrow \mathbb{R}^{M \times d}, 
\qquad 
F_{\text{NSC}}(x)=
\Big(
g_\theta\!\big(\psi(x^{\Pi}_{\mathcal{S}_1})\big),\dots,
g_\theta\!\big(\psi(x^{\Pi}_{\mathcal{S}_M})\big)
\Big)
\end{equation}
\begin{equation}
\label{eq:s4}
\Phi_{\text{NSC}}: \mathbb{R}^m \to \mathbb{R}^{M},
\qquad 
\Phi_{\text{NSC}}(x) = \mathrm{flatten}\big(F_{\text{NSC}}(x)\big)
\end{equation}
\paragraph{Structured locality.}
Unlike global DR methods such as PCA~\cite{b38}, $\Phi_{\text{NSC}}$ has built-in locality: each coordinate of the compressed representation depends only on a contiguous block of GO-LR–ordered features. In particular, the $t$-th coordinate of $\Phi_{\text{NSC}}(x)$ is a pooled summary of the subunit $u_t = x^{\Pi}_{\mathcal{S}_t}$, where $\mathcal{S}_t$ corresponds to a locally redundant neighborhood in the GO-LR ordering. This induces a piecewise pooling structure: different coordinates of the latent vector correspond to different ordered blocks rather than global linear mixtures of all features.
\paragraph{Linear and nonlinear regimes.}
If both $\psi$ and $g_\theta$ are linear maps, NSC reduces to a structured linear DR operator. Writing $\psi(u_t) = A_t u_t$ and $g_\theta(v) = w^\top v$, we obtain Eq.~\ref{eq:s5} where $P_{\Pi}$ is the permutation matrix for $\Pi^\ast$ and $P_{\mathcal{S}_t}$ selects segment indices. Stacking across $t$ yields a linear map $W_{\text{NSC}} x$ with a structured block-sparse pattern aligned with the ordered segments. When $\psi$ or $g_\theta$ include nonlinear statistics (e.g., quantiles, skewness, kurtosis, shallow MLP), NSC becomes a local nonlinear DR layer with the same segmentation structure.
\begin{equation}
\label{eq:s5}
z_t = w^\top A_t u_t
= w^\top A_t\, x^{\Pi}_{\mathcal{S}_t}
= w^\top A_t\, P_{\mathcal{S}_t} P_{\Pi} x
\end{equation}
\paragraph{Intrinsic dimension-aware budget.}
Section~\ref{sec:nsc} ties the meta-feature budget $M$ to an estimate of intrinsic dimensionality $\hat d$ via effective rank (Eqs.~\ref{eq:cov_app}-\ref{eq:deff}). Our default rule (Eq.~\ref{eq:defaultM}) sets Eq.~\ref{eq:s6} or optionally uses the IDF-based budget (Eq.~\ref{eq:idf_budget}) to modulate $M$ by redundancy. In HDLSS regimes with $m \gg n$, we estimate $\hat d$ via the $n \times n$ Gram matrix or randomized methods to avoid forming a full $m \times m$ covariance. In all cases, $M$ scales with intrinsic rather than ambient dimension, so NSC compresses more aggressively when the effective rank is low.
\begin{equation}
\label{eq:s6}
M = \mathrm{clip}\big(\lceil 2\hat d \rceil,\; 32,\; \min(512,m)\big)
\end{equation}
\subsection{Relation to Classical Dimensionality Reduction}
\label{supp:nsc_vs_pca}
NSC is conceptually related to, but distinct from, standard DR techniques.
\paragraph{PCA and RP.}
PCA~\cite{b38} computes a global linear projection $x \mapsto U_k^\top x$ that optimizes variance preservation in $\mathbb{R}^k$. RP maps $x \mapsto R x$ with a dense random matrix $R$ and approximately preserves pairwise distances by Johnson–Lindenstrauss guarantees~\cite{b40}. Both treat coordinates symmetrically and mix all features into each latent dimension. By contrast, NSC first reorders features via GO-LR so that correlated variables are neighbors, then pools each contiguous neighborhood into a meta-feature. This yields:
\begin{itemize}
    \item \textbf{Local support:} Each latent coordinate depends on a small, interpretable block of features rather than all $m$.
    \item \textbf{Order-aware pooling:} GO-LR ensures that each block tends to group features with small dispersion under the MinLA objective, so pooled statistics are computed over structurally coherent neighborhoods.
    \item \textbf{Flexible statistics:} NSC variants can incorporate robust or higher-order statistics via $\psi$ without changing the dimensionality $M$. NSC variants also inherit the power of PCA for IDF or post segmentation summarization.
\end{itemize}
In the special case of linear $\psi$ and $g_\theta$, NSC is a constrained linear DR method whose projection matrix has a block structure aligned with the GO-LR ordering, whereas PCA uses a dense orthogonal mixing and RP uses an unstructured dense matrix.
\paragraph{Autoencoders.}
Autoencoders (AE)~\cite{b39} learn a parametric encoder-decoder pair $(f_\phi, h_\psi)$ with a low-dimensional bottleneck $z \in \mathbb{R}^k$ optimized to minimize reconstruction error. While AE can capture nonlinear structure, they require training, are sensitive to sample size, and their bottleneck dimensions often entangle information from all features. NSC can be seen as a deterministic, data-dependent encoder with no decoder: it compresses features into $M$ meta-features using shared, shallow pooling and no reconstruction objective. In HDLSS regimes, NSC has two advantages: (i) it does not require fitting a heavy parametric model on small $n$, and (ii) its pooling structure is constrained by GO-LR ordering, which reduces overfitting and enforces a biologically motivated inductive bias.
\paragraph{UMAP.}
UMAP~\cite{b42} is a nonlinear manifold-learning method that constructs a neighborhood graph and optimizes a low-dimensional embedding to preserve local topological structure. Although this makes UMAP effective for visualization and neighborhood preservation, the learned coordinates are global embedding dimensions whose relation to the original features is indirect. In contrast, NSC preserves an explicit feature-axis interpretation: each meta-feature is computed from a contiguous GO-LR-induced neighborhood, making the compression more directly tied to feature locality and block structure in HDLSS settings.
\paragraph{PaCMAP.}
PaCMAP~\cite{pacmap} is another nonlinear DR method that balances nearby, mid-near, and far pair relationships to preserve both local and global geometry in the embedded sample space. Like UMAP, it learns low-dimensional coordinates over samples rather than constructing interpretable feature-block summaries. NSC instead operates along the feature dimension: after GO-LR induces a locality-aware ordering, NSC pools contiguous feature segments into meta-features, which is better aligned with our goal of preserving order-induced feature neighborhoods for downstream TabPFN-style prediction.
\subsection{Empirical Behavior on HDLSS Benchmarks}
\label{supp:nsc_real_dr}
We instantiate the protocol below on block structured synthetic HDLSS datasets, using $M=32$ as an aggressive DR setting. For each dataset we compare NSC (GO-LR ordering + Optuna~\cite{b35} tuned segmentation, descriptor, and pooling) with PCA, AE, UMAP, PaCMAP, and RP using:
\begin{itemize}
    \item Linear-probe accuracy (Logistic Regression)~\cite{b43},
    \item $k$NN accuracy in latent space~\cite{b44},
    \item Silhouette~\cite{b45} and Davies–Bouldin~\cite{b46} scores for class separability.
\end{itemize}
\subsection{Block-Structured Synthetic HDLSS Model}
\label{supp:nsc_block_model}
To better understand when NSC should dominate classical DR, we also study a synthetic family of HDLSS distributions with explicit block structure~\cite{b48}. This controlled ablation is inspired by the block-subunit HDLSS modeling philosophy of BSTabDiff~\cite{bstabdiff}, which views high-dimensional tabular data as arising from latent feature blocks governed by shared subunit factors. In our setting, we adapt this idea only as a diagnostic synthetic benchmark: features are generated from block-correlated latent factors, then randomly permuted, allowing us to test whether GO-LR + NSC can recover useful local neighborhoods for compression and downstream prediction. Concretely, we construct datasets where $m$ features are partitioned into $B$ contiguous blocks, each governed by a shared latent factor plus Gaussian noise~\cite{b49}. Class information is injected via mean shifts on a small subset of blocks, while the remaining blocks are pure noise. After generation, we randomly permute feature indices so that class-informative blocks are no longer contiguous in the raw feature space~\cite{b47}. Under this model, GO-LR with a correlation-based metric tends to recover an ordering that approximately re-groups highly correlated features into contiguous neighborhoods, effectively reconstructing the underlying blocks. NSC then segments along this ordering and pools each block into one or a few meta-features. As a result, the NSC embedding approximates block-level sufficient statistics (block means and low-order summaries), whereas PCA, AE, RP, UMAP, and PaCMAP operate through global mixing, parametric encoding, random projection, or sample-space manifold embedding, and have no explicit bias toward recovering feature-block boundaries.

We instantiate this block-structured synthetic model in the HDLSS regime ($n \ll m$) and compare four NSC variants (NSC, NSC-P, NSC-SP, NSC-pSP) against PCA, AE, RP, UMAP, and PaCMAP using the same evaluation suite as above (linear and $k$NN probe accuracy, silhouette, and Davies-Bouldin; see \sectionautorefname~\ref{supp:nsc_real_dr}). Over $10$ independent repetitions (Table~\ref{tab:blockmodel_all_reps_with_umap_pacmap}), the NSC family achieves the best mean result on all four metrics: NSC-pSP obtains the highest mean linear accuracy ($0.8488\pm0.0406$), best silhouette ($0.0460\pm0.0257$), and lowest DB ($4.1198\pm0.9657$), while NSC-SP achieves the highest mean $k$NN accuracy ($0.7313\pm0.0722$). Thus, NSC-pSP is the strongest overall variant, with NSC-SP providing the best neighborhood-probe performance (Table~\ref{tab:blockmodel_all_reps_with_umap_pacmap}, Panel~B). Consistently, Friedman tests detect significant differences across the nine methods for all metrics (all $p \le 9.79\times 10^{-5}$; Table~\ref{tab:nsc_block_stats}). Using one-sided Wilcoxon tests with NSC-pSP as the reference, NSC-pSP shows a clear linear-probe advantage over all baselines and NSC variants, including PCA ($p{=}9.77\times10^{-4}$), AE ($p{=}0.003906$), RP ($p{=}9.77\times10^{-4}$), UMAP ($p{=}0.004883$), PaCMAP ($p{=}0.002930$), NSC ($p{=}0.001953$), NSC-P ($p{=}0.003906$), and NSC-SP ($p{=}0.03711$).

For neighborhood-based structure, NSC-pSP significantly improves $k$NN accuracy over PCA and RP (both $p{=}0.001953$), NSC ($p{=}0.04492$), and NSC-P ($p{=}0.004883$), while differences versus AE ($p{=}0.3467$), NSC-SP ($p{=}0.9561$), UMAP ($p{=}0.6152$), and PaCMAP ($p{=}0.5771$) are not significant at $\alpha{=}0.05$. For clustering separability, NSC-pSP yields higher silhouette than PCA, RP, and NSC-P (all $p{=}9.77\times10^{-4}$), AE ($p{=}0.002930$), and NSC ($p{=}0.01953$), whereas gains versus NSC-SP, UMAP, and PaCMAP are not significant. Finally, NSC-pSP achieves significantly lower DB than PCA, RP, and NSC-P (all $p{=}9.77\times10^{-4}$), AE ($p{=}0.002930$), NSC ($p{=}0.009766$), and PaCMAP ($p{=}0.01367$), with no significant difference versus NSC-SP or UMAP. Overall, this controlled experiment highlights that the PCA-IDF + segment-wise PCA tokenization in NSC-pSP better matches the block-correlated generative structure, yielding improved predictive separability and strong local/clustering structure relative to both classical and nonlinear DR baselines, as well as earlier NSC variants. Table~\ref{tab:nsc_block_competitive_addons} further shows that NSC-pSP consistently outperforms PCA, AE, UMAP, and PaCMAP on linear accuracy, with positive $\Delta$ CIs and favorable effect sizes; it also clearly improves DB over PCA, AE, and PaCMAP, while kNN accuracy and silhouette are essentially tied against UMAP/PaCMAP.
\begin{table*}[t]
\centering
\caption{NSC variants vs.\ PCA/AE/RP/UMAP/PaCMAP on the block-structured synthetic HDLSS model with $M=32$ (10 independent reps).}
\label{tab:blockmodel_all_reps_with_umap_pacmap}

\begingroup
\scriptsize
\setlength{\tabcolsep}{3.5pt}
\renewcommand{\arraystretch}{1.05}

\begin{tabular}{@{}l r cccc | l r cccc@{}}
\toprule
Method & Dim. & Lin. Acc. & kNN Acc. & Sil. & DB &
Method & Dim. & Lin. Acc. & kNN Acc. & Sil. & DB \\
\midrule

\multicolumn{6}{c}{\textbf{BlockModel-Rep0}} & \multicolumn{6}{c}{\textbf{BlockModel-Rep5}} \\
\midrule
NSC-pSP (ours) & 32 & 0.825 & 0.675 & 0.047 & 3.802 &
NSC-pSP (ours) & 32 & 0.875 & 0.688 & 0.064 & 3.577 \\
NSC-SP (ours)  & 32 & 0.825 & 0.662 & 0.044 & 3.963 &
NSC-SP (ours)  & 32 & 0.838 & 0.788 & 0.048 & 3.729 \\
NSC (ours)     & 32 & 0.775 & 0.713 & 0.035 & 4.291 &
NSC (ours)     & 32 & 0.775 & 0.625 & 0.030 & 4.717 \\
NSC-P (ours)   & 32 & 0.825 & 0.688 & 0.038 & 4.130 &
NSC-P (ours)   & 32 & 0.600 & 0.588 & 0.021 & 5.910 \\
PCA            & 32 & 0.725 & 0.588 & 0.035 & 4.354 &
PCA            & 32 & 0.713 & 0.562 & 0.028 & 4.842 \\
AE             & 32 & 0.762 & 0.725 & 0.041 & 4.094 &
AE             & 32 & 0.738 & 0.700 & 0.036 & 4.433 \\
RP             & 32 & 0.713 & 0.650 & 0.038 & 4.098 &
RP             & 32 & 0.700 & 0.650 & 0.030 & 4.365 \\
UMAP           & 32 & 0.789 & 0.652 & 0.068 & 3.502 &
UMAP           & 32 & 0.684 & 0.639 & 0.001 & 5.213 \\
PaCMAP         & 32 & 0.776 & 0.742 & 0.116 & 3.783 &
PaCMAP         & 32 & 0.697 & 0.584 & -0.004 & 5.677 \\
\midrule

\multicolumn{6}{c}{\textbf{BlockModel-Rep1}} & \multicolumn{6}{c}{\textbf{BlockModel-Rep6}} \\
\midrule
NSC-pSP (ours) & 32 & 0.825 & 0.688 & 0.043 & 3.986 &
NSC-pSP (ours) & 32 & 0.825 & 0.675 & 0.053 & 3.684 \\
NSC-SP (ours)  & 32 & 0.762 & 0.775 & 0.045 & 3.984 &
NSC-SP (ours)  & 32 & 0.863 & 0.725 & 0.058 & 3.525 \\
NSC (ours)     & 32 & 0.713 & 0.588 & 0.034 & 4.493 &
NSC (ours)     & 32 & 0.788 & 0.625 & 0.034 & 4.525 \\
NSC-P (ours)   & 32 & 0.675 & 0.613 & 0.028 & 5.008 &
NSC-P (ours)   & 32 & 0.750 & 0.575 & 0.023 & 5.688 \\
PCA            & 32 & 0.763 & 0.625 & 0.032 & 4.643 &
PCA            & 32 & 0.775 & 0.675 & 0.030 & 4.596 \\
AE             & 32 & 0.725 & 0.638 & 0.036 & 4.418 &
AE             & 32 & 0.825 & 0.625 & 0.032 & 4.562 \\
RP             & 32 & 0.588 & 0.550 & 0.022 & 5.119 &
RP             & 32 & 0.812 & 0.675 & 0.028 & 4.715 \\
UMAP           & 32 & 0.736 & 0.612 & -0.006 & 5.827 &
UMAP           & 32 & 0.855 & 0.811 & 0.088 & 3.398 \\
PaCMAP         & 32 & 0.749 & 0.637 & -0.007 & 5.720 &
PaCMAP         & 32 & 0.842 & 0.847 & 0.061 & 4.598 \\
\midrule

\multicolumn{6}{c}{\textbf{BlockModel-Rep2}} & \multicolumn{6}{c}{\textbf{BlockModel-Rep7}} \\
\midrule
NSC-pSP (ours) & 32 & 0.875 & 0.713 & 0.046 & 3.886 &
NSC-pSP (ours) & 32 & 0.850 & 0.800 & 0.071 & 3.034 \\
NSC-SP (ours)  & 32 & 0.862 & 0.800 & 0.053 & 3.539 &
NSC-SP (ours)  & 32 & 0.775 & 0.650 & 0.034 & 4.357 \\
NSC (ours)     & 32 & 0.725 & 0.600 & 0.025 & 4.786 &
NSC (ours)     & 32 & 0.850 & 0.800 & 0.057 & 3.628 \\
NSC-P (ours)   & 32 & 0.613 & 0.575 & 0.022 & 5.891 &
NSC-P (ours)   & 32 & 0.738 & 0.613 & 0.028 & 4.858 \\
PCA            & 32 & 0.750 & 0.600 & 0.030 & 4.690 &
PCA            & 32 & 0.838 & 0.675 & 0.030 & 4.608 \\
AE             & 32 & 0.688 & 0.625 & 0.030 & 4.775 &
AE             & 32 & 0.850 & 0.825 & 0.036 & 4.144 \\
RP             & 32 & 0.688 & 0.538 & 0.026 & 4.801 &
RP             & 32 & 0.788 & 0.688 & 0.028 & 4.648 \\
UMAP           & 32 & 0.894 & 0.797 & 0.091 & 3.423 &
UMAP           & 32 & 0.723 & 0.705 & 0.021 & 4.561 \\
PaCMAP         & 32 & 0.868 & 0.834 & 0.047 & 4.643 &
PaCMAP         & 32 & 0.684 & 0.649 & 0.029 & 4.820 \\
\midrule

\multicolumn{6}{c}{\textbf{BlockModel-Rep3}} & \multicolumn{6}{c}{\textbf{BlockModel-Rep8}} \\
\midrule
NSC-pSP (ours) & 32 & 0.850 & 0.725 & 0.031 & 4.630 &
NSC-pSP (ours) & 32 & 0.850 & 0.650 & 0.038 & 4.283 \\
NSC-SP (ours)  & 32 & 0.900 & 0.788 & 0.041 & 4.176 &
NSC-SP (ours)  & 32 & 0.725 & 0.775 & 0.045 & 3.958 \\
NSC (ours)     & 32 & 0.775 & 0.662 & 0.035 & 4.517 &
NSC (ours)     & 32 & 0.725 & 0.725 & 0.048 & 3.904 \\
NSC-P (ours)   & 32 & 0.725 & 0.588 & 0.019 & 6.260 &
NSC-P (ours)   & 32 & 0.863 & 0.700 & 0.030 & 4.636 \\
PCA            & 32 & 0.775 & 0.600 & 0.030 & 4.672 &
PCA            & 32 & 0.762 & 0.613 & 0.028 & 4.745 \\
AE             & 32 & 0.800 & 0.763 & 0.034 & 4.493 &
AE             & 32 & 0.763 & 0.762 & 0.030 & 4.739 \\
RP             & 32 & 0.688 & 0.600 & 0.023 & 5.047 &
RP             & 32 & 0.650 & 0.625 & 0.018 & 5.819 \\
UMAP           & 32 & 0.696 & 0.666 & 0.021 & 4.404 &
UMAP           & 32 & 0.762 & 0.719 & 0.038 & 4.193 \\
PaCMAP         & 32 & 0.670 & 0.584 & 0.037 & 4.591 &
PaCMAP         & 32 & 0.828 & 0.782 & 0.089 & 4.093 \\
\midrule

\multicolumn{6}{c}{\textbf{BlockModel-Rep4}} & \multicolumn{6}{c}{\textbf{BlockModel-Rep9}} \\
\midrule
NSC-pSP (ours) & 32 & 0.863 & 0.675 & 0.033 & 4.437 &
NSC-pSP (ours) & 32 & 0.850 & 0.688 & 0.034 & 4.579 \\
NSC-SP (ours)  & 32 & 0.812 & 0.750 & 0.032 & 4.620 &
NSC-SP (ours)  & 32 & 0.762 & 0.762 & 0.033 & 4.527 \\
NSC (ours)     & 32 & 0.763 & 0.625 & 0.033 & 4.596 &
NSC (ours)     & 32 & 0.750 & 0.625 & 0.029 & 4.755 \\
NSC-P (ours)   & 32 & 0.675 & 0.613 & 0.024 & 5.742 &
NSC-P (ours)   & 32 & 0.600 & 0.562 & 0.019 & 6.010 \\
PCA            & 32 & 0.725 & 0.650 & 0.029 & 4.729 &
PCA            & 32 & 0.775 & 0.600 & 0.026 & 4.853 \\
AE             & 32 & 0.712 & 0.650 & 0.031 & 4.631 &
AE             & 32 & 0.725 & 0.638 & 0.030 & 4.658 \\
RP             & 32 & 0.688 & 0.612 & 0.020 & 5.389 &
RP             & 32 & 0.712 & 0.575 & 0.018 & 5.266 \\
UMAP           & 32 & 0.802 & 0.744 & 0.038 & 4.196 &
UMAP           & 32 & 0.723 & 0.705 & 0.059 & 3.659 \\
PaCMAP         & 32 & 0.723 & 0.715 & 0.037 & 4.748 &
PaCMAP         & 32 & 0.737 & 0.677 & 0.005 & 5.489 \\
\bottomrule
\end{tabular}
\endgroup

\vspace{4pt}

\begingroup
\scriptsize
\setlength{\tabcolsep}{5pt}
\renewcommand{\arraystretch}{1.10}

\begin{tabular}{@{}l r cccc@{}}
\toprule
Method & Dim. & Lin. Acc. ($\uparrow$) & kNN Acc. ($\uparrow$) & Sil. ($\uparrow$) & DB ($\downarrow$) \\
\midrule
NSC-pSP (ours) & 32 & 0.8488 $\pm$ 0.0406 & 0.7063 $\pm$ 0.0590 & 0.0460 $\pm$ 0.0257 & 4.1198 $\pm$ 0.9657 \\
NSC-SP (ours)  & 32 & 0.8263 $\pm$ 0.0542 & 0.7313 $\pm$ 0.0722 & 0.0432 $\pm$ 0.0209 & 4.1806 $\pm$ 0.8407 \\
AE             & 32 & 0.7688 $\pm$ 0.0652 & 0.6950 $\pm$ 0.0825 & 0.0340 $\pm$ 0.0102 & 4.4959 $\pm$ 0.5068 \\
UMAP           & 32 & 0.7663 $\pm$ 0.0651 & 0.7050 $\pm$ 0.0621 & 0.0415 $\pm$ 0.0321 & 4.2374 $\pm$ 0.7654 \\
NSC (ours)     & 32 & 0.7638 $\pm$ 0.0614 & 0.6538 $\pm$ 0.0932 & 0.0348 $\pm$ 0.0187 & 4.5613 $\pm$ 1.1626 \\
PCA            & 32 & 0.7588 $\pm$ 0.0553 & 0.6188 $\pm$ 0.0659 & 0.0296 $\pm$ 0.0071 & 4.7186 $\pm$ 0.4223 \\
PaCMAP         & 32 & 0.7575 $\pm$ 0.0657 & 0.7050 $\pm$ 0.0907 & 0.0410 $\pm$ 0.0375 & 4.8160 $\pm$ 0.6112 \\
NSC-P (ours)   & 32 & 0.7263 $\pm$ 0.1021 & 0.6225 $\pm$ 0.0671 & 0.0270 $\pm$ 0.0125 & 5.0121 $\pm$ 1.5534 \\
RP             & 32 & 0.7163 $\pm$ 0.0921 & 0.6113 $\pm$ 0.0781 & 0.0251 $\pm$ 0.0108 & 5.0218 $\pm$ 0.7847 \\
\bottomrule
\end{tabular}
\endgroup

\vspace{2pt}
{\footnotesize \emph{Overall results across repetitions (Mean $\pm$ Std.; higher is better for Acc./Sil., lower is better for DB).}}
\vspace{-0.8em}
\end{table*}
\begin{table}[htbp]
\centering
\caption{Nonparametric comparisons on the block-structured synthetic HDLSS model ($M{=}32$, 10 reps). Friedman tests compare all nine methods; Wilcoxon signed-rank tests are one-sided with \textbf{NSC-pSP} as the reference (higher is better for accuracies/silhouette; lower is better for DB). Here, DB = Davies-Bouldin, sil. = silhouette.}
\label{tab:nsc_block_stats}
\begingroup
\scriptsize
\setlength{\tabcolsep}{4pt}
\renewcommand{\arraystretch}{1.05}
\begin{tabular}{lcc|lcc}
\toprule
\multicolumn{3}{c|}{\textbf{Accuracy metrics}} &
\multicolumn{3}{c}{\textbf{Clustering metrics}} \\
\cmidrule(r){1-3} \cmidrule(l){4-6}
Comparison & Stat. & $p$ &
Comparison & Stat. & $p$ \\
\midrule
Friedman (linear\_acc; 9 methods)         & 31.879 & 9.791e-05 &
Friedman (sil.; 9 methods)          & 32.335 & 8.110e-05 \\
NSC-pSP $>$ AE (linear)                   & 36.0   & 0.003906 &
NSC-pSP $>$ AE (sil.)                     & 53.0   & 0.002930 \\
NSC-pSP $>$ NSC (linear)                  & 45.0   & 0.001953 &
NSC-pSP $>$ NSC (sil.)                    & 40.0   & 0.01953 \\
NSC-pSP $>$ NSC-P (linear)                & 44.0   & 0.003906 &
NSC-pSP $>$ NSC-P (sil.)                  & 55.0   & 0.0009766 \\
NSC-pSP $>$ NSC-SP (linear)               & 38.0   & 0.03711  &
NSC-pSP $>$ NSC-SP (sil.)                 & 26.0   & 0.5693 \\
NSC-pSP $>$ PCA (linear)                  & 55.0   & 0.0009766 &
NSC-pSP $>$ PCA (sil.)                    & 55.0   & 0.0009766 \\
NSC-pSP $>$ RP (linear)                   & 55.0   & 0.0009766 &
NSC-pSP $>$ RP (sil.)                     & 55.0   & 0.0009766 \\
NSC-pSP $>$ UMAP (linear)                 & 52.0   & 0.004883 &
NSC-pSP $>$ UMAP (sil.)                   & 31.0   & 0.3848 \\
NSC-pSP $>$ PaCMAP (linear)               & 53.0   & 0.002930 &
NSC-pSP $>$ PaCMAP (sil.)                 & 27.0   & 0.5391 \\
\addlinespace
Friedman (knn\_acc; 9 methods)            & 34.190 & 3.753e-05 &
Friedman (DB; 9 methods)     & 43.680 & 6.539e-07 \\
NSC-pSP $>$ AE (kNN)                      & 32.0   & 0.3467 &
NSC-pSP $<$ AE (DB)                       & 2.0    & 0.002930 \\
NSC-pSP $>$ NSC (kNN)                     & 37.0   & 0.04492 &
NSC-pSP $<$ NSC (DB)                      & 5.0    & 0.009766 \\
NSC-pSP $>$ NSC-P (kNN)                   & 52.0   & 0.004883 &
NSC-pSP $<$ NSC-P (DB)                    & 0.0    & 0.0009766 \\
NSC-pSP $>$ NSC-SP (kNN)                  & 11.0   & 0.9561 &
NSC-pSP $<$ NSC-SP (DB)                   & 31.0   & 0.6523 \\
NSC-pSP $>$ PCA (kNN)                     & 45.0   & 0.001953 &
NSC-pSP $<$ PCA (DB)                      & 0.0    & 0.0009766 \\
NSC-pSP $>$ RP (kNN)                      & 45.0   & 0.001953 &
NSC-pSP $<$ RP (DB)                       & 0.0    & 0.0009766 \\
NSC-pSP $>$ UMAP (kNN)                    & 25.0   & 0.6152 &
NSC-pSP $<$ UMAP (DB)                     & 28.0   & 0.5391 \\
NSC-pSP $>$ PaCMAP (kNN)                  & 26.0   & 0.5771 &
NSC-pSP $<$ PaCMAP (DB)                   & 6.0    & 0.01367 \\
\bottomrule
\end{tabular}
\endgroup
\end{table}
\begin{table}[t]
\centering
\caption{Additional competitiveness analyses of NSC-pSP vs.\ PCA/AE/UMAP/PaCMAP on the block-structured synthetic HDLSS model ($M{=}32$, 10 reps). $\Delta$ denotes paired improvement of NSC-pSP over the baseline (for Acc./Sil.: $\Delta=\text{NSC-pSP}-\text{base}$; for DB: $\Delta=\text{base}-\text{NSC-pSP}$ so that $\Delta>0$ favors NSC-pSP). 95\% CIs are paired bootstrap percentile intervals of the mean $\Delta$ (over reps). W/T/L counts wins/ties/losses across reps. Wilcoxon $p$ is paired two-sided; $r_{\mathrm{rb}}$ is the matched-pairs rank-biserial effect size. Avg.\ ranks are computed over all 9 methods per rep (1=best).}
\label{tab:nsc_block_competitive_addons}
\begingroup
\scriptsize
\setlength{\tabcolsep}{4.2pt}
\renewcommand{\arraystretch}{1.05}
\begin{tabular}{llcccccc}
\toprule
Metric & Base & $\Delta$ (mean) & 95\% CI & W/T/L & Wilcoxon $p$ (2s) & $r_{\mathrm{rb}}$ & Avg.\ Rank (ours/base) \\
\midrule
Lin.\ Acc. & PCA    & 0.0887 & [0.0624, 0.1149] & 10/0/0 & 0.001953 & 1.000 & 1.85/4.85 \\
Lin.\ Acc. & AE     & 0.0900 & [0.0525, 0.1263] &  8/2/0 & 0.007812 & 1.000 & 1.85/5.10 \\
Lin.\ Acc. & UMAP   & 0.0825 & [0.0403, 0.1245] &  8/0/2 & 0.009766 & 0.891 & 1.85/5.10 \\
Lin.\ Acc. & PaCMAP & 0.0913 & [0.0471, 0.1346] &  9/0/1 & 0.005859 & 0.927 & 1.85/5.40 \\
\addlinespace
kNN Acc. & PCA    & 0.0789 & [0.0513, 0.1044] &  9/1/0 & 0.003906 & 1.000 & 3.85/6.85 \\
kNN Acc. & AE     & 0.0026 & [-0.0337, 0.0376] & 5/0/5 & 0.695312 & 0.164 & 3.85/3.60 \\
kNN Acc. & UMAP   & -0.0073 & [-0.0534, 0.0384] & 5/0/5 & 0.845703 & -0.091 & 3.85/4.10 \\
kNN Acc. & PaCMAP & -0.0073 & [-0.0747, 0.0609] & 5/0/5 & 0.921875 & -0.055 & 3.85/4.00 \\
\addlinespace
Sil. & PCA    & 0.0162 & [0.0091, 0.0244] & 10/0/0 & 0.001953 & 1.000 & 2.95/6.40 \\
Sil. & AE     & 0.0124 & [0.0056, 0.0199] &  9/0/1 & 0.005859 & 0.927 & 2.95/4.60 \\
Sil. & UMAP   & 0.0045 & [-0.0171, 0.0274] & 5/0/5 & 0.769531 & 0.127 & 2.95/4.40 \\
Sil. & PaCMAP & 0.0050 & [-0.0210, 0.0302] & 4/0/6 & 1.000000 & -0.018 & 2.95/4.40 \\
\addlinespace
DB & PCA    & 0.6834 & [0.4172, 0.9733] & 10/0/0 & 0.001953 & 1.000 & 2.90/6.40 \\
DB & AE     & 0.5049 & [0.2642, 0.7506] &  9/0/1 & 0.005859 & 0.927 & 2.90/4.50 \\
DB & UMAP   & 0.2476 & [-0.3096, 0.8761] & 3/0/7 & 1.000000 & -0.018 & 2.90/3.20 \\
DB & PaCMAP & 0.8262 & [0.3524, 1.3164] & 7/0/3 & 0.027344 & 0.782 & 2.90/6.00 \\
\bottomrule
\end{tabular}
\endgroup
\end{table}
\subsubsection{A Stylized Block–Subunit HDLSS Model}
Inspired by BSTabDiff~\cite{bstabdiff}, we formalize a simple generative model that matches the inductive bias of GO-LR + NSC.
\begin{definition}[Block–subunit HDLSS model]
\label{def:block_model}
Let $m$ features be partitioned into $M$ disjoint blocks $\{\mathcal S_t\}_{t=1}^M$ of equal size $s$, so that $m = Ms$ and $\mathcal S_t \subset \{1,\dots,m\}$, $\mathcal S_t \cap \mathcal S_{t'} = \emptyset$ for $t \neq t'$. For each block $t$, we define a latent block signal $h_t \in \mathbb R$ and independent noise variables $\{\epsilon_j\}_{j\in\mathcal S_t}$ with Eq.~\ref{eq:s7}. The observed features are defined by Eq.~\ref{eq:s8}. The label $Y$ is conditionally independent of $X$ given the block signals in Eq.~\ref{eq:s9}.
\begin{equation}
\label{eq:s7}
   \epsilon_j \sim \mathcal N(0,\sigma^2), \quad \text{i.i.d. across } j 
\end{equation}
\begin{equation}
\label{eq:s8}
X_j = h_t + \epsilon_j, 
\qquad j \in \mathcal S_t,\; t=1,\dots,M
\end{equation}
\begin{equation}
\label{eq:s9}
P(Y \mid X) = P(Y \mid h_1,\dots,h_M)
\end{equation}
\end{definition}
We consider a setting where GO-LR, applied with a correlation-based metric, recovers an ordering $\Pi^\ast$ that makes the blocks contiguous (up to permutation of the blocks). NSC then segments the GO-LR axis so that each segment $\mathcal S_t$ corresponds to one block.
\paragraph{NSC configuration in the block model.}
In this setting, NSC with: (i) segmentation aligned with the blocks $\{\mathcal S_t\}_{t=1}^M$, and (ii) a simple mean descriptor $\psi(u_t) = \frac{1}{|\mathcal S_t|}\sum_{j\in\mathcal S_t} u_t[j]$ with identity pooling $g_\theta(v) = v$, produces meta-features by Eq.~\ref{eq:s10}. Stacking across $t$ yields the NSC embedding $\Phi_{\mathrm{NSC}}(X) = (z_1,\dots,z_M) \in \mathbb R^M$.
\begin{equation}
\label{eq:s10}
z_t = \frac{1}{|\mathcal S_t|}\sum_{j\in\mathcal S_t} X_j,
\qquad t=1,\dots,M
\end{equation}
\begin{proposition}[Block means are sufficient for Bayes classification]
\label{prop:nsc_sufficient}
Consider the block model in Definition~\ref{def:block_model} in a two-class mean-shift setting where Eq.~\ref{eq:s11} with $\sigma^2$ known and $\mu_{t,0}, \mu_{t,1}$ possibly varying across blocks $t$. Then for each block $\mathcal S_t$ the block sum~\cite{b52} in Eq.~\ref{eq:s12} is a sufficient statistic for $(\mu_{t,0},\mu_{t,1})$, and the joint log-likelihood ratio~\cite{b50} for $Y$ based on all features $X$ depends only on the collection $\{S_t\}_{t=1}^M$. Consequently, any Bayes-optimal classifier~\cite{b51} based on $X$ can be written as a function of the NSC block means $z_t = S_t / s$.
\begin{equation}
\label{eq:s11}
    X_j \mid Y = y \;\sim\; \mathcal N(\mu_{t,y}, \sigma^2),
\qquad j\in\mathcal S_t, \; t=1,\dots,M,\; y\in\{0,1\}
\end{equation}
\begin{equation}
\label{eq:s12}
   S_t := \sum_{j\in\mathcal S_t} X_j 
\end{equation}
\end{proposition}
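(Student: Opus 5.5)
The plan is to prove all three claims by a direct likelihood computation, using the Fisher--Neyman factorization for sufficiency. First I will make the modelling assumption explicit. Given $Y=y$, the coordinates $X_j$ are independent with the marginals in Eq.~\eqref{eq:s11}. Equivalently, in Definition~\ref{def:block_model} the block signal is degenerate given the label, $h_t=\mu_{t,Y}$, and the noises $\epsilon_j$ are i.i.d. Under this assumption, for $y\in\{0,1\}$ the class-conditional density factorizes over blocks and over features within each block:
\[
p(x\mid y)=\prod_{t=1}^{M}\prod_{j\in\mathcal S_t}(2\pi\sigma^2)^{-1/2}\exp\!\Big(-\tfrac{(x_j-\mu_{t,y})^2}{2\sigma^2}\Big).
\]

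Second, I will expand the quadratic inside each block:
\[
\sum_{j\in\mathcal S_t}(x_j-\mu_{t,y})^2=\sum_{j\in\mathcal S_t}x_j^2-2\mu_{t,y}S_t+s\,\mu_{t,y}^2 .
\]
This gives the factorization $p(x\mid y)=a(x)\prod_t b_t(S_t;\mu_{t,y})$, where $a(x)=(2\pi\sigma^2)^{-m/2}\exp(-\sum_j x_j^2/2\sigma^2)$ does not involve the parameters. Sufficiency of $S_t$ for $(\mu_{t,0},\mu_{t,1})$ then follows blockwise from the Fisher--Neyman factorization theorem, since $\sigma^2$ is known.

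Third, I will form the log-likelihood ratio. The factor $a(x)$ cancels, leaving
\[
\log\frac{p(x\mid 1)}{p(x\mid 0)}=\sum_{t=1}^{M}\Big[\frac{\mu_{t,1}-\mu_{t,0}}{\sigma^2}\,S_t-\frac{s(\mu_{t,1}^2-\mu_{t,0}^2)}{2\sigma^2}\Big],
\]
which is an affine function of $(S_1,\dots,S_M)$ only.

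Fourth, I will pass to the Bayes classifier. The posterior is
\[
P(Y=1\mid X=x)=\mathrm{sigmoid}\big(\mathrm{LLR}(x)+\log\tfrac{\pi_1}{\pi_0}\big),
\]
where $\pi_0,\pi_1$ are the class priors. Any Bayes-optimal rule thresholds this posterior, so up to tie-breaking on a null set it is a function of $\{S_t\}$. Because $S_t=s\,z_t$ is a bijection, the rule is equally a function of the NSC block means in Eq.~\eqref{eq:s10}. This also matches Eq.~\eqref{eq:s9}, since the $z_t$ estimate the block signals $h_t$.

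The main obstacle is reconciling Eq.~\eqref{eq:s11} with the random-$h_t$ block model. If $h_t\mid Y\sim\mathcal N(\mu_{t,Y},\tau^2)$, features within a block are no longer conditionally independent. To handle this I would redo the computation with the within-block covariance $\Sigma_t=\sigma^2 I+\tau^2\mathbf 1\mathbf 1^\top$. By Sherman--Morrison,
\[
\Sigma_t^{-1}=\sigma^{-2}\Big(I-\tfrac{\tau^2}{\sigma^2+s\tau^2}\mathbf 1\mathbf 1^\top\Big).
\]
The class-dependent part of the Gaussian exponent is $\mu_{t,y}\mathbf 1^\top\Sigma_t^{-1}x-\tfrac12\mu_{t,y}^2\mathbf 1^\top\Sigma_t^{-1}\mathbf 1$, and $\mathbf 1^\top\Sigma_t^{-1}x$ is a constant multiple of $S_t$. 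The term $x^\top\Sigma_t^{-1}x$ is class-independent because the covariance is shared across classes, so it cancels in the ratio. The log-likelihood ratio therefore remains affine in $\{S_t\}$, and the same conclusion holds in this extended model.
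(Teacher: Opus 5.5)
Your proposal is correct and follows essentially the same route as the paper's sketch. Both arguments expand the within-block Gaussian exponent so that its class-dependent part involves the data only through $S_t$, multiply across blocks using conditional independence given $Y$, and read off the Bayes rule from the resulting log-likelihood ratio. Your version is more careful in three respects: you isolate the parameter-free factor $\exp(-\sum_j x_j^2/2\sigma^2)$, which the paper's phrase ``depends on the data only through $S_t$'' glosses over; you write the LLR explicitly as an affine function of $(S_1,\dots,S_M)$; and you include the prior log-odds. Your Sherman--Morrison extension to a random block signal $h_t$ with class-shared within-block covariance also correctly resolves the tension between Definition~\ref{def:block_model} and the within-block conditional independence that the paper's sketch takes for granted.
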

\begin{proof}[Proof sketch]
Within each block $t$, conditional on $Y=y$, the observations $\{X_j\}_{j\in\mathcal S_t}$ are i.i.d.\ Gaussian with mean $\mu_{t,y}$ and variance $\sigma^2$. The joint likelihood within block $t$ is defined by Eq.~\ref{eq:s13}. Rewriting the exponent shows that this likelihood depends on the data only through the block sum $S_t = \sum_{j\in\mathcal S_t} X_j$ (equivalently the block mean $z_t$). Thus $S_t$ (or $z_t$) is a sufficient statistic for $\mu_{t,y}$ in the exponential-family sense. Across blocks, conditional independence implies Eq.~\ref{eq:s14} so the global log-likelihood ratio $\log p(X\mid Y=1) - \log p(X\mid Y=0)$ depends on $X$ only through $\{S_t\}_{t=1}^M$, i.e., only through $\{z_t\}_{t=1}^M$. Therefore any Bayes-optimal decision rule $\mathrm{sign}(\log p(Y=1\mid X) - \log p(Y=0\mid X))$ can be written as a function of $(z_1,\dots,z_M)$.
\begin{equation}
    \label{eq:s13}
    p(X_{\mathcal S_t} \mid Y=y)
= \prod_{j\in\mathcal S_t} \frac{1}{\sqrt{2\pi\sigma^2}}
\exp\!\Big(-\frac{(X_j-\mu_{t,y})^2}{2\sigma^2}\Big)
\end{equation}
\begin{equation}
    \label{eq:s14}
    p(X \mid Y=y) = \prod_{t=1}^M p(X_{\mathcal S_t} \mid Y=y)
\end{equation}
\end{proof}
\begin{remark}
Proposition~\ref{prop:nsc_sufficient} exhibits a family of HDLSS distributions where there exists an NSC configuration (aligned segments, block means) such that the $M$-dimensional NSC embedding $\Phi_{\mathrm{NSC}}(X)$ is information-preserving for Bayes classification, even though the ambient dimension $m = Ms$ can be arbitrarily large.
\end{remark}
\begin{lemma}[SNR gain of NSC vs Random Projection in a block]
\label{lem:snr_nsc_vs_rp}
Consider one block $\mathcal S$ of size $s$ under a simple two-class Gaussian mean-shift model (Eq.~\ref{eq:s15}) with independent coordinates and fixed $\Delta\neq 0$, $\sigma^2>0$~\cite{b54, b51, b40}.
\begin{equation}
    \label{eq:s15}
    X_j \mid Y=0 \sim \mathcal N(0,\sigma^2),
\qquad
X_j \mid Y=1 \sim \mathcal N(\Delta,\sigma^2),
\qquad j\in\mathcal S
\end{equation}
\begin{enumerate}
\item The NSC block mean (Eq.~\ref{eq:s16}) has class-conditional mean difference $\mathbb E[z_{\mathrm{NSC}}\mid Y=1] - \mathbb E[z_{\mathrm{NSC}}\mid Y=0] = \Delta$ and variance $\mathrm{Var}(z_{\mathrm{NSC}}\mid Y) = \sigma^2 / s$, hence we get Eq.~\ref{eq:s16} and \ref{eq:s17}
\begin{equation}
    \label{eq:s16}
    z_{\mathrm{NSC}} = \frac{1}{s}\sum_{j\in\mathcal S} X_j
\end{equation}
\begin{equation}
    \label{eq:s17}
    \mathrm{SNR}_{\mathrm{NSC}}
:= \frac{\big(\mathbb E[z_{\mathrm{NSC}}\mid Y=1] - \mathbb E[z_{\mathrm{NSC}}\mid Y=0]\big)^2}
{\mathrm{Var}(z_{\mathrm{NSC}}\mid Y)}
= \frac{\Delta^2 s}{\sigma^2}
\end{equation}
\item Let $u = (u_j)_{j\in\mathcal S}$ be a random projection direction with $\sum_{j\in\mathcal S} u_j^2 = 1$ and entries of order $1/\sqrt{s}$, and define Eq.~\ref{eq:s18}. Then we get Eq.~\ref{eq:s18} and Eq.~\ref{eq:s19}. For typical random $u$, $\mathbb E\big[(\sum_j u_j)^2\big]=1$, so the typical SNR of $z_{\mathrm{RP}}$ is defined by Eq.~\ref{eq:s20}.
\begin{equation}
    \label{eq:s18}
    z_{\mathrm{RP}} = \sum_{j\in\mathcal S} u_j X_j
\end{equation}
\begin{equation}
    \label{eq:s19}
    \mathbb E[z_{\mathrm{RP}}\mid Y=1] - \mathbb E[z_{\mathrm{RP}}\mid Y=0]
= \Delta \sum_{j\in\mathcal S} u_j,
\qquad
\mathrm{Var}(z_{\mathrm{RP}}\mid Y) = \sigma^2
\end{equation}
\begin{equation}
    \label{eq:s20}
    \mathrm{SNR}_{\mathrm{RP}}
:= \frac{\big(\mathbb E[z_{\mathrm{RP}}\mid Y=1] - \mathbb E[z_{\mathrm{RP}}\mid Y=0]\big)^2}
{\mathrm{Var}(z_{\mathrm{RP}}\mid Y)}
\approx \frac{\Delta^2}{\sigma^2}
\end{equation}
\end{enumerate}
Consequently, in this block model (Eq.~\ref{eq:s21} i.e., NSC’s block mean enjoys an SNR gain of a factor $s$ over a typical random projection coordinate.
\begin{equation}
\label{eq:s21}
\frac{\mathrm{SNR}_{\mathrm{NSC}}}{\mathrm{SNR}_{\mathrm{RP}}}
\approx s
\end{equation}
\end{lemma}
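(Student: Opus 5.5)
The proof reduces to computing the first two conditional moments of two linear statistics of independent Gaussians, then forming the ratio. Throughout, I condition on $Y=y$. By Eq.~\eqref{eq:s15} the coordinates $X_j$, $j\in\mathcal S$, are then independent with mean $y\Delta$ and variance $\sigma^2$. For any fixed weight vector $a\in\mathbb R^{s}$, linearity of expectation and independence give
\[
\mathbb E\big[\textstyle\sum_j a_jX_j\mid Y=y\big]=y\Delta\sum_j a_j,\qquad \mathrm{Var}\big(\textstyle\sum_j a_jX_j\mid Y=y\big)=\sigma^2\sum_j a_j^2 .
\]
The variance does not depend on $y$, so the quantity $\mathrm{Var}(\cdot\mid Y)$ in the SNR definitions is unambiguous. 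The SNR of the statistic is therefore $\Delta^2(\sum_j a_j)^2/(\sigma^2\sum_j a_j^2)$. This single identity drives both parts.

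For part 1, I take $a_j=1/s$. Then $\sum_j a_j=1$ and $\sum_j a_j^2=1/s$, which gives mean difference $\Delta$, variance $\sigma^2/s$, and Eq.~\eqref{eq:s17}. I would also note that Cauchy--Schwarz gives $(\sum_j a_j)^2\le s\sum_j a_j^2$, so $\Delta^2 s/\sigma^2$ is the largest SNR any linear statistic of the block can attain. This places NSC's block mean at the optimum, which fits Proposition~\ref{prop:nsc_sufficient}.

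For part 2, I take $a=u$, treated as fixed (or independent of $X$ and conditioned on). Then $\sum_j u_j^2=1$ gives Eq.~\eqref{eq:s19} directly, and the SNR equals $\Delta^2(\sum_j u_j)^2/\sigma^2$. The remaining step, and the only nontrivial one, is to justify $\mathbb E[(\sum_j u_j)^2]=1$ for a "typical" random direction. I would make this precise by assuming the $u_j$ are exchangeable and sign-symmetric, for example $u=g/\|g\|$ with $g\sim\mathcal N(0,I_s)$, or Rademacher entries scaled by $1/\sqrt s$. Expanding $(\sum_j u_j)^2=\sum_j u_j^2+\sum_{j\ne k}u_ju_k$, the cross terms have zero mean because $u\mapsto(\ldots,-u_j,\ldots)$ is a symmetry of the distribution. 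The diagonal sum is $1$ by normalization. Hence $\mathbb E[\mathrm{SNR}_{\mathrm{RP}}]=\Delta^2/\sigma^2$, which is Eq.~\eqref{eq:s20}, with "$\approx$" meaning equality in expectation over $u$.

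Dividing the two expressions gives Eq.~\eqref{eq:s21}: the ratio is exactly $s$ when $\mathrm{SNR}_{\mathrm{RP}}$ is read in expectation, and $s/(\sum_j u_j)^2$ pointwise. The main obstacle is the interpretation of "typical $u$". Part 2 as stated is not a deterministic claim, because $(\sum_j u_j)^2$ fluctuates as a $\chi^2_1$-like variable. I would therefore state the result in expectation under the symmetric-distribution assumption. I would add that $(\sum_j u_j)^2=O_P(1)$, which makes the order-$s$ gain hold with high probability rather than only on average.
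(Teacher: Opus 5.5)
Your proposal is correct and follows essentially the same route as the paper: it gets the conditional mean and variance of a linear statistic of independent Gaussians by linearity, specializes to $a_j=1/s$ and $a=u$, and then uses $\mathbb E[(\sum_j u_j)^2]=1$ for a random unit direction. Your sign-symmetry argument makes explicit the vanishing cross terms that the paper's step $\mathbb E[(\sum_j u_j)^2]=s\,\mathbb E[u_j^2]$ takes for granted, and reading ``$\approx$'' as equality in expectation over $u$ (with ratio $s/(\sum_j u_j)^2$ pointwise) is the right way to make the statement precise.
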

\begin{proof}[Proof sketch]
Part (1) follows by linearity of expectation and the variance of the average of $s$ i.i.d.\ Gaussians. For part (2), the mean and variance of $z_{\mathrm{RP}}$ are obtained by linearity and the constraint $\sum_j u_j^2 = 1$. For random $u$ with roughly i.i.d.\ components of variance $1/s$, we have $\mathbb E[(\sum_j u_j)^2] = s \cdot \mathbb E[u_j^2]=1$, so the typical squared signal is $\Delta^2$, leading to the stated SNR. The ratio then simplifies to $s$.
\end{proof}
\begin{remark}
Lemma~\ref{lem:snr_nsc_vs_rp} shows that, even at the level of a single block, NSC pooling yields a multiplicative SNR improvement of order $s$ over random projections. In HDLSS regimes where classification is heavily SNR-limited, this translates into a substantial advantage in sample efficiency and Bayes error.
\end{remark}
\begin{proposition}[HDLSS stability of NSC block statistics]
\label{prop:nsc_stability}
In the block model of Definition~\ref{def:block_model}, suppose the block size $s$ is fixed while the ambient dimension $m = Ms$ may grow. For each block $\mathcal S_t$, the NSC block mean (Eq.~\ref{eq:s22}) satisfies Eq.~\ref{eq:s23} and its empirical estimate based on $n$ samples converges to its population value at rate $O_p(n^{-1/2})$, independently of $m$. By contrast, global linear DR methods such as PCA or dense linear encoders must estimate directions in $\mathbb R^m$ based on the empirical covariance matrix or large weight matrices of size $O(m)$ or $O(m^2)$, which is known to be unstable in the HDLSS regime $m \gg n$ without strong structural assumptions~\cite{b57, b58, b55, b56}. Thus, in this stylized setting, NSC’s local block statistics remain well-conditioned as $m$ grows, while unconstrained global DR can become ill-posed.
\begin{equation}
    \label{eq:s22}
    z_t = \frac{1}{s}\sum_{j\in\mathcal S_t} X_j
\end{equation}
\begin{equation}
    \label{eq:s23}
    z_t \;=\; h_t + \bar\epsilon_t,
\qquad \bar\epsilon_t := \frac{1}{s}\sum_{j\in\mathcal S_t} \epsilon_j
\sim \mathcal N(0,\sigma^2/s)
\end{equation}
\end{proposition}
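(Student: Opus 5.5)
The plan splits into three parts: the exact distributional identity (Eq.~\ref{eq:s23}), the $m$-independent estimation rate, and the contrast with global DR, which is a comparison with known results rather than a theorem to be proved.

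\textbf{Step 1: the identity.} Substitute the block model (Eq.~\ref{eq:s8}) into the definition of $z_t$ in Eq.~\ref{eq:s22}. Every $j\in\mathcal S_t$ shares the same latent $h_t$, so
\[
z_t=\frac1s\sum_{j\in\mathcal S_t}(h_t+\epsilon_j)=h_t+\bar\epsilon_t .
\]
The variables $\epsilon_j$ are i.i.d.\ $\mathcal N(0,\sigma^2)$ (Eq.~\ref{eq:s7}), and we take them independent of $h_t$. Their average is therefore Gaussian, with mean $0$ and variance $s\sigma^2/s^2=\sigma^2/s$. This is the same computation as part (1) of Lemma~\ref{lem:snr_nsc_vs_rp}, and I will cite it there.

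\textbf{Step 2: the rate.} Let $z_t^{(1)},\dots,z_t^{(n)}$ be the values of $z_t$ on $n$ i.i.d.\ samples. The target population quantities are the mean $\mathbb E[z_t]=\mathbb E[h_t]$ and the variance $\mathrm{Var}(z_t)=\mathrm{Var}(h_t)+\sigma^2/s$. I will assume $\mathrm{Var}(h_t)<\infty$; this assumption has to be added explicitly.
\begin{itemize}
\item The empirical mean $\hat\mu_t=\frac1n\sum_i z_t^{(i)}$ has variance $\mathrm{Var}(z_t)/n$.
\item By Chebyshev (or the CLT), $\hat\mu_t-\mathbb E z_t=O_p(n^{-1/2})$.
\item The constant depends only on $\mathrm{Var}(h_t)$, $\sigma^2$ and the fixed block size $s$. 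It does not depend on $m$ or $M$: adding blocks only changes other coordinates and leaves the law of $z_t$ untouched.
\end{itemize}
If second-moment or covariance estimates of the $z_t$'s are also wanted, I will assume fourth moments of $h_t$ and apply the same argument to each entry. This gives $O_p(n^{-1/2})$ per entry of an $M\times M$ covariance, and I will state the result entrywise to avoid any dependence on $M$.

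\textbf{Step 3: the contrast with global DR.} This part is not a theorem to prove but a comparison with the literature. I will cite the known HDLSS inconsistency results for sample PCA (Johnstone--Lu and Jung--Marron type, \cite{b57,b58,b55,b56}). These show that when $m/n\not\to0$ the leading sample eigenvectors need not converge to the population eigenvectors without sparsity or spike-strength assumptions. The point to make is that their error depends on $m/n$, whereas the bound in Step 2 does not.

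The main obstacle is wording rather than mathematics. The statement's phrase ``its empirical estimate'' is ambiguous, so I will fix it to mean the sample moments of $z_t$. I will also state the moment assumptions on $h_t$ explicitly, and keep the PCA comparison as a cited, informal contrast rather than claim a rigorous lower bound.
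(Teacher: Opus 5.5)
Your proposal is correct and follows essentially the same route as the paper. The paper gives no separate proof; it justifies the claim inline by substituting Eq.~\eqref{eq:s8} into Eq.~\eqref{eq:s22} to get $z_t=h_t+\bar\epsilon_t$ with $\bar\epsilon_t\sim\mathcal N(0,\sigma^2/s)$, invoking the standard $O_p(n^{-1/2})$ rate for a fixed-dimensional block statistic, and citing the HDLSS literature for the instability of global PCA-type estimates. Your version states explicitly what the paper leaves implicit: $h_t$ independent of the noise, $\mathrm{Var}(h_t)<\infty$ with a law that does not change as $M$ grows, and a per-block rather than uniform-over-$t$ rate, which is what keeps the bound free of $m$.
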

\subsection{Comparison and Evaluation Protocol}
\label{supp:nsc_visualization}
To empirically position NSC as a structured DR layer, we adopt the following protocol for both real and synthetic experiments.
\paragraph{Experimental setup.}
For each dataset (real HDLSS or synthetic block-structured):
\begin{enumerate}
    \item \textbf{GO-LR ordering.} Compute the GO-LR global feature permutation $\Pi^\ast$ on the training set using a chosen metric (e.g., correlation, cosine, euclidean, manhattan, or KL divergence), with local refinement passes as in Algorithm~\ref{alg:go_lr_short}.
    \item \textbf{Intrinsic dimension.} Estimate $\hat d$ via effective rank (Eqs.~\ref{eq:cov_app}-\ref{eq:deff}), and compute IDF $=\hat d / m$ (Eq.~\ref{eq:idf}).
    \item \textbf{NSC configuration.} Configure NSC with $\Pi^\ast$, using:
    \begin{itemize}
        \item \emph{Default compression:} $M$ chosen by Eq.~\ref{eq:defaultM} (e.g., yielding $M\approx 2\hat d$ and resulting in tens to a few hundred meta-features).
        \item \emph{Aggressive compression:} fixed $M\in\{32,64\}$ to emulate strong dimensionality reduction.
    \end{itemize}
    Apply NSC to obtain embeddings $X_{\text{NSC}} \in \mathbb{R}^{n \times M}$.
\end{enumerate}
\paragraph{Baselines.}
For each dataset and target dimension $M$ (e.g., $M=32$), we construct the following DR baselines:
\begin{itemize}
    \item \textbf{PCA}: compute $k$ principal components with $k = M$, producing $X_{\text{PCA}} \in \mathbb{R}^{n \times M}$.
    \item \textbf{RP}: apply a dense Gaussian projection $R \in \mathbb{R}^{M \times m}$, normalized to preserve variance, yielding $X_{\text{RP}} = X R^\top$.
    \item \textbf{AE}: train a shallow autoencoder with bottleneck dimension $M$ on the training set, and use the bottleneck activations $X_{\text{AE}} \in \mathbb{R}^{n \times M}$ as the DR representation.
    \item \textbf{UMAP}: learn a nonlinear manifold embedding with target dimension \(M\), preserving local neighborhood structure in the sample space and producing \(X_{\text{UMAP}}\in\mathbb{R}^{n\times M}\).
    \item \textbf{PaCMAP}: learn a nonlinear embedding with target dimension \(M\) by balancing nearby, mid-near, and far pair constraints, yielding \(X_{\text{PaCMAP}}\in\mathbb{R}^{n\times M}\).
\end{itemize}
\paragraph{NSC variants and naming conventions.}
We evaluate a family of \textbf{NSC} variants that share a common pipeline (i) determine an IDF to set the internal granularity which is a ratio of Intrinsic Dimension (ID) to the actual dimension, (ii) segment the feature sequence into contiguous blocks, and (iii) compress each block into an $M$-dimensional representation, but differ in how IDF is estimated and how each segment is summarized. We denote the default variant as NSC, which uses an effective-rank (data-driven) ID estimate and applies statistical descriptor-based pooling within each segment. To isolate the impact of a PCA-inspired ID heuristic while keeping the same descriptor-based segment summarization, we use \textbf{NSC-P} (also written as NSC-PCA). To study the role of replacing descriptors with explicit linear projection inside segments, we define \textbf{NSC-SP} (also written as NSC-SegPCA), which retains the effective rank-based IDF but applies PCA within each segment after segmentation. Finally, our proposed \textbf{NSC-pSP} (also written as NSC-PIDF-SegPCA) combines both modifications: a PCA-inspired IDF estimate together with per-segment PCA compression after segmentation. In all cases, the suffixes indicate the modification relative to NSC: \textbf{P} denotes PCA-inspired IDF, \textbf{SP} denotes segmented per-block PCA, and \textbf{pSP} denotes the combination of both.
\paragraph{Quantitative comparisons.}
To compare NSC variants against PCA/RP/AE/UMAP/PaCMAP, we use:
\begin{enumerate}
    \item \textbf{Linear-probe accuracy.} Train a logistic regression classifier on each latent space $X_{\text{NSC}}, X_{\text{PCA}}, X_{\text{RP}}, X_{\text{AE}}, X_{\text{UMAP}}, X_{\text{PaCMAP}}$ using the same stratified cross-validation protocol. This measures how well each DR method preserves label-relevant structure in $M$ dimensions.
    \item \textbf{$k$NN classification in latent space.} Evaluate $k$NN accuracy using the compressed embeddings under the same stratified cross-validation protocol to assess neighborhood quality.
    \item \textbf{Label-based separability metrics.} Compute silhouette and Davies-Bouldin scores on each latent representation using ground-truth class labels as the partition.
    \item \textbf{Statistical comparison across datasets.} Aggregate per-dataset metrics and apply nonparametric tests, using a Friedman test~\cite{fried} across methods followed by one-sided Wilcoxon signed-rank comparisons with NSC-pSP as the reference (see Tables~\ref{tab:blockmodel_all_reps_with_umap_pacmap}, \ref{tab:nsc_block_stats}, \ref{tab:nsc_block_competitive_addons}).
\end{enumerate}
\paragraph{Evaluation protocol.}
We report only quantitative DR-style evaluations under a fixed aggressive budget of $M{=}32$. For each method (NSC variants, PCA, AE, RP, UMAP, PaCMAP), we first compute a 32-dimensional latent representation in $\mathbb{R}^{32}$, and then assess (i) linear-probe accuracy via logistic regression, (ii) $k$NN accuracy in latent space, and (iii) label-based separability via silhouette and Davies-Bouldin scores, using the same stratified cross-validation protocol across methods (Tables~\ref{tab:blockmodel_all_reps_with_umap_pacmap}-\ref{tab:nsc_block_stats}). Overall, these experiments treat NSC as a first-class dimensionality reduction layer: it produces an explicit $\mathbb{R}^{M}$ representation that can be consumed directly by TabPFN-style predictor under strict token budgets, while still supporting standard DR comparisons to PCA/RP/AE/UMAP/PaCMAP via probe and clustering metrics. In HDLSS settings, NSC couples GO-LR’s MinLA-motivated ordering with subunit-wise pooling to achieve (i) aggressive compression ($M \ll m$) and (ii) a locality-preserving structured embedding. Empirically, NSC is typically competitive with PCA, AE, UMAP, and PaCMAP and consistently outperforms RP on real HDLSS datasets; on the block-structured synthetic model, the NSC family yields the best mean results across the evaluated metrics (Table~\ref{tab:blockmodel_all_reps_with_umap_pacmap}) and shows significantly stronger predictive, neighborhood, and separability structure in several comparisons (Table~\ref{tab:nsc_block_stats}), highlighting a regime where its locality bias matches the data-generating process.
\paragraph{Synthetic block-model validation and takeaways.}
To connect the empirical trends to our theoretical picture, we evaluate NSC as an explicit DR layer on a synthetic HDLSS block model aligned with its inductive bias: $m$ features are generated in $B{=}40$ latent-factor blocks (shared block signal + noise), class information is injected via mean shifts on a small subset of blocks, and then feature indices are randomly permuted to destroy contiguity in the raw space. Under an aggressive budget ($M{=}32$) and over 10 independent repetitions (Table~\ref{tab:blockmodel_all_reps_with_umap_pacmap}), NSC-style embeddings remain strongly discriminative while preserving neighborhood/cluster structure competitively against unstructured and nonlinear DR: nonparametric tests confirm significant differences across methods and show consistent gains in latent-space geometry (e.g., higher $k$NN accuracy versus PCA/RP and improved separability via silhouette/DB; Table~\ref{tab:nsc_block_stats}). Overall, these results reinforce the DR interpretation of NSC: GO-LR constructs a locality-revealing axis, and NSC compresses it via subunit-wise pooling into interpretable meta-features whose coordinates summarize coherent redundant neighborhoods, contrasting with the global mixing of PCA/RP, many AE encoders, and sample-space manifold embeddings such as UMAP/PaCMAP. Thus, its primary advantage under strong compression is retaining local geometry and cluster structure while staying competitive in discriminative accuracy; among variants, NSC-pSP is the most consistently competitive on the block model (Table~\ref{tab:nsc_block_competitive_addons}), while on real HDLSS benchmarks at the same $M{=}32$ budget it is broadly comparable to PCA/AE/UMAP/PaCMAP with smaller, dataset-dependent differences.
\paragraph{GO-LR+NSC vs.\ PCA.}
Global PCA~\cite{b38} \(\rightarrow\) TabPFN-2.5~\cite{tabpfn2.5} is a natural control for testing whether the gains of GOTabPFN come merely from dimensionality reduction. We therefore compare GO-LR+NSC+TabPFN-2.5 against Global PCA+TabPFN-2.5 while keeping the same frozen TabPFN-2.5 predictor and changing only the front-end compression interface. As shown in Table~\ref{tab:golr_nsc_vs_pca}, GO-LR+NSC outperforms Global PCA on all 8 HDLSS datasets, suggesting that the improvement is not explained by generic global compression alone, but by locality-aware ordering and structured neighborhood compression.
\begin{table*}[htbp]
\centering
\caption{\textbf{GO-LR+NSC vs.\ PCA.}
Accuracy comparison using the same frozen TabPFN-2.5 predictor, where only the front-end compression method differs. Values are mean accuracy with subscripted standard deviation over \(5\times5\) CV.}
\label{tab:golr_nsc_vs_pca}
\vspace{-0.2em}
\footnotesize
\setlength{\tabcolsep}{3.2pt}
\renewcommand{\arraystretch}{1.02}
\begin{tabular}{lcccccccc}
\toprule
\textbf{Model / DB} & \textbf{COL} & \textbf{LNG} & \textbf{GLI} & \textbf{SMK} & \textbf{AML} & \textbf{PRS} & \textbf{ARC} & \textbf{TOX} \\
\midrule
GO-LR+NSC+TabPFN-2.5 
& \acc{\textbf{88.18}}{\pm10.05}
& \acc{\textbf{97.44}}{\pm2.32}
& \acc{\textbf{93.82}}{\pm5.81}
& \acc{\textbf{74.23}}{\pm5.17}
& \acc{\textbf{97.54}}{\pm3.86}
& \acc{\textbf{93.37}}{\pm4.48}
& \acc{\textbf{90.60}}{\pm3.97}
& \acc{\textbf{93.33}}{\pm4.74} \\
Global PCA+TabPFN-2.5 
& \acc{78.51}{\pm10.13}
& \acc{96.16}{\pm2.85}
& \acc{86.35}{\pm6.73}
& \acc{69.71}{\pm7.70}
& \acc{95.52}{\pm4.90}
& \acc{89.21}{\pm5.23}
& \acc{81.90}{\pm5.37}
& \acc{89.47}{\pm6.45} \\
\bottomrule
\end{tabular}
\vspace{-0.8em}
\end{table*}
\paragraph{GO-LR+NSC vs.\ Lasso-selected features.}
We compare GO-LR+NSC against Lasso-selected features under the same frozen TabPFN-2.5~\cite{tabpfn2.5} predictor and identical \(5\times5\) CV protocol. Specifically, we evaluate Lasso+TabPFN-2.5 (LT) across sparsity levels \(C\in\{0.01,0.02,0.05\}\). As shown in Table~\ref{tab:golr_nsc_vs_lasso}, GOTabPFN outperforms all LT variants on most HDLSS datasets, while LT only slightly exceeds GOTabPFN on SMK and AML under the weakest sparsity setting (\(C=0.05\)). This suggests that locality-aware feature ordering and structured neighborhood compression provide more robust gains than sparsity-based feature selection alone in HDLSS settings.
\begin{table*}[htbp]
\centering
\caption{\textbf{GO-LR+NSC vs.\ Lasso-selected features.}
Accuracy comparison using the same frozen TabPFN-2.5 predictor. LT denotes Lasso-selected features + TabPFN-2.5, evaluated at different sparsity levels \(C\). Values are mean accuracy with subscripted standard deviation over \(5\times5\) CV.}
\label{tab:golr_nsc_vs_lasso}
\vspace{-0.2em}
\footnotesize
\setlength{\tabcolsep}{3.0pt}
\renewcommand{\arraystretch}{1.02}
\begin{tabular}{lcccccccc}
\toprule
\textbf{Method / DB} & \textbf{COL} & \textbf{LNG} & \textbf{GLI} & \textbf{SMK} & \textbf{AML} & \textbf{PRS} & \textbf{ARC} & \textbf{TOX} \\
\midrule
GOTabPFN
& \acc{\textbf{88.18}}{\pm10.05}
& \acc{\textbf{97.44}}{\pm2.32}
& \acc{\textbf{93.82}}{\pm5.81}
& \acc{74.23}{\pm5.17}
& \acc{97.54}{\pm3.86}
& \acc{\textbf{93.37}}{\pm4.48}
& \acc{\textbf{90.60}}{\pm3.97}
& \acc{\textbf{93.33}}{\pm4.74} \\
LT (\(C=0.01\))
& \acc{85.59}{\pm9.80}
& \acc{96.76}{\pm2.63}
& \acc{91.76}{\pm5.88}
& \acc{68.53}{\pm7.54}
& \acc{96.10}{\pm4.52}
& \acc{93.33}{\pm5.06}
& \acc{89.60}{\pm3.73}
& \acc{90.06}{\pm5.22} \\
LT (\(C=0.02\))
& \acc{85.59}{\pm9.80}
& \acc{96.75}{\pm2.44}
& \acc{91.76}{\pm5.88}
& \acc{68.53}{\pm7.54}
& \acc{96.10}{\pm4.52}
& \acc{93.33}{\pm5.06}
& \acc{89.60}{\pm3.73}
& \acc{92.28}{\pm5.79} \\
LT (\(C=0.05\))
& \acc{85.59}{\pm10.05}
& \acc{97.15}{\pm2.33}
& \acc{92.71}{\pm5.44}
& \acc{\textbf{75.14}}{\pm6.10}
& \acc{\textbf{98.04}}{\pm3.21}
& \acc{92.35}{\pm5.08}
& \acc{90.30}{\pm4.23}
& \acc{92.74}{\pm3.90} \\
\bottomrule
\end{tabular}
\vspace{-0.8em}
\end{table*}
\renewcommand{\thesection}{E}
\renewcommand{\thesubsection}{E.\arabic{subsection}}
\setcounter{figure}{0}\renewcommand{\thefigure}{E.\arabic{figure}}
\setcounter{table}{0}\renewcommand{\thetable}{E.\arabic{table}}
\section{Why Feature Ordering? Local Neighborhoods Enable Structure-Aware Compression}
\label{app:why_ordering}
A common critique is that tabular features form an unordered set, so learning a column order may appear arbitrary. In this work, ordering is not introduced to impose a fictional semantics (e.g., time), but to construct an algorithmic coordinate system: a 1D axis on which local neighborhoods become meaningful via a structure-revealing layout objective from seriation / graph layout (rather than positional meaning)~\cite{b59,b60,b3,b62}. This is essential for our pipeline because NSC is an explicitly local operator it pools contiguous segments into meta-features (Sec.~\ref{sec:nsc}). Without an ordering that places related features nearby, contiguity-based pooling becomes arbitrary aggregation and can destroy predictive signal. Therefore, feature ordering is primarily justified as a neighborhood construction mechanism that enables structured compression and stable tokenization under tight budgets.
\subsection{Ordering Constructs Coherent Neighborhoods That NSC Can Pool}
\label{app:why_ordering_locality}
NSC partitions the ordered feature axis into $M$ contiguous segments and pools each segment into a token (Sec.~\ref{sec:nsc}). This implicitly assumes that adjacency along the axis corresponds to statistical relatedness. GO-LR is designed exactly to enforce this locality: it approximately minimizes a MinLA/seriation-style dispersion objective that penalizes placing strongly related feature pairs far apart, aligning index locality with a similarity graph~\cite{b3,b62,b63}. As a result, features that are close under the global dissimilarity structure become near neighbors in index, so NSC pooling operates on coherent neighborhoods and yields compressed tokens that preserve informative local structure. In contrast, if features are left in raw or arbitrary order, contiguous segments mix unrelated variables and pooling becomes a lossy averaging operation. This is precisely the failure mode we can worry about: ordering only matters if downstream modules use contiguity. Since NSC explicitly uses contiguity, ordering becomes a necessary part of the representation interface.
\subsection{Ordering Improves Compression by Reducing Cross-Segment Boundary Cuts}
\label{app:why_ordering_cuts}
The role of ordering can be formalized through the segmentation-induced boundary cut. Let $\bar W\in\mathbb{R}^{m\times m}$ denote the global dissimilarity matrix used to define neighborhoods (Sec.~\ref{sec:nsc}), and let $\Pi^\ast$ be the learned ordering. Consider a segmentation into $M$ contiguous segments $\{\mathcal{S}_t\}_{t=1}^M$ along the ordered axis. Define the cross-segment boundary cost by Eq.~\ref{eq:cut_cost}. Intuitively, $\mathrm{Cut}$ measures how often highly related features are split across segment boundaries; the definition parallels cut objectives used in graph-based segmentation/layout~\cite{b64,b3}. A smaller cut indicates that each segment captures a coherent neighborhood, so pooled tokens retain structure. Since GO-LR reduces dispersion, it typically also reduces boundary cuts for reasonable segmentations; random or raw orders inflate boundary cuts, making local pooling ineffective.
\begin{equation}
\label{eq:cut_cost}
\mathrm{Cut}(\Pi^\ast, \{\mathcal{S}_t\})
\;=\;
\sum_{t=1}^{M-1}
\sum_{i\in \mathcal{S}_t}\sum_{j\in \mathcal{S}_{t+1}}
\bar W_{\Pi^\ast(i),\,\Pi^\ast(j)}
\end{equation}
\subsection{What Ordering Is Not: We Do Not Make Tabular Data ``Sequential''}
\label{app:why_ordering_not_sequence}
We do not claim that tabular columns possess an intrinsic order like words or pixels. Rather, we learn an order as a layout that makes neighborhood-based operators (pooling, segmentation, local filters) well-defined. This mirrors classic seriation/linear arrangement goals: the objective is not positional semantics, but an ordering that makes locality meaningful for downstream computation~\cite{b59,b60,b62,b3}. In our case, ordering is valuable specifically because NSC is local along the constructed axis.
\subsection{Empirical Diagnostics for Neighborhood Preservation (Order-Only)}
\label{app:why_ordering_empirical}
We report order-only diagnostics that quantify whether an ordering induces meaningful local neighborhoods independently of any classifier or tokenizer. All diagnostics are computed on the 8 HDLSS datasets using standardized features. Since explicitly materializing a dense pairwise dissimilarity/Gram matrix scales as $\mathcal{O}(m^2)$ in memory (and time), forming $\bar W\in\mathbb{R}^{m\times m}$ quickly becomes impractical at HDLSS dimensions (e.g., gene-expression arrays with $\sim 10^4$-$10^4.5$ genes and GWAS with $\ge 10^5$ SNP markers).~\cite{b65,b66,b67}, we use a lightweight proxy \(\bar W_{ij} \triangleq 1-|\mathrm{corr}(x_i,x_j)|\), estimated from the standardized data (and for adjacent deltas from a small row subset for stability/speed) where lower is better, indicating that neighbors along the axis are more mutually similar. Across the 8 datasets, $\Pi^\ast$ achieves significantly lower adjacency dissimilarity than random permutations (paired by dataset; Wilcoxon signed-rank $p=0.00390625$; Fig.~\ref{fig:ordering_diagnostics}d,f).
\paragraph{Local adjacency coherence (path-length objective).}
Given a feature ordering $\Pi^\ast$ and a dissimilarity matrix $\bar W$, we define the adjacent dissimilarity $\delta_t = \bar W_{\Pi^*(t),\,\Pi^*(t+1)}$. We define the adjacency coherence as the mean adjacent dissimilarity along the ordering, which is the Hamiltonian path (TSP-path) length objective used in seriation, up to normalization~\cite{b60}. We measure $\mathrm{AdjCoh}(\Pi^\ast)$ by Eq.~\ref{eq:adj_coh}.
\begin{equation}
\label{eq:adj_coh}
\mathrm{AdjCoh}(\Pi^*)=\frac{1}{m-1}\sum_{t=1}^{m-1}\delta_t
=\frac{1}{m-1}\sum_{t=1}^{m-1}\bar W_{\Pi^*(t),\,\Pi^*(t+1)}
\end{equation}
\paragraph{Neighborhood hit-rate for top-$k$ neighbors.}
For each feature $i$, let $\mathcal{N}_k(i)$ be its top-$k$ nearest neighbors under $\bar W$. For an ordering $\Pi^\ast$, define the window neighborhood \( \mathcal{W}_h(i)=\{j:\ |\mathrm{pos}_{\Pi^\ast}(j)-\mathrm{pos}_{\Pi^\ast}(i)|\le h\}. \) We compute $\mathrm{HitRate}_{k,h}(\Pi^*)$ by Eq.~\ref{eq:hit_rate} where higher is better. To avoid $O(m^2)$ complexity, we compute $\mathcal{N}_k(i)$ on a feature subsample (default 2048 features) and average over features and (when applicable) multiple random seeds. $\Pi^\ast$ yields consistently higher hit-rate than random orderings (Wilcoxon signed-rank $p=0.0078125$ for a representative $(k,h)=(10,16)$; Fig.~\ref{fig:ordering_diagnostics}c,e,f), and the advantage persists over multiple $(k,h)$ choices (Fig.~\ref{fig:ordering_diagnostics}e)~\cite{b68}. 
\begin{equation}
\label{eq:hit_rate}
\mathrm{HitRate}_{k,h}(\Pi^\ast)=
\frac{1}{m}\sum_{i=1}^{m}\frac{|\mathcal{N}_k(i)\cap \mathcal{W}_h(i)|}{k}
\end{equation}
\paragraph{Segmentation boundary cut.}
To test alignment between the ordering and contiguity-based pooling, we evaluate a boundary-cut proxy under common segmentation rules (uniform, equal-mass, largest-jump) with $M{=}32$ and $l_{\min}{=}8$ (matching our NSC configuration). Given segments $\{\mathcal{S}_t\}$ along $\Pi^*$, we summarize the average dissimilarity at segment boundaries via the adjacent deltas(Eq.~\ref{eq:cut}) where $\mathcal{B}$ are boundary indices between consecutive segments. Lower cut indicates that segment boundaries fall on weaker connections, i.e., stronger within-segment neighborhood coherence. Across datasets, $\Pi^\ast$ yields favorable boundary alignment relative to random permutations (Fig.~\ref{fig:ordering_diagnostics}b)~\cite{b64}.
\begin{equation}
    \label{eq:cut}
    \mathrm{Cut}(\Pi^\ast,\{\mathcal{S}_t\})
\;\approx\;
\frac{1}{|\mathcal{B}|}\sum_{b\in\mathcal{B}} \delta_{b-1}
\end{equation}
\subsection{Ablations That Isolate Why Ordering Helps NSC}
\label{app:why_ordering_ablations}
To separate the effect of ordering from the effect of compression/tokenization, we evaluate NSC under controlled ordering perturbations while keeping the tokenizer/compressor fixed (same $M$, segmentation rule, pooling, and tuned hyperparameters). We use five random seeds for permutation-based controls.
\paragraph{Ordered vs.\ un-ordered.}
We compare the same NSC configuration under:
(i) GO-LR order $\Pi^\ast$,
(ii) the raw/original column order,
(iii) random permutations (averaged over seeds).
This directly tests whether NSC benefits from neighborhood structure rather than from compression alone. In parallel, the order-only diagnostics (AdjCoh/HitRate/Cut) show that $\Pi^\ast$ is systematically more neighborhood-preserving than random, providing a mechanistic explanation for the observed gains (Fig.~\ref{fig:ordering_diagnostics}c,d,f).
\paragraph{Destroy global layout while partially preserving locality (block shuffle).}
Starting from $\Pi^\ast$, we partition indices into contiguous blocks of size $b$ and randomly permute the blocks. This preserves within-block neighborhoods but disrupts long-range arrangement. If NSC relies primarily on local neighborhoods, performance/diagnostics should improve as $b$ increases. Consistent with this, normalized hit-rate increases monotonically with block size:
$0.740\pm0.184$ ($b{=}8$),
$0.889\pm0.177$ ($b{=}16$),
$0.968\pm0.097$ ($b{=}32$),
$1.011\pm0.095$ ($b{=}64$),
relative to the corresponding $\Pi^\ast$ value per dataset (Fig.~\ref{fig:ordering_diagnostics}a,f). This supports the locality hypothesis: preserving larger local neighborhoods recovers the $\Pi^\ast$ advantage.
\paragraph{Keep order fixed, break contiguity (round-robin segments).}
We keep $\Pi^\ast$ but break contiguity by assigning features to segments in a round-robin manner and then concatenating segments. This retains the same set and global ordering statistics, but destroys the contiguous neighborhood pooling assumption. The resulting drop in hit-rate (and corresponding degradation in order-sensitive behavior) isolates contiguity as the operative mechanism (Fig.~\ref{fig:ordering_diagnostics}c,d).
\paragraph{Optional representation probes.}
When needed, we complement the order-only metrics with lightweight probes on the produced tokens (e.g., linear probe) under the ablations above. These probes are used only to verify that improved neighborhood preservation translates into higher-quality representations, while the primary claim remains anchored in the classifier-free diagnostics.
\paragraph{Summary.}
Across datasets, the learned ordering $\Pi^\ast$ consistently preserves local neighborhoods better than baselines: the hit-rate is highest under $\Pi^\ast$, typically followed by the raw order, while randomization and explicitly breaking contiguity degrade neighborhood agreement (Fig.~\ref{fig:ordering_diagnostics}(c)); this is mirrored by adjacency coherence, where $\Pi^\ast$ attains the lowest (best) $\mathrm{AdjCoh}$ and random orderings are worst (Fig.~\ref{fig:ordering_diagnostics}(d)). The robustness heatmap further shows that $\Pi^\ast$ yields uniformly positive gains over random across all tested $(k,h)$, with larger improvements for wider locality windows (larger $h$) and smaller $k$ (Fig.~\ref{fig:ordering_diagnostics}(e)). For segmentation alignment, $\Pi^\ast$ tends to reduce boundary cut under ``equal\_mass" (and is roughly neutral under ``largest\_jump"), whereas ``uniform" can be inconsistent and often flips the advantage (negative median $\Delta\mathrm{Cut}$), suggesting uniform boundaries may not match the induced contiguous structure (Fig.~\ref{fig:ordering_diagnostics}(b)). Finally, the block-shuffle experiment shows a clear scale effect: when features are shuffled within small blocks, the normalized hit-rate drops substantially, but it recovers toward $\Pi^\ast$ as block size increases (e.g., rising from $\approx0.67$ at $b{=}8$ to $\approx0.90$ at $b{=}64$), indicating that locality is largely preserved within coarse blocks and mainly disrupted by fine-grained shuffles (Fig.~\ref{fig:ordering_diagnostics}(a)).
\begin{figure*}[t]
\centering
\begin{subfigure}[t]{0.485\textwidth}
\centering\vspace{0pt}
\includegraphics[width=\linewidth]{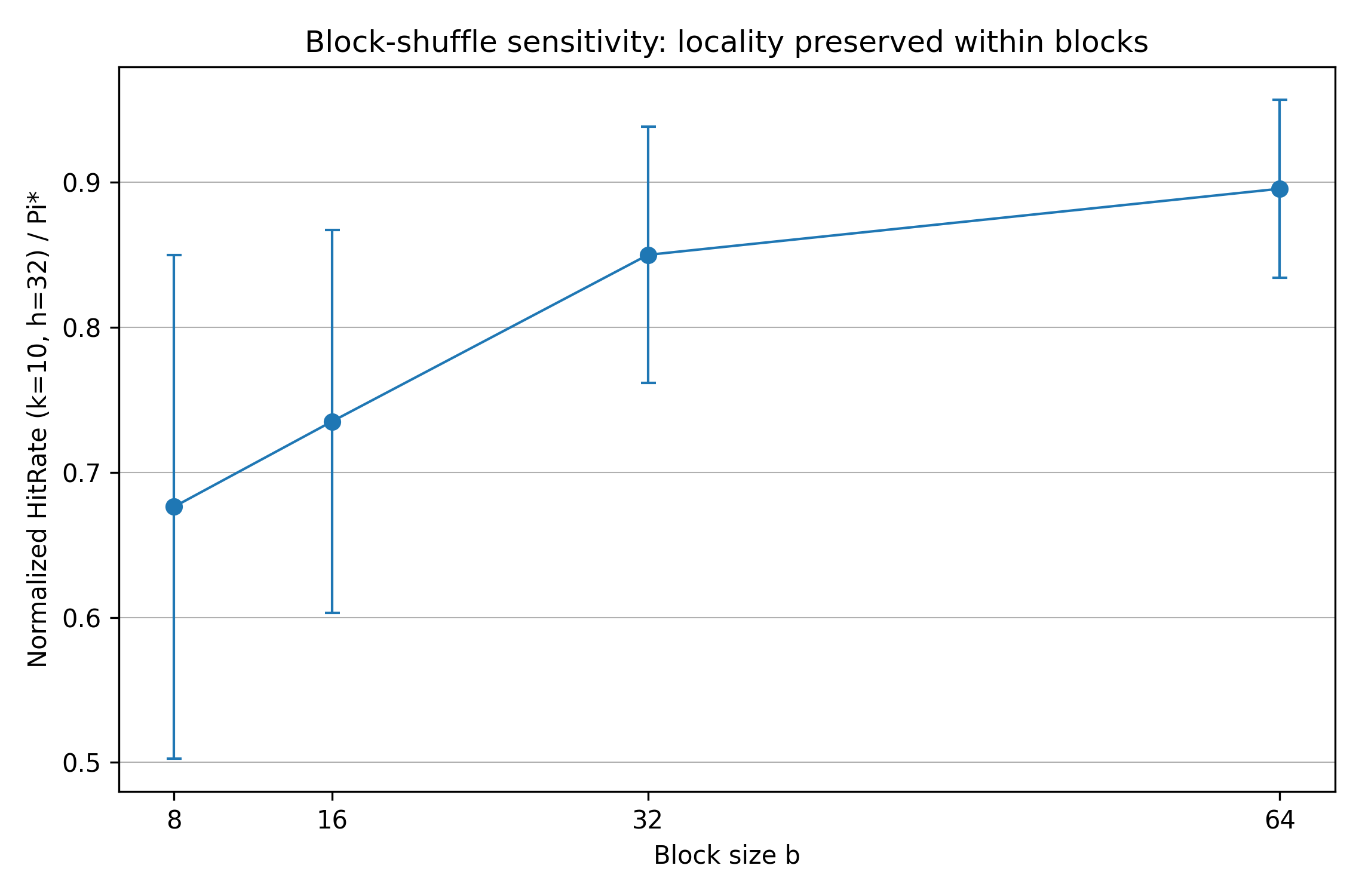}
\caption{\textbf{Block-shuffle sensitivity.} Normalized $\mathrm{HitRate}_{k,h}$ (relative to $\Pi^\ast$) vs.\ block size $b$.}
\end{subfigure}\hfill
\begin{subfigure}[t]{0.485\textwidth}
\centering\vspace{0pt}
\includegraphics[width=\linewidth]{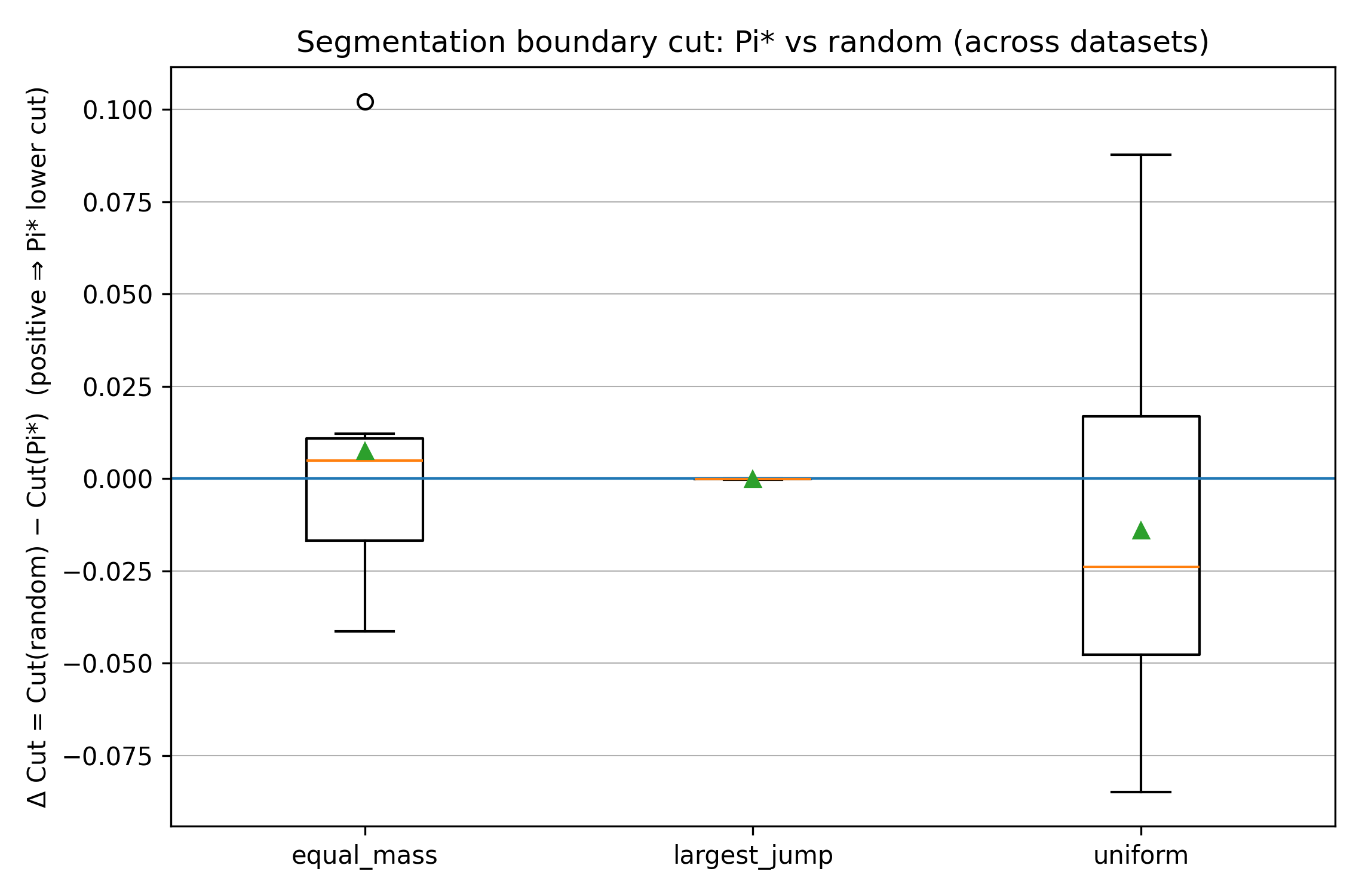}
\caption{\textbf{Boundary-cut advantage.} $\Delta\mathrm{Cut}=\mathrm{Cut}(\text{random})-\mathrm{Cut}(\Pi^\ast)$ across datasets (positive $\Rightarrow$ $\Pi^\ast$ lower cut).}
\end{subfigure}

\vspace{0.6em}

\begin{subfigure}[t]{0.485\textwidth}
\centering\vspace{0pt}
\includegraphics[width=\linewidth]{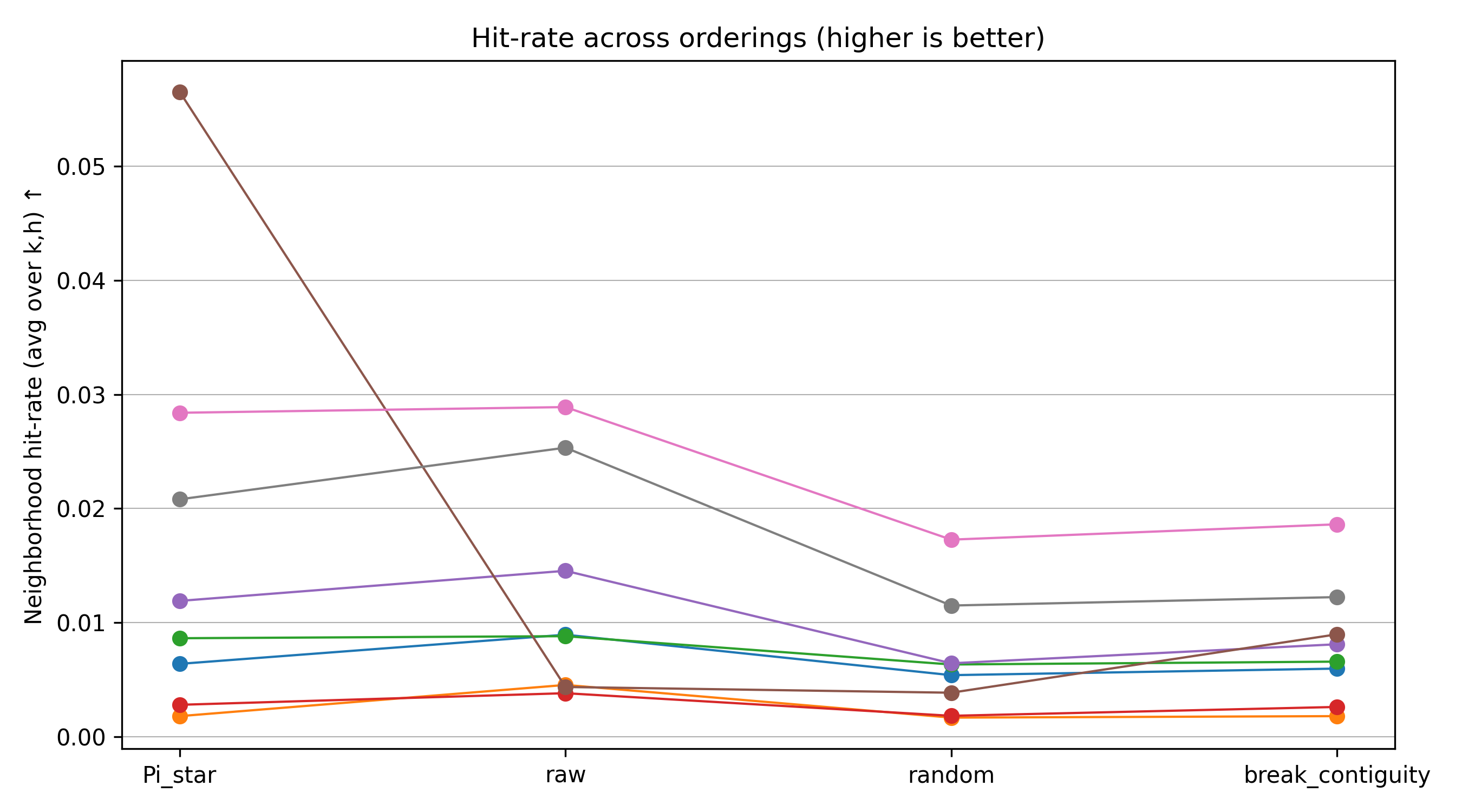}
\caption{\textbf{Hit-rate across order families.} $\mathrm{HitRate}_{k,h}$ is highest under $\Pi^\ast$ and drops when contiguity is broken.}
\end{subfigure}\hfill
\begin{subfigure}[t]{0.485\textwidth}
\centering\vspace{0pt}
\includegraphics[width=\linewidth]{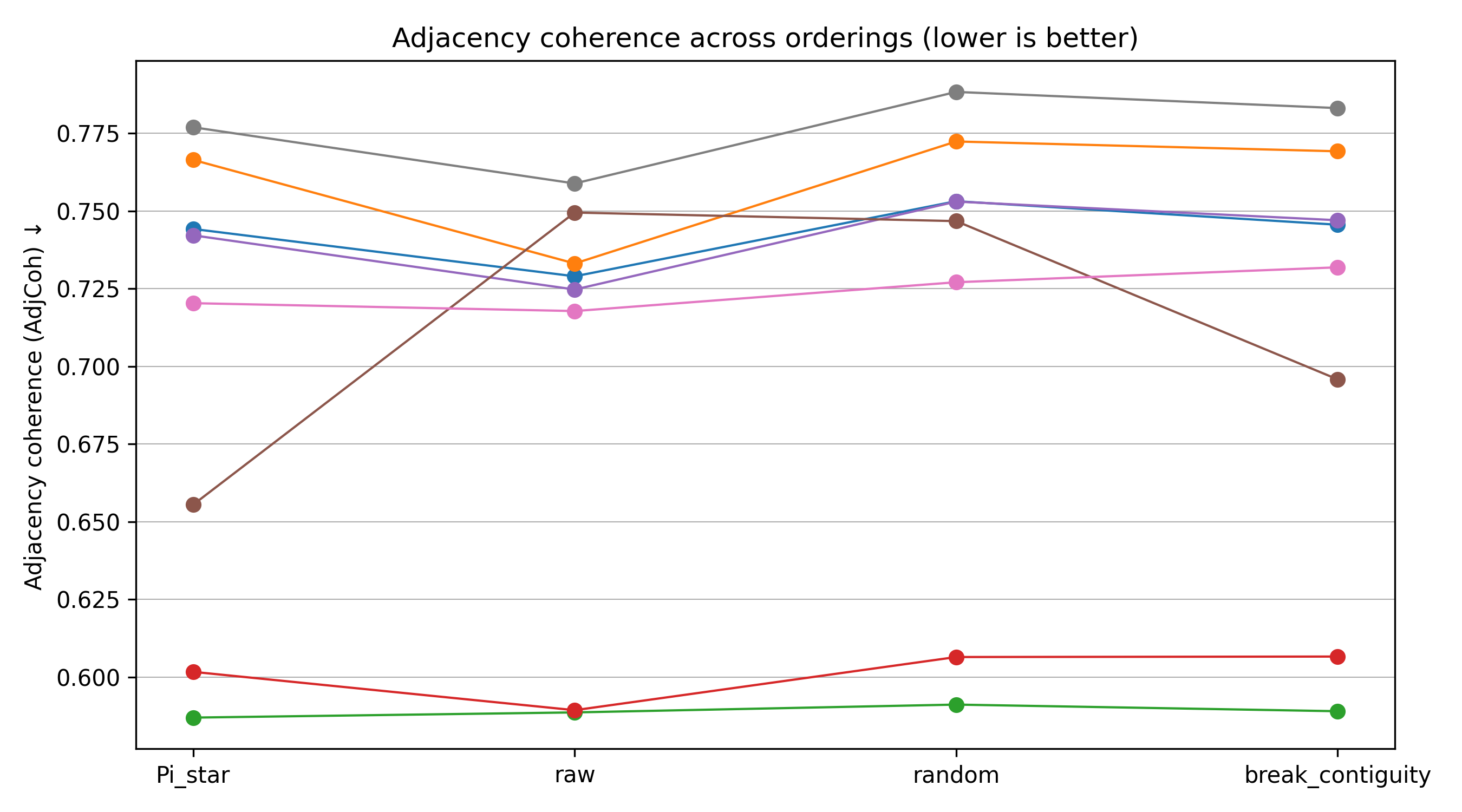}
\caption{\textbf{Adjacency coherence across families.} $\mathrm{AdjCoh}$ is lowest (best) under $\Pi^\ast$ and worse under random.}
\end{subfigure}

\vspace{0.6em}

\begin{subfigure}[t]{0.485\textwidth}
\centering\vspace{0pt}
\includegraphics[height=0.16\textheight,width=\linewidth,keepaspectratio]{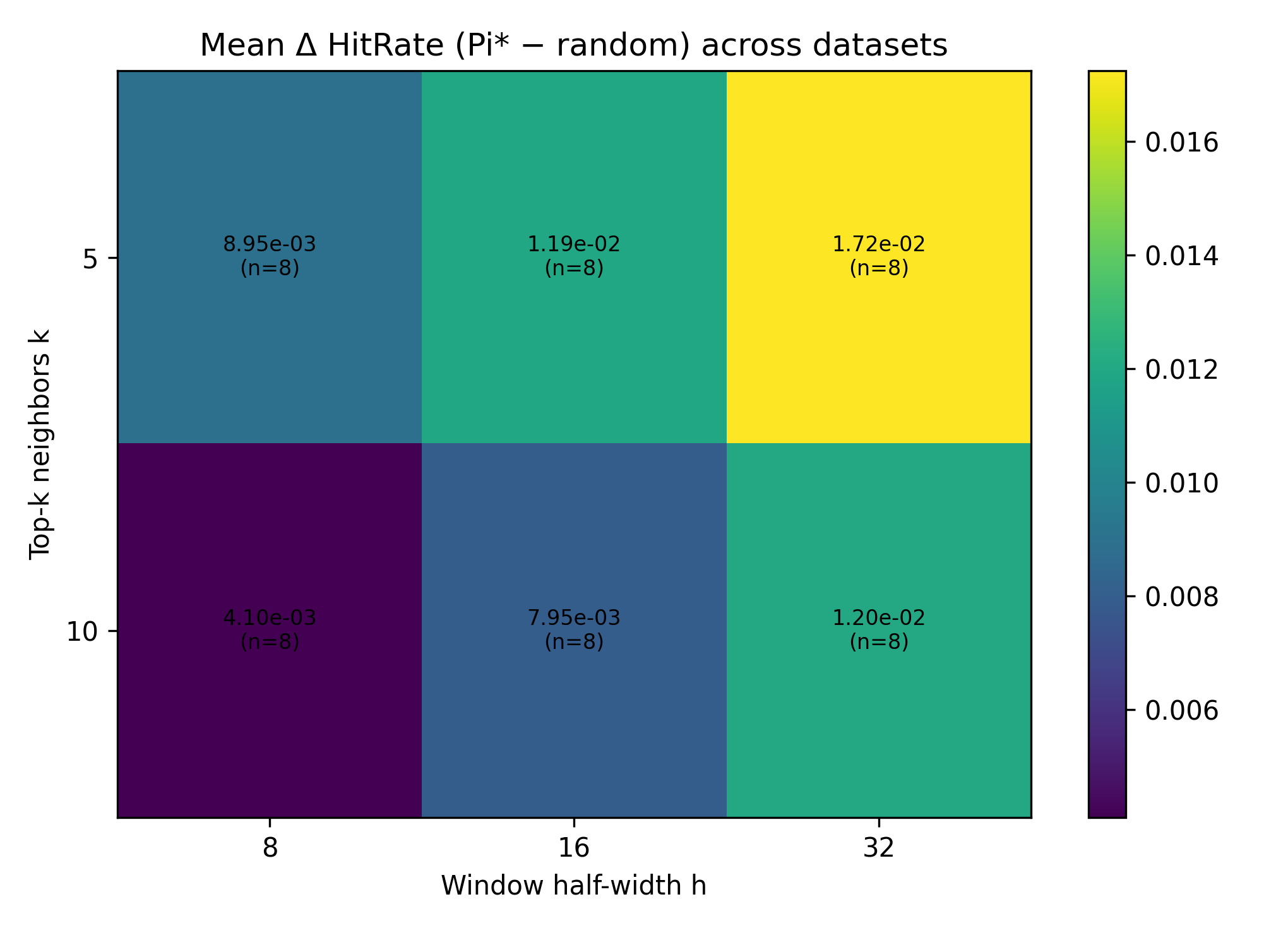}
\caption{\textbf{Robust neighborhood gains.} Mean $\Delta \mathrm{HitRate}_{k,h}=\mathrm{HitRate}(\Pi^\ast)-\mathrm{HitRate}(\text{random})$ over $(k,h)$.}
\end{subfigure}\hfill
\begin{subfigure}[t]{0.485\textwidth}
\centering\vspace{0pt}
\parbox[t][0.16\textheight][t]{\linewidth}{%
\centering\scriptsize
\setlength{\tabcolsep}{5pt}
\renewcommand{\arraystretch}{1.08}

\begin{tabular}{l c}
\toprule
\textbf{Test / Statistic} & \textbf{Value} \\
\midrule
Wilcoxon $p$ (AdjCoh: $\Pi^\ast < $ random) & $0.00390625$ \\
Wilcoxon $p$ (HitRate: $\Pi^\ast > $ random) & $0.0078125$ \\
\bottomrule
\end{tabular}

\vspace{0.45em} 

\begin{tabular}{c c}
\toprule
\textbf{Block size $b$} & \textbf{Norm.\ HitRate (mean$\pm$std)} \\
\midrule
$8$  & $0.740448 \pm 0.184317$ \\
$16$ & $0.888879 \pm 0.176857$ \\
$32$ & $0.967536 \pm 0.096630$ \\
$64$ & $1.010504 \pm 0.094868$ \\
\bottomrule
\end{tabular}
}
\caption{\textbf{Aggregated stats.} Wilcoxon tests ($n{=}8$ datasets) and block-shuffle summary (normalized to $\Pi^\ast$).}
\end{subfigure}
\caption{\textbf{Order-only neighborhood diagnostics and controls.}
GO-LR ordering $\Pi^\ast$ improves local coherence (AdjCoh), increases neighborhood recovery (HitRate), yields favorable segmentation alignment (Cut), and degrades predictably under block-shuffle and contiguity-breaking controls.}
\label{fig:ordering_diagnostics}
\end{figure*}
\subsection{Beyond NSC: When Ordering Can Improve Accuracy}
\label{app:why_ordering_accuracy}
While our primary motivation is NSC's contiguity-based pooling, the same locality principle applies to any order-sensitive learner that introduces architectural locality over feature tokens (e.g., local attention windows, relative position bias, or convolutional mixing)~\cite{b69,b70,b71,b72,b73}. In such models, the feature order acts as a computational layout: by placing statistically related features nearby, the model concentrates informative interactions into small neighborhoods, which can reduce sample complexity and improve generalization in low-sample regimes.
\paragraph{Experimental evidence (accuracy changes only when the backbone uses locality).}
We run controlled ordering experiments on two biomedical $n<m$ datasets (AI-d\_case5~\cite{b74} and ADNI\_AD123~\cite{b76}) using the same backbone and training protocol, changing only the column order applied consistently to train/val/test. As an order-sensitive backbone, we use a local-window Transformer whose attention is restricted to a fixed neighborhood around each feature token~\cite{b77}, making performance dependent on index-locality. We evaluate multiple ordering strategies: our GO-LR order $\Pi^\ast$, a TabSeq-style ordering~\cite{tabseq}, the raw column order, random permutations (averaged over seeds followed by TabICL~\cite{tabicl}), a light version of ROTATOR~\cite{wang2025advancing} and controlled perturbations that partially preserve locality (block-shuffle) or explicitly destroy contiguity while keeping the same global order statistics (round-robin ``break contiguity''). Figure~\ref{fig:ordering_accuracy_beyond_nsc} (top) and Table~\ref{tab:ordering_accuracy_local_invariant} show that ordering yields non-trivial AUC changes for the local-window Transformer, and GO-LR produces the strongest gains among the tested ordering methods on these datasets. We use a fixed training protocol with a single train/val/test split and fixed hyperparameters (no dataset-specific tuning or HPO); all results are produced with a fixed seed (SEED=42), except the random-permutation baseline which is averaged over 5 permutation seeds.
\paragraph{Permutation-invariant sanity check.}
To verify that gains are not artifacts of reindexing, we repeat the same experiment using a permutation-invariant control model (set encoder / invariant pooling) where consistent reordering should not systematically affect performance~\cite{deepsets}. As expected, Figure~\ref{fig:ordering_accuracy_beyond_nsc} (middle) shows near-zero deltas across orderings, supporting the interpretation that improvements arise from locality-aware computation rather than from accidental leakage or inconsistent preprocessing. This perspective is also compatible with permutation-ensemble approaches such as TabICL, which averages predictions over multiple random feature permutations to approximate invariance~\cite{tabicl}.
\paragraph{Mechanistic link: better neighborhood preservation $\Rightarrow$ better accuracy.}
Finally, we connect accuracy changes to order-only locality diagnostics. Figure~\ref{fig:ordering_accuracy_beyond_nsc} (bottom) shows that orderings with higher neighborhood hit-rate (HitRate$_{k,h}$) tend to yield higher AUC under the local-window backbone, supporting the hypothesis that ordering helps by increasing the density of meaningful local interactions. In other words, ordering can improve accuracy whenever the downstream architecture uses locality; when the architecture is invariant, ordering should not matter such as tree-based models or set-based models~\cite{deepsets}.
\paragraph{Relation to prior ordering methods.}
This finding aligns with prior work that explicitly learns or uses feature layouts to benefit order-sensitive tabular learners (e.g., TabSeq~\citep{tabseq} ordering heuristics and ordering strategies in recent LLM-based tabular pipelines such as ROTATOR-LLM~\citep{wang2025advancing}).
\paragraph{Summary.}
Figure~\ref{fig:ordering_accuracy_beyond_nsc} provides a causal/mechanistic check that ordering only matters when the backbone uses locality. In the order-sensitive local-window Transformer, GO-LR ($\Pi^\ast$) yields the most consistent positive $\Delta$AUC versus random across both datasets, while disrupting locality either by random permutations, breaking contiguity, or (to a lesser extent) block shuffling reduces or erases these gains; in contrast, the permutation-invariant control shows $\Delta$AUC values clustered near zero with no systematic advantage for any ordering, indicating that improvements are not an artifact of reindexing features but arise from the model’s locality bias (Fig.~\ref{fig:ordering_accuracy_beyond_nsc}a). The mechanism plot further supports this explanation: for the local model, downstream AUC increases with neighborhood preservation (HitRate$_{k,h}$), with higher-performing orderings (e.g., GO-LR/$\Pi^\ast$) occupying the high-HitRate/high-AUC region, whereas random or contiguity-breaking variants sit at lower HitRate and correspondingly lower AUC (Fig.~\ref{fig:ordering_accuracy_beyond_nsc}b). Together, these results suggest that learned orderings improve accuracy or overall classification performance beyond NSC-based compression specifically by aligning informative feature neighborhoods with the local attention window, and that when locality is removed (invariant control), ordering ceases to provide systematic benefit (Fig.~\ref{fig:ordering_accuracy_beyond_nsc}).
\begin{figure*}[htbp]
\centering

\begin{subfigure}[t]{0.98\textwidth}
\centering
\includegraphics[width=\linewidth]{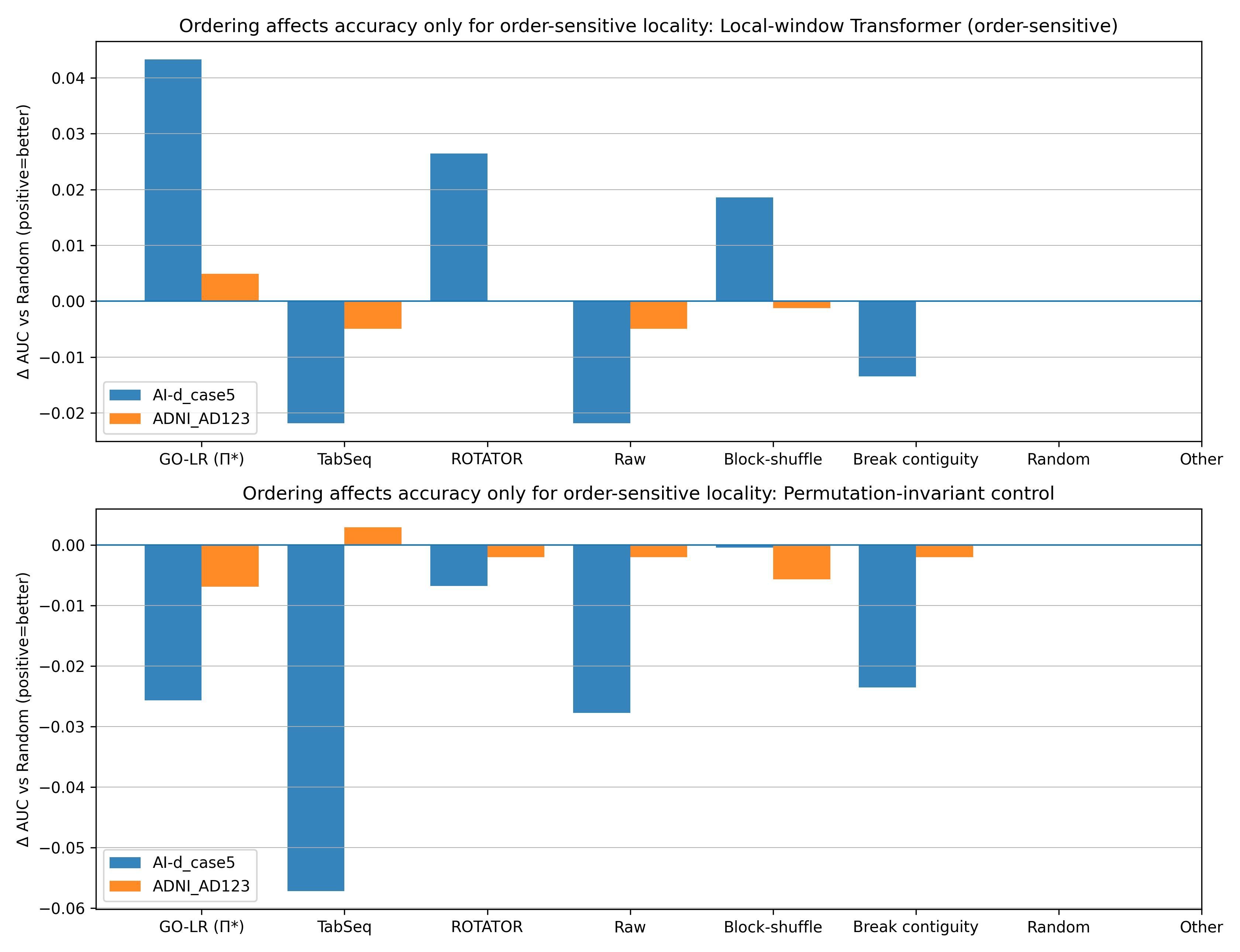}
\caption{\textbf{Causal check.} $\Delta$AUC vs.\ random permutations for an order-sensitive local-window Transformer (top) and a permutation-invariant control (bottom). Ordering changes performance only when the backbone uses locality.}
\end{subfigure}

\vspace{0.6em}

\begin{subfigure}[t]{0.98\textwidth}
\centering
\includegraphics[width=\linewidth]{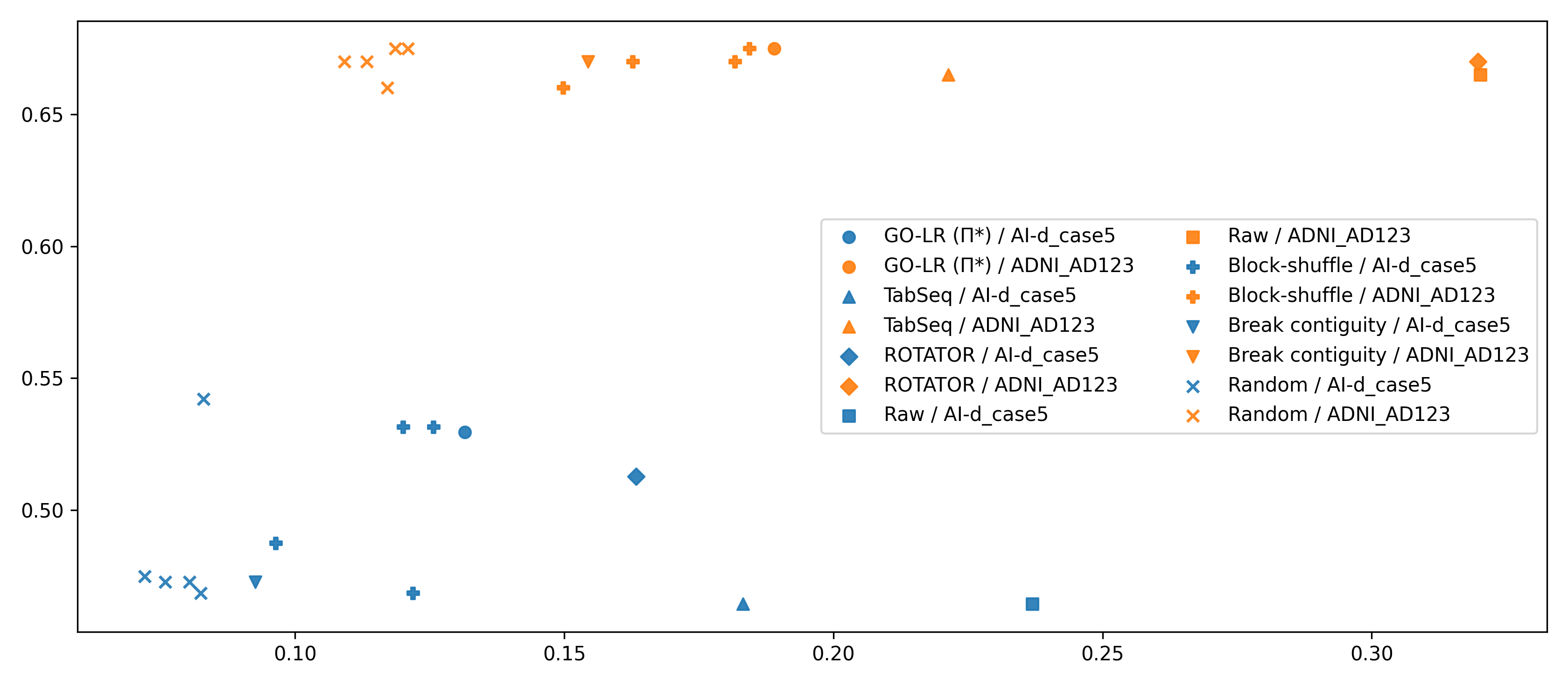}
\caption{\textbf{Mechanism.} Neighborhood preservation (HitRate$_{k,h}$) correlates with downstream AUC for the local model, supporting the locality hypothesis.}
\end{subfigure}

\caption{\textbf{Ordering can improve accuracy beyond NSC.} Learned orderings matter for architectures that introduce locality over feature tokens; invariant controls do not exhibit systematic gains.}
\label{fig:ordering_accuracy_beyond_nsc}
\end{figure*}
\begin{table}[t]
\centering
\caption{Ordering improves AUC for an order-sensitive local-window Transformer, but not for a permutation-invariant control, on two $n<m$ datasets. Values are mean$\pm$std where multiple runs exist (e.g., random permutations); parentheses show $\Delta$AUC relative to the random baseline within each dataset (local model).}
\label{tab:ordering_accuracy_local_invariant}
\begingroup\small
\setlength{\tabcolsep}{5pt}
\renewcommand{\arraystretch}{1.12}
\begin{tabular}{lcccc}
\toprule
\textbf{Ordering} & \multicolumn{2}{c}{\textbf{Local window Transformer (AUC)}} & \multicolumn{2}{c}{\textbf{Permutation-invariant control (AUC)}}\\
\cmidrule(lr){2-3}\cmidrule(lr){4-5}
 & AI-d\_{case5} & ADNI\_{AD123} & AI-d\_{case5} & ADNI\_{AD123}\\
\midrule
GO-LR ($\Pi^\ast$) & 0.529 (+0.043) & 0.675 (+0.005) & 0.466 & 0.665\\
TabSeq & 0.464 (-0.022) & 0.665 (-0.005) & 0.435 & 0.675\\
ROTATOR & 0.513 (+0.026) & 0.670 (+0.000) & 0.485 & 0.670\\
Raw & 0.464 (-0.022) & 0.665 (-0.005) & 0.464 & 0.670\\
Block-shuffle ($b=8$) & 0.487 (+0.001) & 0.660 (-0.010) & 0.508 & 0.660\\
Block-shuffle ($b=16$) & 0.532 (+0.045) & 0.670 (+0.000) & 0.513 & 0.670\\
Block-shuffle ($b=32$) & 0.468 (-0.018) & 0.675 (+0.005) & 0.471 & 0.665\\
Block-shuffle ($b=64$) & 0.532 (+0.045) & 0.670 (+0.000) & 0.475 & 0.670\\
Break contiguity & 0.473 (-0.013) & 0.670 (+0.000) & 0.468 & 0.670\\
Random (baseline) & 0.486$\pm$0.031 & 0.670$\pm$0.006 & 0.492$\pm$0.032 & 0.672$\pm$0.003\\
\bottomrule
\end{tabular}
\endgroup
\end{table}
\renewcommand{\thesection}{F}
\renewcommand{\thesubsection}{F.\arabic{subsection}}
\setcounter{figure}{0}\renewcommand{\thefigure}{F.\arabic{figure}}
\setcounter{table}{0}\renewcommand{\thetable}{F.\arabic{table}}
\section{Feature Ordering - When to Use? Through the Lens of Locality}
\label{app:when_ordering_locality}
\paragraph{When Feature Ordering Matters.}
Deep Sets~\cite{deepsets} is designed to be permutation-invariant for genuinely unordered inputs (e.g., point clouds, MIL, and chemoinformatics). In contrast, we study high-dimensional tabular settings where the chosen column layout can materially affect learning. A well-chosen permutation can reduce redundancy, expose latent dependencies among features, and ultimately improve predictive performance or help in contiguity based-process like NSC where locality matters. This is particularly pertinent for high-dimensional biological measurements (e.g., gene expression), EEG and other sensor data, remote sensing and climate datasets, and multimodal or heavily engineered feature tables domains that often exhibit sparsity, redundancy, and hidden structure, and are thus natural targets for sequence-dependent models. DynaTab~\citep{dynatab} initiated a systematic study of when feature ordering is useful in high-dimensional tabular learning, primarily from the perspective of ordering sensitivity and sequence-dependent modeling. We extend this view through the lens of locality: feature ordering is useful not only because sequence-sensitive backbones depend on token order, but also because a good permutation can create contiguous neighborhoods of statistically related features, enabling locality-based operators such as NSC to compress related features into informative meta-features.\newline
\paragraph{Dataset Categorization Rules.} There is no universally agreed upon numerical cutoff that uniquely determines when a dataset should be labeled HDLSS. In much of the HDLSS literature, the term is used broadly for regimes where the ambient dimension (number of variables) is far larger than the sample size, and is often formalized through HDLSS asymptotics~\cite{b83} in which the dimension grows while the sample size is fixed (or grows much more slowly)~\cite{b82,b57}. Consequently, the simple rule ``$m>n$'' is a useful heuristic but too coarse to capture the practical spectrum of high dimensionality. To make this notion more operational, we follow DynaTab's~\cite{dynatab} empirical regime stratification and use the feature-to-sample ratio $\rho=m/n$, which helps distinguish qualitatively different regimes beyond a binary HDLSS vs.\ non-HDLSS split. Let \(n\) denote the number of samples, \(m\) the number of features, and \(\rho = \frac{m}{n}\) the feature-to-sample ratio. We assign each dataset to one of five regimes using the following empirical \(\rho\) thresholds:\newline
\\
\begin{tabular}{@{}lp{0.7\linewidth}@{}}
\textbf{HDLSS:}   & \(m>1000,\;n<1000,\;\rho>2\).\\
\textbf{HDHSS:}   & \(m>1000,\;n>10^4,\;0.005<\rho\le2\).\\
\textbf{LDHSS:}   & \(m\le100,\;n>10^4,\;\rho\le0.01\).\\
\textbf{LDLSS:}   & \(m\le100,\;n\le1000,\;\rho\le0.05\).\\
\textbf{MixedRegime:}   & otherwise.
\end{tabular}
\subsection{Intrinsic Dimensionality Factor as a Proxy for Locality Exploitability}
\label{app:when_ordering_idf}
Our primary justification for feature ordering in this paper is locality: ordering constructs an algorithmic 1D axis on which contiguity corresponds to statistical relatedness, making neighborhood-based operators (e.g., NSC's contiguous pooling) well-defined (Sec.~\ref{sec:nsc}, Appendix~\ref{app:why_ordering}). This motivates a complementary question: when should we expect ordering to provide tangible benefit?
We connect to this locality view via the IDF. Let $m$ be the ambient number of features and let $\hat d$ denote an estimate of the dataset's intrinsic dimensionality (e.g., the number of principal components required to reach a fixed cumulative variance threshold, or an effective-rank estimate). In other words, the minimal number of features capturing core variability~\cite{b81} to its total feature count. Following DynaTab~\cite{dynatab}, we use Eq.~\ref{eq:idf_icml} to compute $\mathrm{IDF}$. Intuitively, $\mathrm{IDF}$ measures how compact the data are relative to the ambient dimension. A small IDF indicates substantial redundancy/low effective rank, which we hypothesize corresponds to stronger, more compressible correlation structure and thus a greater ability to induce local neighborhoods via a 1D layout.
\begin{equation}
\label{eq:idf_icml}
\mathrm{IDF} \;=\; \frac{\hat d}{m}
\end{equation}
\paragraph{Complexity score (IDF-normalized compactness).}
Following DynaTab~\cite{dynatab}, we summarize dataset compactness and ``ordering opportunity'' by the IDF-normalized score in Eq.~\ref{eq:complexity_score_icml}, where $\mathrm{CumVar}(\hat d)$ is the cumulative variance explained at $\hat d$ components and $p$ is a tunable sensitivity parameter. Larger values indicate that a small intrinsic subspace captures substantial variance, suggesting higher redundancy and greater potential for ordering-based locality.
\begin{equation}
\label{eq:complexity_score_icml}
\mathrm{ComplexityScore}
\;=\;
\frac{\mathrm{CumVar}(\hat d)}{\mathrm{IDF}^{\,p}}
\end{equation}
\subsection{Feature Ordering Effectiveness and Success Probability}
\label{app:foe_success_locality}
Following DynaTab~\cite{dynatab}, we use the Feature Ordering Effectiveness (FOE) as a composite indicator of ordering benefit, given in Eq.~\ref{eq:foe_icml}, where $\kappa$ is a dataset-specific scaling factor and $\mathrm{AUC}$ denotes the area under the IDF-variance curve, estimated by trapezoidal integration over discrete IDF-variance pairs. We choose $\kappa$ by minimizing the deviation from a target value (set to $1$) via Eq.~\ref{eq:kappa_loss_icml}. Setting $p{=}2$ introduces quadratic sensitivity~\cite{b39}, amplifying penalties when variance grows slowly with intrinsic dimension. AUC is estimated using the trapezoidal rule~\cite{b80} for efficient integration of discrete IDF–variance pairs. While $\kappa$ and $\mathrm{AUC}$ vary by dataset, FOE preserves an inverse dependence on IDF by Eq.~\ref{eq:foe_inverse_idf_icml}. We also report a simple success-probability proxy by Eq.~\ref{eq:psuccess_icml}. As $\hat d \rightarrow m$, $p_{\mathrm{succ}}$ decreases, reflecting limited room for ordering to expose structure beyond what is already ``fully spread'' across features.
\begin{equation}
\label{eq:foe_icml}
\mathrm{FOE}
\;=\;
\frac{\kappa}{\left(\mathrm{AUC}\cdot \mathrm{IDF}\right)^{p}}
\end{equation}
\begin{equation}
\label{eq:kappa_loss_icml}
\mathrm{Loss}(\kappa)
\;=\;
\left(\frac{\kappa}{(\mathrm{AUC})^{p}} - 1\right)^2
\end{equation}
\begin{equation}
\label{eq:foe_inverse_idf_icml}
\mathrm{FOE} \;\propto\; \frac{1}{\mathrm{IDF}}
\quad \text{(for fixed $\kappa$ and $\mathrm{AUC}$)}
\end{equation}
\begin{equation}
\label{eq:psuccess_icml}
p_{\mathrm{succ}}
\;=\;
1-\mathrm{IDF}
\;=\;
1-\frac{\hat d}{m}
\end{equation}

\subsection{Linking IDF/FOE to Locality: Testable Predictions}
\label{app:when_locality_predictions}
The locality view yields a mechanistic interpretation of IDF/FOE: ordering helps when the dataset admits a linear layout that concentrates strong relations into short-range neighborhoods. We formalize this using order-only locality diagnostics (Appendix~\ref{app:why_ordering_empirical}). Let $\Pi^\ast$ be the learned GO-LR ordering and let $\Pi^{(r)}$ denote random permutations.
\paragraph{Locality gains relative to random orderings.}
We define three locality gains by Eqs.~\ref{eq:delta_adjcoh_icml}, \ref{eq:delta_hitrate_icml}, \ref{eq:delta_cut_icml}.
\begin{align}
\label{eq:delta_adjcoh_icml}
\Delta \mathrm{AdjCoh}
&\;=\;
\mathbb{E}_{r}\!\left[\mathrm{AdjCoh}(\Pi^{(r)})\right]
-
\mathrm{AdjCoh}(\Pi^\ast)
\\
\label{eq:delta_hitrate_icml}
\Delta \mathrm{HitRate}_{K,h}
&\;=\;
\mathrm{HitRate}_{K,h}(\Pi^\ast)
-
\mathbb{E}_{r}\!\left[\mathrm{HitRate}_{K,h}(\Pi^{(r)})\right]
\\
\label{eq:delta_cut_icml}
\Delta \mathrm{Cut}
&\;=\;
\mathbb{E}_{r}\!\left[\mathrm{Cut}(\Pi^{(r)})\right]
-
\mathrm{Cut}(\Pi^\ast)
\end{align}
Positive values indicate that $\Pi^\ast$ induces stronger locality than random orderings: lower adjacent dissimilarity (better $\mathrm{AdjCoh}$), higher neighborhood recovery (better $\mathrm{HitRate}$), and lower cross-segment boundary cost (better $\mathrm{Cut}$).
\paragraph{Locality Exploitability Score (LES).}
Optionally, we aggregate these into a single dataset-level score by z-normalizing each gain across datasets and averaging by Eq.~\ref{eq:les_icml} where $\mathrm{LES}$ measures how much local neighborhood structure a dataset allows ordering to unlock.
\begin{equation}
\label{eq:les_icml}
\mathrm{LES}
\;=\;
\frac{1}{3}
\Big(
\mathrm{zscore}(\Delta \mathrm{AdjCoh}) + \mathrm{zscore}(\Delta \mathrm{HitRate}_{K,h}) + \mathrm{zscore}(\Delta \mathrm{Cut})
\Big)
\end{equation}
For a benchmark suite with multiple datasets, the $\mathrm{zscore}(\cdot)$ terms in Eq.~\ref{eq:les_icml} are computed across the evaluated dataset collection. For single-dataset diagnostics, cross-dataset z-normalization is not defined. We therefore report the raw finite-diagnostic aggregate
\begin{equation}
\label{eq:les_single_icml}
\mathrm{LES}_{\mathrm{single}}
\;=\;
\frac{1}{|\mathcal{D}_{\mathrm{fin}}|}
\sum_{d \in \mathcal{D}_{\mathrm{fin}}} d,
\qquad
\mathcal{D}_{\mathrm{fin}}
=
\left\{
\Delta \mathrm{AdjCoh},
\Delta \mathrm{HitRate}_{K,h},
\Delta \mathrm{Cut}
\right\}
\cap \mathbb{R}_{\mathrm{finite}}
\end{equation}
Here, $\mathcal{D}_{\mathrm{fin}}$ contains only finite locality diagnostics, so unavailable quantities such as $\Delta\mathrm{HitRate}_{K,h}$ or $\Delta\mathrm{Cut}$ for very small feature dimensions are omitted from the average. Thus, Eq.~\ref{eq:les_icml} is benchmark-relative, while Eq.~\ref{eq:les_single_icml} provides a dataset-level locality summary when only one dataset is evaluated.
\paragraph{Predictions.}
Under the locality hypothesis, IDF/FOE provide opportunity indicators for locality gains, rather than deterministic guarantees. We summarize this diagnostic expectation as in Eq.~\ref{eq:predictions_icml}.
\begin{equation}
\label{eq:predictions_icml}
\mathrm{IDF}\downarrow,\;\mathrm{FOE}\uparrow,\;p_{\mathrm{succ}}\uparrow
\quad \Longrightarrow \quad
\text{higher expected opportunity for locality gains}
\end{equation}
Importantly, Eq.~\ref{eq:predictions_icml} is diagnostic rather than deterministic: IDF/FOE indicate when ordering is worth trying, while LES measures whether the learned GO-LR ordering actually realizes locality gains over random orderings.

For single-dataset diagnostics, the same opportunity-based interpretation can be applied to $\mathrm{LES}_{\mathrm{single}}$ from Eq.~\ref{eq:les_single_icml}, but it should be treated as an empirical diagnostic rather than a guaranteed monotonic relationship. Intuitively, small $\mathrm{IDF}$ suggests that variance concentrates in a low-dimensional subspace, which may co-occur with stronger feature redundancy or community structure in the similarity graph. When such structure is present, GO-LR can align related features into contiguous neighborhoods that NSC can pool effectively.
\subsection{Experimental Validation Protocol (Locality as the Bridge)}
\label{app:when_locality_validation}
We validate the bridge ``When $\Rightarrow$ Why'' by testing whether IDF/FOE predict order-only locality gains on the same dataset suite used throughout the paper.
\paragraph{Correlation tests (dataset-level).}
Across datasets, we compute Spearman correlations between $\mathrm{IDF}$ (or $p_{\mathrm{succ}}$ / $\mathrm{FOE}$) and each locality gain by Eq.~\ref{eq:correlation_tests_icml}.
\begin{equation}
\label{eq:correlation_tests_icml}
\rho\!\left(\mathrm{IDF}, \Delta \mathrm{HitRate}_{K,h}\right),\;
\rho\!\left(\mathrm{IDF}, \Delta \mathrm{AdjCoh}\right),\;
\rho\!\left(\mathrm{IDF}, \Delta \mathrm{Cut}\right),\;
\rho\!\left(\mathrm{FOE}, \mathrm{LES}\right)
\end{equation}
We expect negative correlations for $\mathrm{IDF}$ (smaller IDF $\Rightarrow$ larger gains) and positive correlations for $\mathrm{FOE}$ and $p_{\mathrm{succ}}$.
\paragraph{Link to downstream NSC behavior.}
To directly connect locality gains to NSC's contiguity-based pooling, we also test whether datasets with larger $\mathrm{LES}$ obtain larger ordering-induced improvements under NSC by Eq.~\ref{eq:les_predicts_nsc_icml}.
\begin{equation}
\label{eq:les_predicts_nsc_icml}
\rho\!\left(\mathrm{LES}, \Delta \mathrm{Perf}_{\mathrm{NSC}}\right),
\qquad
\Delta \mathrm{Perf}_{\mathrm{NSC}}
=
\mathrm{Perf}(\text{NSC}+\Pi^\ast)
-
\mathbb{E}_{r}\!\left[\mathrm{Perf}(\text{NSC}+\Pi^{(r)})\right]
\end{equation}
This closes the chain:
\[
\text{low IDF / high FOE}
\;\leadsto\;
\text{higher opportunity for locality gains}
\;\xrightarrow{\text{validated by LES}}
\;
\text{effective contiguity-based pooling (NSC)}
\]
\paragraph{Summary.}
We use locality as the operational criterion for deciding ``when" feature ordering should help. Concretely, we first compute an opportunity proxy from intrinsic dimensionality: we estimate $\hat d$ from the PCA cumulative-variance curve at a fixed threshold, then we define $\mathrm{IDF}=\hat d/m$, and form $\mathrm{FOE}$ by combining $\mathrm{IDF}$ with the IDF-variance curve area $\mathrm{AUC}$ (with $\kappa$ optimized via Eq.~\ref{eq:kappa_loss_icml}). Intuitively, HDLSS/HDHSS datasets typically exhibit very small $\mathrm{IDF}$ and hence large $\mathrm{FOE}$ (Table~\ref{tab:when_locality_extended}, top ranks), indicating strong redundancy/low effective rank and thus substantial room for an ordering algorithm to expose coherent local neighborhoods. In contrast, low-dimensional or near-full-rank datasets tend to have $\mathrm{IDF}\approx 1$ (and $P_{\mathrm{success}}=1-\mathrm{IDF}\approx 0$), suggesting limited remaining structure for ordering to uncover; in such cases, ordering is expected to be less critical unless the locality diagnostics indicate otherwise. Additionally, Table~\ref{tab:cross_domain_locality_diagnostics} shows that orlraws10P has the strongest expected benefit from ordering, with the lowest IDF and highest FOE among the additional cross-domain datasets. Cell Cycle also exhibits a relatively high FOE despite being MixedRegime, suggesting substantial compressible structure. In contrast, RELATHE, BASEHOCK, and PCMAC have lower FOE scores, indicating that feature ordering is expected to provide more limited gains for these datasets. Second, beyond this screen, we quantify whether the opportunity is realized by the ordering algorithm: we learn a GO-LR ordering $\Pi^\ast$ and measure order-only locality gains against random permutations using $\Delta\mathrm{AdjCoh}$, $\Delta\mathrm{HitRate}_{K,h}$, and $\Delta\mathrm{Cut}$, whose z-normalized average defines $\mathrm{LES}$ (Eqs.~\ref{eq:delta_adjcoh_icml}-\ref{eq:les_icml}). The scatter plots (Fig.~\ref{fig:les_three_plots}) show that $\mathrm{IDF}/\mathrm{FOE}$ are best interpreted as capacity measures: they separate regimes where ordering is plausibly useful (low $\mathrm{IDF}$/high $\mathrm{FOE}$, often HDLSS) from regimes where it is unlikely to matter (high $\mathrm{IDF}$, typically low-dimensional), while $\mathrm{LES}$ diagnoses whether GO-LR successfully linearizes the dataset's similarity structure into short-range neighborhoods that contiguity-based operators (e.g., NSC segmentation/pooling) can exploit. In summary, ordering is most relevant in HDLSS-like regimes with low $\mathrm{IDF}$ or high $\mathrm{FOE}$, and it is most likely to help when GO-LR also produces positive locality improvements over random orderings, reflected by higher $\mathrm{LES}$.; conversely, for low-dimensional/high-$\mathrm{IDF}$ datasets, ordering is generally not required. Practically, we recommend a two-stage test: use low $\mathrm{IDF}$ / high $\mathrm{FOE}$ to flag datasets where ordering may help, and use positive locality gains (high $\mathrm{LES}$) to predict when ordering will actually benefit architectures that rely on contiguity or local neighborhoods along the input sequence e.g., local-window attention/Transformer variants, state-space sequence models (e.g., Mamba-style SSMs), recurrent models (LSTM/GRU), and sequence-based LLM backbones since these models implicitly assume that nearby tokens/features should interact more strongly than distant ones. In our pipeline, NSC is not a backbone but a compression / dimensionality-reduction operator whose contiguous segmentation and pooling explicitly depends on meaningful neighborhoods; thus, when GO-LR induces stronger locality than random orderings (positive $\Delta\mathrm{AdjCoh}$, $\Delta\mathrm{HitRate}_{K,h}$, $\Delta\mathrm{Cut}$ and higher $\mathrm{LES}$), NSC-style compression, and potentially other locality-sensitive sequence models, are expected to benefit.
\begin{table*}[t]
  \centering
  \scriptsize
  \renewcommand{\arraystretch}{0.95}
  \setlength{\tabcolsep}{3pt}
  \caption{When to use ordering through locality: FOE-sorted datasets with IDF/FOE/$P_{\mathrm{success}}$ and order-only locality gains/LES (not used for sorting). Here, AUC denotes the area under the cumulative explained-variance--IDF curve, computed via trapezoidal integration over discrete pairs $\big(\mathrm{IDF}_k = k/n_{\text{total}},\, \mathrm{CVar}(k)\big)$. Here, HDLSS = High-Dimensional Low-Sample Size, HDHSS = High-Dimensional High-Sample Size, LDLSS = Low-Dimensional Low-Sample Size, LDHSS = Low-Dimensional High-Sample Size.
}
  \label{tab:when_locality_extended}
  \begin{tabular}{rllrrrrrrrr}
    \toprule
    Rank & Dataset & Cat. & IDF$\downarrow$ & FOE$\uparrow$ & $P_{\mathrm{success}}\uparrow$ & $\Delta$AdjCoh$\uparrow$ & $\Delta$HitRate$\uparrow$ & $\Delta$Cut$\uparrow$ & LES$\uparrow$ & AUC \\
    \midrule
    1 & GLI-85 & HDLSS & 3.770e-03 & 7.037e+04 & 0.996 & 5.942e-03 & 1.465e-04 & 0.0105 & -0.457 & 1.027e-03 \\
    2 & SMK\_CAN\_187 & HDLSS & 9.153e-03 & 1.194e+04 & 0.991 & 4.770e-03 & 6.445e-04 & 0.102 & 0.222 & 4.629e-03 \\
    3 & ALLAML & HDLSS & 9.959e-03 & 1.008e+04 & 0.99 & 8.937e-03 & 1.387e-03 & -0.0169 & -0.639 & 2.681e-03 \\
    4 & DeepLesion+ & HDHSS & 0.011 & 8.19e+03 & 0.989 & 7.085e-03 & 9.307e-03 & 0.000 & -0.481 & 5.144e-03 \\
    5 & Prostate-GE & HDLSS & 0.0164 & 3.71e+03 & 0.984 & 4.189e-03 & 5.176e-04 & -0.0168 & -0.667 & 8.838e-03 \\
    6 & Arcene & HDLSS & 0.0197 & 2.58e+03 & 0.98 & 0.0912 & 0.0368 & 0.0122 & 0.185 & 6.477e-03 \\
    7 & TOX171 & HDLSS & 0.0292 & 1.17e+03 & 0.971 & 0.0109 & 5.498e-03 & -0.0413 & -0.789 & 0.0116 \\
    8 & Colon & HDLSS & 0.03 & 1.11e+03 & 0.97 & 6.735e-03 & 9.160e-03 & 4.156e-03 & -0.452 & 0.0124 \\
    9 & Lung & HDLSS & 0.0598 & 280 & 0.94 & 0.0114 & 7.168e-03 & 5.703e-03 & -0.428 & 0.0251 \\
    10 & DrivFace & HDLSS & 0.0692 & 209 & 0.931 & 0.112 & 0.058 & -5.279e-03 & 0.274 & 0.0597 \\
    11 & MiniBooNE & LDHSS & 0.18 & 30.9 & 0.82 & 0.0413 & -0.0208 & 0.0712 & 0.0648 & 0.136 \\
    12 & EEG-FE & MixedRegime & 0.264 & 14.4 & 0.736 & 0.0827 & 0.0933 & -7.165e-03 & 0.296 & 0.231 \\
    13 & MOF & MixedRegime & 0.296 & 11.4 & 0.704 & 0.0492 & 0.0653 & 0.0755 & 0.593 & 0.137 \\
    14 & EEG-PD & MixedRegime & 0.357 & 7.85 & 0.643 & 0.202 & 0.244 & 0.134 & 2.76 & 0.301 \\
    15 & AI-D (Case 5) & MixedRegime & 0.415 & 5.81 & 0.585 & 0.0293 & 0.068 & 0.0606 & 0.394 & 0.348 \\
    16 & ADNI (AD123) & MixedRegime & 0.418 & 5.72 & 0.582 & 0.11 & 0.158 & 0.111 & 1.66 & 0.28 \\
    17 & CNAE9 & MixedRegime & 0.662 & 2.28 & 0.338 & 6.173e-04 & -2.290e-03 & 9.856e-07 & -0.575 & 0.269 \\
    18 & MNIST+ & HDHSS & 0.735 & 1.85 & 0.265 & 5.163e-03 & 6.973e-03 & 0.0192 & -0.36 & 0.635 \\
    19 & WDBC & MixedRegime & 0.742 & 1.82 & 0.258 & 0.197 & nan & nan & 2.52 & 0.47 \\
    20 & HAM10000 & HDHSS & 0.757 & 1.74 & 0.243 & 9.772e-03 & 0.0114 & 0.0278 & -0.249 & 0.646 \\
    21 & Fashion MNIST+ & HDHSS & 0.772 & 1.68 & 0.228 & 3.848e-03 & 6.934e-03 & 0.0238 & -0.333 & 0.663 \\
    22 & Dog vs Cat+ & HDHSS & 0.825 & 1.47 & 0.175 & 0.0164 & 0.0131 & -0.0158 & -0.531 & 0.702 \\
    23 & CIFAR-10+ & HDHSS & 0.846 & 1.4 & 0.154 & 0.012 & 0.0162 & -9.114e-03 & -0.487 & 0.71 \\
    24 & Glass & LDLSS & 0.889 & 1.27 & 0.111 & 0.0542 & nan & nan & 0.326 & 0.219 \\
    25 & Cargo & MixedRegime & 0.897 & 1.24 & 0.103 & 0.0956 & 0.156 & 3.076e-03 & 0.773 & 0.405 \\
    26 & Forest Cover & LDHSS & 0.926 & 1.17 & 0.0741 & 0.0309 & 4.444e-03 & 0.0605 & 0.065 & 0.197 \\
    27 & Iris & LDLSS & 1 & 1 & 0.000 & -0.0893 & nan & nan & -1.87 & 0.493 \\
    28 & Higgs & LDHSS & 1 & 1 & 0.000 & 0.064 & nan & nan & 0.477 & 0.275 \\
    29 & BUPA Liver & LDLSS & 1 & 1 & 0.000 & -8.495e-03 & nan & nan & -0.634 & 0.163 \\
    30 & Adult & LDHSS & 1 & 1 & 0.000 & -0.113 & nan & nan & -2.24 & 0.138 \\
    31 & Pima Indian & LDLSS & 1 & 1 & 0.000 & -0.0553 & nan & nan & -1.35 & 0.122 \\
    32 & Water Potability & MixedRegime & 1 & 1 & 0.000 & 0.0325 & nan & nan & -5.418e-03 & 0.106 \\
    33 & Poker Hand & LDHSS & 1 & 1 & 0.000 & 0.0243 & nan & nan & -0.131 & 0.0955 \\
    34 & Hayes-Roth & LDLSS & 1 & 0.000 & 0.000 & 0.131 & nan & nan & 1.51 & 0.000 \\
    35 & Monks-1 & LDLSS & 1 & 0.000 & 0.000 & -0.0388 & nan & nan & -1.1 & 0.000 \\
    \bottomrule
  \end{tabular}
\end{table*}
\begin{table*}[t]
  \centering
  \scriptsize
  \renewcommand{\arraystretch}{0.95}
  \setlength{\tabcolsep}{3pt}
  \caption{\textbf{Ordering-locality diagnostics for additional cross-domain datasets.}
Datasets are sorted by FOE score. Categories follow the empirical regime rule using $n$, $m$, and $\rho=m/n$. AUC is computed under the cumulative explained variance-IDF curve via trapezoidal integration. LES is standardized within this five-dataset subset.}
  \label{tab:cross_domain_locality_diagnostics}
  \begin{tabular}{rllrrrrrrrr}
    \toprule
    Rank & Dataset & Cat. & IDF$\downarrow$ & FOE$\uparrow$ & $P_{\mathrm{success}}\uparrow$ & $\Delta$AdjCoh$\uparrow$ & $\Delta$HitRate$\uparrow$ & $\Delta$Cut$\uparrow$ & LES$\uparrow$ & AUC \\
    \midrule
    1 & orlraws10P & HDLSS & 9.317e-03 & 1.152e+04 & 0.991 & 3.487e-03 & 1.641e-03 & -0.0223 & -0.170 & 5.118e-03 \\
    2 & Cell Cycle & MixedRegime & 0.0244 & 1.681e+03 & 0.976 & 8.670e-03 & 1.758e-04 & 7.770e-04 & 0.659 & 4.773e-03 \\
    3 & RELATHE & MixedRegime & 0.292 & 11.77 & 0.709 & -1.728e-03 & -5.078e-04 & -4.570e-03 & -0.664 & 0.148 \\
    4 & BASEHOCK & MixedRegime & 0.362 & 7.649 & 0.638 & 2.262e-04 & 1.270e-04 & 5.039e-03 & 0.0345 & 0.185 \\
    5 & PCMAC & MixedRegime & 0.513 & 3.796 & 0.487 & -3.030e-04 & 2.520e-03 & -0.0111 & 0.141 & 0.261 \\
    \bottomrule
  \end{tabular}
\end{table*}
\begin{figure*}[htbp]
  \centering
  \begin{subfigure}[t]{0.32\textwidth}
    \centering
    \includegraphics[width=\linewidth]{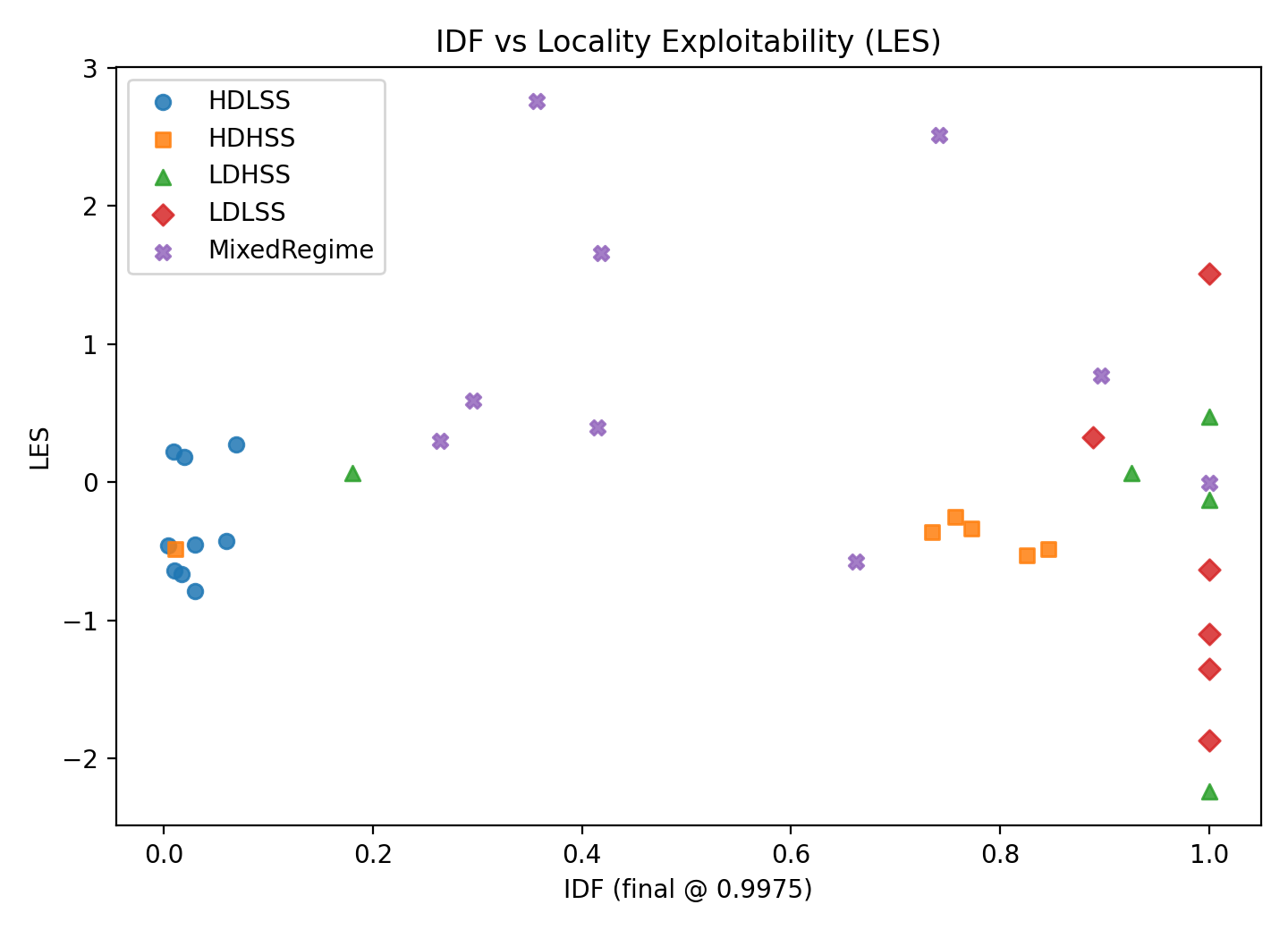}
    \caption{IDF vs. LES.}
    \label{fig:idf_vs_les}
  \end{subfigure}\hfill
  \begin{subfigure}[t]{0.32\textwidth}
    \centering
    \includegraphics[width=\linewidth]{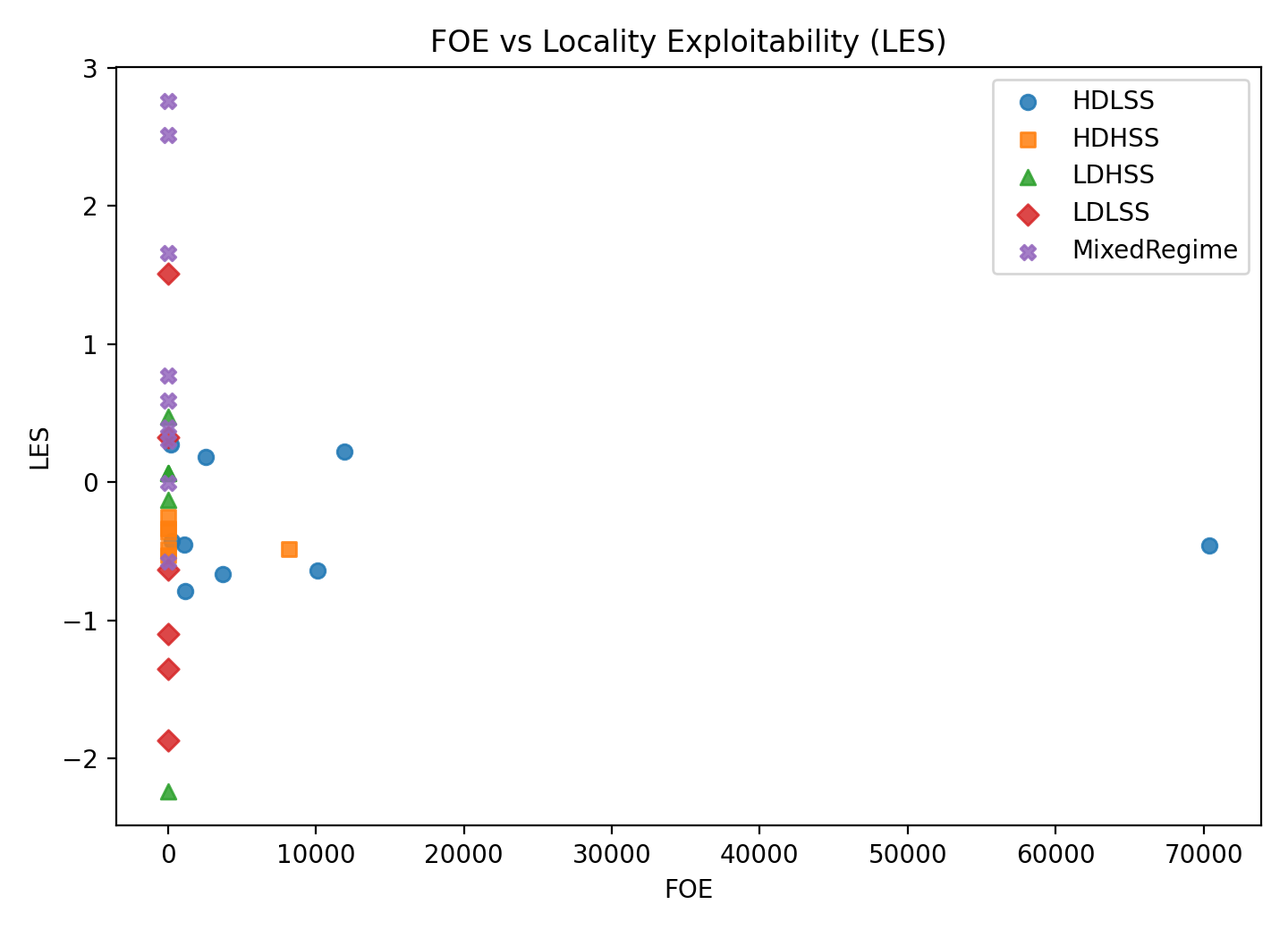}
    \caption{FOE vs. LES.}
    \label{fig:foe_vs_les}
  \end{subfigure}\hfill
  \begin{subfigure}[t]{0.32\textwidth}
    \centering
    \includegraphics[width=\linewidth]{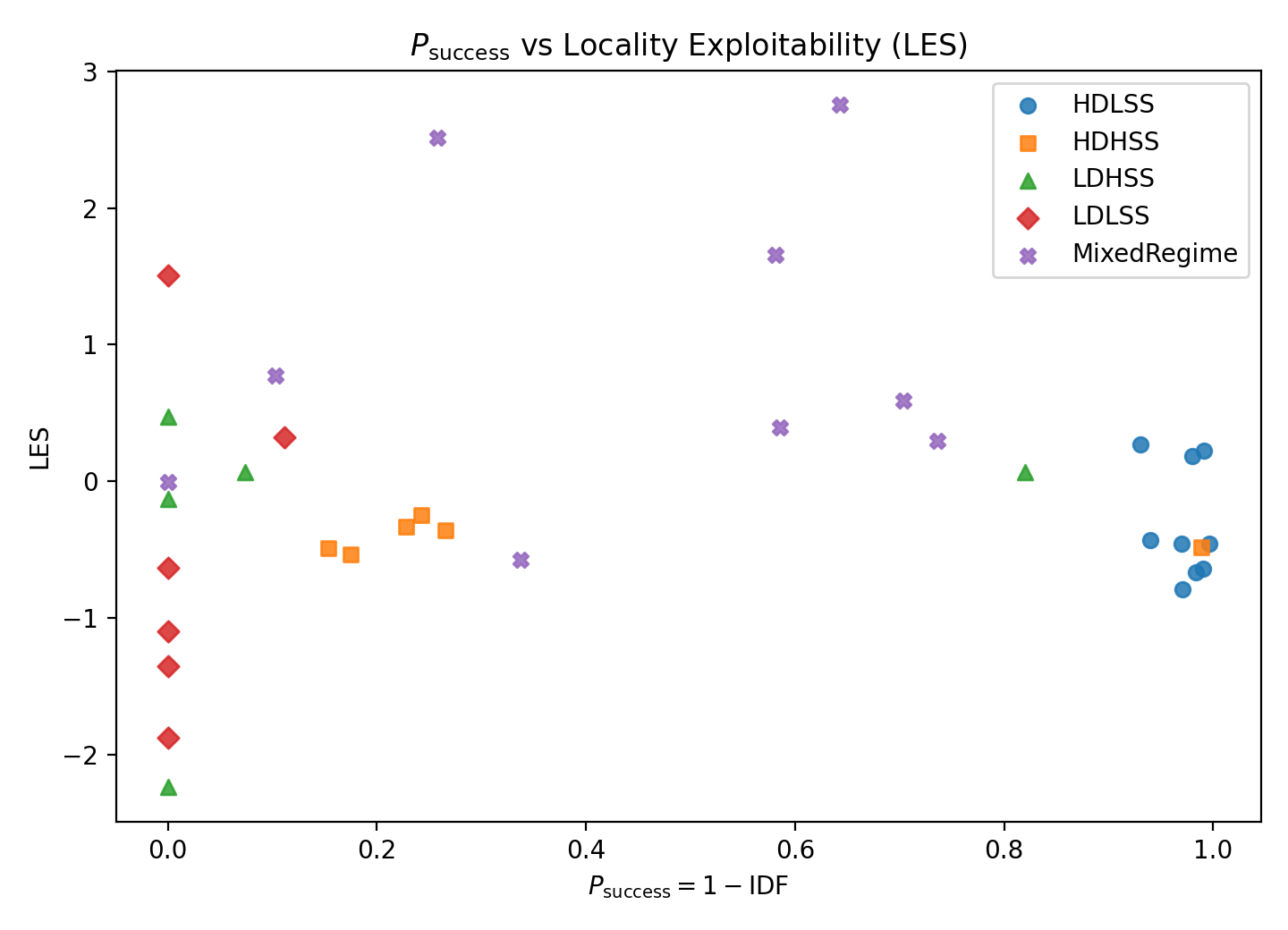}
    \caption{$P_{\mathrm{success}}$ vs. LES.}
    \label{fig:psuccess_vs_les}
  \end{subfigure}

  \caption{Locality Exploitability Score (LES) against intrinsic-dimension/compression proxies.}
  \label{fig:les_three_plots}
\end{figure*}
\renewcommand{\thesection}{G}
\renewcommand{\thesubsection}{G.\arabic{subsection}}
\setcounter{figure}{0}\renewcommand{\thefigure}{G.\arabic{figure}}
\setcounter{table}{0}\renewcommand{\thetable}{G.\arabic{table}}
\section{Detailed Comparative Results}
\label{app:comp}
Table~\ref{tab:hdlss-top-models} reports the full 5$\times$5 cross-validation results (mean accuracy with subscripted standard deviation) for all $8$ HDLSS benchmarks and the complete set of $50+$ baselines. Our method, GOTabPFN, attains the highest mean accuracy on every dataset (Colon, Lung, GLI, SMK, ALLAML, Prostate, Arcene, TOX), leading to an average rank of $1.00_{\pm 0.00}$ in the rightmost column; that is, it is consistently ranked first across all tasks and all CV folds. The absolute accuracies are also strong: GOTabPFN achieves at least $90\%$ mean accuracy on $6$ of $8$ datasets (Lung, GLI, ALLAML, Prostate, Arcene, TOX), while maintaining competitive performance even on the most challenging HDLSS cases, such as SMK and ARC. On ARC, for instance, GOTabPFN reaches $90.60\%$ accuracy, whereas the best competing methods remain in the mid-$80\%$ range, and on SMK it still leads the next-best model by a non-trivial margin. Across all datasets, the standard deviations of GOTabPFN are comparable to or smaller than those of the strongest baselines, indicating that the gains are not the result of a few lucky splits but are stable across repeated 5$\times$5 CV.

The immediate competitors are other TabPFN-style models and modern HDLSS-focused baselines. TANDEM and TabPFN Wide form the closest group, with average ranks of $3.63_{\pm 1.32}$ and $3.75_{\pm 2.38}$, respectively. However, even these strong baselines lag behind GOTabPFN on every individual dataset: they never surpass our method on any of Colon, Lung, GLI, SMK, ALLAML, Prostate, Arcene, or TOX, and their average ranks remain strictly higher. A second tier of competitive models includes TabDPT, TabICL, BETA, TuneTables, and well-regularized neural and boosted-tree baselines such as RealMLP, LGBM, and CatBoost, with average ranks roughly in the $4$–$18$ range. These methods often perform reasonably on some datasets (e.g., LGBM and CatBoost on PRS and TOX, RealMLP on AML), but they either fall short on at least one particularly difficult HDLSS dataset (e.g., SMK or ARC) or show larger variability across splits, which results in clearly worse average ranks compared to GOTabPFN.

Classical shallow models and generic deep architectures occupy the middle of the table. Linear or margin-based methods (Lasso, SVM), tree ensembles (RF, XGBoost, GBM, AdaBoost), $k$-NN, and simple MLP variants (MLP, MLP-PLR, RealMLP) typically achieve moderate performance on most datasets, with average ranks in the low-to-mid $10$–$20$ range. They can be competitive on a subset of benchmarks (e.g., RF and GBM on some of the easier tasks, SVM on GLI), but they do not exhibit the uniformly strong behavior of GOTabPFN and often degrade substantially on the most extreme HDLSS settings (e.g., SMK and ARC). Models designed primarily with feature selection or explainability in mind (STG, L2X, INVASE, REAL-X, ENODE, ModernNCA) also tend to underperform in this regime: while they occasionally match classical baselines on certain datasets, their overall average ranks (typically $>18$ and often $>40$) indicate that their inductive biases are not sufficient on their own to close the gap to GOTabPFN.

The bottom portion of the table is dominated by recent transformer- and Mamba-based tabular architectures originally developed and tuned for larger, non-HDLSS datasets. Methods such as FT-Transformer, SAINT, TabM, AutoInt, Category Embedding, ResNet Tabular, TabNet, NODE, DeepFM, DCN, DANets, TANGOS, Tab-Transformer, NDTF, MambaTab, Mambular, MambAttention, 1D~CNN, and TabSeq generally obtain average ranks in the $30$–$45+$ range and substantially lower accuracies on several HDLSS benchmarks. In many cases, these models struggle to surpass $70\%$ accuracy on the hardest datasets and can even drop close to chance levels on some splits, highlighting a clear mismatch between their inductive biases (e.g., heavy overparameterization and large-context attention) and the HDLSS setting. Finally, the rows with “N/A” entries correspond to TabPFN variants for which results were not available on all $8$ datasets (e.g., TabPFN-2.5 on COL only, and TabPFN v1/v2 and LoCalPFN without HDLSS evaluations); for completeness, we report their observed accuracies and overall average ranks in the rightmost column, but we exclude them from cross-dataset comparisons. Taken together, these detailed results show that GOTabPFNis not merely competitive but uniformly dominant across a broad and challenging suite of HDLSS benchmarks, outperforming both specialized TabPFN-style baselines and a wide spectrum of modern tabular architectures.

\textbf{Evaluation beyond accuracy.}
To evaluate whether the gains of GOTabPFN extend beyond accuracy, we compare GOTabPFN with two strong tabular foundation model baselines, TabICL~\cite{tabicl} and TabDPT~\cite{tabdpt}, using ROC-AUC and macro-F1 on the same 8 HDLSS datasets. As shown in Table~\ref{tab:auc_f1_horizontal}, GOTabPFN obtains the best ROC-AUC on 6/8 datasets and the best macro-F1 on 6/8 datasets, indicating that the proposed GO-LR+NSC representation improves not only top-line accuracy but also ranking quality and class-balanced predictive performance.
\begin{table*}[t]
\caption{
Performance of the models on 8 HDLSS datasets (mean accuracy with subscripted standard deviation over 5$\times$5 CV).
Dataset abbreviations:
COL = Colon,
LNG = Lung,
GLI = GLI-85,
SMK = SMK\_CAN\_187,
AML = ALLAML,
PRS = Prostate-GE,
ARC = Arcene,
TOX = TOX-171.
Model abbreviations:
GOTabPFN$_{\text{ours}}$ = our method,
TWide = TabPFN Wide,
TTables = TuneTables,
BETA = TabPFN Unleashed,
PGate = ProtoGate,
TRNN = TabulaRNN,
RF = Random Forest,
NB = Naive Bayes,
DT = Decision Tree,
MambAtt = MambAttention,
FT-T = FT-Transformer,
CatEmbed = CategoryEmbedding,
ResNetT = ResNetTabular,
Tab-T = TabTransformer.
}

\label{tab:hdlss-top-models}
\vskip 0.15in
\begin{center}
\setlength{\tabcolsep}{2pt}
\begin{footnotesize}
\begin{sc}
\begin{tabular}{lccccccccc}
\toprule
Model 
& COL & LNG & GLI & SMK & AML & PRS & ARC & TOX & Avg. Rank \\
\midrule
GOTabPFN$_{\text{ours}}$
  & \acc{\textbf{88.18}}{\pm10.05}
  & \acc{\textbf{97.44}}{\pm2.32}
  & \acc{\textbf{93.82}}{\pm5.81}
  & \acc{\textbf{74.23}}{\pm5.17}
  & \acc{\textbf{97.54}}{\pm3.86}
  & \acc{\textbf{93.37}}{\pm4.48}
  & \acc{\textbf{90.60}}{\pm3.97}
  & \acc{\textbf{93.33}}{\pm4.74}
  & \textbf{\acc{1.00}{\pm0.00}} \\
TANDEM
  & \acc{86.15}{\pm7.75}
  & \acc{96.46}{\pm2.88}
  & \acc{91.53}{\pm6.02}
  & \acc{72.72}{\pm5.69}
  & \acc{95.81}{\pm5.53}
  & \acc{91.55}{\pm4.32}
  & \acc{86.90}{\pm6.34}
  & \acc{93.08}{\pm2.61}
  & \acc{3.63}{\pm1.32} \\
TWide
  & \acc{87.85}{\pm7.28}
  & \acc{96.55}{\pm2.15}
  & \acc{88.47}{\pm5.75}
  & \acc{68.78}{\pm8.60}
  & \acc{97.16}{\pm4.10}
  & \acc{93.10}{\pm5.92}
  & \acc{88.00}{\pm5.20}
  & \acc{89.35}{\pm4.95}
  & \acc{3.75}{\pm2.38} \\
TabDPT
  & \acc{86.26}{\pm7.27}
  & \acc{96.05}{\pm2.57}
  & \acc{87.76}{\pm5.86}
  & \acc{71.99}{\pm7.32}
  & \acc{96.32}{\pm4.15}
  & \acc{90.94}{\pm5.72}
  & \acc{82.10}{\pm6.48}
  & \acc{93.25}{\pm3.44}
  & \acc{4.88}{\pm1.69} \\
TabICL
  & \acc{84.62}{\pm10.52}
  & \acc{96.36}{\pm2.61}
  & \acc{87.06}{\pm6.23}
  & \acc{68.73}{\pm7.28}
  & \acc{95.52}{\pm5.59}
  & \acc{90.17}{\pm5.93}
  & \acc{82.60}{\pm6.14}
  & \acc{88.78}{\pm5.92}
  & \acc{7.63}{\pm2.29} \\
BETA
  & \acc{84.73}{\pm9.36}
  & \acc{94.38}{\pm3.34}
  & \acc{86.21}{\pm8.91}
  & \acc{70.21}{\pm5.61}
  & \acc{95.67}{\pm7.54}
  & \acc{87.53}{\pm4.67}
  & \acc{86.45}{\pm5.92}
  & \acc{90.38}{\pm6.42}
  & \acc{8.13}{\pm4.31} \\
TTables
  & \acc{86.80}{\pm2.14}
  & \acc{94.37}{\pm2.35}
  & \acc{89.66}{\pm3.12}
  & \acc{70.28}{\pm6.46}
  & \acc{95.80}{\pm2.14}
  & \acc{93.31}{\pm2.83}
  & \acc{81.40}{\pm3.66}
  & \acc{77.96}{\pm2.55}
  & \acc{8.38}{\pm7.70} \\
Lasso
  & \acc{79.40}{\pm10.18}
  & \acc{94.47}{\pm4.39}
  & \acc{85.88}{\pm4.71}
  & \acc{61.19}{\pm13.72}
  & \acc{87.24}{\pm3.39}
  & \acc{91.18}{\pm6.39}
  & \acc{81.00}{\pm3.39}
  & \acc{91.86}{\pm6.03}
  & \acc{11.13}{\pm5.06} \\
MLP
  & \acc{83.95}{\pm9.80}
  & \acc{96.47}{\pm2.69}
  & \acc{85.41}{\pm8.00}
  & \acc{59.05}{\pm7.44}
  & \acc{89.98}{\pm9.17}
  & \acc{89.20}{\pm6.07}
  & \acc{78.40}{\pm4.05}
  & \acc{92.48}{\pm4.28}
  & \acc{11.63}{\pm5.45} \\
PGate
  & \acc{83.95}{\pm9.82}
  & \acc{93.44}{\pm6.37}
  & \acc{82.48}{\pm5.68}
  & \acc{60.16}{\pm5.10}
  & \acc{86.12}{\pm3.34}
  & \acc{90.58}{\pm5.72}
  & \acc{81.50}{\pm5.10}
  & \acc{92.34}{\pm5.67}
  & \acc{12.06}{\pm4.77} \\

TRNN
  & \acc{84.20}{\pm6.50}
  & \acc{90.50}{\pm4.80}
  & \acc{79.68}{\pm6.68}
  & \acc{60.02}{\pm3.18}
  & \acc{88.92}{\pm2.02}
  & \acc{90.50}{\pm6.00}
  & \acc{81.50}{\pm5.10}
  & \acc{85.80}{\pm4.70}
  & \acc{14.81}{\pm6.03} \\

RealMLP
  & \acc{76.28}{\pm13.16}
  & \acc{93.39}{\pm3.81}
  & \acc{82.59}{\pm13.31}
  & \acc{64.93}{\pm9.86}
  & \acc{96.65}{\pm5.78}
  & \acc{85.17}{\pm11.94}
  & \acc{79.70}{\pm6.01}
  & \acc{88.67}{\pm5.32}
  & \acc{15.13}{\pm7.36} \\

LGBM
  & \acc{76.60}{\pm11.67}
  & \acc{93.42}{\pm5.91}
  & \acc{85.88}{\pm11.53}
  & \acc{58.85}{\pm10.14}
  & \acc{85.81}{\pm5.67}
  & \acc{91.38}{\pm5.71}
  & \acc{80.50}{\pm5.79}
  & \acc{81.98}{\pm6.25}
  & \acc{16.00}{\pm6.08} \\

CatBoost
  & \acc{72.65}{\pm10.12}
  & \acc{91.57}{\pm5.74}
  & \acc{84.71}{\pm12.11}
  & \acc{58.28}{\pm12.16}
  & \acc{91.71}{\pm8.22}
  & \acc{90.24}{\pm6.87}
  & \acc{81.00}{\pm2.00}
  & \acc{81.95}{\pm7.47}
  & \acc{17.38}{\pm6.38} \\

STG
  & \acc{79.55}{\pm10.53}
  & \acc{93.30}{\pm6.28}
  & \acc{82.48}{\pm4.56}
  & \acc{57.25}{\pm8.82}
  & \acc{86.08}{\pm5.60}
  & \acc{89.38}{\pm5.85}
  & \acc{74.40}{\pm6.90}
  & \acc{87.95}{\pm5.01}
  & \acc{18.25}{\pm4.92} \\

RF
  & \acc{80.05}{\pm10.37}
  & \acc{91.73}{\pm6.61}
  & \acc{85.88}{\pm8.80}
  & \acc{58.29}{\pm10.61}
  & \acc{85.71}{\pm5.71}
  & \acc{90.38}{\pm7.31}
  & \acc{74.00}{\pm1.22}
  & \acc{79.78}{\pm7.10}
  & \acc{18.50}{\pm5.57} \\

REAL-X
  & \acc{76.75}{\pm12.21}
  & \acc{93.27}{\pm4.32}
  & \acc{83.24}{\pm5.56}
  & \acc{56.48}{\pm4.90}
  & \acc{84.16}{\pm5.68}
  & \acc{86.75}{\pm6.68}
  & \acc{77.30}{\pm6.10}
  & \acc{90.79}{\pm4.75}
  & \acc{19.50}{\pm6.18} \\

LLSPIN
  & \acc{79.35}{\pm7.74}
  & \acc{70.10}{\pm12.31}
  & \acc{84.42}{\pm7.12}
  & \acc{61.16}{\pm7.92}
  & \acc{88.12}{\pm1.26}
  & \acc{88.71}{\pm5.98}
  & \acc{80.80}{\pm4.90}
  & \acc{81.67}{\pm9.01}
  & \acc{19.50}{\pm8.50} \\

LSPIN
  & \acc{81.30}{\pm7.97}
  & \acc{76.92}{\pm9.38}
  & \acc{83.48}{\pm6.62}
  & \acc{58.92}{\pm6.78}
  & \acc{84.46}{\pm3.36}
  & \acc{87.75}{\pm6.74}
  & \acc{78.60}{\pm5.80}
  & \acc{83.47}{\pm8.59}
  & \acc{19.88}{\pm5.16} \\

TabR
  & \acc{80.75}{\pm8.40}
  & \acc{86.70}{\pm6.40}
  & \acc{81.42}{\pm6.64}
  & \acc{58.46}{\pm6.68}
  & \acc{80.84}{\pm2.24}
  & \acc{84.50}{\pm8.00}
  & \acc{75.85}{\pm6.50}
  & \acc{86.50}{\pm5.00}
  & \acc{21.56}{\pm4.15} \\

KNN
  & \acc{71.65}{\pm12.03}
  & \acc{91.06}{\pm5.41}
  & \acc{83.53}{\pm5.76}
  & \acc{52.05}{\pm13.13}
  & \acc{79.05}{\pm8.02}
  & \acc{78.78}{\pm9.20}
  & \acc{82.50}{\pm5.00}
  & \acc{83.86}{\pm7.07}
  & \acc{22.75}{\pm9.36} \\

MLP-PLR
  & \acc{71.41}{\pm11.32}
  & \acc{77.32}{\pm12.81}
  & \acc{80.94}{\pm9.50}
  & \acc{60.79}{\pm8.28}
  & \acc{92.84}{\pm7.68}
  & \acc{86.89}{\pm8.69}
  & \acc{66.30}{\pm12.52}
  & \acc{81.85}{\pm8.32}
  & \acc{22.88}{\pm8.33} \\

AdaBoost
  & \acc{78.97}{\pm10.96}
  & \acc{78.32}{\pm1.93}
  & \acc{85.88}{\pm7.97}
  & \acc{58.28}{\pm9.55}
  & \acc{84.38}{\pm8.32}
  & \acc{89.19}{\pm4.94}
  & \acc{75.50}{\pm5.34}
  & \acc{57.85}{\pm9.01}
  & \acc{23.38}{\pm8.62} \\

XGBoost
  & \acc{72.60}{\pm12.59}
  & \acc{86.61}{\pm8.72}
  & \acc{77.06}{\pm7.80}
  & \acc{55.59}{\pm6.34}
  & \acc{90.38}{\pm9.39}
  & \acc{82.55}{\pm10.22}
  & \acc{81.50}{\pm4.06}
  & \acc{70.13}{\pm7.85}
  & \acc{24.38}{\pm8.51} \\

GBM
  & \acc{77.31}{\pm14.43}
  & \acc{91.59}{\pm2.52}
  & \acc{80.00}{\pm8.80}
  & \acc{58.31}{\pm9.72}
  & \acc{82.10}{\pm6.67}
  & \acc{82.24}{\pm5.34}
  & \acc{82.00}{\pm3.32}
  & \acc{53.85}{\pm15.31}
  & \acc{24.50}{\pm9.80} \\

SVM
  & \acc{70.75}{\pm13.93}
  & \acc{72.77}{\pm8.33}
  & \acc{85.88}{\pm2.88}
  & \acc{61.09}{\pm11.78}
  & \acc{83.14}{\pm13.37}
  & \acc{85.75}{\pm6.63}
  & \acc{77.00}{\pm2.92}
  & \acc{66.75}{\pm7.86}
  & \acc{24.50}{\pm10.49} \\

INVASE
  & \acc{75.40}{\pm10.10}
  & \acc{91.22}{\pm6.16}
  & \acc{80.14}{\pm4.56}
  & \acc{36.42}{\pm4.46}
  & \acc{78.90}{\pm2.26}
  & \acc{88.00}{\pm6.50}
  & \acc{71.20}{\pm7.80}
  & \acc{79.94}{\pm6.60}
  & \acc{26.63}{\pm7.76} \\

NB
  & \acc{83.85}{\pm10.56}
  & \acc{84.24}{\pm3.99}
  & \acc{82.35}{\pm3.72}
  & \acc{59.32}{\pm14.95}
  & \acc{88.57}{\pm2.86}
  & \acc{60.86}{\pm14.63}
  & \acc{53.50}{\pm8.31}
  & \acc{58.49}{\pm8.14}
  & \acc{27.13}{\pm12.07} \\

DT
  & \acc{83.85}{\pm9.15}
  & \acc{85.16}{\pm5.58}
  & \acc{78.82}{\pm9.56}
  & \acc{56.63}{\pm6.94}
  & \acc{75.14}{\pm10.18}
  & \acc{82.33}{\pm5.13}
  & \acc{73.50}{\pm5.39}
  & \acc{45.68}{\pm8.70}
  & \acc{28.63}{\pm9.01} \\

MambaTab
  & \acc{80.75}{\pm8.40}
  & \acc{87.85}{\pm6.00}
  & \acc{80.16}{\pm6.64}
  & \acc{54.98}{\pm9.20}
  & \acc{68.16}{\pm4.90}
  & \acc{58.52}{\pm15.60}
  & \acc{64.00}{\pm9.60}
  & \acc{82.30}{\pm6.00}
  & \acc{28.81}{\pm9.28} \\

TabSeq
  & \acc{72.00}{\pm11.24}
  & \acc{86.81}{\pm3.98}
  & \acc{75.29}{\pm9.98}
  & \acc{65.16}{\pm7.50}
  & \acc{77.28}{\pm14.49}
  & \acc{65.24}{\pm10.76}
  & \acc{65.30}{\pm6.45}
  & \acc{47.95}{\pm7.27}
  & \acc{30.50}{\pm10.64} \\

Mambular
  & \acc{83.55}{\pm7.10}
  & \acc{89.90}{\pm5.10}
  & \acc{46.92}{\pm4.56}
  & \acc{32.90}{\pm15.12}
  & \acc{60.80}{\pm5.77}
  & \acc{81.12}{\pm8.02}
  & \acc{69.65}{\pm8.20}
  & \acc{84.95}{\pm5.20}
  & \acc{30.50}{\pm12.46} \\

MambAtt
  & \acc{81.90}{\pm8.00}
  & \acc{78.00}{\pm9.10}
  & \acc{76.18}{\pm7.78}
  & \acc{52.18}{\pm4.46}
  & \acc{70.16}{\pm6.28}
  & \acc{61.99}{\pm14.45}
  & \acc{49.90}{\pm12.90}
  & \acc{80.90}{\pm6.40}
  & \acc{31.75}{\pm8.82} \\

FT-T
  & \acc{69.25}{\pm12.50}
  & \acc{67.30}{\pm12.20}
  & \acc{52.46}{\pm8.92}
  & \acc{56.45}{\pm9.68}
  & \acc{56.12}{\pm12.10}
  & \acc{85.00}{\pm7.00}
  & \acc{52.30}{\pm12.30}
  & \acc{79.45}{\pm6.90}
  & \acc{35.63}{\pm7.45} \\

SAINT
  & \acc{67.60}{\pm13.10}
  & \acc{78.00}{\pm9.10}
  & \acc{78.56}{\pm9.36}
  & \acc{50.34}{\pm12.16}
  & \acc{52.92}{\pm14.15}
  & \acc{61.99}{\pm14.45}
  & \acc{57.10}{\pm11.20}
  & \acc{75.10}{\pm8.20}
  & \acc{35.63}{\pm5.75} \\

TabM
  & \acc{65.80}{\pm13.70}
  & \acc{74.70}{\pm10.10}
  & \acc{60.46}{\pm4.42}
  & \acc{42.90}{\pm6.36}
  & \acc{66.67}{\pm3.34}
  & \acc{75.03}{\pm10.08}
  & \acc{66.10}{\pm9.10}
  & \acc{70.20}{\pm9.60}
  & \acc{35.94}{\pm3.57} \\

AutoInt
  & \acc{65.80}{\pm13.70}
  & \acc{74.70}{\pm10.10}
  & \acc{48.44}{\pm5.65}
  & \acc{49.68}{\pm9.80}
  & \acc{58.34}{\pm9.78}
  & \acc{83.47}{\pm7.53}
  & \acc{67.90}{\pm8.70}
  & \acc{66.80}{\pm10.60}
  & \acc{36.00}{\pm5.27} \\

CatEmbed
  & \acc{60.50}{\pm15.50}
  & \acc{72.95}{\pm10.60}
  & \acc{69.14}{\pm9.46}
  & \acc{59.16}{\pm6.68}
  & \acc{78.12}{\pm6.90}
  & \acc{45.01}{\pm19.50}
  & \acc{54.80}{\pm11.80}
  & \acc{57.80}{\pm13.30}
  & \acc{36.38}{\pm9.50} \\

ResNetT
  & \acc{64.10}{\pm14.30}
  & \acc{74.70}{\pm10.10}
  & \acc{66.67}{\pm12.16}
  & \acc{52.16}{\pm4.46}
  & \acc{75.10}{\pm3.30}
  & \acc{48.43}{\pm18.70}
  & \acc{42.60}{\pm14.70}
  & \acc{63.30}{\pm11.70}
  & \acc{38.88}{\pm5.67} \\

1D CNN
  & \acc{58.70}{\pm16.10}
  & \acc{63.50}{\pm13.30}
  & \acc{56.92}{\pm5.62}
  & \acc{40.92}{\pm10.42}
  & \acc{60.42}{\pm9.92}
  & \acc{70.00}{\pm12.00}
  & \acc{54.80}{\pm11.80}
  & \acc{71.85}{\pm9.10}
  & \acc{39.19}{\pm4.35} \\

NODE
  & \acc{54.95}{\pm17.40}
  & \acc{55.10}{\pm15.70}
  & \acc{64.92}{\pm12.34}
  & \acc{46.58}{\pm10.45}
  & \acc{76.92}{\pm10.08}
  & \acc{58.52}{\pm15.60}
  & \acc{59.40}{\pm10.70}
  & \acc{65.10}{\pm11.10}
  & \acc{39.94}{\pm5.36} \\

TabNet
  & \acc{56.75}{\pm15.20}
  & \acc{80.14}{\pm12.23}
  & \acc{55.29}{\pm10.26}
  & \acc{48.67}{\pm2.17}
  & \acc{63.89}{\pm4.17}
  & \acc{66.55}{\pm15.33}
  & \acc{50.00}{\pm7.55}
  & \acc{41.68}{\pm9.03}
  & \acc{40.38}{\pm6.02} \\

L2X
  & \acc{57.60}{\pm13.48}
  & \acc{50.02}{\pm14.26}
  & \acc{56.92}{\pm3.58}
  & \acc{45.68}{\pm9.78}
  & \acc{76.28}{\pm2.36}
  & \acc{61.78}{\pm13.69}
  & \acc{72.95}{\pm7.30}
  & \acc{31.72}{\pm9.11}
  & \acc{40.75}{\pm7.50} \\

Trompt
  & \acc{58.70}{\pm16.10}
  & \acc{61.55}{\pm13.90}
  & \acc{43.96}{\pm12.16}
  & \acc{46.65}{\pm12.12}
  & \acc{46.12}{\pm4.46}
  & \acc{69.08}{\pm12.20}
  & \acc{40.10}{\pm15.30}
  & \acc{66.80}{\pm10.60}
  & \acc{42.38}{\pm4.90} \\

DCN
  & \acc{48.30}{\pm19.20}
  & \acc{57.25}{\pm15.10}
  & \acc{42.86}{\pm8.84}
  & \acc{34.92}{\pm14.10}
  & \acc{43.78}{\pm16.10}
  & \acc{85.02}{\pm7.05}
  & \acc{61.75}{\pm10.10}
  & \acc{51.90}{\pm15.00}
  & \acc{43.31}{\pm8.41} \\

DANets
  & \acc{50.60}{\pm18.60}
  & \acc{76.35}{\pm9.60}
  & \acc{35.90}{\pm6.78}
  & \acc{29.16}{\pm16.10}
  & \acc{56.98}{\pm4.48}
  & \acc{78.58}{\pm9.10}
  & \acc{30.30}{\pm17.80}
  & \acc{59.60}{\pm12.80}
  & \acc{43.63}{\pm7.03} \\

TANGOS
  & \acc{38.20}{\pm21.70}
  & \acc{48.10}{\pm17.50}
  & \acc{64.84}{\pm10.42}
  & \acc{32.68}{\pm6.80}
  & \acc{50.12}{\pm8.86}
  & \acc{65.47}{\pm13.30}
  & \acc{27.90}{\pm18.40}
  & \acc{68.50}{\pm10.10}
  & \acc{44.94}{\pm7.20} \\

DeepFM
  & \acc{56.90}{\pm16.80}
  & \acc{59.40}{\pm14.50}
  & \acc{43.28}{\pm7.82}
  & \acc{36.12}{\pm12.26}
  & \acc{48.16}{\pm12.56}
  & \acc{45.01}{\pm19.50}
  & \acc{64.00}{\pm9.60}
  & \acc{55.90}{\pm13.90}
  & \acc{45.00}{\pm4.70} \\

ENODE
  & \acc{52.80}{\pm18.00}
  & \acc{50.45}{\pm16.90}
  & \acc{58.92}{\pm13.12}
  & \acc{26.12}{\pm5.68}
  & \acc{71.16}{\pm4.46}
  & \acc{52.05}{\pm17.95}
  & \acc{35.20}{\pm16.50}
  & \acc{45.40}{\pm16.80}
  & \acc{45.75}{\pm5.33} \\

ModernNCA
  & \acc{40.85}{\pm21.10}
  & \acc{43.10}{\pm18.80}
  & \acc{27.68}{\pm14.10}
  & \acc{31.16}{\pm11.67}
  & \acc{33.47}{\pm10.24}
  & \acc{71.95}{\pm11.12}
  & \acc{32.80}{\pm17.10}
  & \acc{68.50}{\pm10.10}
  & \acc{46.69}{\pm7.53} \\

TabPFN-2.5
  & \acc{86.85}{9.16}
  & N/A
  & N/A
  & N/A
  & N/A
  & N/A
  & N/A
  & N/A
  & \acc{46.75}{16.54} \\

Tab-T
  & \acc{46.44}{\pm16.84}
  & \acc{21.01}{\pm12.53}
  & \acc{47.77}{\pm17.71}
  & \acc{50.26}{\pm8.56}
  & \acc{53.70}{\pm15.05}
  & \acc{51.01}{\pm10.64}
  & \acc{48.20}{\pm7.57}
  & \acc{23.66}{\pm7.84}
  & \acc{46.88}{\pm5.04} \\

NDTF
  & \acc{48.30}{\pm19.20}
  & \acc{63.50}{\pm13.30}
  & \acc{26.14}{\pm16.10}
  & \acc{36.18}{\pm4.48}
  & \acc{43.14}{\pm6.94}
  & \acc{54.98}{\pm16.80}
  & \acc{37.70}{\pm15.90}
  & \acc{43.10}{\pm17.30}
  & \acc{48.00}{\pm2.93} \\

TabPFN v2
  & N/A
  & N/A
  & N/A
  & N/A
  & N/A
  & N/A
  & N/A
  & N/A
  & \acc{53.13}{\pm0.33} \\

LoCalPFN
  & N/A
  & N/A
  & N/A
  & N/A
  & N/A
  & N/A
  & N/A
  & N/A
  & \acc{53.13}{\pm0.33} \\

TabPFN v1
  & N/A
  & N/A
  & N/A
  & N/A
  & N/A
  & N/A
  & N/A
  & N/A
  & \acc{53.13}{\pm0.33} \\
\bottomrule
\end{tabular}
\end{sc}
\end{footnotesize}
\end{center}
\vskip -0.1in
\end{table*}
\begin{table*}[htbp]
\centering
\caption{\textbf{Evaluation beyond accuracy on 8 HDLSS datasets.}
ROC-AUC and macro-F1 comparisons under \(5\times5\) CV. Values are mean with subscripted standard deviation. Bold denotes the best result per dataset/metric.}
\label{tab:auc_f1_horizontal}
\footnotesize
\setlength{\tabcolsep}{2.4pt}
\renewcommand{\arraystretch}{1.02}
\begin{tabular}{llcccccccc}
\toprule
\textbf{Metric} & \textbf{Model} & \textbf{COL} & \textbf{LNG} & \textbf{AML} & \textbf{TOX} & \textbf{PRS} & \textbf{ARC} & \textbf{SMK} & \textbf{GLI} \\
\midrule
\multirow{3}{*}{ROC-AUC}
& GOTabPFN
& \acc{\textbf{91.40}}{\pm10.12}
& \acc{\textbf{99.57}}{\pm0.60}
& \acc{\textbf{99.36}}{\pm1.70}
& \acc{99.23}{\pm0.63}
& \acc{\textbf{94.27}}{\pm5.87}
& \acc{\textbf{95.77}}{\pm2.35}
& \acc{\textbf{80.55}}{\pm5.67}
& \acc{90.97}{\pm7.09} \\
& TabICL
& \acc{89.37}{\pm10.24}
& \acc{99.39}{\pm0.46}
& \acc{98.27}{\pm4.05}
& \acc{98.62}{\pm1.15}
& \acc{93.92}{\pm5.56}
& \acc{91.48}{\pm3.84}
& \acc{73.27}{\pm8.14}
& \acc{91.41}{\pm7.25} \\
& TabDPT
& \acc{90.80}{\pm7.28}
& \acc{99.27}{\pm0.97}
& \acc{99.28}{\pm0.62}
& \acc{\textbf{99.93}}{\pm0.14}
& \acc{94.07}{\pm3.50}
& \acc{91.33}{\pm4.51}
& \acc{76.34}{\pm7.90}
& \acc{\textbf{93.33}}{\pm7.76} \\
\midrule
\multirow{3}{*}{Macro-F1}
& GOTabPFN
& \acc{\textbf{87.14}}{\pm10.93}
& \acc{\textbf{95.73}}{\pm5.37}
& \acc{\textbf{96.17}}{\pm5.97}
& \acc{92.91}{\pm4.01}
& \acc{\textbf{90.32}}{\pm7.63}
& \acc{\textbf{88.92}}{\pm4.92}
& \acc{\textbf{75.46}}{\pm5.12}
& \acc{\textbf{93.54}}{\pm4.84} \\
& TabICL
& \acc{78.84}{\pm14.16}
& \acc{93.72}{\pm7.09}
& \acc{92.90}{\pm9.29}
& \acc{88.89}{\pm5.91}
& \acc{90.13}{\pm5.92}
& \acc{79.58}{\pm6.53}
& \acc{71.04}{\pm6.85}
& \acc{90.91}{\pm4.62} \\
& TabDPT
& \acc{84.17}{\pm10.91}
& \acc{94.71}{\pm4.91}
& \acc{95.45}{\pm4.89}
& \acc{\textbf{93.62}}{\pm2.37}
& \acc{90.27}{\pm6.42}
& \acc{82.08}{\pm6.64}
& \acc{70.07}{\pm7.99}
& \acc{85.25}{\pm9.20} \\
\bottomrule
\end{tabular}
\vspace{-0.8em}
\end{table*}

\renewcommand{\thesection}{H}
\renewcommand{\thesubsection}{H.\arabic{subsection}}
\setcounter{figure}{0}\renewcommand{\thefigure}{H.\arabic{figure}}
\setcounter{table}{0}\renewcommand{\thetable}{H.\arabic{table}}
\section{GOTabPFN Hyperparameters}
\label{app:base}
\paragraph{HDLSS datasets. }Table~\ref{tab:hdlss-hparams} reports the best-performing GOTabPFN configurations across eight HDLSS benchmarks, showing that the GO-LR stage adapts its distance metric (euclidean/manhattan/correlation/KL) and clustering granularity ($k\!=\!4$–$12$) per dataset with only 1–3 refinement passes, while NSC consistently favors high retention thresholds ($\tau\!\approx\!0.99$, except ALLAML at $0.95$) and typically uses the gamma-based rule (with Colon using IDF and Lung/SMK using the default rule). Segmentation is dataset-dependent uniform for several datasets, but equal-mass for Colon/TOX and largest-jump for ALLAML/Prostate indicating that both tokenization strategy and compression hyperparameters ($\gamma,\beta,M_{\min/\max},l_{\min}$) must be tuned to match the underlying feature geometry; only SMK employs feature subsampling and only SMK/Arcene enable assume\_standardized, while TabPFN random seeds vary modestly across datasets.
\paragraph{Cross-domain datasets.}
Table~\ref{tab:cross-domain-hparams} reports the best-performing GOTabPFN configurations on the 8 additional cross-domain datasets. Similar to the HDLSS setting, GO-LR adapts both the metric and clustering granularity to each dataset, using cosine, Manhattan, correlation, and KL-based dissimilarities with \(k=4\)--\(11\) clusters and only 1--2 refinement passes. Most cross-domain datasets favor uniform NSC segmentation and the default \(M\)-rule, while Cell Cycle and both DrivFace tasks use the IDF rule, and DrivFace additionally benefits from largest-jump segmentation. Feature subsampling is used for most high-dimensional cross-domain datasets, especially ORL, RELATHE, PCMAC, Cell Cycle, CIFAR-10, and DrivFace, reflecting the larger feature spaces in this evaluation. The selected configurations also show that standardization is useful for most cross-domain settings except ORL, while TabPFN seeds vary modestly across datasets. Overall, the table shows that GOTabPFN remains flexible across text, image-feature, camera sensor, and RNA-seq domains by adapting the feature graph, segmentation rule, and compression budget to the geometry of each dataset. Prior Labs notes that TabPFN inference is deterministic for a fixed seed in the same environment, while small differences may occur across different hardware configurations.\footnote{Prior Labs FAQ: \url{https://docs.priorlabs.ai/faq}. See also the PriorLabs/TabPFN reproducibility discussion, issue \#266: \url{https://github.com/PriorLabs/TabPFN/issues/266}.}
\begin{table*}[t]
\caption{Best GOTabPFN hyperparameters for 8 HDLSS datasets
(COL = Colon, LNG = Lung, GLI = GLI-85, SMK = SMK\_CAN\_187,
AML = ALLAML, PRS = Prostate-GE, ARC = Arcene, TOX = TOX-171).
GO-LR metric: eucl.=euclidean, manh.=manhattan, corr.=correlation, KL=kl\_divergence.
NSC seg: unif.=uniform, eq-mass=equal\_mass, lrg-jump=largest\_jump.
NSC rule: gam.=gamma, def.=default. Std? indicates \texttt{assume\_standardized}.}
\label{tab:hdlss-hparams}
\vskip 0.15in
\begin{center}
\setlength{\tabcolsep}{3pt}
\begin{footnotesize}
\begin{sc}
\begin{tabular}{lcccccccc}
\toprule
Param 
& COL & LNG & GLI & SMK & AML & PRS & ARC & TOX \\
\midrule
GO-LR metric        
  & eucl.   & manh.   & corr.   & corr.   & KL      & manh.   & eucl.   & KL      \\
GO-LR $k$ (clusters) 
  & 10      & 11      & 7       & 9       & 5       & 12      & 4       & 10      \\
GO-LR refine passes 
  & 3       & 1       & 3       & 1       & 2       & 2       & 1       & 3       \\
GO-LR dir select     
  & T       & T       & F       & F       & T       & T       & F       & T       \\
GO-LR feat-sub       
  & --      & --      & --      & 3000    & --      & --      & --      & --      \\
NSC seg           
  & eq-mass & unif.   & unif.   & unif.   & lrg-jump& lrg-jump& unif.   & eq-mass \\
NSC rule          
  & idf     & def.    & gam.    & def.    & gam.    & gam.    & gam.    & gam.    \\
NSC $\tau$        
  & 0.99    & 0.99    & 0.99    & 0.99    & 0.95    & 0.99    & 0.99    & 0.99    \\
NSC $\gamma$      
  & 1.76    & 2.38    & 2.92    & 2.90    & 2.40    & 2.91    & 2.26    & 2.86    \\
NSC $\beta$       
  & 0.22    & 0.79    & 0.17    & 0.08    & 0.32    & 0.84    & 0.19    & 0.65    \\
NSC $M_{\min}$    
  & 64      & 64      & 64      & 48      & 64      & 48      & 16      & 32      \\
NSC $M_{\max}$    
  & 384     & 384     & 640     & 384     & 512     & 256     & 640     & 384     \\
NSC $l_{\min}$    
  & 16      & 12      & 12      & 16      & 16      & 8       & 16      & 16      \\
Std? (assume std.) 
  & F       & F       & F       & T       & F       & F       & T       & F       \\
TabPFN seed       
  & 42      & 42      & 42      & 2       & 3       & 3       & 4       & 4       \\
\bottomrule
\end{tabular}
\end{sc}
\end{footnotesize}
\end{center}
\vskip -0.1in
\end{table*}
\begin{table*}[t]
\caption{\textbf{Best GOTabPFN hyperparameters for 8 cross-domain datasets}
(ORL = orlraws10P, BAS = BASEHOCK, REL = RELATHE, PCM = PCMAC,
CCY = Cell Cycle, CIF = CIFAR-10, DF-R = DrivFace-Regression, DF-C = DrivFace-Classification).
GO-LR metric: cos.=cosine, manh.=manhattan, corr.=correlation, KL=kl\_divergence.
NSC seg: unif.=uniform, lrg-jump=largest\_jump.
NSC rule: def.=default. Std? indicates \texttt{assume\_standardized}.}
\label{tab:cross-domain-hparams}
\vskip 0.15in
\begin{center}
\setlength{\tabcolsep}{3pt}
\begin{footnotesize}
\begin{sc}
\begin{tabular}{lcccccccc}
\toprule
Param 
& ORL & BAS & REL & PCM & CCY & CIF & DF-R & DF-C \\
\midrule
GO-LR metric        
  & cos.   & KL      & manh.  & cos.   & corr.  & manh.  & manh.  & manh.  \\
GO-LR $k$ (clusters) 
  & 5      & 10      & 6      & 4      & 4      & 11     & 5      & 5      \\
GO-LR refine passes 
  & 1      & 2       & 2      & 2      & 1      & 2      & 1      & 1      \\
GO-LR dir select     
  & F      & F       & T      & T      & F      & F      & F      & F      \\
GO-LR feat-sub       
  & 3000   & --      & 2000   & 2000   & 3000   & 2000   & 2000   & 2000   \\
NSC seg           
  & unif.  & unif.   & unif.  & unif.  & unif.  & unif.  & lrg-jump & lrg-jump \\
NSC rule          
  & def.   & def.    & def.   & def.   & idf    & def.   & idf    & idf    \\
NSC $\tau$        
  & 0.99   & 0.99    & 0.95   & 0.95   & 0.95   & 0.95   & 0.99   & 0.99   \\
NSC $\gamma$      
  & 2.05   & 1.98    & 2.12   & 1.07   & 2.26   & 1.60   & 2.65   & 2.65   \\
NSC $\beta$       
  & 0.39   & 0.16    & 0.06   & 0.50   & 0.58   & 0.21   & 0.04   & 0.04   \\
NSC $M_{\min}$    
  & 32     & 48      & 16     & 48     & 48     & 32     & 16     & 16     \\
NSC $M_{\max}$    
  & 384    & 640     & 384    & 384    & 384    & 384    & 256    & 256    \\
NSC $l_{\min}$    
  & 12     & 12      & 12     & 12     & 12     & 12     & 12     & 12     \\
Std? (assume std.) 
  & F      & T       & T      & T      & T      & T      & T      & T      \\
TabPFN seed       
  & 42     & 3       & 1      & 4      & 4      & 3      & 3      & 3      \\
\bottomrule
\end{tabular}
\end{sc}
\end{footnotesize}
\end{center}
\vskip -0.1in
\end{table*}
\renewcommand{\thesection}{I}
\renewcommand{\thesubsection}{I.\arabic{subsection}}
\setcounter{figure}{0}\renewcommand{\thefigure}{I.\arabic{figure}}
\setcounter{table}{0}\renewcommand{\thetable}{I.\arabic{table}}
\section{Statistical Significance Analysis}
\label{ab5}
\paragraph{Significance analysis on HDLSS datasets.} We evaluate statistical differences across methods using (i) a Friedman test~\cite{fried} over per-dataset ranks, followed by a Nemenyi post-hoc critical-difference (CD) analysis~\cite{nemen} (Fig.~\ref{fig:gotabpfn-cd}), and (ii) pairwise Wilcoxon signed-rank tests~\cite{wilcoxon} comparing GOTabPFN to each baseline across the same 8 datasets, with Holm correction to control family-wise error (Table~\ref{tab:significance_holm}).  The Friedman test indicates a significant overall effect across methods, and the CD diagram visualizes the separation in average ranks, where GOTabPFN attains the lowest (best) average rank.  For pairwise tests, the raw Wilcoxon $p$-values are identical across baselines ($p_{\text{raw}}=0.00781$), reflecting that GOTabPFN improves over each comparator on all datasets with no sign reversals (a common outcome when $n=8$ and per-dataset differences are consistently positive).  After Holm correction, the adjusted $p$-values become more conservative ($p_{\text{Holm}}=0.0703$), so we do not claim strict significance at $\alpha=0.05$ under family-wise error control; nevertheless, the combination of uniform wins, rank dominance, and consistent positive paired differences supports the robustness of the observed improvements.
\paragraph{Extended statistical significance analysis.}
We further evaluate statistical significance on an expanded 16-dataset benchmark formed by combining the 8 HDLSS datasets with the 8 cross-domain datasets, using the strongest common comparison set from the main HDLSS and cross-domain experiments. Table~\ref{tab:expanded16_avg_rank} summarizes the average ranks: GOTabPFN achieves the best average rank by a clear margin (1.12), followed by TabPFN-Wide (3.62), TANDEM (3.69), and TabDPT (4.34). The same ranking trend is visualized in Fig.~\ref{fig:expanded16_cd}, where GOTabPFN is separated from the nearest competing methods by more than two average-rank points. As summarized in Table~\ref{tab:expanded16_friedman}, the omnibus Friedman test over the 9-method comparison is strongly significant (\(\chi^2=52.55\), \(p=1.32\times10^{-8}\); 11 complete rows used), rejecting the null hypothesis that all methods have equal rank distributions across the expanded benchmark. We then compare GOTabPFN against each baseline using pairwise Wilcoxon signed-rank tests with Holm correction. Table~\ref{tab:expanded16_holm_unified} shows that GOTabPFN remains statistically significant against every baseline after correction, with Holm-adjusted \(p\)-values between \(2.44\times10^{-4}\) and \(6.10\times10^{-4}\). The win/tie/loss counts are uniformly favorable: 16/0/0 against Lasso, TabDPT, TabPFN-Wide, and TuneTables; 14/0/0 against TabICL on the common subset; 12/0/0 against ProtoGate on the common subset; and 15/0/1 against MLP and TANDEM. Table~\ref{tab:significance_16datasets} further reports the mean accuracy gain and Holm-corrected pairwise significance of GOTabPFN against each strong baseline on the expanded 16-dataset benchmark. The Holm-adjusted \(p\)-values remain below \(0.01\) for all pairwise comparisons, indicating that GOTabPFN is significantly better than each baseline after controlling for multiple comparisons. Together, these results complement the 55-baseline analysis on the original 8 HDLSS datasets by showing that GOTabPFN remains robust across a broader 16-dataset evaluation against the strongest repeated baselines.
\begin{table}[t]
\caption{Pairwise significance of GOTabPFN against the top baselines across the 8 HDLSS datasets.
$\Delta$Acc denotes the mean accuracy improvement of GOTabPFN over each baseline (percentage points) averaged across datasets. We report Wilcoxon signed-rank $p$-values ($p_{\text{raw}}$) and Holm-corrected $p$-values ($p_{\text{Holm}}$) for multiple comparisons; Sig.\ indicates significance after Holm correction ($\alpha=0.05$).}
\label{tab:significance_holm}
\centering
\setlength{\tabcolsep}{4pt}
\begin{footnotesize}
\begin{tabular}{lrrrrc}
\toprule
Baseline & $\Delta$Acc (pp) & $n_{\text{ds}}$ & $p_{\text{raw}}$ & $p_{\text{Holm}}$ & Sig. \\
\midrule
TANDEM                  & 1.79 & 8 & 0.00781 & 0.0703 & n.s. \\
TabPFN Wide                & 2.41 & 8 & 0.00781 & 0.0703 & n.s. \\
TabDPT                  & 2.98 & 8 & 0.00781 & 0.0703 & n.s. \\
TabICL                  & 4.33 & 8 & 0.00781 & 0.0703 & n.s. \\
BETA (TabPFN Unleashed) & 4.12 & 8 & 0.00781 & 0.0703 & n.s. \\
TuneTables              & 4.87 & 8 & 0.00781 & 0.0703 & n.s. \\
Lasso                   & 7.04 & 8 & 0.00781 & 0.0703 & n.s. \\
MLP                     & 6.70 & 8 & 0.00781 & 0.0703 & n.s. \\
ProtoGate               & 7.24 & 8 & 0.00781 & 0.0703 & n.s. \\
\bottomrule
\end{tabular}
\end{footnotesize}
\vspace{-0.08in}
\end{table}
\begin{figure*}[t]
    \centering
    \includegraphics[width=0.82\textwidth]{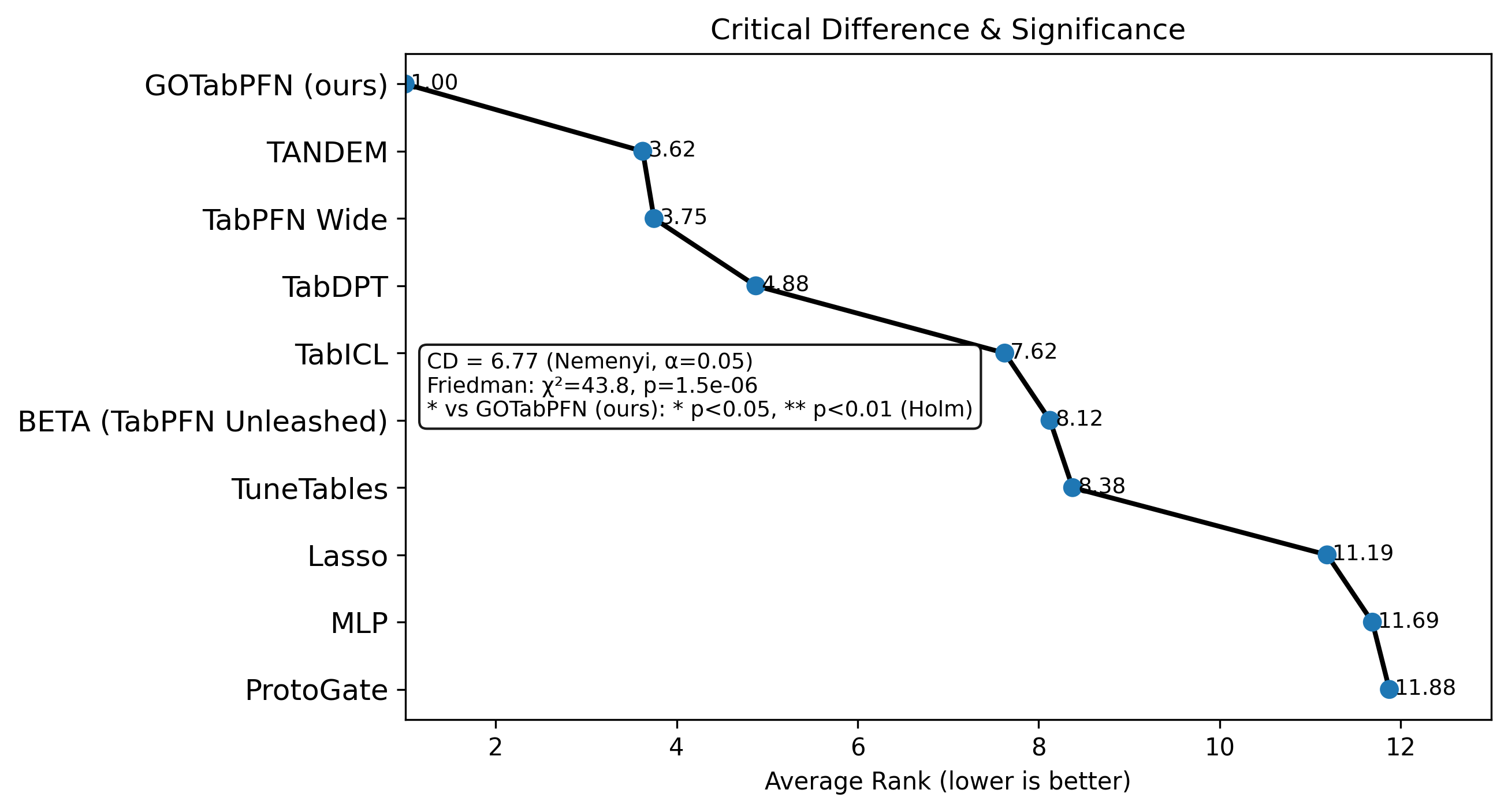}
    \caption{\textbf{Average-rank comparison on the 8 HDLSS datasets.}
    Lower rank is better. Friedman/Nemenyi analysis shows GOTabPFN as the best-ranked method on the original HDLSS benchmark.}
    \label{fig:gotabpfn-cd} 
    \vspace{-0.8em}
\end{figure*}

\begin{figure*}[t]
    \centering
    \includegraphics[width=0.82\textwidth]{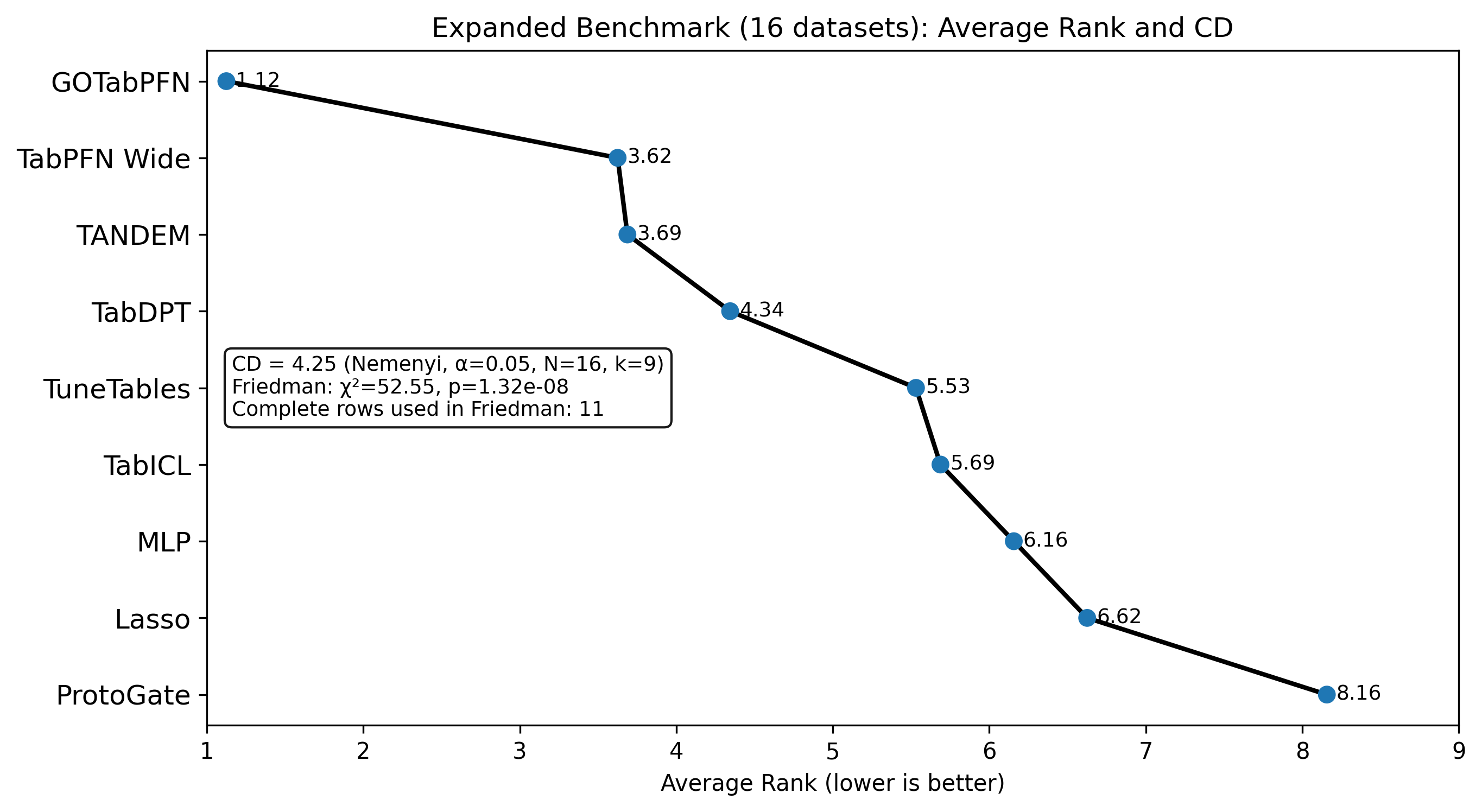}
    \caption{\textbf{Average-rank comparison on the expanded 16-dataset benchmark.}
    Lower rank is better. GOTabPFN achieves the best average rank (1.12), followed by TabPFN-Wide (3.62) and TANDEM (3.69), with a significant Friedman test (\(\chi^2=52.55\), \(p=1.32\times10^{-8}\)).}
    \label{fig:expanded16_cd} 
    \vspace{-0.8em}
\end{figure*}
\begin{table}[htbp]
\centering
\caption{\textbf{Average-rank summary on the expanded 16-dataset benchmark.}
Lower rank is better. The Friedman test over the 9-method comparison is significant
(\(\chi^2=52.55\), \(p=1.32\times10^{-8}\); 11 complete rows).}
\label{tab:expanded16_avg_rank}
\footnotesize
\setlength{\tabcolsep}{5pt}
\renewcommand{\arraystretch}{1.02}
\begin{tabular}{lcc}
\toprule
\textbf{Model} & \textbf{Avg. Rank} & \textbf{Std. Rank} \\
\midrule
\textbf{GOTabPFN} & \textbf{1.12} & 0.50 \\
TabPFN-Wide       & 3.62 & 1.75 \\
TANDEM            & 3.69 & 1.62 \\
TabDPT            & 4.34 & 1.49 \\
TuneTables        & 5.53 & 2.29 \\
TabICL            & 5.69 & 2.02 \\
MLP               & 6.16 & 2.43 \\
Lasso             & 6.62 & 1.59 \\
ProtoGate         & 8.16 & 1.23 \\
\bottomrule
\end{tabular}
\vspace{-0.8em}
\end{table}
\begin{table}[htbp]
\centering
\caption{\textbf{Omnibus Friedman test on the expanded 16-dataset benchmark.}
The test evaluates whether the 9 methods have equal rank distributions across datasets; 11 complete rows were used because Friedman requires all compared methods to be present.}
\label{tab:expanded16_friedman}
\footnotesize
\setlength{\tabcolsep}{4pt}
\renewcommand{\arraystretch}{1.02}
\begin{tabular}{ccccc}
\toprule
\textbf{\# Datasets} & \textbf{\# Methods} & \textbf{Complete Rows} & \textbf{Friedman \(\chi^2\)} & \textbf{\(p\)-value} \\
\midrule
16 & 9 & 11 & 52.55 & \(1.32 \times 10^{-8}\) \\
\bottomrule
\end{tabular}
\vspace{-0.8em}
\end{table}
\begin{table*}[htbp]
\centering
\caption{\textbf{Pairwise significance of GOTabPFN on the expanded 16-dataset benchmark.}
\(\Delta\)Acc denotes the mean accuracy improvement of GOTabPFN over each baseline in percentage points, averaged over the datasets used for that comparison. W/T/L counts wins/ties/losses in favor of GOTabPFN. We report raw Wilcoxon signed-rank \(p\)-values and Holm-corrected \(p\)-values across the 8 pairwise comparisons.}
\label{tab:expanded16_holm_unified}
\footnotesize
\setlength{\tabcolsep}{5pt}
\renewcommand{\arraystretch}{1.05}
\begin{tabular}{lccccccc}
\toprule
\textbf{Baseline} & \(\boldsymbol{\Delta}\)\textbf{Acc (pp)} & \(\boldsymbol{n_{\rm ds}}\) & \textbf{W/T/L} & \(\boldsymbol{p_{\rm raw}}\) & \(\boldsymbol{p_{\rm Holm}}\) & \textbf{Sig.} \\
\midrule
TANDEM        & 1.33  & 16 & 15/0/1 & 0.000305 & 0.000610 & Yes \\
TabPFN-Wide   & 1.76  & 16 & 16/0/0 & 0.000031 & 0.000244 & Yes \\
TabDPT        & 2.20  & 16 & 16/0/0 & 0.000031 & 0.000244 & Yes \\
TabICL        & 2.81  & 14 & 14/0/0 & 0.000122 & 0.000488 & Yes \\
TuneTables    & 4.36  & 16 & 16/0/0 & 0.000031 & 0.000244 & Yes \\
MLP           & 4.65  & 16 & 15/0/1 & 0.000153 & 0.000488 & Yes \\
Lasso         & 4.78  & 16 & 16/0/0 & 0.000031 & 0.000244 & Yes \\
ProtoGate     & 11.90 & 12 & 12/0/0 & 0.000488 & 0.000610 & Yes \\
\bottomrule
\end{tabular}
\vspace{-0.8em}
\end{table*}
\begin{table*}[t]
\centering
\caption{\textbf{Pairwise significance on the expanded 16-dataset benchmark.}
\(\Delta\)Acc denotes the mean accuracy improvement of GOTabPFN over each baseline in percentage points. We report the number of datasets used (\(n_{\rm ds}\)), raw Wilcoxon signed-rank \(p\)-values, and Holm-corrected \(p\)-values across the 8 pairwise comparisons. All comparisons remain significant after Holm correction.}
\label{tab:significance_16datasets}
\footnotesize
\setlength{\tabcolsep}{6pt}
\renewcommand{\arraystretch}{1.05}
\begin{tabular}{lccccc}
\toprule
\textbf{Baseline} & \(\boldsymbol{\Delta}\)\textbf{Acc (pp)} & \(\boldsymbol{n_{\rm ds}}\) & \(\boldsymbol{p_{\rm raw}}\) & \(\boldsymbol{p_{\rm Holm}}\) & \textbf{Sig.} \\
\midrule
TANDEM        & 1.33  & 16 & \(3.05\times10^{-4}\) & \(6.10\times10^{-4}\) & \(<0.01\) \\
TabPFN-Wide   & 1.76  & 16 & \(3.05\times10^{-5}\) & \(2.44\times10^{-4}\) & \(<0.01\) \\
TabDPT        & 2.20  & 16 & \(3.05\times10^{-5}\) & \(2.44\times10^{-4}\) & \(<0.01\) \\
TabICL        & 2.81  & 14 & \(1.22\times10^{-4}\) & \(4.88\times10^{-4}\) & \(<0.01\) \\
TuneTables    & 4.36  & 16 & \(3.05\times10^{-5}\) & \(2.44\times10^{-4}\) & \(<0.01\) \\
MLP           & 4.65  & 16 & \(1.53\times10^{-4}\) & \(4.88\times10^{-4}\) & \(<0.01\) \\
Lasso         & 4.78  & 16 & \(3.05\times10^{-5}\) & \(2.44\times10^{-4}\) & \(<0.01\) \\
ProtoGate     & 11.90 & 12 & \(4.88\times10^{-4}\) & \(6.10\times10^{-4}\) & \(<0.01\) \\
\bottomrule
\end{tabular}
\vspace{-0.8em}
\end{table*}
\renewcommand{\thesection}{J}
\renewcommand{\thesubsection}{J.\arabic{subsection}}
\setcounter{figure}{0}\renewcommand{\thefigure}{J.\arabic{figure}}
\setcounter{table}{0}\renewcommand{\thetable}{J.\arabic{table}}
\section{Additional Ablation Analysis}
\label{ab6}
Fig.~\ref{fig:gotabpfn-boxplot} reports accuracy distributions across CV splits for the top-10 methods, showing that GOTabPFN achieves the strongest central tendency with competitive dispersion, i.e., high average performance without depending on a small number of favorable splits. Consistently, the average-rank vs.\ global-mean scatter in Fig.~\ref{fig:gotabpfn-rank-vs-acc} places GOTabPFN in the top-left regime (lowest average rank and highest global accuracy). The normalized per-dataset accuracies in Fig.~\ref{fig:gotabpfn-norm-acc} and the dataset-wise rank breakdown as a function of sample size in Fig.~\ref{fig:gotabpfn-rank-vs-n} further indicate that the gains persist across datasets of different sizes rather than being driven by a single benchmark. Fig.~\ref{fig:gotabpfn-best-second} summarizes the best-second-best gaps per dataset, highlighting where the leading method separates more clearly from the runner-up (notably on the harder benchmarks), while Fig.~\ref{fig:gotabpfn-delta-heat} localizes improvements by visualizing $\Delta$Acc (ours $-$ baseline) across datasets and competitors, revealing broadly positive deltas with the largest separations against simpler baselines on challenging tasks. Finally, Fig.~\ref{fig:gotabpfn-skew-kurt} characterizes distributional shape via skewness and kurtosis computed over per-dataset accuracies, where negative skewness reflects a small number of difficult datasets that pull performance downward and higher kurtosis for several baselines suggests heavier tails and greater instability compared to the most consistent top performers.
\begin{figure}[t]
    \centering
    \includegraphics[width=\linewidth]{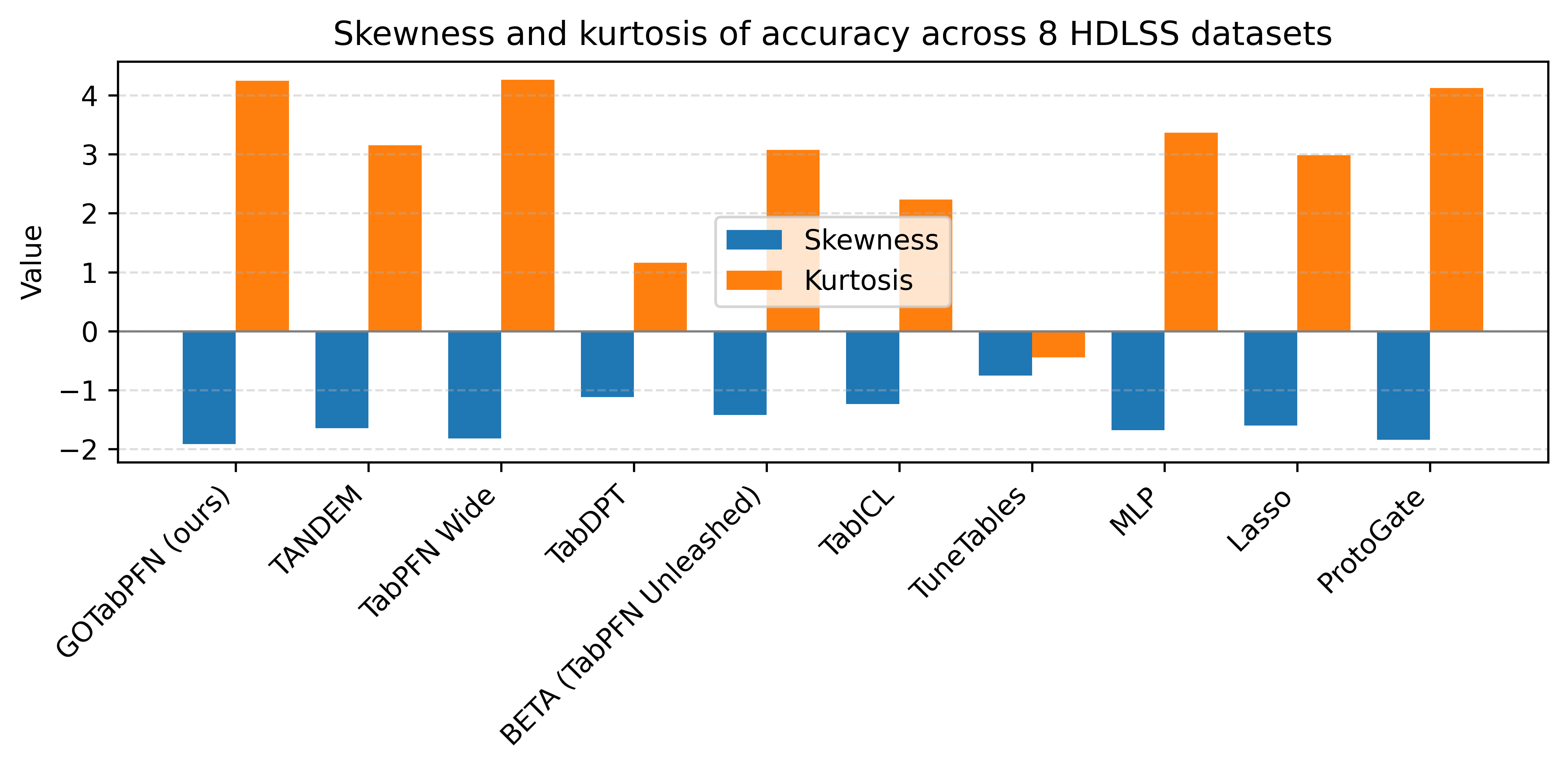}
    \caption{Skewness/kurtosis for the top-10 methods on the 8 HDLSS benchmarks.}
    \label{fig:gotabpfn-skew-kurt}
    \vspace{-0.12in}
\end{figure}

\begin{figure}[t]
    \centering
    \includegraphics[width=\linewidth]{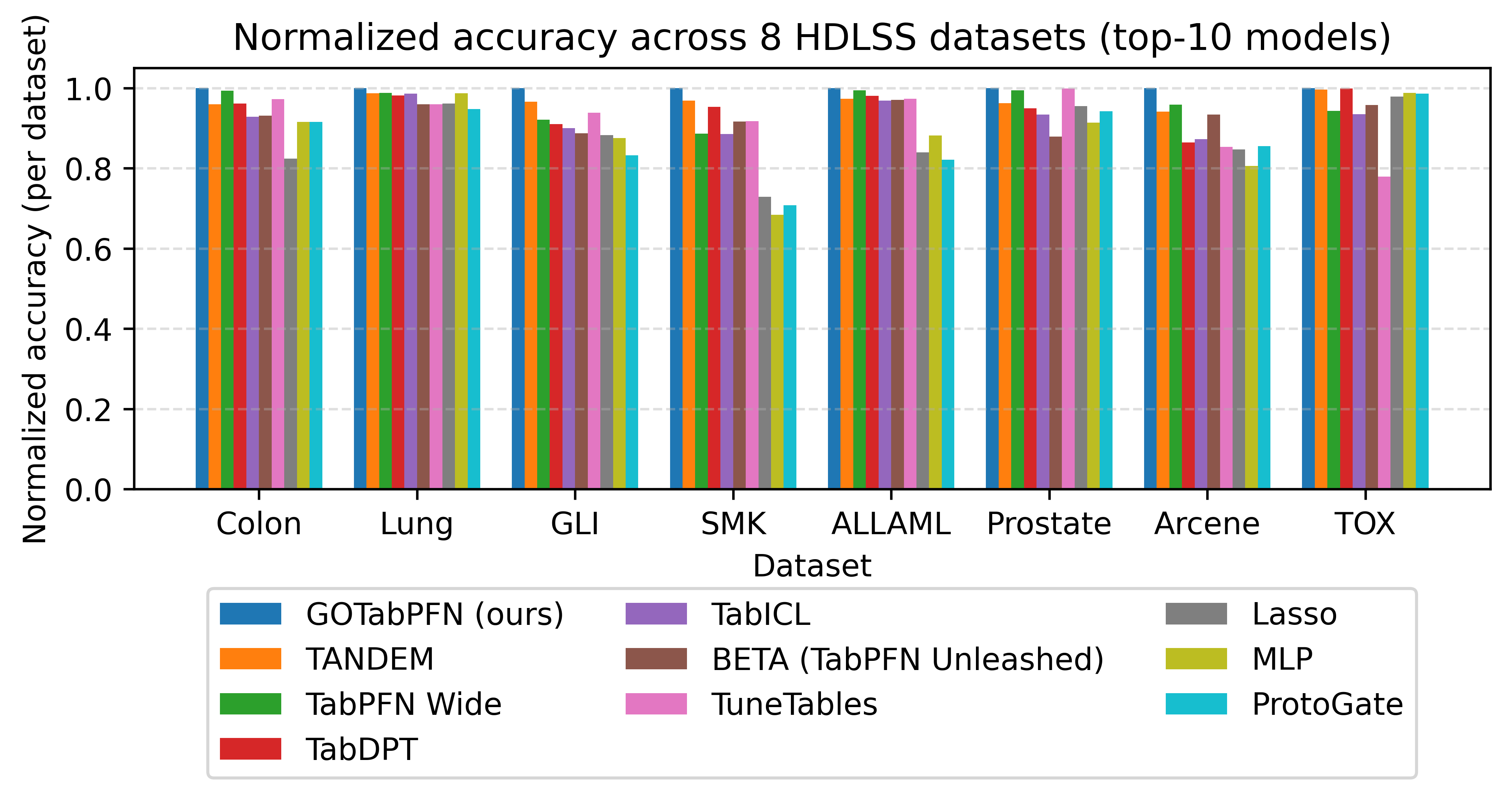}
    \caption{Normalized accuracy for the top-10 methods on the 8 HDLSS benchmarks.}
    \label{fig:gotabpfn-norm-acc}
    \vspace{-0.12in}
\end{figure}

\begin{figure}[t]
    \centering
    \includegraphics[width=0.72\linewidth]{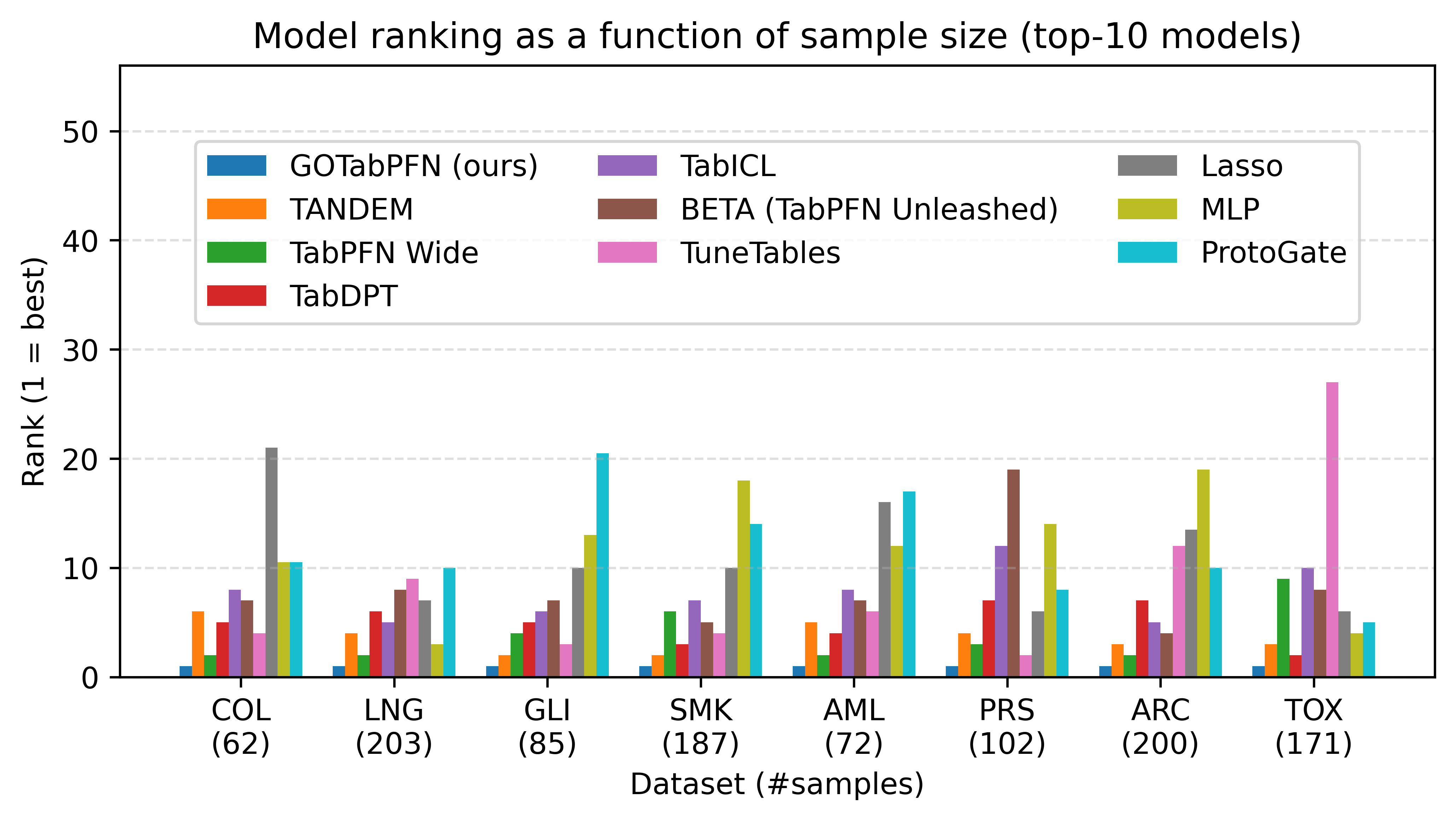}
    \caption{Model rank versus sample size for the top-10 methods on the 8 HDLSS benchmarks.}
    \label{fig:gotabpfn-rank-vs-n}
    \vspace{-0.12in}
\end{figure}

\begin{figure*}[t]
    \centering
    \includegraphics[width=0.72\textwidth]{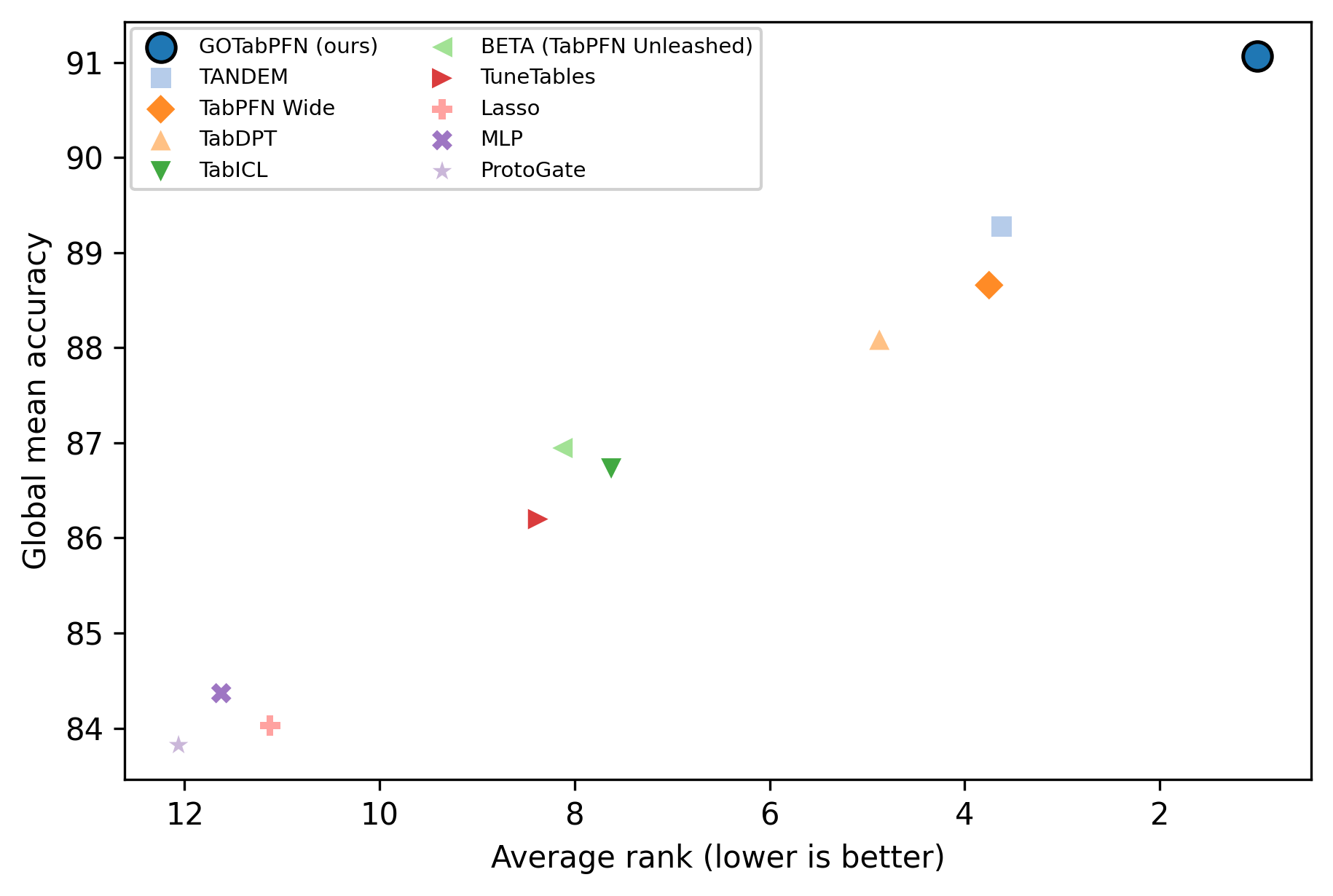}
    \caption{Avg.\ rank vs.\ global accuracy.}
    \label{fig:gotabpfn-rank-vs-acc}
    \vspace{-0.12in}
\end{figure*}

\begin{figure*}[t]
    \centering
    \includegraphics[width=0.72\textwidth]{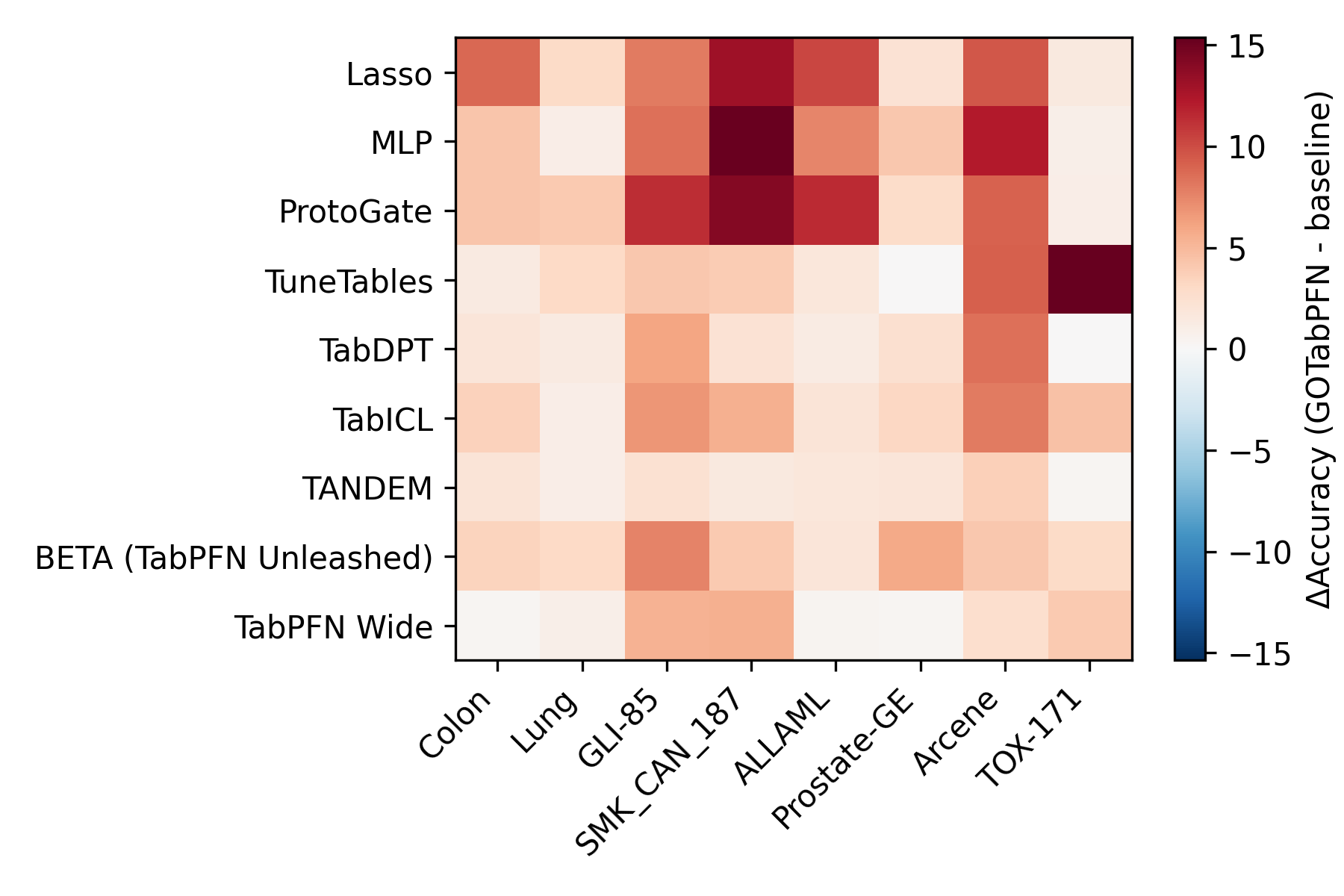}
    \caption{\(\Delta\)Acc heatmap (ours \( - \) baseline).}
    \label{fig:gotabpfn-delta-heat}
    \vspace{-0.12in}
\end{figure*}

\begin{figure}[t]
    \centering
    \includegraphics[width=0.90\linewidth]{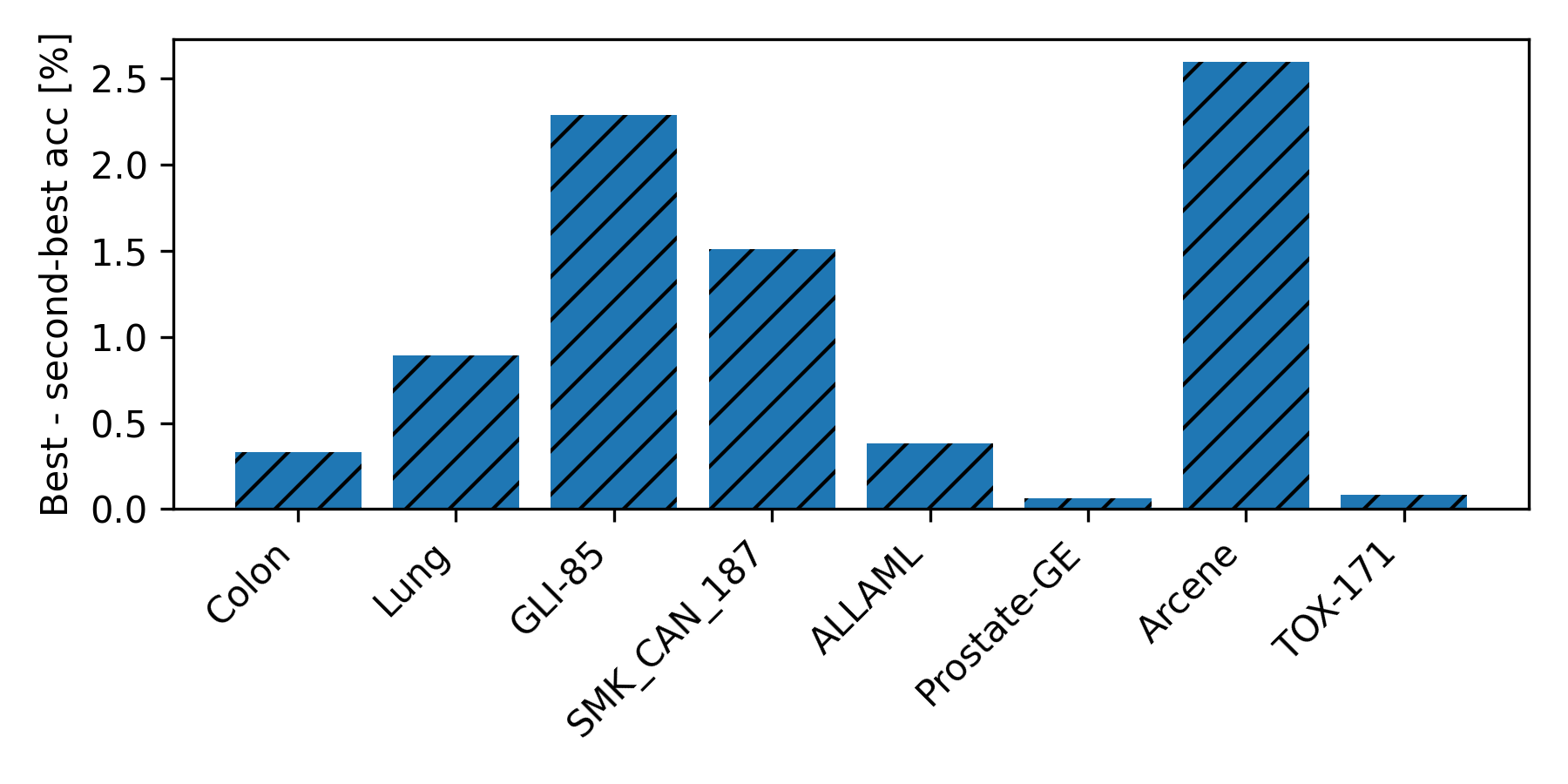}
    \caption{Best-second-best margin across the 8 HDLSS benchmarks.}
    \label{fig:gotabpfn-best-second}
    \vspace{-0.12in}
\end{figure}

\begin{figure}[t]
    \centering
    \includegraphics[width=0.90\linewidth]{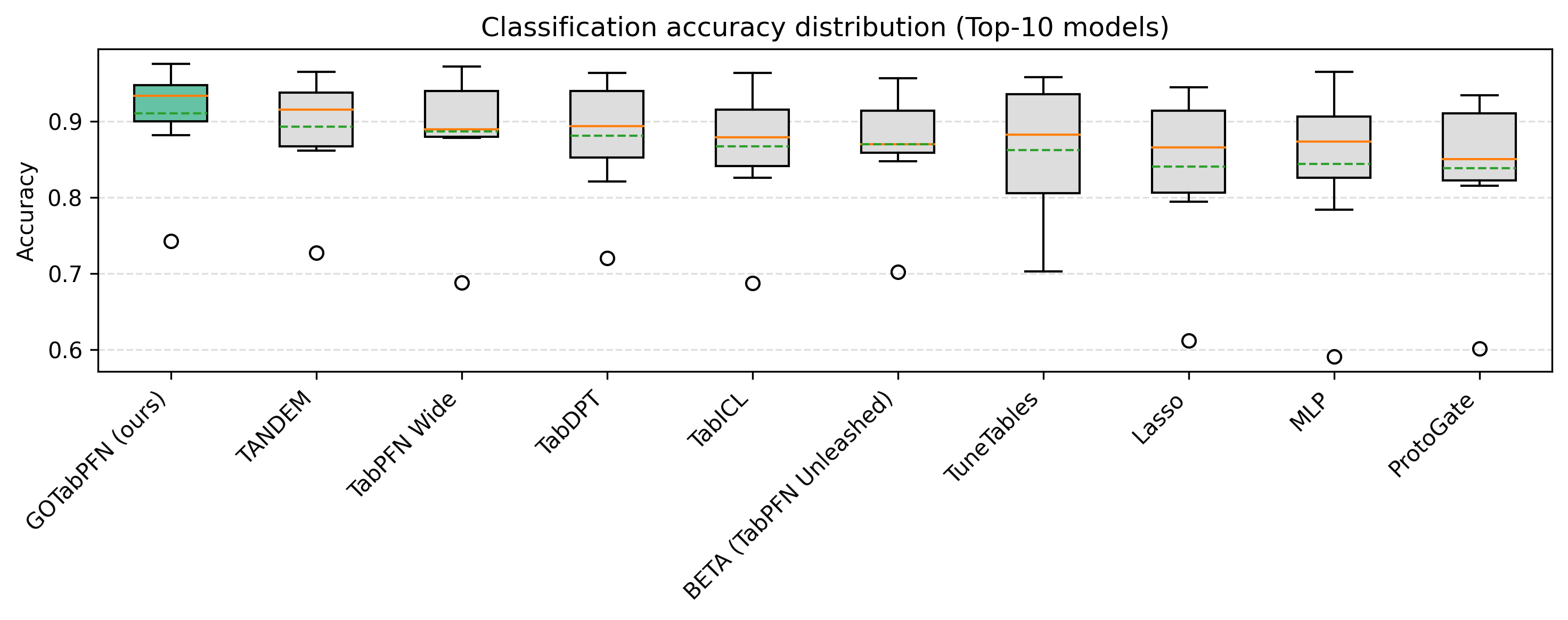}
    \caption{Accuracy distributions across CV splits for the top-10 methods on the 8 HDLSS benchmarks.}
    \label{fig:gotabpfn-boxplot}
    \vspace{-0.12in}
\end{figure}
\renewcommand{\thesection}{K}
\renewcommand{\thesubsection}{K.\arabic{subsection}}
\setcounter{figure}{0}\renewcommand{\thefigure}{K.\arabic{figure}}
\setcounter{table}{0}\renewcommand{\thetable}{K.\arabic{table}}
\section{Representation Quality via t-SNE}
\label{ab7}
Figure~\ref{fig:gotabpfn-tsne} visualizes the NSC token/latent representation learned by GOTabPFN on the Colon dataset using a 2D t-SNE~\cite{b41} embedding (points colored by class). Each point corresponds to a sample after GO-LR ordering and NSC compression (PCA-based segmentation). The plot indicates that the compressed representation preserves class-discriminative structure in a low-dimensional manifold: samples from the two classes exhibit partially separated regions with limited overlap, suggesting that NSC produces a structured embedding that is more amenable to the downstream TabPFN-2.5 head under the HDLSS regime.
\begin{figure}[t]
    \centering
    \includegraphics[width=0.7\linewidth]{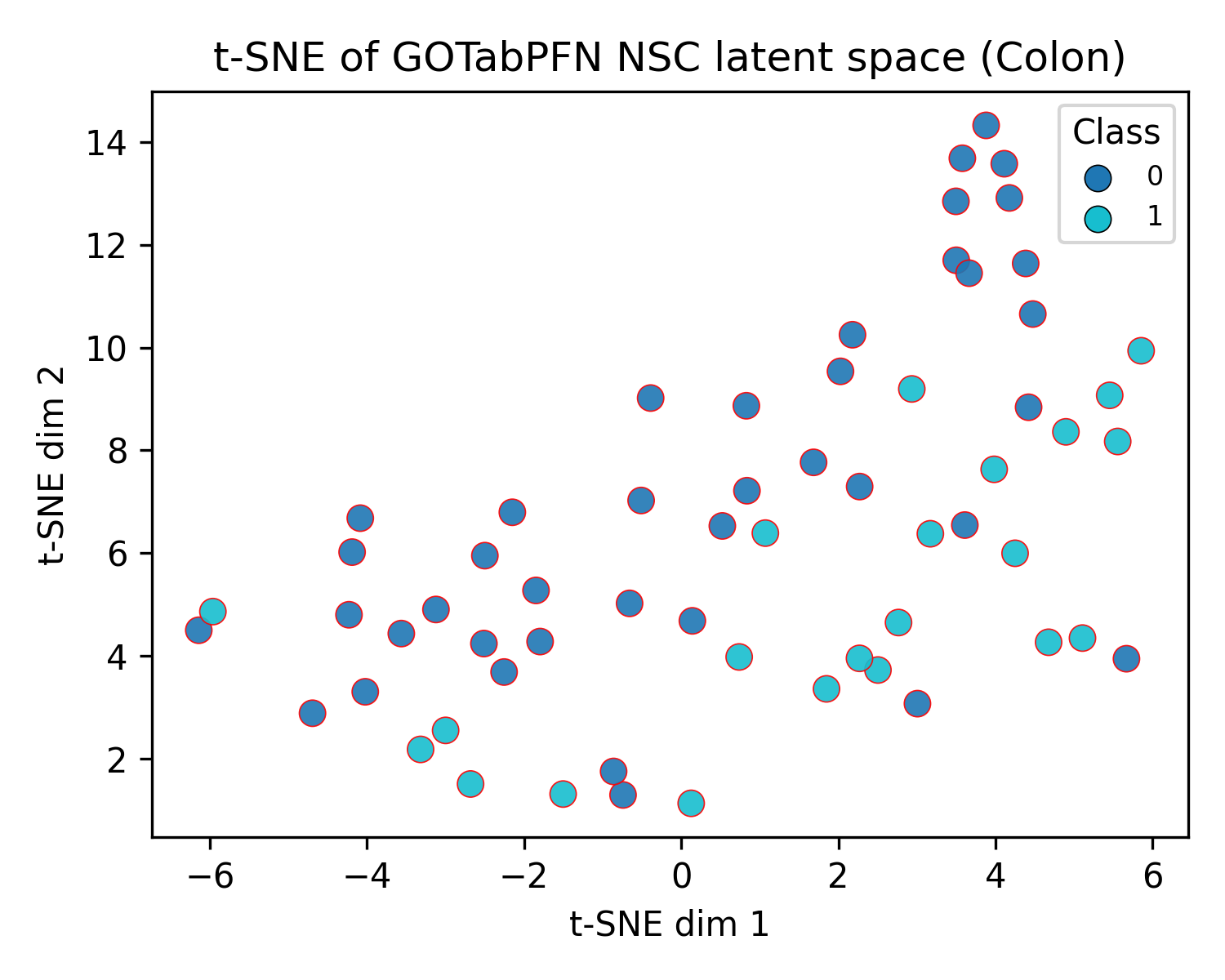}
    \caption{t-SNE visualization of the NSC latent space in GOTabPFN on Colon (colored by class).}
    \label{fig:gotabpfn-tsne}
    \vspace{-0.08in}
\end{figure}
\renewcommand{\thesection}{L}
\renewcommand{\thesubsection}{L.\arabic{subsection}}
\setcounter{figure}{0}\renewcommand{\thefigure}{L.\arabic{figure}}
\setcounter{table}{0}\renewcommand{\thetable}{L.\arabic{table}}
\section{Inference Level Ablation on Calibration and Robustness}
\label{ab8}
On Colon, we complement the main accuracy results with inference-time diagnostics that probe robustness, selectivity, neighborhood structure, and calibration. First, robustness to feature perturbations (Table~\ref{tab:colon_inference_ablation} and Fig.~\ref{fig:colon_robustness}) shows a graceful degradation as the perturbed fraction increases: accuracy remains near-ceiling under mild corruption (e.g., $\le 10\%$) but drops substantially under heavy perturbations, with shuffling generally more harmful than mean-imputation at moderate rates (e.g., at 50\%: 74.19\% vs.\ 98.39\%). Second, selective prediction behaves as expected (Fig.~\ref{fig:colon_conf_curve}): increasing the confidence threshold $\tau$ raises accuracy on the retained “confident” subset while reducing coverage, indicating that model confidence meaningfully ranks predictions by correctness. Third, local consistency in the NSC latent space remains stable (Table~\ref{tab:colon_inference_ablation} and Fig.~\ref{fig:colon_knn_agreement}), with kNN label agreement around 73.87\% at $k{=}5$ across this evaluation, suggesting a reasonably coherent neighborhood geometry after compression. Finally, the reliability diagram in Fig.~\ref{fig:colon_reliability} reports a low expected calibration error (ECE $\approx 0.033$), indicating that predicted probabilities are well-aligned with empirical accuracy on this dataset.
\begin{table}[t]
\centering
\caption{GOTabPFN (Colon) inference-time robustness to feature perturbations and latent-space neighborhood consistency. Tab Shuffle randomly permutes a fraction of feature columns across samples; Tab Drop replaces a fraction with the global feature mean. Higher is better for accuracy and kNN agreement.}
\label{tab:colon_inference_ablation}
\begin{tabular}{lccc}
\toprule
Fraction perturbed & Tab Shuffle (\%) & Tab Drop/mean (\%) & kNN-Agree@5 (\%) \\
\midrule
0.00 & 98.39 & 98.39 & 73.87 \\
0.10 & 96.77 & 100.00 & 73.87 \\
0.25 & 88.71 & 93.55 & 73.87 \\
0.50 & 74.19 & 98.39 & 73.87 \\
0.75 & 62.90 & 61.29 & 73.87 \\
1.00 & 58.06 & 64.52 & 73.87 \\
\bottomrule
\end{tabular}
\end{table}
\begin{figure*}[t]
    \centering

    \begin{subfigure}[t]{0.48\textwidth}
        \centering
        \includegraphics[width=\linewidth]{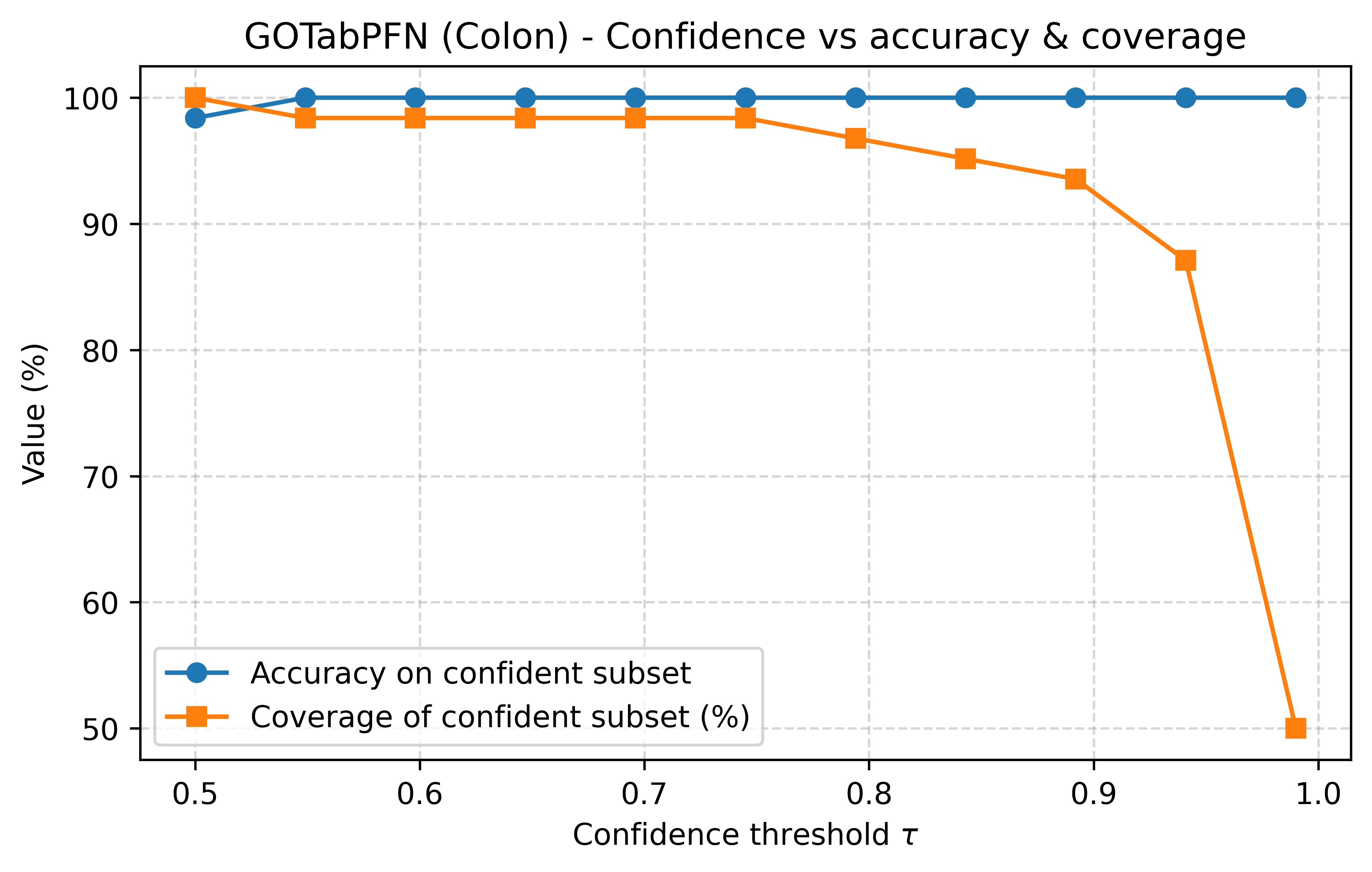}
        \caption{Confidence vs.\ accuracy \& coverage.}
        \label{fig:colon_conf_curve}
    \end{subfigure}
    \hfill
    \begin{subfigure}[t]{0.48\textwidth}
        \centering
        \includegraphics[width=\linewidth]{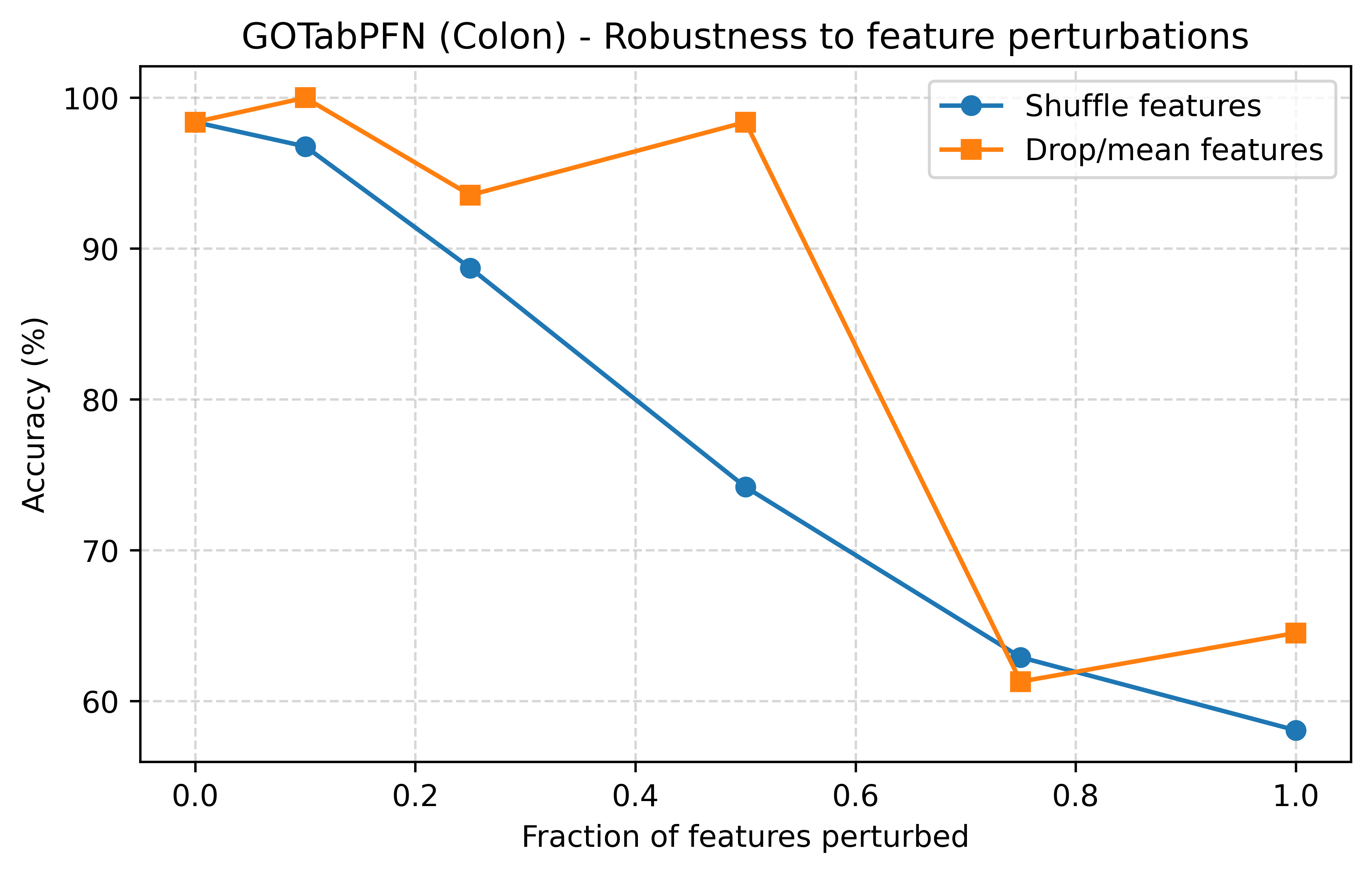}
        \caption{Robustness to perturbations.}
        \label{fig:colon_robustness}
    \end{subfigure}

    \vspace{0.6em}

    \begin{subfigure}[t]{0.4\textwidth}
        \centering
        \includegraphics[width=\linewidth]{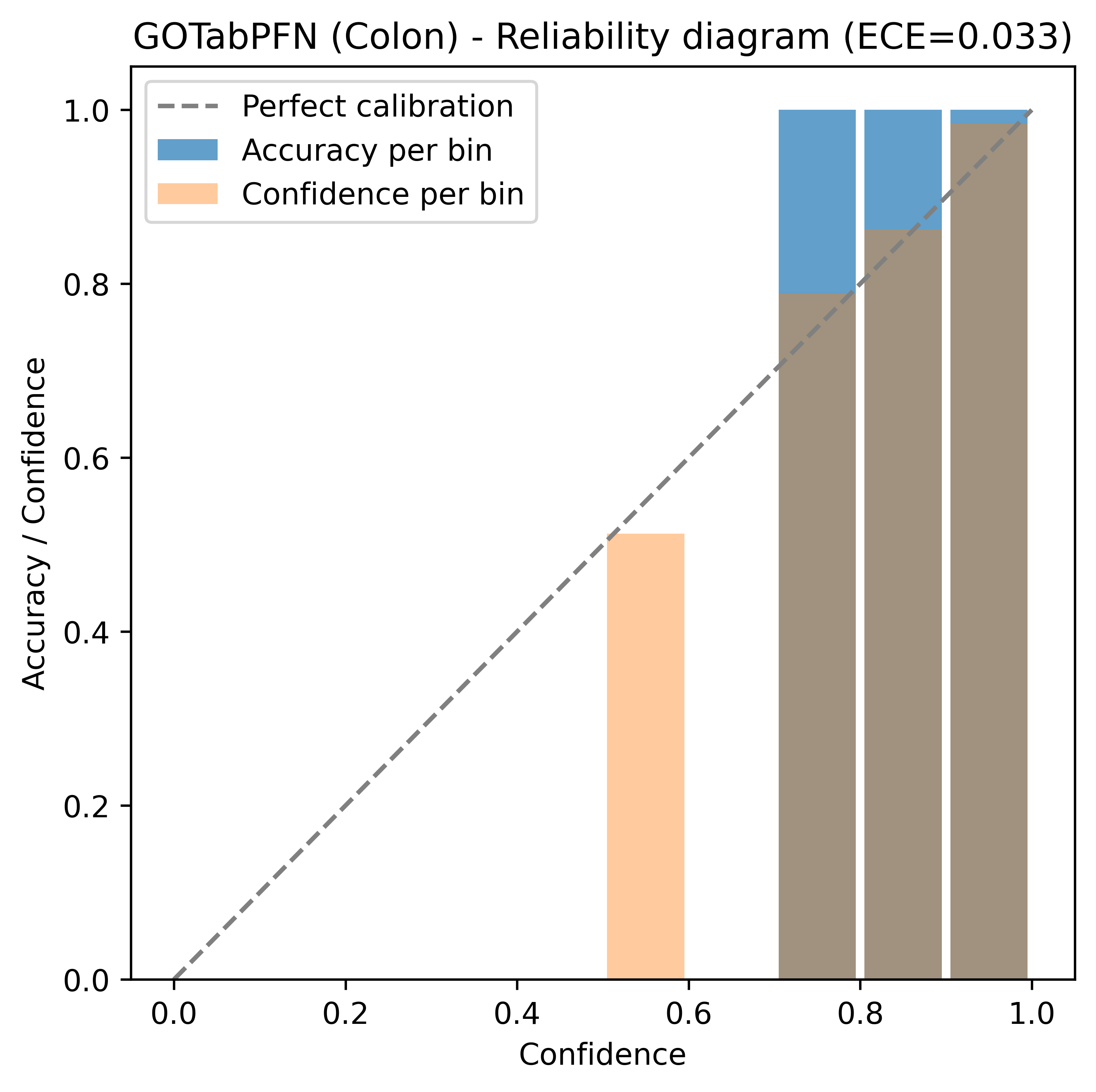}
        \caption{Reliability diagram (ECE).}
        \label{fig:colon_reliability}
    \end{subfigure}
    \hfill
    \begin{subfigure}[t]{0.48\textwidth}
        \centering
        \includegraphics[width=\linewidth]{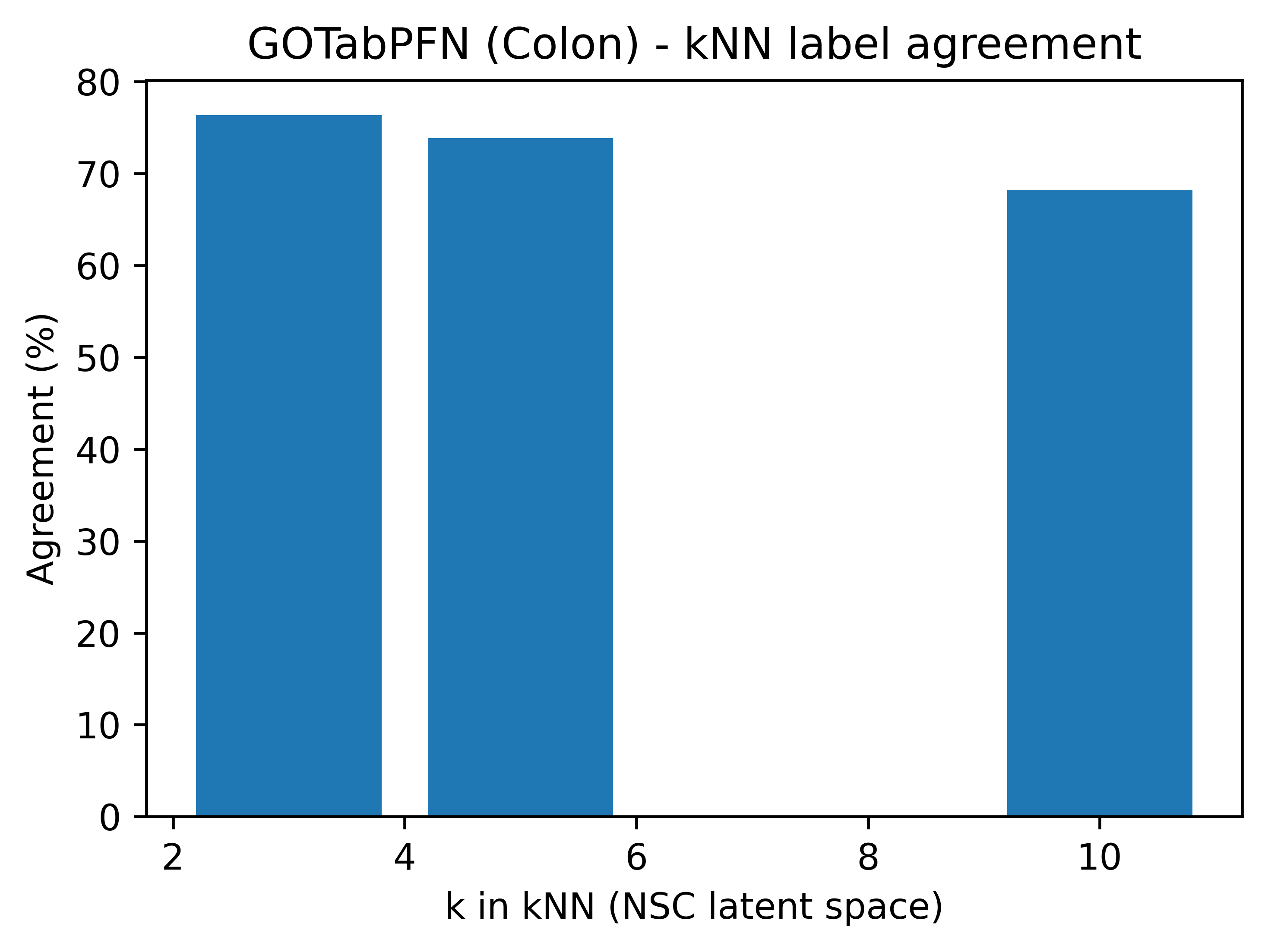}
        \caption{\(k\)NN label agreement.}
        \label{fig:colon_knn_agreement}
    \end{subfigure}

    \caption{GOTabPFN (Colon): confidence/coverage, calibration, robustness, and latent-space neighborhood agreement.}
    \label{fig:colon_four_panels}
    \vspace{-0.8em}
\end{figure*}
\renewcommand{\thesection}{M}
\renewcommand{\thesubsection}{M.\arabic{subsection}}
\setcounter{figure}{0}\renewcommand{\thefigure}{M.\arabic{figure}}
\setcounter{table}{0}\renewcommand{\thetable}{M.\arabic{table}}
\section{Sanity and Stress Diagnostics }
\label{ab9}
Figure~\ref{fig:colon_two_panels} summarizes additional sanity and stress tests for GOTabPFN on Colon. As shown in Fig.~\ref{fig:colon_signal_sanity}, performance is near-ceiling on the full input (98.39\%), but drops sharply under degenerate signals (all-zero or global-mean inputs both 64.52\%), and further degrades when the feature rows are randomly permuted (54.84\%), confirming that predictions depend on meaningful sample-specific structure rather than trivial priors. Notably, accuracy remains high under strong additive noise (95.16\%), suggesting robustness to moderate distributional corruption in feature values. We also evaluate a simple tabular test-time augmentation (TTA) procedure (Fig.~\ref{fig:colon_tta}): majority voting over $n_{\text{aug}}{=}5$ noisy/dropout augmentations matches the base accuracy (both 98.39\%), and only 3.23\% of samples change their predicted label under any augmentation, indicating high prediction stability. Table~\ref{tab:colon_sanity_tta_class} reports these stress-test and stability numbers alongside per-class accuracy, showing uniformly strong performance across classes (97.5\% on the majority class with support 40, and 100\% on the minority class with support 22), consistent with the robustness patterns observed in Fig.~\ref{fig:colon_two_panels}.
\begin{table}[t]
\centering
\caption{Sanity/stress and stability diagnostics for GOTabPFN on Colon dataset.}
\label{tab:colon_sanity_tta_class}
\begin{tabular}{l c}
\toprule
\textbf{Sanity / stress mode} & \textbf{Accuracy (\%)} \\
\midrule
Full input & 98.39 \\
All-zero input & 64.52 \\
Global-mean input & 64.52 \\
Shuffle rows & 54.84 \\
Heavy noise & 95.16 \\
\midrule
\textbf{TTA stability ($n_{\mathrm{aug}}{=}5$)} & \\
\midrule
Base accuracy & 98.39 \\
TTA majority-vote accuracy & 98.39 \\
Any label change across aug (\%) & 3.23 \\
\midrule
\textbf{Per-class accuracy} & \\
\midrule
Class 0 (support 40) & 97.5 \\
Class 1 (support 22) & 100.0 \\
\bottomrule
\end{tabular}
\end{table}
\begin{figure}[t]
    \centering
    \begin{subfigure}[t]{0.49\linewidth}
        \centering
        \includegraphics[width=\linewidth]{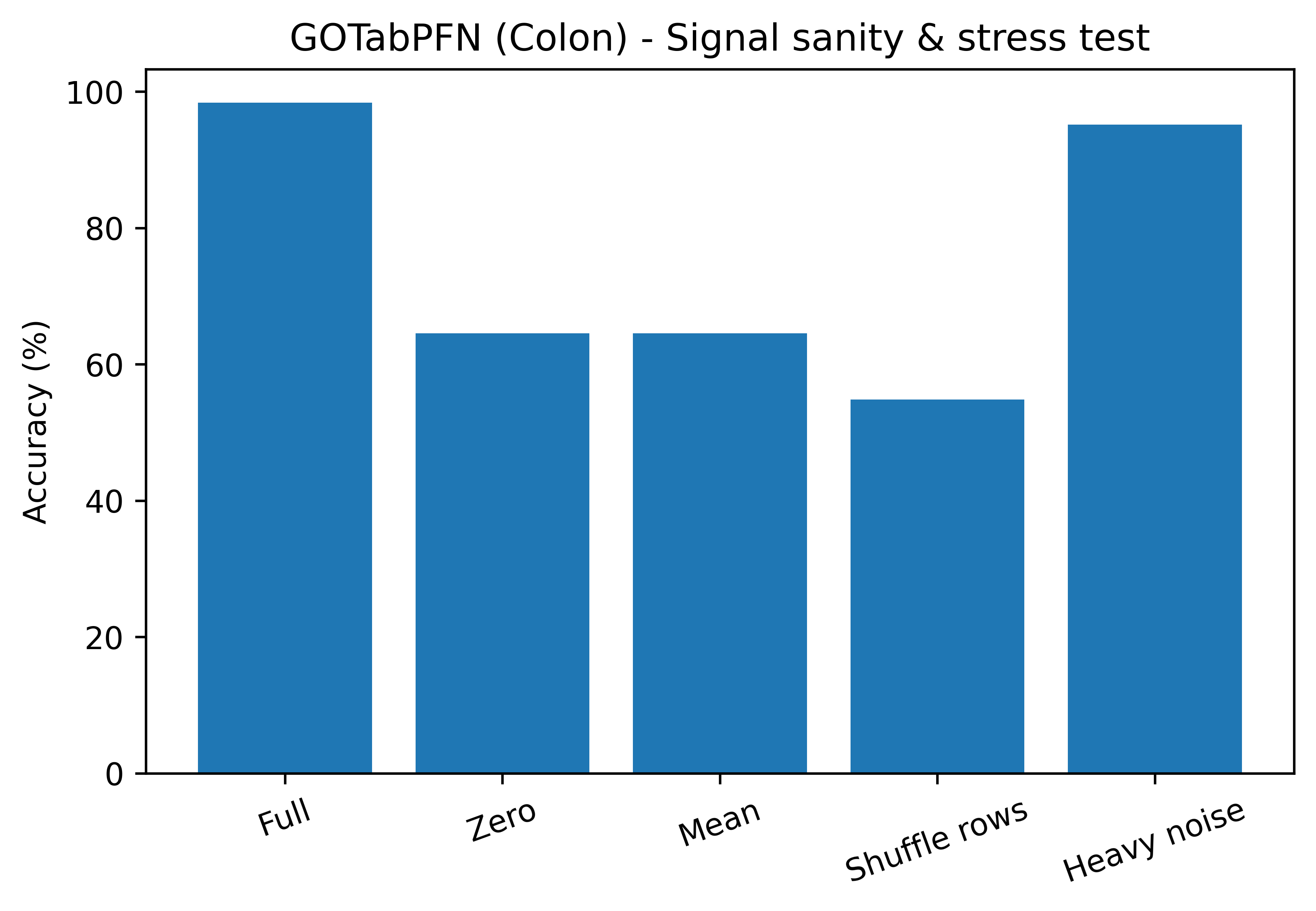}
        \caption{Signal sanity \& stress tests.}
        \label{fig:colon_signal_sanity}
    \end{subfigure}\hfill
    \begin{subfigure}[t]{0.49\linewidth}
        \centering
        \includegraphics[width=\linewidth]{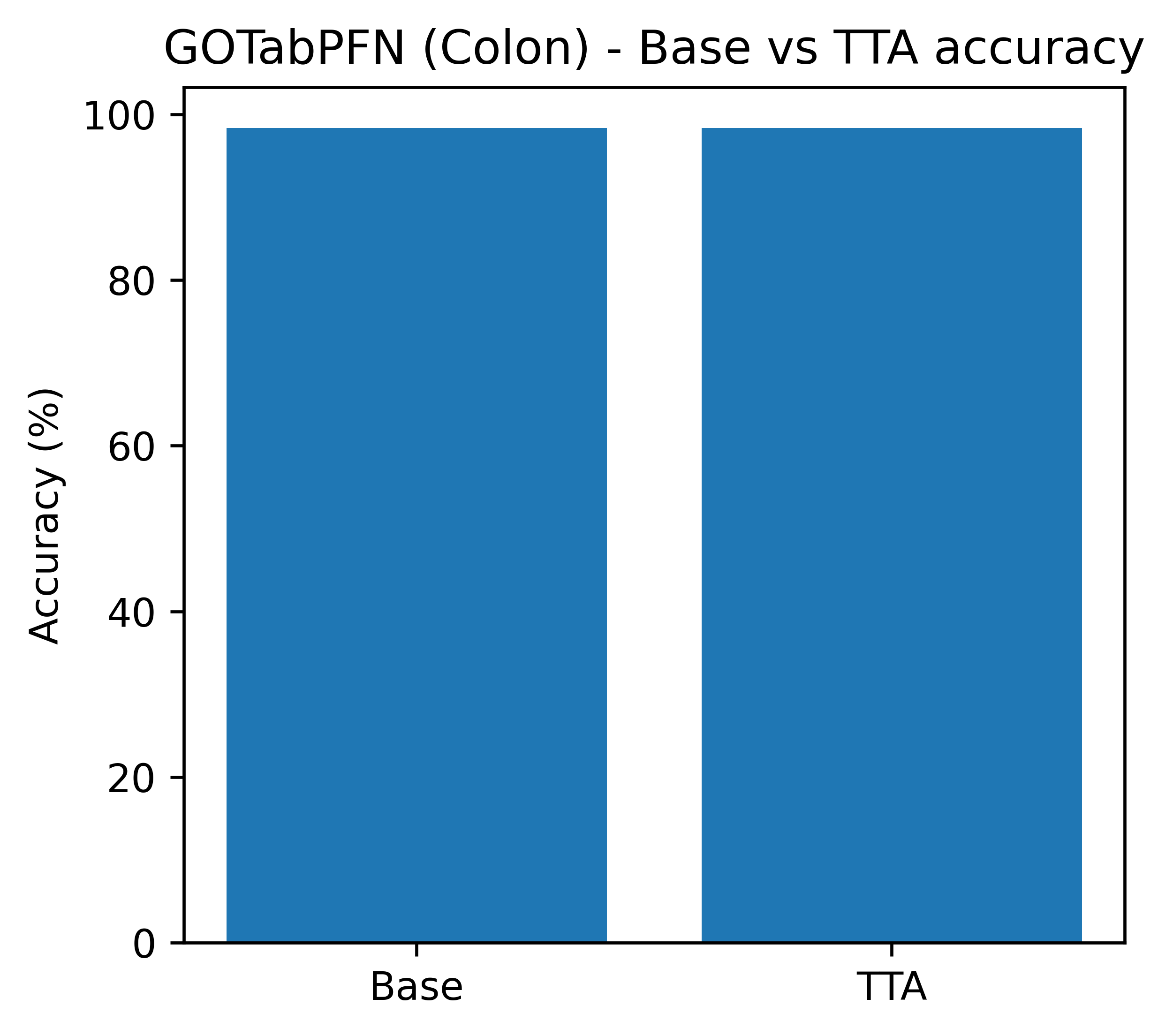}
        \caption{Base vs.\ TTA accuracy.}
        \label{fig:colon_tta}
    \end{subfigure}
    \caption{Sanity and stress diagnostics on Colon.}
    \label{fig:colon_two_panels}
\end{figure}

\renewcommand{\thesection}{N}
\renewcommand{\thesubsection}{N.\arabic{subsection}}
\setcounter{figure}{0}\renewcommand{\thefigure}{N.\arabic{figure}}
\setcounter{table}{0}\renewcommand{\thetable}{N.\arabic{table}}
\section{Additional Reliability and Interpretability Diagnostics }
\label{ab10}
On the Colon benchmark, GOTabPFN achieves a baseline Top-1 accuracy of $98.39\%$ and reaches $100\%$ Top-2 accuracy (Fig.~\ref{fig:colon_reliability}\subref{fig:colon_topk}). The normalized confusion matrix indicates a single error case, with the most frequent confusion being true class $0\rightarrow 1$ occurring once (Fig.~\ref{fig:colon_reliability}\subref{fig:colon_cm}). Confidence is well-separated: the mean margin $p_{\text{top}1}-p_{\text{top}2}$ is high for correct predictions ($0.949$) and near-zero for the lone incorrect prediction ($0.025$), yielding a clear bimodal separation (Fig.~\ref{fig:colon_reliability}\subref{fig:colon_margin}). Finally, per-feature permutation importance on this small evaluation set shows $0.0$ percentage-point accuracy drop for the top-ranked features, consistent with near-saturated accuracy and limited headroom for measurable single-feature perturbation effects under this diagnostic protocol.
\begin{figure*}[t]
  \centering
  \begin{subfigure}[t]{0.32\textwidth}
    \centering
    \includegraphics[width=\linewidth]{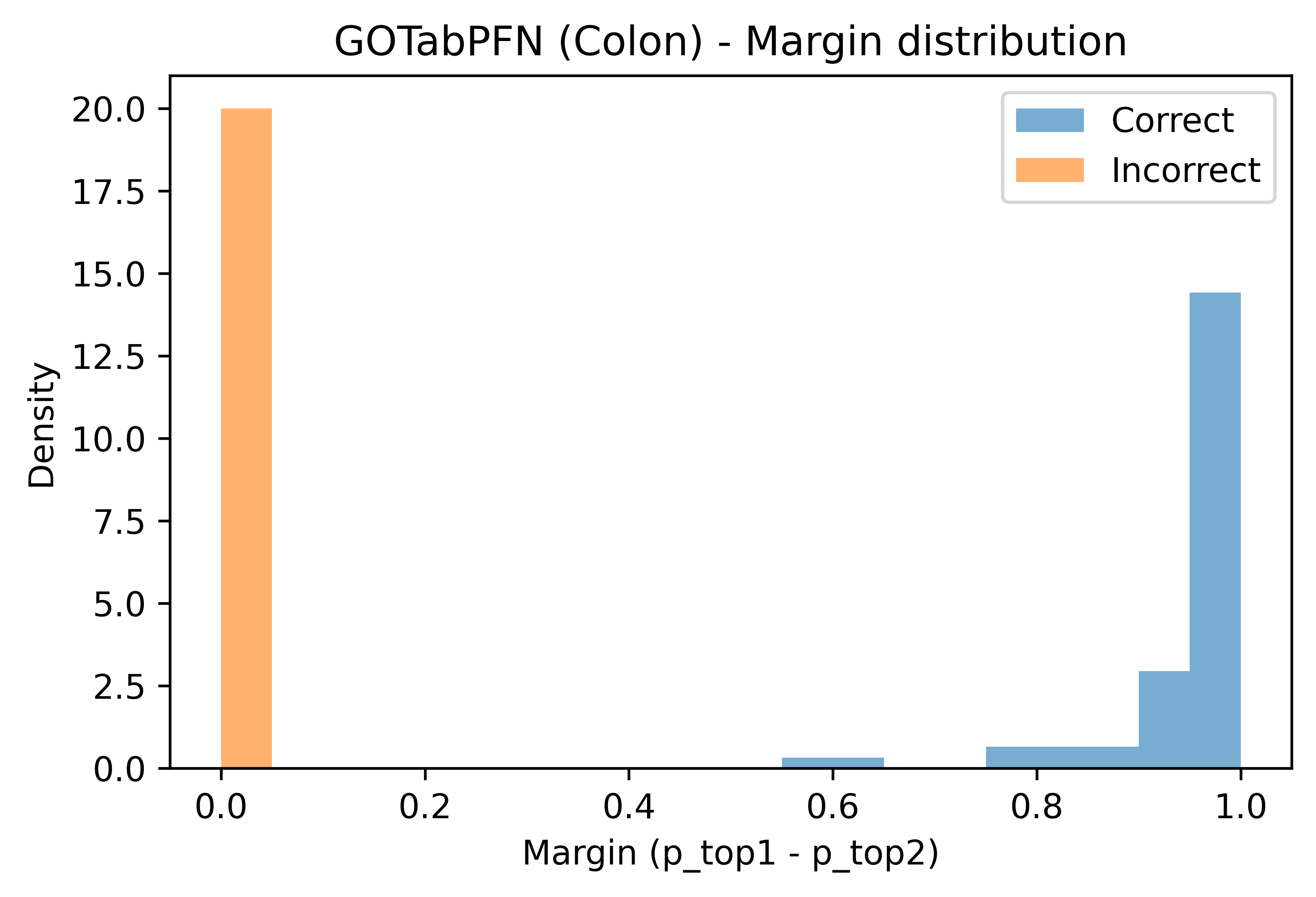}
    \caption{Margin distribution ($p_{\text{top}1}-p_{\text{top}2}$).}
    \label{fig:colon_margin}
  \end{subfigure}\hfill
  \begin{subfigure}[t]{0.32\textwidth}
    \centering
    \includegraphics[width=\linewidth]{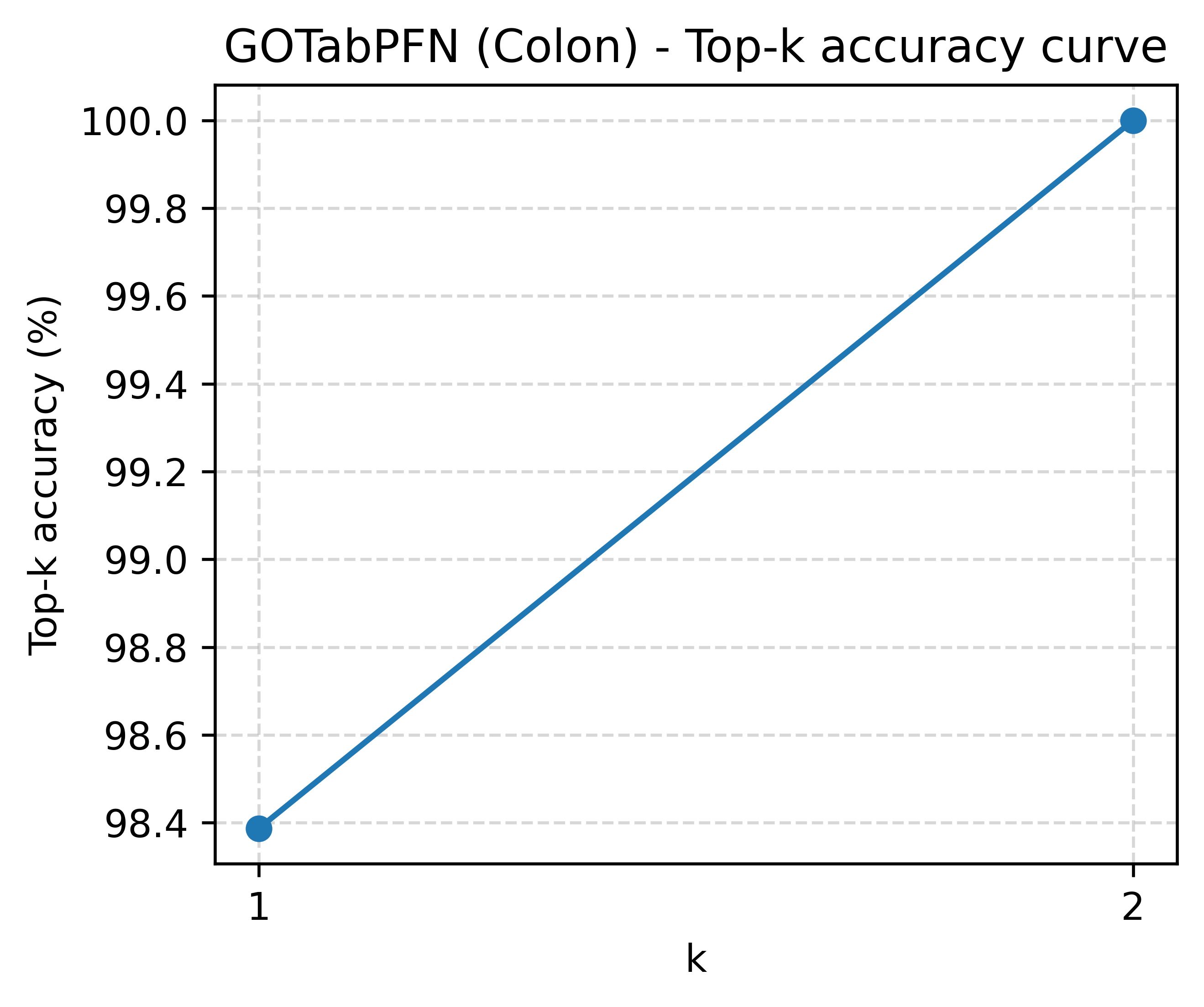}
    \caption{Top-$k$ accuracy curve.}
    \label{fig:colon_topk}
  \end{subfigure}\hfill
  \begin{subfigure}[t]{0.32\textwidth}
    \centering
    \includegraphics[width=\linewidth]{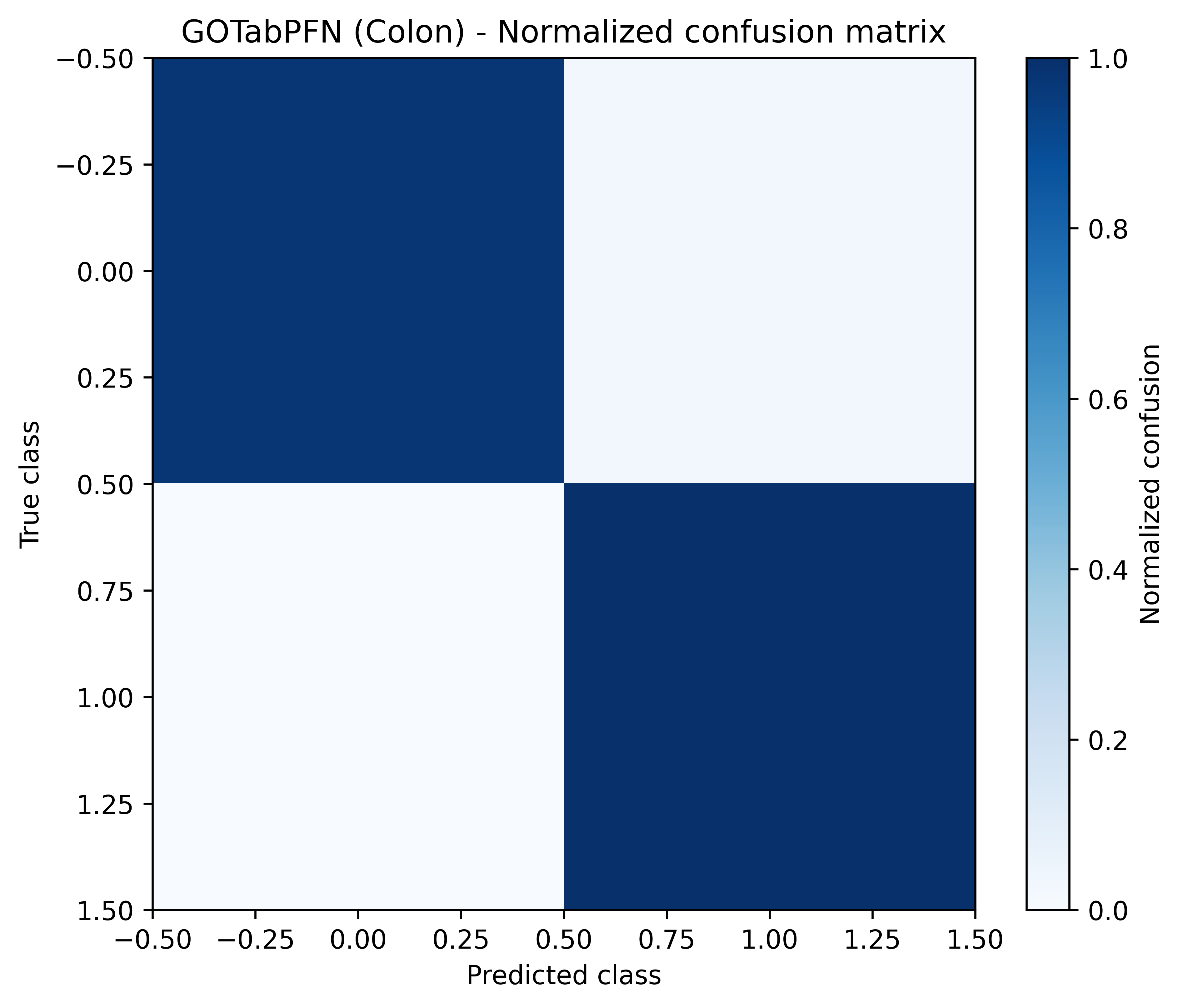}
    \caption{Normalized confusion matrix.}
    \label{fig:colon_cm}
  \end{subfigure}
  \caption{Extra reliability diagnostics for GOTabPFN on Colon.
  We report (left) margin separation between correct vs.\ incorrect predictions, (middle) Top-$k$ accuracy, and (right) normalized confusion matrix.}
  \label{fig:colon_reliability}
\end{figure*}
\renewcommand{\thesection}{O}
\renewcommand{\thesubsection}{O.\arabic{subsection}}
\setcounter{figure}{0}\renewcommand{\thefigure}{O.\arabic{figure}}
\setcounter{table}{0}\renewcommand{\thetable}{O.\arabic{table}}
\section{Theory-Inspired Representation Diagnostics}
\label{ab11}
We analyze the learned embedding geometry and confidence behavior of GOTabPFN on Colon, where the model attains $98.39\%$ top-1 accuracy. The embedding spectrum in Fig.~\ref{fig:gotabpfn-colon-theorydiag}\subref{fig:colon-spec} is strongly low-rank, with an effective dimension (participation ratio) of $3.5$, and the cumulative curve in Fig.~\ref{fig:gotabpfn-colon-theorydiag}\subref{fig:colon-cumvar} shows that only $2/4/6/11/29$ components capture $50/80/90/95/99\%$ of the variance, respectively, indicating a highly concentrated representation. Despite this compression, local-neighborhood classifiers are insufficient: leave-one-out kNN in embedding space peaks at $82.26\%$ (at $k{=}5$) and degrades for larger $k$ (Fig.~\ref{fig:gotabpfn-colon-theorydiag}\subref{fig:colon-knn}), remaining well below the parametric TabPFN head, suggesting the decision rule leverages more than simple Euclidean locality. Finally, margin diagnostics confirm strong separation and calibrated confidence: the mean normalized margin is $0.9707$ for correct predictions versus $-0.0359$ for incorrect ones, and the conditional error stays near $0$ across a wide range of margin thresholds while maintaining near-full coverage (Fig.~\ref{fig:gotabpfn-colon-theorydiag}\subref{fig:colon-merr}-\subref{fig:colon-mcov}). Together, these results support that GOTabPFN forms a low-dimensional, sharply separated embedding while relying on a richer (non-kNN) parametric decision mechanism.
\begin{figure*}[t]
    \centering

    \begin{subfigure}[t]{0.32\textwidth}
        \centering
        \includegraphics[width=\linewidth]{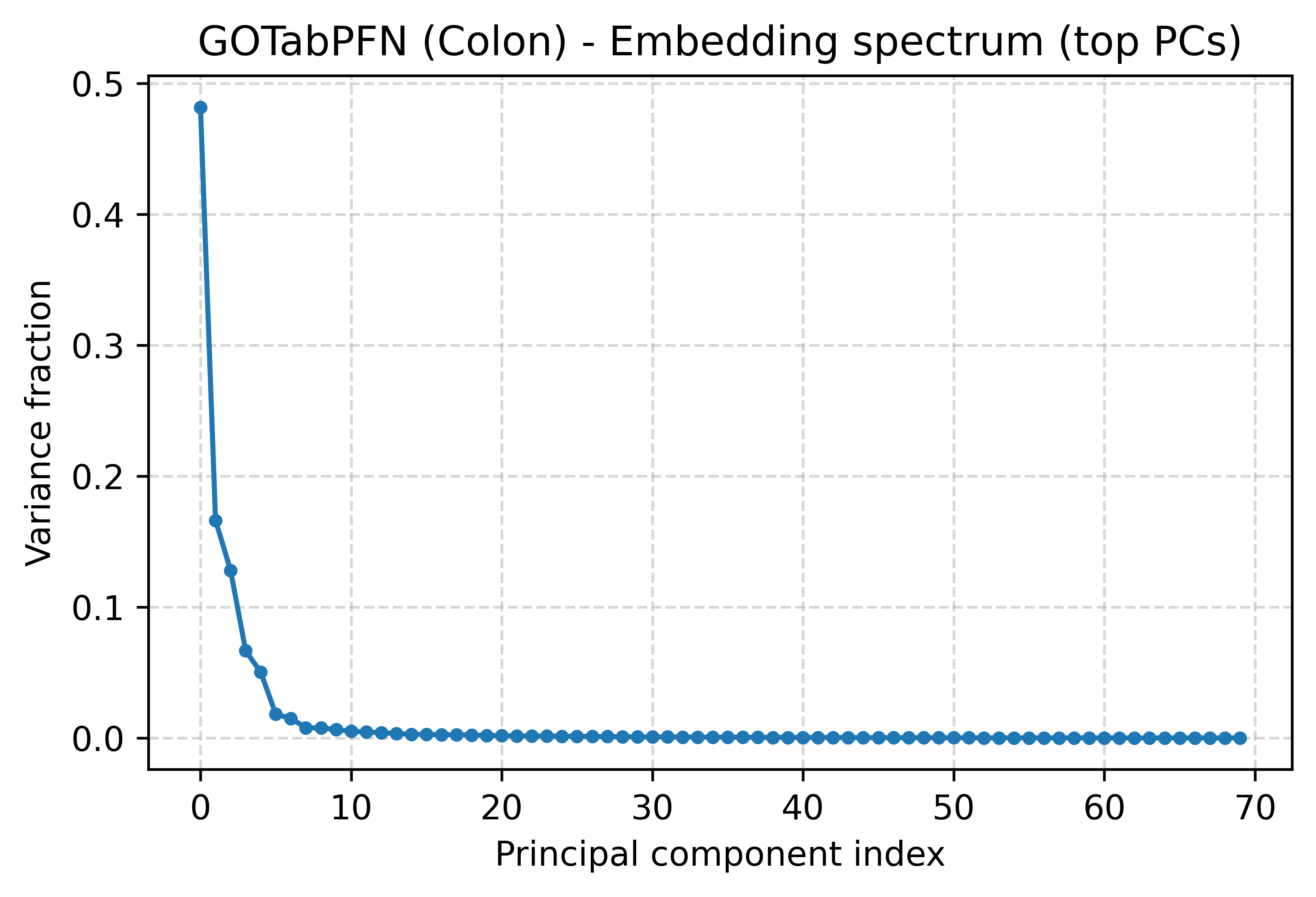}
        \caption{Embedding spectrum (top PCs).}
        \label{fig:colon-spec}
    \end{subfigure}\hfill
    \begin{subfigure}[t]{0.32\textwidth}
        \centering
        \includegraphics[width=\linewidth]{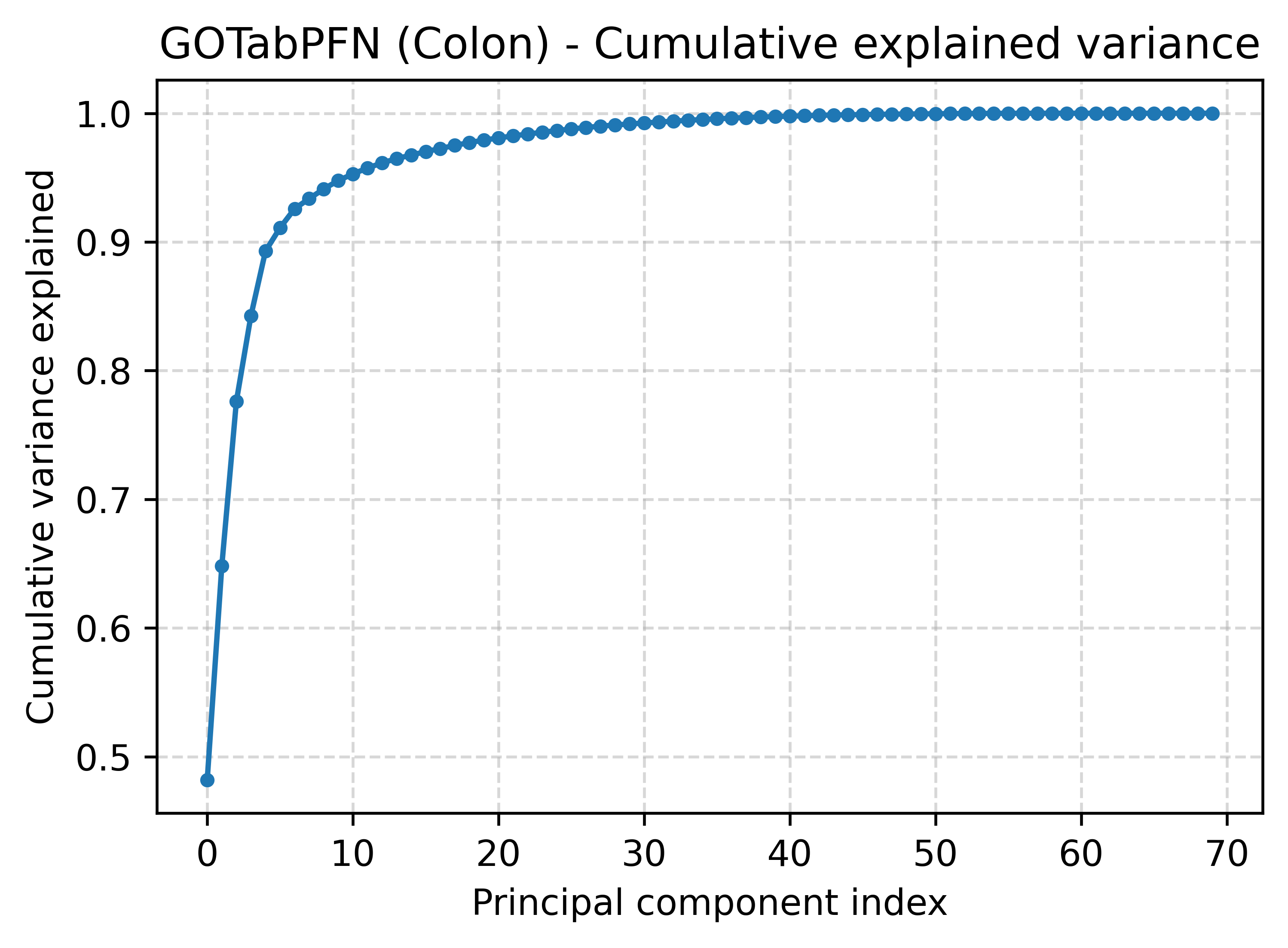}
        \caption{Cumulative explained variance.}
        \label{fig:colon-cumvar}
    \end{subfigure}\hfill
    \begin{subfigure}[t]{0.32\textwidth}
        \centering
        \includegraphics[width=\linewidth]{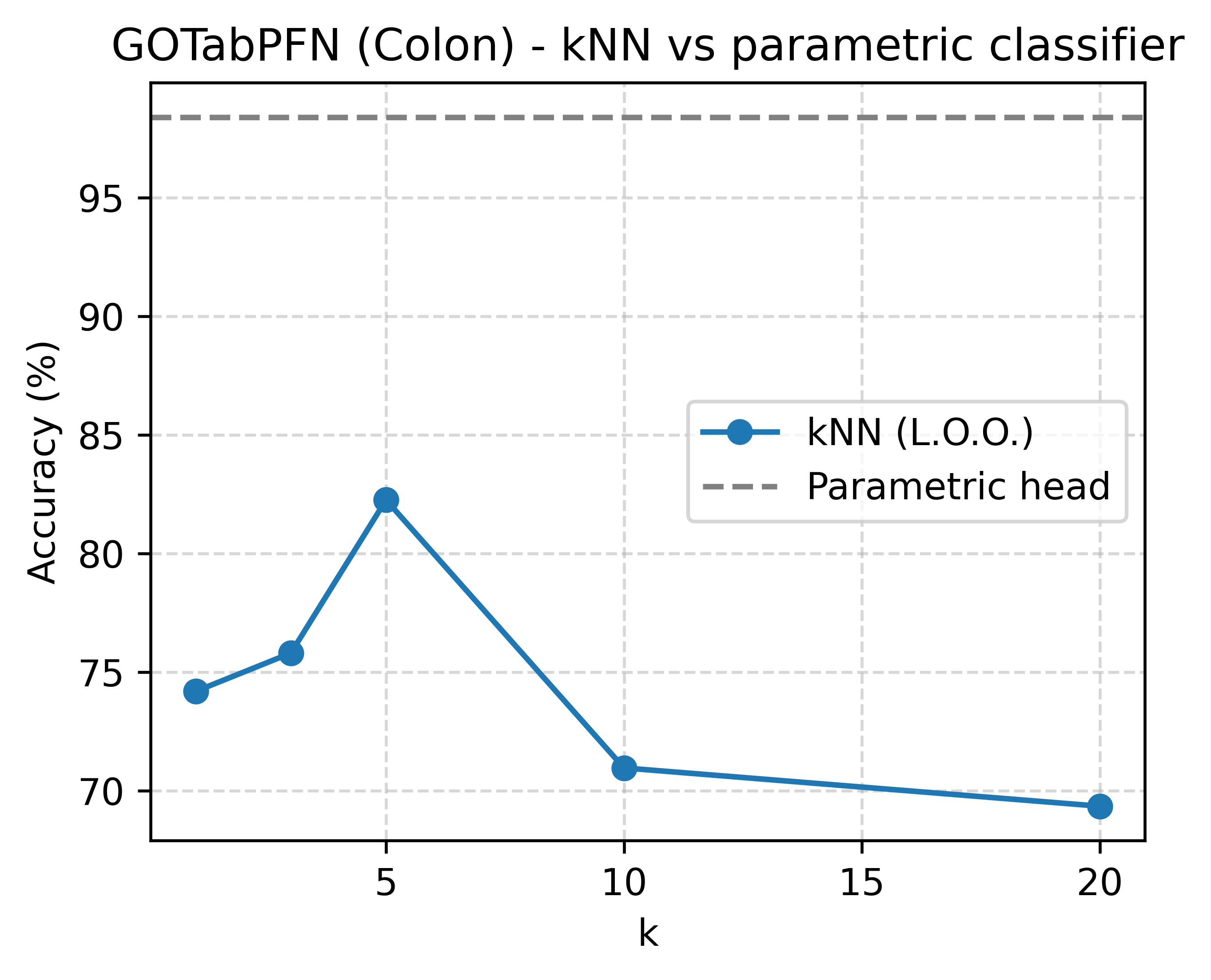}
        \caption{kNN (LOO) vs.\ parametric head.}
        \label{fig:colon-knn}
    \end{subfigure}

    \vspace{0.6em}

    \begin{subfigure}[t]{0.49\textwidth}
        \centering
        \includegraphics[width=\linewidth]{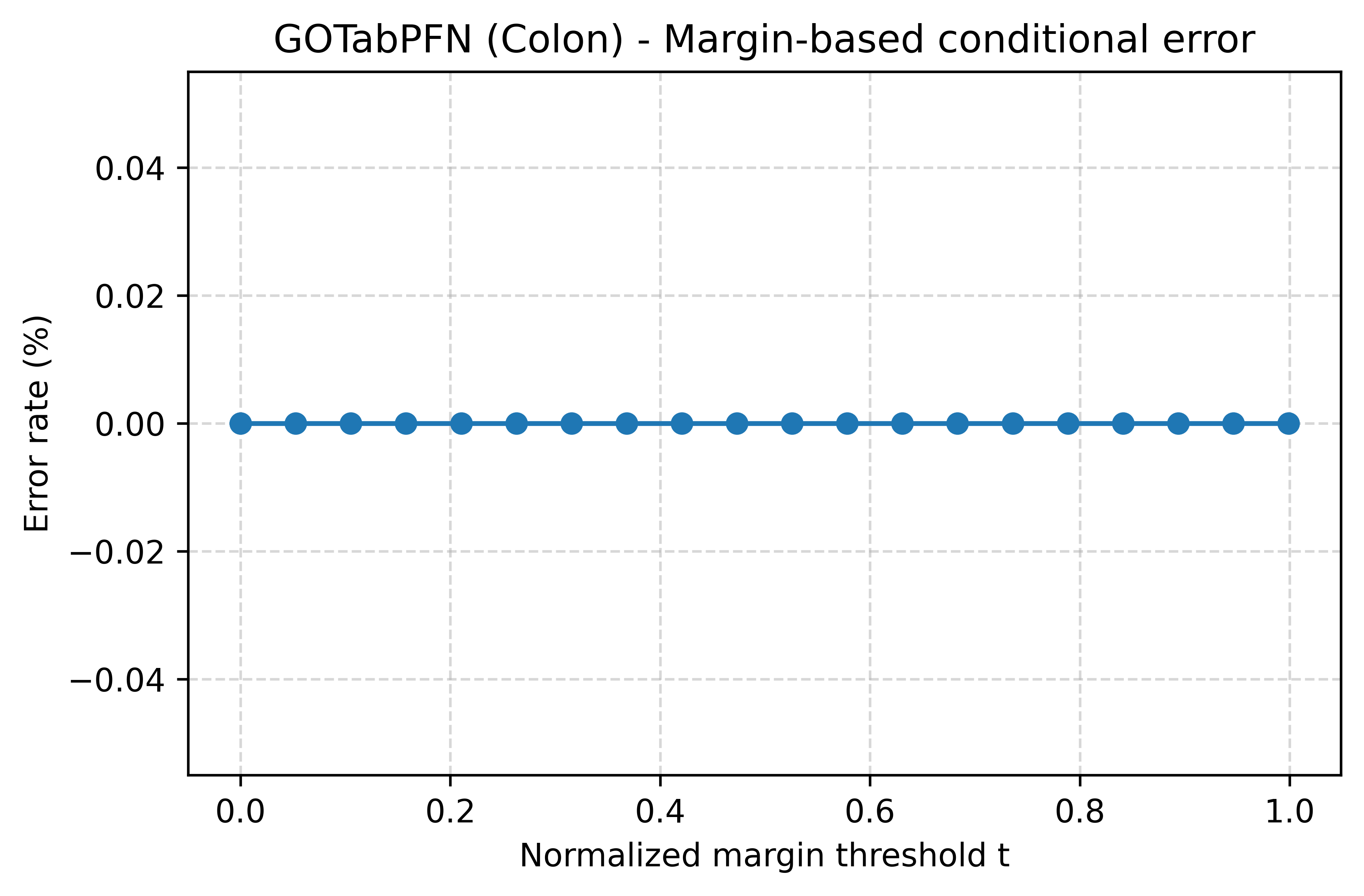}
        \caption{Margin-thresholded conditional error.}
        \label{fig:colon-merr}
    \end{subfigure}\hfill
    \begin{subfigure}[t]{0.49\textwidth}
        \centering
        \includegraphics[width=\linewidth]{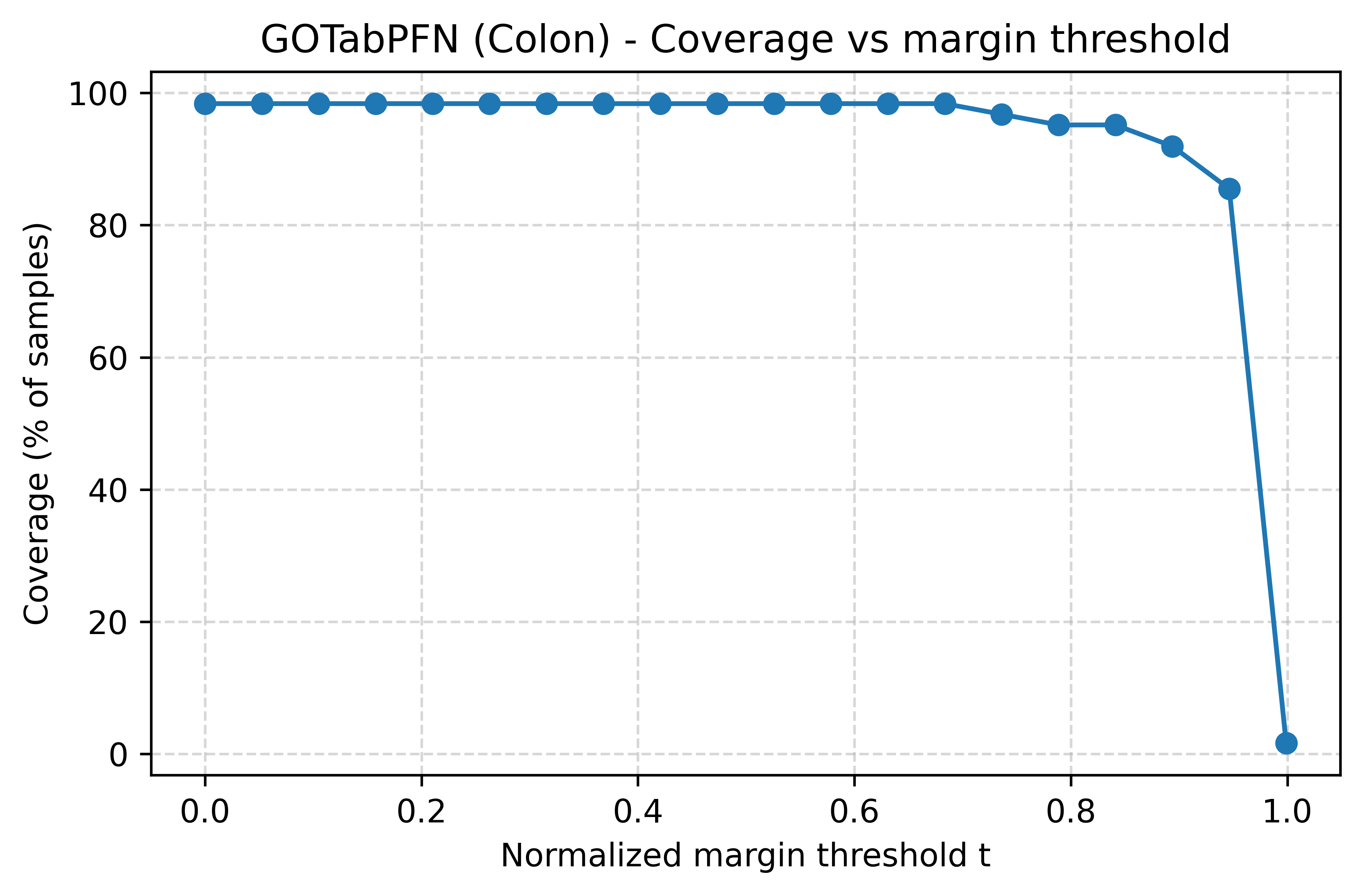}
        \caption{Coverage vs.\ margin threshold.}
        \label{fig:colon-mcov}
    \end{subfigure}

    \caption{\textbf{Theory-inspired representation diagnostics for GOTabPFN (Colon).}
    (a) Variance spectrum indicates a sharp concentration of energy in the leading PCs.
    (b) Cumulative explained variance shows rapid saturation.
    (c) Leave-one-out kNN in embedding space underperforms the parametric head, suggesting performance is not explained by simple local geometry alone.
    (d-e) Margin-based conditional error and coverage demonstrate high-confidence predictions over most samples.}
    \label{fig:gotabpfn-colon-theorydiag}
\end{figure*}


\renewcommand{\thesection}{P}
\renewcommand{\thesubsection}{P.\arabic{subsection}}
\setcounter{figure}{0}\renewcommand{\thefigure}{P.\arabic{figure}}
\setcounter{table}{0}\renewcommand{\thetable}{P.\arabic{table}}
\section{OOD and Local Sensitivity Diagnostics}
\label{ab13}
On Colon, GOTabPFN achieves $98.39\%$ ID accuracy and exhibits highly confident and low-entropy predictions on ID inputs (mean max-softmax confidence $=0.967$, mean entropy $=0.106$; Fig.~\ref{fig:colon_ood_conf}-\ref{fig:colon_ood_ent}). Under synthetic OOD-style tabular inputs, uncertainty increases: Gaussian noise and column-wise permutation both reduce confidence (mean $\approx 0.810$ and $0.795$) and raise entropy (mean $\approx 0.424$ and $0.446$), while constant ``blank'' features yield intermediate behavior (confidence $0.883$, entropy $0.360$), indicating that the model does not collapse to uniformly overconfident predictions off-manifold (Fig.~\ref{fig:colon_ood_conf}--\ref{fig:colon_ood_ent}). Finally, a local Lipschitz-like probe under small feature perturbations ($\epsilon=0.10$) shows a distribution concentrated at low $\|\Delta \mathrm{probs}\|_2 / \|\Delta x\|_2$ with a modest tail (mean $0.041$, median $0.015$; Fig.~\ref{fig:colon_local_sens}), suggesting that the learned predictor is generally stable to small tabular noise while allowing occasional locally sensitive regions.
\begin{figure*}[t]
    \centering
    \begin{subfigure}[t]{0.32\linewidth}
        \centering
        \includegraphics[width=\linewidth]{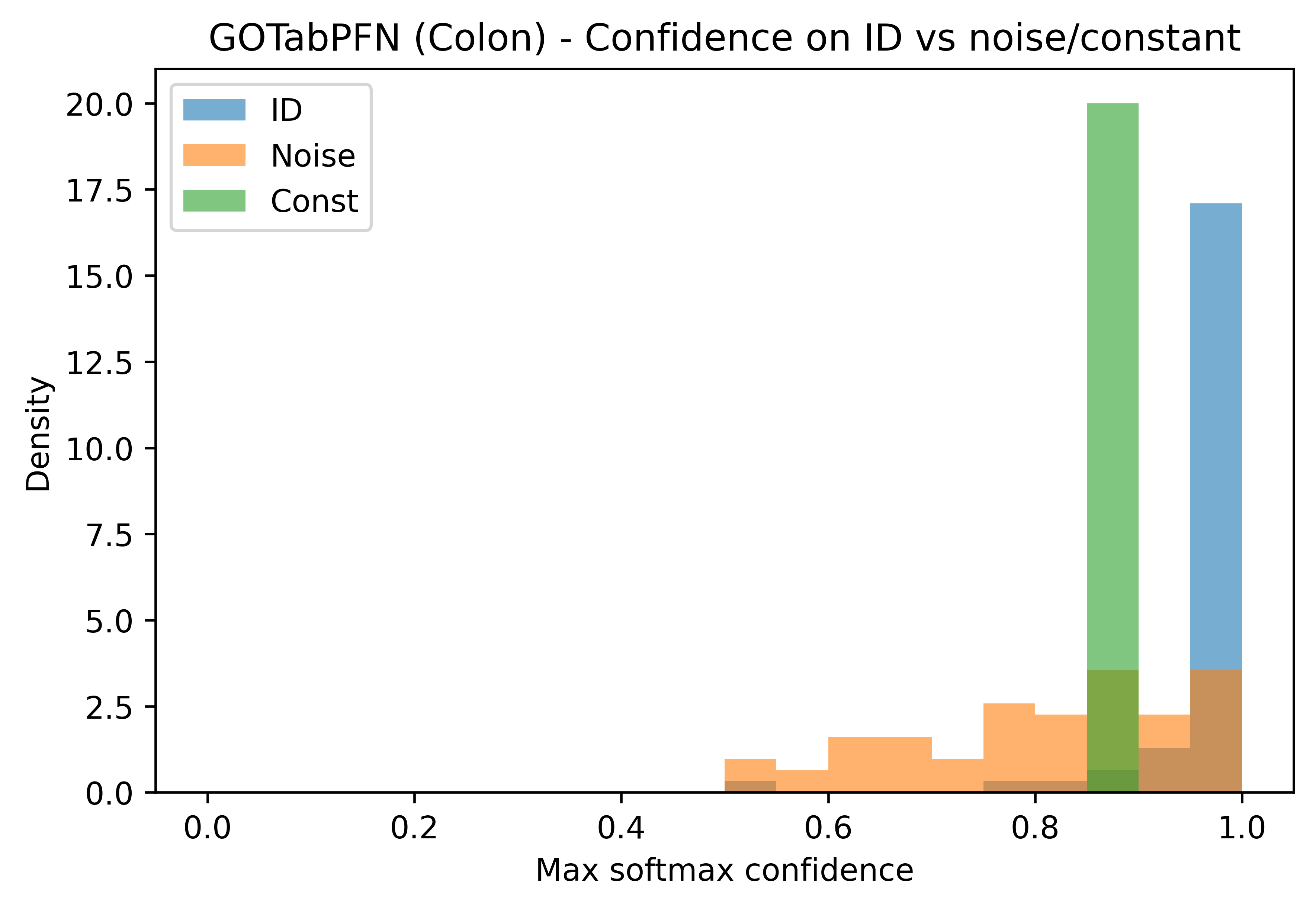}
        \caption{Max-softmax confidence: ID vs. Noise/Const.}
        \label{fig:colon_ood_conf}
    \end{subfigure}\hfill
    \begin{subfigure}[t]{0.32\linewidth}
        \centering
        \includegraphics[width=\linewidth]{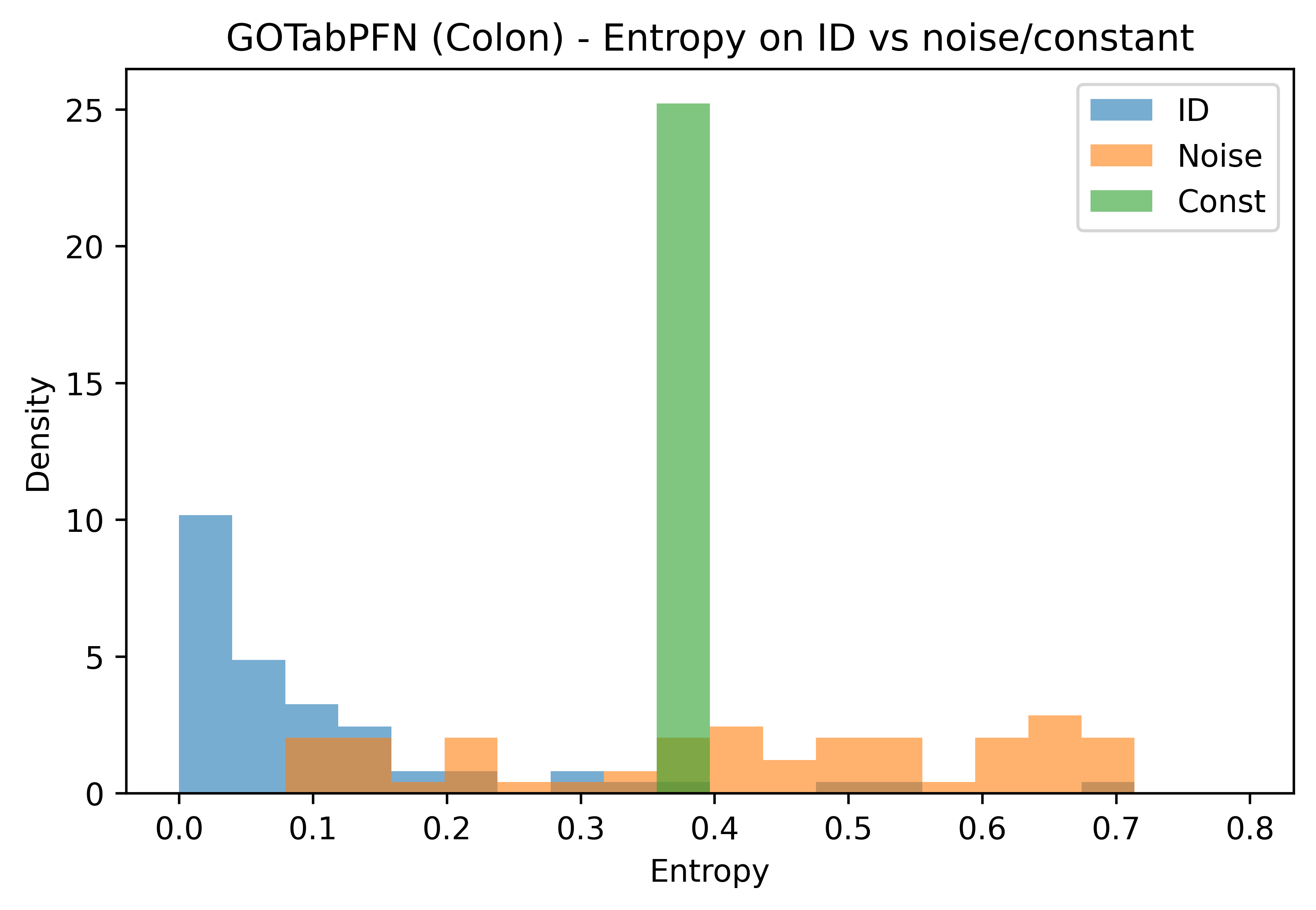}
        \caption{Predictive entropy: ID vs. Noise/Const.}
        \label{fig:colon_ood_ent}
    \end{subfigure}\hfill
    \begin{subfigure}[t]{0.32\linewidth}
        \centering
        \includegraphics[width=\linewidth]{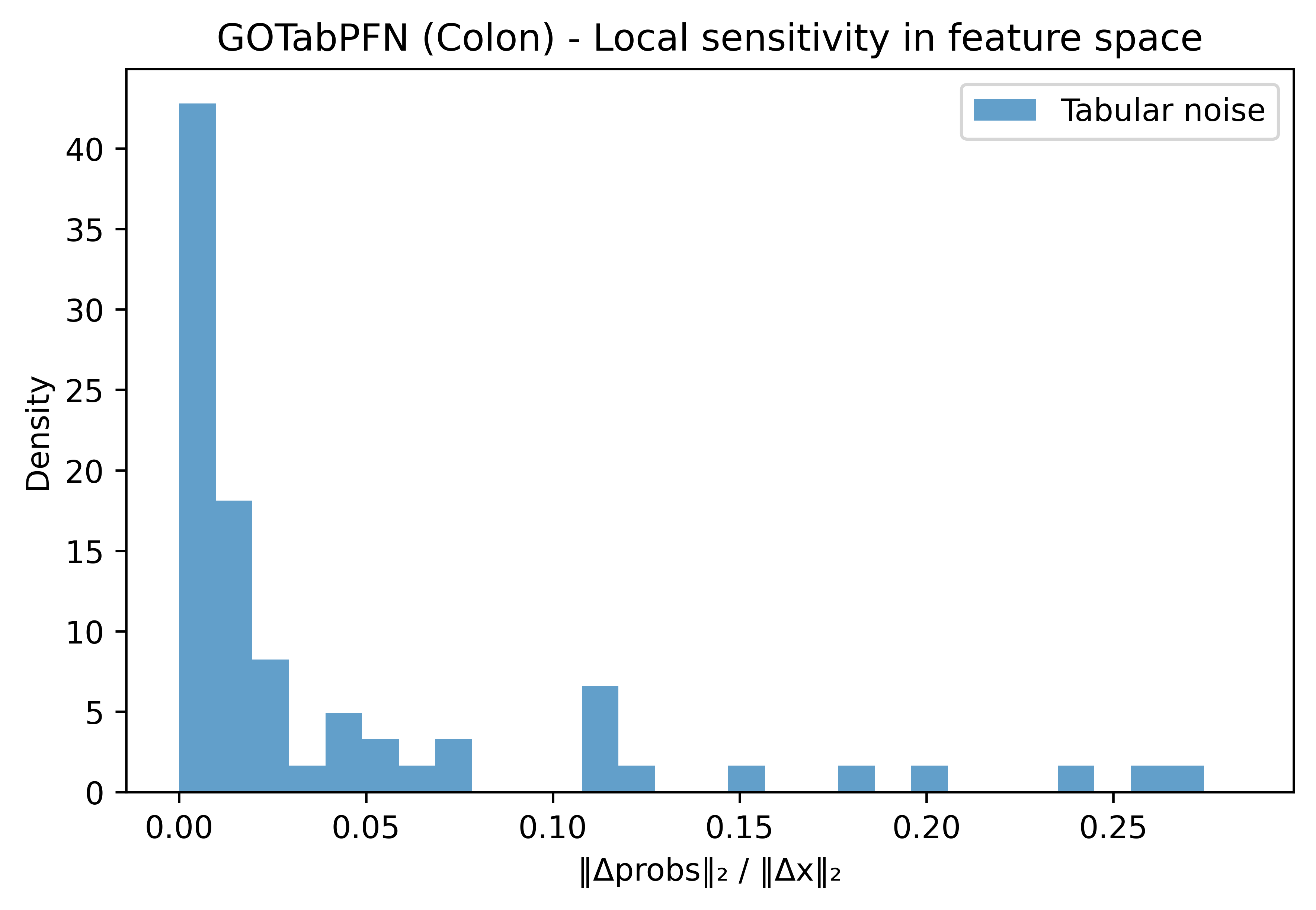}
        \caption{Local sensitivity in feature space.}
        \label{fig:colon_local_sens}
    \end{subfigure}
    \caption{OOD and sensitivity-style diagnostics for GOTabPFN on Colon.
    Confidence/entropy histograms compare in-distribution (ID) inputs with synthetic OOD tabular perturbations (Noise, Const), while the right panel reports a Lipschitz-like local sensitivity score $\|\Delta \mathrm{probs}\|_2 / \|\Delta x\|_2$ under small feature-space noise.}
    \label{fig:colon_ood_sens_row}
\end{figure*}

\renewcommand{\thesection}{Q}
\renewcommand{\thesubsection}{Q.\arabic{subsection}}
\setcounter{figure}{0}\renewcommand{\thefigure}{Q.\arabic{figure}}
\setcounter{table}{0}\renewcommand{\thetable}{Q.\arabic{table}}
\section{Deployment-Oriented Triage Diagnostics}
\label{ab14}
To clarify deployment behavior beyond multiclass accuracy, we cast Colon as a triage task by treating class~1 as the positive (``high-risk'') class and sweeping a decision threshold over the predicted probability $p(y{=}1\mid x)$. As shown in Fig.~\ref{fig:colon_triage_roc}, the resulting ROC achieves $\mathrm{AUC}=1.000$, indicating perfect separability between positives and negatives on this evaluation set ($N=62$). Using a sensitivity-driven operating constraint (target sensitivity $0.95$), the selected threshold is $\text{th}^\ast=0.7885$, which attains sensitivity $=1.000$ and specificity $=1.000$; the induced confusion matrix is $TN{=}40, FP{=}0, FN{=}0, TP{=}22$, yielding $100\%$ precision/recall and $100\%$ binary accuracy at this operating point. Finally, we report wall-clock latency over 20 random mini-batches (batch size 64) with mean $\approx 639$ms per batch (p50 $\approx 638$ms, p90 $\approx 644$ms, p99 $\approx 649$ms), providing a coarse throughput reference for deployment-oriented settings.
\begin{figure}[t]
    \centering
    \includegraphics[width=0.50\linewidth]{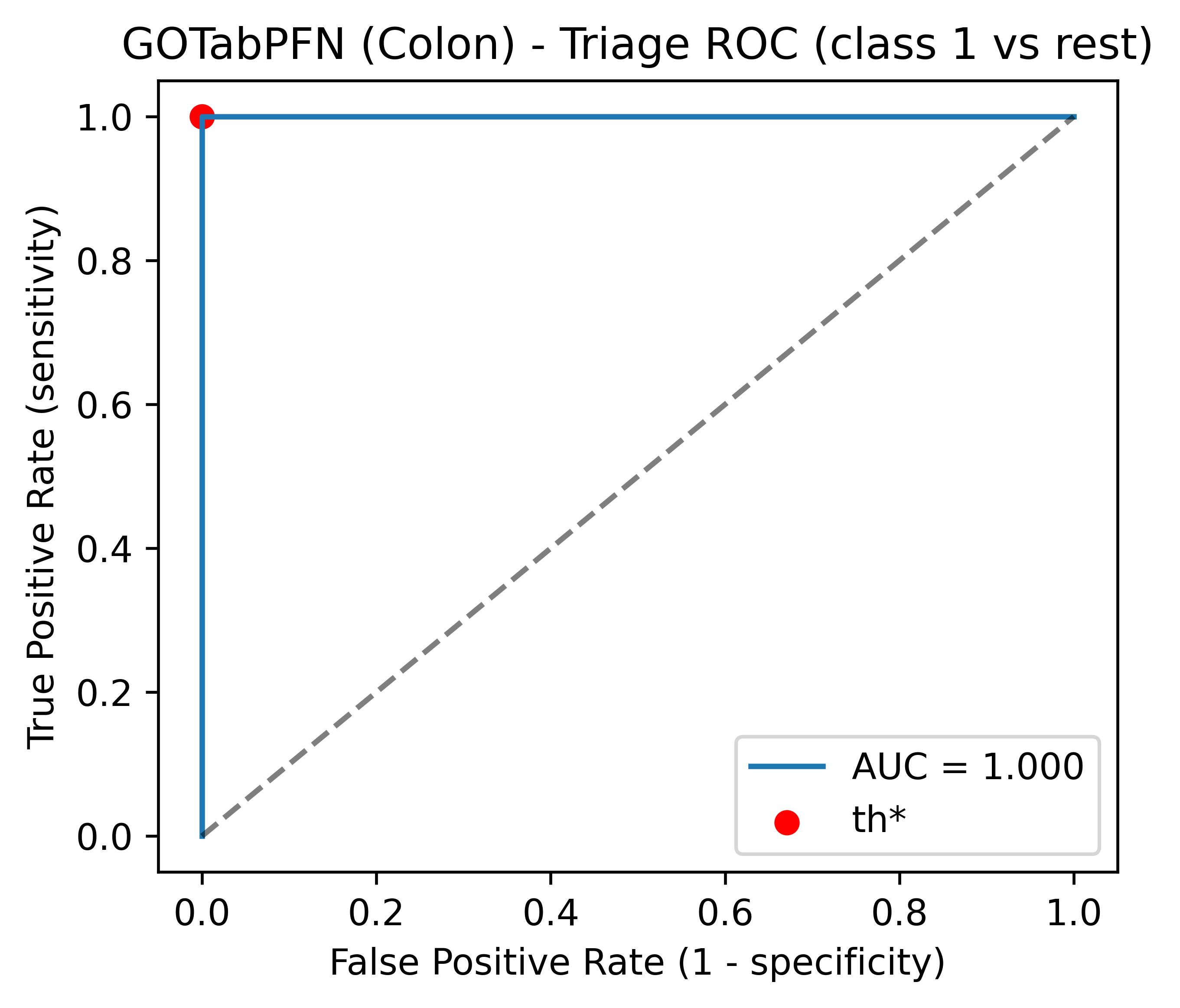}
    \caption{Deployment-style triage diagnostic on Colon (class 1 vs rest).
    ROC curve for treating class 1 as the ``high-risk'' positive class. The marked operating point (th*) corresponds to the selected threshold achieving the target sensitivity criterion, with the best specificity among feasible thresholds.}
    \label{fig:colon_triage_roc}
\end{figure}
\renewcommand{\thesection}{R}
\renewcommand{\thesubsection}{R.\arabic{subsection}}
\setcounter{figure}{0}\renewcommand{\thefigure}{R.\arabic{figure}}
\setcounter{table}{0}\renewcommand{\thetable}{R.\arabic{table}}
\section{Extension beyond TabPFN}
\label{tabicl}
GO-LR + NSC is not tied to TabPFN; it is a model-agnostic representation interface that can be paired with other tabular foundation models. We use TabPFN-2.5~\cite{tabpfn2.5} as the main backbone because its feature-dimensionality bottleneck is especially clear, but the same front-end also transfers to TabICL~\cite{tabicl}, as shown in Table~\ref{tab:tabicl_unified}. Across the same 8 HDLSS datasets, GO-LR + NSC + TabICL improves over vanilla TabICL on 5/8 datasets in accuracy, 7/8 in ROC-AUC, and 5/8 in macro-F1. Importantly, it also reduces runtime on all 8 datasets, with especially large gains on very high-dimensional cases such as GLI, SMK, and ARC. These results suggest that GO-LR + NSC acts as a broader HDLSS-oriented representation layer rather than a TabPFN-specific preprocessing trick.
\begin{table*}[t]
\centering
\caption{
\textbf{GO-LR + NSC as a model-agnostic front-end for TabICL.}
We compare GO-LR + NSC + TabICL against vanilla TabICL on 8 HDLSS datasets under \(5\times5\) CV. Values are mean with subscripted standard deviation for accuracy, ROC-AUC, and macro-F1; runtime is total seconds. Bold denotes the better value between the two methods for each metric/dataset.
}
\label{tab:tabicl_unified}
\vspace{-0.2em}
\small
\setlength{\tabcolsep}{2.0pt}
\renewcommand{\arraystretch}{1.02}
\begin{tabular}{llcccc}
\toprule
\textbf{Dataset} & \textbf{Model} & \textbf{Accuracy} \(\uparrow\) & \textbf{ROC-AUC} \(\uparrow\) & \textbf{Macro-F1} \(\uparrow\) & \textbf{Runtime} \(\downarrow\) \\
\midrule
\multirow{2}{*}{COL}
& GO-LR+NSC+TabICL & \acc{\textbf{86.56}}{\pm10.77} & \acc{\textbf{89.83}}{\pm10.36} & \acc{\textbf{81.36}}{\pm14.87} & \textbf{382.17} \\
& TabICL            & \acc{84.62}{\pm10.52} & \acc{89.38}{\pm10.03} & \acc{78.84}{\pm13.87} & 942.21 \\
\midrule
\multirow{2}{*}{LNG}
& GO-LR+NSC+TabICL & \acc{96.05}{\pm2.58} & \acc{\textbf{99.62}}{\pm0.42} & \acc{92.17}{\pm7.57} & \textbf{382.35} \\
& TabICL            & \acc{\textbf{96.36}}{\pm2.61} & \acc{99.59}{\pm0.45} & \acc{\textbf{93.72}}{\pm6.95} & 2602.42 \\
\midrule
\multirow{2}{*}{GLI}
& GO-LR+NSC+TabICL & \acc{\textbf{87.76}}{\pm7.39} & \acc{\textbf{92.62}}{\pm6.19} & \acc{\textbf{91.45}}{\pm5.15} & \textbf{117.65} \\
& TabICL            & \acc{87.06}{\pm6.23} & \acc{91.47}{\pm6.98} & \acc{90.91}{\pm4.53} & 14598.50 \\
\midrule
\multirow{2}{*}{SMK}
& GO-LR+NSC+TabICL & \acc{\textbf{70.69}}{\pm5.46} & \acc{\textbf{76.00}}{\pm6.95} & \acc{\textbf{73.01}}{\pm5.23} & \textbf{315.10} \\
& TabICL            & \acc{68.73}{\pm7.28} & \acc{73.29}{\pm7.99} & \acc{70.92}{\pm6.73} & 25011.40 \\
\midrule
\multirow{2}{*}{AML}
& GO-LR+NSC+TabICL & \acc{94.78}{\pm6.35} & \acc{\textbf{98.75}}{\pm3.49} & \acc{92.22}{\pm9.42} & \textbf{242.97} \\
& TabICL            & \acc{\textbf{95.52}}{\pm5.59} & \acc{98.27}{\pm3.96} & \acc{\textbf{92.89}}{\pm9.10} & 2847.26 \\
\midrule
\multirow{2}{*}{PRS}
& GO-LR+NSC+TabICL & \acc{88.25}{\pm6.94} & \acc{93.04}{\pm6.17} & \acc{87.87}{\pm8.09} & \textbf{286.36} \\
& TabICL            & \acc{\textbf{90.17}}{\pm5.93} & \acc{\textbf{93.92}}{\pm5.45} & \acc{\textbf{90.12}}{\pm5.80} & 2469.72 \\
\midrule
\multirow{2}{*}{ARC}
& GO-LR+NSC+TabICL & \acc{\textbf{85.20}}{\pm3.88} & \acc{\textbf{93.10}}{\pm2.25} & \acc{\textbf{83.21}}{\pm4.16} & \textbf{358.59} \\
& TabICL            & \acc{82.60}{\pm6.14} & \acc{91.48}{\pm3.76} & \acc{79.61}{\pm6.26} & 7671.89 \\
\midrule
\multirow{2}{*}{TOX}
& GO-LR+NSC+TabICL & \acc{\textbf{90.87}}{\pm5.97} & \acc{\textbf{99.08}}{\pm0.98} & \acc{\textbf{90.95}}{\pm5.92} & \textbf{386.01} \\
& TabICL            & \acc{88.78}{\pm5.92} & \acc{98.62}{\pm1.12} & \acc{88.89}{\pm5.79} & 2900.00 \\
\bottomrule
\end{tabular}
\vspace{-0.8em}
\end{table*}
\renewcommand{\thesection}{S}
\renewcommand{\thesubsection}{S.\arabic{subsection}}
\setcounter{figure}{0}\renewcommand{\thefigure}{S.\arabic{figure}}
\setcounter{table}{0}\renewcommand{\thetable}{S.\arabic{table}}
\section{TabPFN Seed Sensitivity}
\label{tabpfn}
\paragraph{Dataset-seed vs.\ TabPFN-seed robustness.}
We distinguish two sources of randomness in our evaluation. A dataset seed controls the train/validation/test partition or CV folds, and therefore measures data-split robustness: whether conclusions persist across different sampled splits of the same small HDLSS dataset. This is the main source of evaluation variability in our setting, since small tabular datasets can be highly split-sensitive~\cite{tabred,whydo,bouth}; accordingly, our main experiments use repeated \(5\times5\) CV following ProtoGate~\cite{protogate} to average over multiple dataset splits. In contrast, a TabPFN seed controls the \texttt{random\_state} or equivalent stochastic components inside the TabPFN inference/configuration pipeline, and therefore measures model/inference stochasticity while holding the data split fixed; such run and distribution-level variance is also a known concern in neural-network evaluation~\cite{jordan}. Prior Labs provides the official TabPFN classification interface,\footnote{Prior Labs classification documentation: \url{https://docs.priorlabs.ai/capabilities/classification}.} and maintainers discuss fixed-\texttt{random\_state} reproducibility for TabPFN in the implementation repository.\footnote{PriorLabs/TabPFN reproducibility discussion, issue \#266: \url{https://github.com/PriorLabs/TabPFN/issues/266}.} Recent TabPFN studies also report averages over multiple seeds~\cite{ye}. Thus, varying dataset seeds tests evaluation robustness, whereas varying TabPFN seeds isolates model-seed sensitivity. In our main HDLSS experiments, we prioritize repeated \(5\times5\) CV for split robustness and use the Optuna-tuned best TabPFN seed per dataset to avoid underestimating TabPFN from an unfavorable inference seed; fixed-split multi-TabPFN-seed results are supplementary analyses of model-seed variance.\newline
\paragraph{Findings from TabPFN-seed robustness analysis.}
Tables~\ref{tab:avg_acc_tabpfn_seeds}, \ref{tab:rebuttal_avg_acc_seeds}, \ref{tab:acc_all_seeds}, \ref{tab:rebuttal_acc_all_seeds}, and~\ref{tab:auc_all_seeds} show that GOTabPFN remains consistently strong across TabPFN seeds, not only under one favorable random state. On the 8 HDLSS datasets, GOTabPFN achieves the best average accuracy for every tested seed: \(90.57\), \(89.66\), \(89.93\), \(89.79\), and \(89.66\), compared with the strongest competing averages of roughly \(88\)-\(88.5\) from TabPFN-Wide~\cite{tabpfnwide} or TuneTables~\cite{tune}. The detailed per-dataset table further shows that GOTabPFN is especially effective on difficult high-dimensional datasets such as GLI, ARC, SMK, TOX, AML, and LNG, although some baselines occasionally win on individual datasets such as PRS or TOX for particular seeds. On the 8 cross-domain datasets, GOTabPFN is also the strongest complete method across all seeds, with averages around \(86.6\)-\(86.9\), while TabPFN-Wide and TuneTables vary more substantially across seeds. The only higher BETA~\cite{beta} average is reported for seed 42 over 6 datasets only, because Cell Cycle and DrivFace-Regression were omitted due to runtime; therefore, that number is not directly comparable to the complete 8-dataset averages. For ROC-AUC on the HDLSS benchmark, GOTabPFN again achieves the best average in 4 of 5 seeds and is nearly tied with TabPFN-Wide at seed 93 (\(93.71\) vs.\ \(93.74\)), indicating that the gains are not limited to accuracy but also largely persist in ranking quality. Overall, these results distinguish GOTabPFN from other high-dimensional TabPFN variants: TabPFN-Wide, BETA, and TuneTables can process larger feature spaces, but they still rely primarily on the foundation-model predictor or tuning strategy to absorb high-dimensional structure. GOTabPFN instead first reorganizes the feature space through GO-LR and compresses locally coherent neighborhoods through NSC, yielding a compact and more stable representation before the frozen TabPFN-2.5 head. This structured front-end explains why GOTabPFN remains competitive or superior across seeds: it reduces the burden on the predictor, preserves locality among related features, and provides a more robust HDLSS-specific interface than simply widening, tuning, or directly applying TabPFN-style models to high-dimensional inputs.\newline
\paragraph{Pareto frontier of runtime and performance.}
We further evaluate the accuracy-efficiency trade-off among high-dimensionality compatible TabPFN-family methods using mean runtime and mean performance under TabPFN seed 42. As shown in Fig.~\ref{fig:pareto_runtime_perf_seed42}, GOTabPFN lies on the Pareto frontier on both benchmarks. On the original 8 HDLSS datasets, GOTabPFN achieves the highest mean performance while remaining substantially faster than BETA and only moderately slower than TabPFN-Wide and TuneTables, yielding the best overall trade-off among the compared methods. On the additional 8 cross-domain datasets, where both sample size and dimensionality increase beyond the original HDLSS-only benchmark, GOTabPFN again remains Pareto optimal: it achieves the best mean performance while requiring lower runtime than TabPFN-Wide and TuneTables. These results indicate that the GO-LR+NSC front-end does not merely improve accuracy by adding excessive computation; rather, it provides an efficient structured compression interface that improves the performance-runtime balance of TabPFN-style prediction in high-dimensional and larger-sample regimes.\newline
\paragraph{Random seed sensitivity on Colon.}
We further isolate TabPFN inference-seed sensitivity on the Colon dataset by fixing the best GO-LR+NSC configuration from Table~\ref{tab:hdlss-hparams} and varying only the TabPFN inference seed over \(\{0,1,2,3,4,7,11,17,23,42\}\). Preprocessing, GO-LR, NSC, and the exact same \(5\times5\) CV splits are kept fixed, so any variation comes only from the TabPFN seed. As shown in Table~\ref{tab:colon_seed_sensitivity}, the results are tightly clustered: the across-seed mean accuracy is \(87.19\), with an across-seed standard deviation of only \(0.46\) and a full range of \(86.56\)-\(88.18\) percentage points. Thus, while TabPFN seed introduces mild variation, the Colon result is not driven by an exceptionally favorable stochastic realization. This is particularly relevant because the strongest Colon baselines are also TabPFN-family methods: GOTabPFN (\(88.18\)), TabPFN-Wide (\(87.85\)), and TuneTables (\(86.80\)). Under this like-for-like comparison, GOTabPFN remains the best-performing TabPFN-family method, supporting that its gain comes from the GO-LR+NSC representation interface rather than seed luck alone.
\begin{table}[t]
\centering
\caption{\textbf{Average accuracy (\(\uparrow\)) across the 8 HDLSS datasets for different TabPFN seeds.}
Values are mean accuracy with subscripted standard deviation across datasets.}
\label{tab:avg_acc_tabpfn_seeds}
\small
\setlength{\tabcolsep}{3.5pt}
\renewcommand{\arraystretch}{1.02}
\begin{tabular}{lccccc}
\toprule
\textbf{Model} & \textbf{Seed 42} & \textbf{Seed 77} & \textbf{Seed 82} & \textbf{Seed 93} & \textbf{Seed 147} \\
\midrule
TabPFN-Wide 
& \acc{87.51}{\pm6.04}
& \acc{88.43}{\pm4.87}
& \acc{88.33}{\pm5.59}
& \acc{87.59}{\pm5.60}
& \acc{87.69}{\pm5.11} \\
BETA 
& \acc{85.03}{\pm8.70}
& \acc{86.10}{\pm8.33}
& \acc{85.36}{\pm8.36}
& \acc{85.87}{\pm7.54}
& \acc{85.70}{\pm7.65} \\
TuneTables 
& \acc{88.12}{\pm5.72}
& \acc{87.90}{\pm5.65}
& \acc{88.12}{\pm5.70}
& \acc{88.47}{\pm5.08}
& \acc{88.20}{\pm5.21} \\
GOTabPFN 
& \acc{\textbf{90.57}}{\pm5.54}
& \acc{\textbf{89.66}}{\pm5.72}
& \acc{\textbf{89.93}}{\pm5.17}
& \acc{\textbf{89.79}}{\pm5.30}
& \acc{\textbf{89.66}}{\pm5.59} \\
\bottomrule
\end{tabular}
\vspace{-0.8em}
\end{table}
\begin{table}[t]
\centering
\caption{\textbf{TabPFN-seed robustness on 8 cross-domain datasets.}
Average accuracy across the 8 additional cross-domain datasets for different TabPFN seeds. Values are mean with subscripted standard deviation across datasets. BETA is omitted because it was not run on all datasets and required substantially longer runtime.}
\label{tab:rebuttal_avg_acc_seeds}
\small
\setlength{\tabcolsep}{3.5pt}
\renewcommand{\arraystretch}{1.02}
\begin{tabular}{lccccc}
\toprule
\textbf{Model} & \textbf{Seed 42} & \textbf{Seed 77} & \textbf{Seed 82} & \textbf{Seed 93} & \textbf{Seed 147} \\
\midrule
TabPFN-W
& \acc{86.60}{\pm2.95}
& \acc{84.84}{\pm3.35}
& \acc{85.72}{\pm3.04}
& \acc{83.71}{\pm3.24}
& \acc{86.84}{\pm3.10} \\
TuneTables
& \acc{83.65}{\pm2.10}
& \acc{82.43}{\pm2.33}
& \acc{83.19}{\pm2.20}
& \acc{81.68}{\pm2.28}
& \acc{83.60}{\pm2.22} \\
GOTabPFN
& \acc{\textbf{86.85}}{\pm2.55}
& \acc{\textbf{86.58}}{\pm2.85}
& \acc{\textbf{86.94}}{\pm2.50}
& \acc{\textbf{86.90}}{\pm2.55}
& \acc{\textbf{86.88}}{\pm2.66} \\
\bottomrule
\end{tabular}
\vspace{-0.8em}
\end{table}
\begin{figure*}[t]
    \centering
    \includegraphics[width=\textwidth]{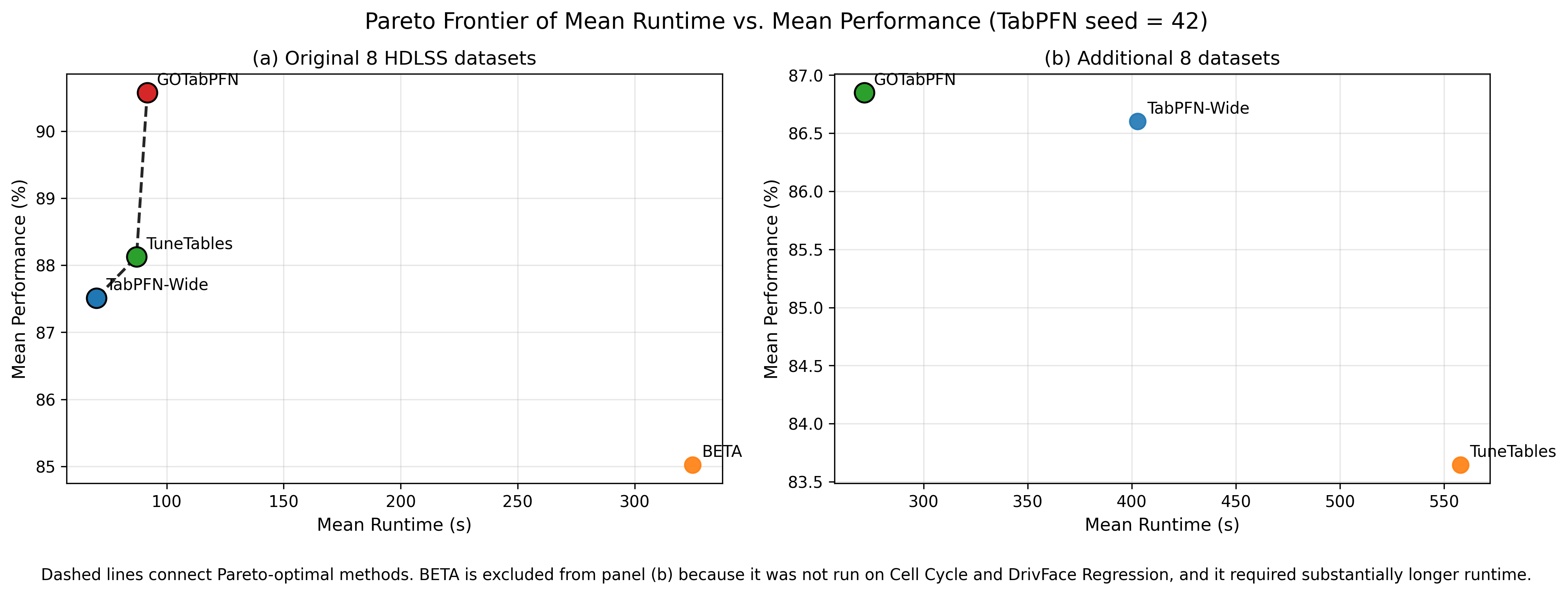}
    \caption{\textbf{Pareto frontier of mean runtime vs.\ mean performance for high-dimensionality compatible TabPFN-family methods.}
    Results use TabPFN seed 42. Lower runtime and higher performance are better. Dashed lines connect Pareto optimal methods. Left: original 8 HDLSS datasets. Right: additional 8 cross-domain datasets. BETA is excluded from the right panel because it was not run on Cell Cycle and DrivFace-Regression and required substantially longer runtime.}
    \label{fig:pareto_runtime_perf_seed42}
    \vspace{-0.8em}
\end{figure*}
\begin{table*}[t]
\centering
\caption{\textbf{Accuracy comparison across 8 HDLSS datasets for different TabPFN seeds.}
Values are mean accuracy with subscripted standard deviation over \(5\times5\) CV. The Avg. column reports the mean across the 8 datasets, with the subscript giving the mean of the corresponding standard deviations.}
\label{tab:acc_all_seeds}
\tiny
\setlength{\tabcolsep}{3.0pt}
\renewcommand{\arraystretch}{1.03}
\resizebox{\textwidth}{!}{%
\begin{tabular}{lccccccccc}
\toprule
\multicolumn{10}{c}{\textbf{Seed 42}} \\
\midrule
\textbf{Model} & \textbf{COL} & \textbf{LNG} & \textbf{AML} & \textbf{TOX} & \textbf{PRS} & \textbf{ARC} & \textbf{SMK} & \textbf{GLI} & \textbf{Avg.} \\
\midrule
TabPFN-Wide
& \acc{87.05}{\pm7.44}
& \acc{96.04}{\pm2.85}
& \acc{95.90}{\pm3.74}
& \acc{84.15}{\pm6.17}
& \acc{92.19}{\pm7.21}
& \acc{86.00}{\pm2.85}
& \acc{70.53}{\pm9.73}
& \acc{88.24}{\pm8.32}
& \acc{87.51}{\pm6.04} \\
BETA
& \acc{84.80}{\pm13.01}
& \acc{95.00}{\pm3.41}
& \acc{89.40}{\pm5.82}
& \acc{83.20}{\pm12.73}
& \acc{88.40}{\pm9.89}
& \acc{84.50}{\pm6.20}
& \acc{69.80}{\pm10.10}
& \acc{85.10}{\pm8.40}
& \acc{85.03}{\pm8.70} \\
TuneTables
& \acc{86.20}{\pm6.80}
& \acc{96.30}{\pm2.50}
& \acc{94.10}{\pm5.20}
& \acc{92.10}{\pm4.80}
& \acc{91.10}{\pm6.30}
& \acc{85.40}{\pm3.90}
& \acc{71.20}{\pm8.80}
& \acc{88.60}{\pm7.50}
& \acc{88.12}{\pm5.72} \\
GOTabPFN
& \acc{\textbf{88.18}}{\pm10.05}
& \acc{\textbf{97.44}}{\pm2.32}
& \acc{\textbf{97.54}}{\pm4.31}
& \acc{\textbf{92.75}}{\pm4.07}
& \acc{91.95}{\pm7.38}
& \acc{\textbf{90.30}}{\pm3.70}
& \acc{\textbf{73.61}}{\pm5.71}
& \acc{\textbf{92.82}}{\pm6.81}
& \acc{\textbf{90.57}}{\pm5.54} \\
\midrule

\multicolumn{10}{c}{\textbf{Seed 77}} \\
\midrule
TabPFN-Wide
& \acc{85.38}{\pm7.07}
& \acc{96.55}{\pm2.85}
& \acc{95.81}{\pm3.83}
& \acc{90.10}{\pm5.94}
& \acc{\textbf{92.24}}{\pm5.44}
& \acc{88.50}{\pm4.87}
& \acc{70.60}{\pm3.10}
& \acc{88.24}{\pm5.88}
& \acc{88.43}{\pm4.87} \\
BETA
& \acc{83.00}{\pm10.37}
& \acc{94.20}{\pm2.93}
& \acc{91.40}{\pm8.52}
& \acc{88.00}{\pm12.46}
& \acc{87.40}{\pm9.77}
& \acc{85.20}{\pm7.10}
& \acc{70.10}{\pm6.30}
& \acc{89.50}{\pm9.20}
& \acc{86.10}{\pm8.33} \\
TuneTables
& \acc{84.90}{\pm8.50}
& \acc{96.10}{\pm2.90}
& \acc{94.80}{\pm5.40}
& \acc{91.20}{\pm4.50}
& \acc{90.90}{\pm5.80}
& \acc{86.40}{\pm5.10}
& \acc{70.80}{\pm5.60}
& \acc{88.10}{\pm7.40}
& \acc{87.90}{\pm5.65} \\
GOTabPFN
& \acc{\textbf{87.21}}{\pm9.28}
& \acc{\textbf{97.33}}{\pm2.38}
& \acc{\textbf{96.99}}{\pm5.18}
& \acc{\textbf{92.04}}{\pm4.91}
& \acc{89.39}{\pm6.94}
& \acc{\textbf{90.10}}{\pm3.57}
& \acc{\textbf{74.11}}{\pm6.37}
& \acc{\textbf{90.12}}{\pm7.15}
& \acc{\textbf{89.66}}{\pm5.72} \\
\midrule

\multicolumn{10}{c}{\textbf{Seed 82}} \\
\midrule
TabPFN-Wide
& \acc{86.92}{\pm11.33}
& \acc{97.05}{\pm3.20}
& \acc{95.71}{\pm3.91}
& \acc{88.89}{\pm5.25}
& \acc{\textbf{91.19}}{\pm3.99}
& \acc{88.00}{\pm3.26}
& \acc{70.63}{\pm6.55}
& \acc{88.24}{\pm7.20}
& \acc{88.33}{\pm5.59} \\
BETA
& \acc{82.80}{\pm13.01}
& \acc{94.80}{\pm1.83}
& \acc{90.60}{\pm9.99}
& \acc{86.20}{\pm10.96}
& \acc{86.40}{\pm9.48}
& \acc{84.80}{\pm6.80}
& \acc{70.40}{\pm7.20}
& \acc{86.90}{\pm7.60}
& \acc{85.36}{\pm8.36} \\
TuneTables
& \acc{85.10}{\pm9.40}
& \acc{96.80}{\pm3.20}
& \acc{94.30}{\pm6.10}
& \acc{92.80}{\pm4.90}
& \acc{90.20}{\pm4.70}
& \acc{85.90}{\pm4.80}
& \acc{71.50}{\pm5.90}
& \acc{88.40}{\pm6.60}
& \acc{88.12}{\pm5.70} \\
GOTabPFN
& \acc{\textbf{87.21}}{\pm9.06}
& \acc{\textbf{97.34}}{\pm2.15}
& \acc{\textbf{98.10}}{\pm3.67}
& \acc{\textbf{93.10}}{\pm4.03}
& \acc{90.19}{\pm6.19}
& \acc{\textbf{89.60}}{\pm3.59}
& \acc{\textbf{73.58}}{\pm6.59}
& \acc{\textbf{90.35}}{\pm6.09}
& \acc{\textbf{89.93}}{\pm5.17} \\
\midrule

\multicolumn{10}{c}{\textbf{Seed 93}} \\
\midrule
TabPFN-Wide
& \acc{84.10}{\pm10.90}
& \acc{96.06}{\pm1.34}
& \acc{97.14}{\pm3.91}
& \acc{90.62}{\pm3.87}
& \acc{\textbf{91.16}}{\pm3.37}
& \acc{85.50}{\pm6.22}
& \acc{70.00}{\pm11.73}
& \acc{86.12}{\pm3.42}
& \acc{87.59}{\pm5.60} \\
BETA
& \acc{82.80}{\pm13.01}
& \acc{95.20}{\pm2.80}
& \acc{94.00}{\pm5.10}
& \acc{86.20}{\pm9.93}
& \acc{87.20}{\pm8.35}
& \acc{84.90}{\pm7.00}
& \acc{70.30}{\pm8.50}
& \acc{86.34}{\pm5.65}
& \acc{85.87}{\pm7.54} \\
TuneTables
& \acc{86.40}{\pm7.90}
& \acc{96.30}{\pm2.10}
& \acc{95.60}{\pm4.30}
& \acc{\textbf{94.50}}{\pm3.80}
& \acc{91.30}{\pm5.20}
& \acc{84.80}{\pm5.10}
& \acc{71.00}{\pm6.80}
& \acc{87.90}{\pm5.40}
& \acc{88.47}{\pm5.08} \\
GOTabPFN
& \acc{\textbf{87.85}}{\pm9.04}
& \acc{\textbf{97.34}}{\pm2.58}
& \acc{\textbf{97.52}}{\pm3.88}
& \acc{93.57}{\pm4.38}
& \acc{90.16}{\pm6.69}
& \acc{\textbf{89.80}}{\pm3.88}
& \acc{\textbf{72.42}}{\pm5.73}
& \acc{\textbf{89.65}}{\pm6.19}
& \acc{\textbf{89.79}}{\pm5.30} \\
\midrule

\multicolumn{10}{c}{\textbf{Seed 147}} \\
\midrule
TabPFN-Wide
& \acc{83.97}{\pm5.25}
& \acc{96.56}{\pm2.20}
& \acc{95.81}{\pm3.83}
& \acc{87.71}{\pm4.38}
& \acc{\textbf{91.34}}{\pm3.82}
& \acc{89.50}{\pm2.09}
& \acc{68.42}{\pm12.07}
& \acc{88.24}{\pm7.20}
& \acc{87.69}{\pm5.11} \\
BETA
& \acc{80.20}{\pm11.60}
& \acc{95.00}{\pm3.41}
& \acc{93.00}{\pm4.43}
& \acc{86.54}{\pm10.30}
& \acc{88.60}{\pm8.05}
& \acc{85.10}{\pm6.50}
& \acc{69.90}{\pm9.80}
& \acc{87.30}{\pm7.10}
& \acc{85.70}{\pm7.65} \\
TuneTables
& \acc{83.20}{\pm5.90}
& \acc{96.20}{\pm2.30}
& \acc{94.40}{\pm5.60}
& \acc{\textbf{93.80}}{\pm3.90}
& \acc{90.80}{\pm4.20}
& \acc{86.70}{\pm4.40}
& \acc{71.60}{\pm9.30}
& \acc{88.90}{\pm6.10}
& \acc{88.20}{\pm5.21} \\
GOTabPFN
& \acc{\textbf{88.49}}{\pm9.60}
& \acc{\textbf{97.14}}{\pm2.12}
& \acc{\textbf{97.28}}{\pm4.77}
& \acc{92.28}{\pm4.85}
& \acc{89.18}{\pm6.98}
& \acc{\textbf{90.10}}{\pm3.78}
& \acc{\textbf{72.94}}{\pm6.36}
& \acc{\textbf{89.88}}{\pm6.24}
& \acc{\textbf{89.66}}{\pm5.59} \\
\bottomrule
\end{tabular}%
}
\vspace{-0.8em}
\end{table*}
\begin{table*}[t]
\centering
\caption{\textbf{Accuracy comparison across 8 cross-domain datasets for different TabPFN seeds.}
Values are mean accuracy with subscripted standard deviation over \(5\times5\) CV. The Avg. column reports the mean across datasets within each seed block, with the subscript giving the mean of the corresponding standard deviations. BETA is reported only for seed 42 and its average is computed over 6 datasets because Cell Cycle and DrivFace-Regression were omitted due to substantially longer runtime.}
\label{tab:rebuttal_acc_all_seeds}
\tiny
\setlength{\tabcolsep}{3.0pt}
\renewcommand{\arraystretch}{1.03}
\resizebox{\textwidth}{!}{%
\begin{tabular}{lccccccccc}
\toprule
\multicolumn{10}{c}{\textbf{Seed 42}} \\
\midrule
\textbf{Model} & \textbf{ORL} & \textbf{BAS} & \textbf{REL} & \textbf{PCM} & \textbf{CCY} & \textbf{CIF} & \textbf{DF-R} & \textbf{DF-C} & \textbf{Avg.} \\
\midrule
TabPFN-W
& \acc{95.80}{\pm6.30}
& \acc{96.70}{\pm0.70}
& \acc{\textbf{89.20}}{\pm3.60}
& \acc{88.65}{\pm0.60}
& \acc{77.90}{\pm2.30}
& \acc{88.20}{\pm0.60}
& \acc{\textbf{70.10}}{\pm7.00}
& \acc{86.27}{\pm2.50}
& \acc{86.60}{\pm2.95} \\
TuneTables
& \acc{92.40}{\pm1.10}
& \acc{92.90}{\pm1.00}
& \acc{89.10}{\pm2.10}
& \acc{83.87}{\pm2.80}
& \acc{77.50}{\pm1.20}
& \acc{78.20}{\pm0.20}
& \acc{68.90}{\pm6.20}
& \acc{86.30}{\pm2.20}
& \acc{83.65}{\pm2.10} \\
BETA
& \acc{93.00}{\pm5.10}
& \acc{96.80}{\pm1.17}
& \acc{\textbf{89.20}}{\pm2.04}
& \acc{89.13}{\pm1.94}
& -
& \acc{88.20}{\pm0.75}
& -
& \acc{80.60}{\pm1.36}
& \acc{89.49}{\pm2.06}\textsuperscript{\(\dagger\)} \\
GOTabPFN
& \acc{\textbf{100.00}}{\pm0.00}
& \acc{\textbf{97.04}}{\pm1.11}
& \acc{88.79}{\pm1.37}
& \acc{\textbf{89.28}}{\pm2.25}
& \acc{\textbf{79.34}}{\pm2.68}
& \acc{\textbf{88.50}}{\pm0.93}
& \acc{65.51}{\pm9.83}
& \acc{\textbf{86.34}}{\pm2.22}
& \acc{\textbf{86.85}}{\pm2.55} \\
\midrule

\multicolumn{10}{c}{\textbf{Seed 77}} \\
\midrule
TabPFN-W
& \acc{96.30}{\pm6.90}
& \acc{96.50}{\pm0.80}
& \acc{85.60}{\pm4.10}
& \acc{89.30}{\pm0.80}
& \acc{\textbf{79.40}}{\pm2.60}
& \acc{\textbf{87.50}}{\pm0.50}
& \acc{60.20}{\pm8.20}
& \acc{83.90}{\pm2.90}
& \acc{84.84}{\pm3.35} \\
TuneTables
& \acc{94.80}{\pm1.00}
& \acc{93.80}{\pm1.10}
& \acc{86.20}{\pm2.50}
& \acc{86.90}{\pm3.20}
& \acc{79.20}{\pm1.00}
& \acc{77.60}{\pm0.10}
& \acc{58.20}{\pm7.10}
& \acc{82.70}{\pm2.60}
& \acc{82.43}{\pm2.33} \\
GOTabPFN
& \acc{\textbf{100.00}}{\pm0.00}
& \acc{\textbf{97.05}}{\pm1.13}
& \acc{\textbf{88.56}}{\pm1.60}
& \acc{\textbf{89.52}}{\pm2.35}
& \acc{79.57}{\pm2.85}
& \acc{86.27}{\pm2.39}
& \acc{\textbf{65.39}}{\pm10.08}
& \acc{\textbf{86.27}}{\pm2.39}
& \acc{\textbf{86.58}}{\pm2.85} \\
\midrule

\multicolumn{10}{c}{\textbf{Seed 82}} \\
\midrule
TabPFN-W
& \acc{94.60}{\pm6.40}
& \acc{\textbf{97.80}}{\pm0.70}
& \acc{88.40}{\pm3.80}
& \acc{87.70}{\pm0.70}
& \acc{76.80}{\pm2.20}
& \acc{\textbf{88.90}}{\pm0.60}
& \acc{65.50}{\pm7.50}
& \acc{86.10}{\pm2.40}
& \acc{85.72}{\pm3.04} \\
TuneTables
& \acc{93.10}{\pm1.10}
& \acc{93.20}{\pm1.00}
& \acc{88.70}{\pm2.20}
& \acc{84.50}{\pm3.00}
& \acc{78.40}{\pm1.20}
& \acc{78.40}{\pm0.20}
& \acc{64.10}{\pm6.60}
& \acc{85.10}{\pm2.30}
& \acc{83.19}{\pm2.20} \\
GOTabPFN
& \acc{\textbf{100.00}}{\pm0.00}
& \acc{96.99}{\pm1.01}
& \acc{\textbf{88.80}}{\pm1.62}
& \acc{\textbf{89.63}}{\pm2.05}
& \acc{\textbf{79.32}}{\pm2.34}
& \acc{88.66}{\pm0.81}
& \acc{\textbf{65.66}}{\pm9.60}
& \acc{\textbf{86.44}}{\pm2.53}
& \acc{\textbf{86.94}}{\pm2.50} \\
\midrule

\multicolumn{10}{c}{\textbf{Seed 93}} \\
\midrule
TabPFN-W
& \acc{90.20}{\pm6.80}
& \acc{96.90}{\pm0.60}
& \acc{84.90}{\pm4.00}
& \acc{88.80}{\pm0.70}
& \acc{\textbf{81.30}}{\pm2.50}
& \acc{87.80}{\pm0.50}
& \acc{57.10}{\pm8.00}
& \acc{82.70}{\pm2.80}
& \acc{83.71}{\pm3.24} \\
TuneTables
& \acc{91.90}{\pm1.00}
& \acc{92.70}{\pm1.10}
& \acc{87.30}{\pm2.40}
& \acc{82.80}{\pm3.10}
& \acc{76.90}{\pm1.10}
& \acc{77.80}{\pm0.10}
& \acc{60.10}{\pm6.90}
& \acc{83.90}{\pm2.50}
& \acc{81.68}{\pm2.28} \\
GOTabPFN
& \acc{\textbf{100.00}}{\pm0.00}
& \acc{\textbf{97.09}}{\pm1.02}
& \acc{\textbf{88.63}}{\pm1.41}
& \acc{\textbf{89.52}}{\pm2.02}
& \acc{79.44}{\pm2.67}
& \acc{\textbf{88.38}}{\pm0.89}
& \acc{\textbf{65.62}}{\pm9.69}
& \acc{\textbf{86.54}}{\pm2.70}
& \acc{\textbf{86.90}}{\pm2.55} \\
\midrule

\multicolumn{10}{c}{\textbf{Seed 147}} \\
\midrule
TabPFN-W
& \acc{97.60}{\pm6.20}
& \acc{96.70}{\pm0.70}
& \acc{\textbf{91.10}}{\pm3.70}
& \acc{88.90}{\pm0.70}
& \acc{77.90}{\pm2.40}
& \acc{87.80}{\pm0.60}
& \acc{\textbf{68.60}}{\pm7.90}
& \acc{86.10}{\pm2.60}
& \acc{86.84}{\pm3.10} \\
TuneTables
& \acc{93.50}{\pm1.00}
& \acc{93.10}{\pm1.00}
& \acc{90.00}{\pm2.30}
& \acc{84.10}{\pm2.90}
& \acc{79.50}{\pm1.20}
& \acc{78.30}{\pm0.20}
& \acc{65.30}{\pm6.80}
& \acc{85.00}{\pm2.40}
& \acc{83.60}{\pm2.22} \\
GOTabPFN
& \acc{\textbf{99.80}}{\pm1.00}
& \acc{\textbf{97.08}}{\pm1.08}
& \acc{88.75}{\pm1.78}
& \acc{\textbf{89.31}}{\pm2.12}
& \acc{\textbf{79.64}}{\pm2.58}
& \acc{\textbf{88.34}}{\pm0.74}
& \acc{65.55}{\pm9.69}
& \acc{\textbf{86.54}}{\pm2.33}
& \acc{\textbf{86.88}}{\pm2.66} \\
\bottomrule
\end{tabular}%
}
\vspace{0.25em}

\footnotesize
\(\dagger\) For BETA at seed 42, Cell Cycle and DrivFace-Regression are unavailable because the runs were prohibitively time-consuming; its average is computed over the remaining 6 datasets.
\vspace{-0.8em}
\end{table*}
\begin{table*}[t]
\centering
\caption{\textbf{ROC-AUC comparison across 8 HDLSS datasets for different TabPFN seeds.}
Values are mean ROC-AUC with subscripted standard deviation over \(5\times5\) CV. The Avg. column reports the mean across the 8 datasets, with the subscript giving the mean of the corresponding standard deviations.}
\label{tab:auc_all_seeds}
\tiny
\setlength{\tabcolsep}{3.0pt}
\renewcommand{\arraystretch}{1.03}
\resizebox{\textwidth}{!}{%
\begin{tabular}{lccccccccc}
\toprule
\multicolumn{10}{c}{\textbf{Seed 42}} \\
\midrule
\textbf{Model} & \textbf{COL} & \textbf{LNG} & \textbf{AML} & \textbf{TOX} & \textbf{PRS} & \textbf{ARC} & \textbf{SMK} & \textbf{GLI} & \textbf{Avg.} \\
\midrule
TabPFN-Wide
& \acc{88.25}{\pm6.27}
& \acc{99.49}{\pm0.55}
& \acc{98.40}{\pm3.58}
& \acc{97.72}{\pm1.77}
& \acc{\textbf{96.07}}{\pm5.63}
& \acc{93.56}{\pm2.63}
& \acc{77.97}{\pm8.85}
& \acc{\textbf{94.76}}{\pm4.93}
& \acc{93.28}{\pm4.28} \\
BETA
& \acc{\textbf{92.60}}{\pm5.54}
& \acc{99.00}{\pm1.10}
& \acc{94.40}{\pm3.38}
& \acc{96.60}{\pm2.94}
& \acc{95.20}{\pm4.90}
& \acc{91.80}{\pm3.40}
& \acc{76.10}{\pm8.20}
& \acc{93.40}{\pm6.10}
& \acc{92.39}{\pm4.44} \\
TuneTables
& \acc{90.10}{\pm6.20}
& \acc{99.10}{\pm0.80}
& \acc{97.20}{\pm3.10}
& \acc{98.80}{\pm1.20}
& \acc{95.80}{\pm4.90}
& \acc{92.10}{\pm3.10}
& \acc{76.80}{\pm8.00}
& \acc{94.20}{\pm6.20}
& \acc{93.01}{\pm4.19} \\
GOTabPFN
& \acc{91.40}{\pm10.12}
& \acc{\textbf{99.58}}{\pm0.60}
& \acc{\textbf{99.44}}{\pm1.47}
& \acc{\textbf{99.23}}{\pm0.63}
& \acc{94.32}{\pm5.39}
& \acc{\textbf{95.85}}{\pm2.29}
& \acc{\textbf{80.92}}{\pm5.63}
& \acc{90.97}{\pm7.09}
& \acc{\textbf{93.96}}{\pm4.15} \\
\midrule

\multicolumn{10}{c}{\textbf{Seed 77}} \\
\midrule
TabPFN-Wide
& \acc{86.75}{\pm11.31}
& \acc{\textbf{99.84}}{\pm0.27}
& \acc{\textbf{99.60}}{\pm0.89}
& \acc{98.21}{\pm1.89}
& \acc{\textbf{97.13}}{\pm3.14}
& \acc{95.08}{\pm4.13}
& \acc{76.03}{\pm7.48}
& \acc{91.39}{\pm8.83}
& \acc{93.00}{\pm4.74} \\
BETA
& \acc{89.80}{\pm7.49}
& \acc{99.40}{\pm0.49}
& \acc{99.20}{\pm1.60}
& \acc{96.80}{\pm2.71}
& \acc{94.80}{\pm4.70}
& \acc{91.20}{\pm4.60}
& \acc{75.20}{\pm7.50}
& \acc{\textbf{92.30}}{\pm7.90}
& \acc{92.34}{\pm4.62} \\
TuneTables
& \acc{89.20}{\pm9.80}
& \acc{99.60}{\pm0.60}
& \acc{99.10}{\pm1.40}
& \acc{98.90}{\pm1.50}
& \acc{95.90}{\pm4.20}
& \acc{92.80}{\pm4.10}
& \acc{75.90}{\pm7.30}
& \acc{92.60}{\pm7.40}
& \acc{93.00}{\pm4.54} \\
GOTabPFN
& \acc{\textbf{91.75}}{\pm9.86}
& \acc{99.65}{\pm0.50}
& \acc{99.12}{\pm2.39}
& \acc{\textbf{99.18}}{\pm0.72}
& \acc{93.93}{\pm5.13}
& \acc{\textbf{95.82}}{\pm2.26}
& \acc{\textbf{81.03}}{\pm5.66}
& \acc{91.07}{\pm6.94}
& \acc{\textbf{93.94}}{\pm4.18} \\
\midrule

\multicolumn{10}{c}{\textbf{Seed 82}} \\
\midrule
TabPFN-Wide
& \acc{83.38}{\pm15.27}
& \acc{\textbf{99.75}}{\pm0.29}
& \acc{\textbf{99.56}}{\pm0.99}
& \acc{97.83}{\pm1.64}
& \acc{\textbf{95.78}}{\pm4.29}
& \acc{\textbf{96.61}}{\pm2.55}
& \acc{78.74}{\pm4.11}
& \acc{\textbf{95.21}}{\pm6.63}
& \acc{93.36}{\pm4.47} \\
BETA
& \acc{90.40}{\pm6.62}
& \acc{99.20}{\pm0.75}
& \acc{95.80}{\pm5.15}
& \acc{96.80}{\pm2.71}
& \acc{95.10}{\pm3.90}
& \acc{92.00}{\pm2.80}
& \acc{78.20}{\pm4.80}
& \acc{92.80}{\pm6.20}
& \acc{92.54}{\pm4.12} \\
TuneTables
& \acc{88.90}{\pm7.20}
& \acc{99.70}{\pm0.40}
& \acc{98.90}{\pm1.80}
& \acc{99.10}{\pm1.10}
& \acc{95.60}{\pm3.50}
& \acc{93.40}{\pm2.70}
& \acc{\textbf{79.10}}{\pm4.50}
& \acc{93.40}{\pm6.40}
& \acc{93.51}{\pm3.45} \\
GOTabPFN
& \acc{\textbf{91.03}}{\pm10.30}
& \acc{99.51}{\pm0.84}
& \acc{99.04}{\pm2.52}
& \acc{\textbf{99.32}}{\pm0.68}
& \acc{94.20}{\pm5.32}
& \acc{95.84}{\pm2.22}
& \acc{79.06}{\pm5.57}
& \acc{90.47}{\pm7.82}
& \acc{\textbf{93.56}}{\pm4.41} \\
\midrule

\multicolumn{10}{c}{\textbf{Seed 93}} \\
\midrule
TabPFN-Wide
& \acc{\textbf{91.13}}{\pm10.24}
& \acc{\textbf{99.65}}{\pm0.46}
& \acc{\textbf{99.56}}{\pm0.99}
& \acc{97.85}{\pm1.70}
& \acc{\textbf{97.85}}{\pm3.48}
& \acc{93.86}{\pm5.30}
& \acc{76.68}{\pm5.77}
& \acc{\textbf{93.33}}{\pm7.36}
& \acc{\textbf{93.74}}{\pm4.41} \\
BETA
& \acc{90.40}{\pm8.01}
& \acc{99.20}{\pm0.75}
& \acc{96.40}{\pm2.65}
& \acc{96.00}{\pm2.61}
& \acc{95.30}{\pm4.10}
& \acc{91.00}{\pm3.90}
& \acc{75.50}{\pm6.90}
& \acc{92.50}{\pm7.20}
& \acc{92.04}{\pm4.52} \\
TuneTables
& \acc{90.50}{\pm10.20}
& \acc{99.50}{\pm0.50}
& \acc{98.90}{\pm1.60}
& \acc{99.00}{\pm1.30}
& \acc{96.20}{\pm4.00}
& \acc{92.40}{\pm3.50}
& \acc{75.80}{\pm6.40}
& \acc{92.90}{\pm7.00}
& \acc{93.15}{\pm4.31} \\
GOTabPFN
& \acc{90.68}{\pm10.34}
& \acc{99.61}{\pm0.60}
& \acc{99.28}{\pm2.07}
& \acc{\textbf{99.24}}{\pm0.76}
& \acc{94.28}{\pm5.41}
& \acc{\textbf{95.67}}{\pm2.47}
& \acc{\textbf{79.93}}{\pm5.40}
& \acc{90.98}{\pm6.88}
& \acc{93.71}{\pm4.24} \\
\midrule

\multicolumn{10}{c}{\textbf{Seed 147}} \\
\midrule
TabPFN-Wide
& \acc{83.87}{\pm11.80}
& \acc{99.44}{\pm0.42}
& \acc{\textbf{99.56}}{\pm0.99}
& \acc{98.23}{\pm1.44}
& \acc{\textbf{96.42}}{\pm3.44}
& \acc{\textbf{96.11}}{\pm1.51}
& \acc{\textbf{79.18}}{\pm12.89}
& \acc{\textbf{95.36}}{\pm5.21}
& \acc{93.52}{\pm4.71} \\
BETA
& \acc{90.20}{\pm6.01}
& \acc{99.40}{\pm0.49}
& \acc{97.80}{\pm4.40}
& \acc{96.80}{\pm3.12}
& \acc{95.60}{\pm3.80}
& \acc{92.40}{\pm3.10}
& \acc{78.50}{\pm12.10}
& \acc{93.20}{\pm5.90}
& \acc{92.99}{\pm4.86} \\
TuneTables
& \acc{88.10}{\pm11.40}
& \acc{99.30}{\pm0.60}
& \acc{98.70}{\pm2.20}
& \acc{\textbf{99.20}}{\pm1.10}
& \acc{95.90}{\pm3.60}
& \acc{93.10}{\pm3.20}
& \acc{78.90}{\pm11.80}
& \acc{93.80}{\pm5.50}
& \acc{93.38}{\pm4.93} \\
GOTabPFN
& \acc{\textbf{91.23}}{\pm10.75}
& \acc{\textbf{99.57}}{\pm0.56}
& \acc{99.20}{\pm2.24}
& \acc{99.08}{\pm0.99}
& \acc{93.88}{\pm5.35}
& \acc{95.55}{\pm2.47}
& \acc{79.64}{\pm5.94}
& \acc{90.92}{\pm7.02}
& \acc{\textbf{93.63}}{\pm4.42} \\
\bottomrule
\end{tabular}%
}
\vspace{-0.8em}
\end{table*}
\begin{table*}[t]
\centering
\caption{\textbf{TabPFN inference-seed sensitivity of GOTabPFN on Colon.}
GO-LR, NSC, preprocessing, and the exact same \(5\times5\) CV splits are fixed; only the TabPFN inference seed is varied. Values are mean accuracy with subscripted standard deviation over \(5\times5\) CV.}
\label{tab:colon_seed_sensitivity}
\scriptsize
\setlength{\tabcolsep}{4pt}
\renewcommand{\arraystretch}{1.03}
\begin{tabular}{lcccccccccc}
\toprule
\textbf{Seed} & \textbf{0} & \textbf{1} & \textbf{2} & \textbf{3} & \textbf{4} & \textbf{7} & \textbf{11} & \textbf{17} & \textbf{23} & \textbf{42} \\
\midrule
GOTabPFN
& \acc{87.51}{\pm8.72}
& \acc{86.56}{\pm9.25}
& \acc{87.18}{\pm8.72}
& \acc{86.90}{\pm8.34}
& \acc{86.87}{\pm8.71}
& \acc{86.85}{\pm8.36}
& \acc{87.15}{\pm9.34}
& \acc{87.51}{\pm9.36}
& \acc{87.21}{\pm8.73}
& \acc{\textbf{88.18}}{\pm10.05} \\
\midrule
Across seeds
& \multicolumn{10}{c}{Mean \(=87.19\), std. \(=0.46\), range \(=86.56\)-\(88.18\) \((1.62\) pp)} \\
\bottomrule
\end{tabular}
\vspace{-0.8em}
\end{table*}
\renewcommand{\thesection}{T}
\renewcommand{\thesubsection}{T.\arabic{subsection}}
\setcounter{figure}{0}\renewcommand{\thefigure}{T.\arabic{figure}}
\setcounter{table}{0}\renewcommand{\thetable}{T.\arabic{table}}
\section{Additional Clarifications}
\label{addc}
\paragraph{Clarity of graph-based feature ordering.}
To make the graph-based ordering step easier to follow, the main paper introduces GO-LR with an intuitive figure before the formal MinLA-based development. Here, we provide additional clarification. The goal of GO-LR is to place statistically related features close to one another on a one-dimensional axis, so that subsequent NSC segmentation groups coherent neighborhoods rather than arbitrary columns. Equivalently, GO-LR treats features as nodes in a weighted feature graph, where stronger edges indicate stronger feature relationships, and seeks a linear arrangement in which strongly connected nodes remain nearby. For example, if \(f_1\) is strongly related to both \(f_3\) and \(f_4\), while \(f_5\) is comparatively independent, an ordering such as \([f_3,f_1,f_4,f_2,f_5]\) is preferable to \([f_1,f_2,f_3,f_4,f_5]\), because the related features become contiguous and can be compressed more meaningfully. This intuition is illustrated in Fig.~\ref{fig:fo_intuition} in the main paper. Formally, this corresponds to a Minimum Linear Arrangement (MinLA)-style objective that penalizes placing strongly related features far apart. Since exact optimization is combinatorial and intractable at scale, GO-LR uses a TSP-style nearest-neighbor path as an efficient initialization and then applies local refinement under the MinLA-style dispersion objective.\newline
\paragraph{Clarifying meta-feature construction.}
To make the NSC representation interface more self-contained, we explicitly define a meta-feature as the low-dimensional token obtained by compressing one contiguous segment of the GO-LR-ordered feature axis. Let \(\Pi^\ast\) denote the global feature ordering, \(\{\mathcal{S}_t\}_{t=1}^M\) the contiguous ordered segments, and \(u_t=x^\Pi_{\mathcal{S}_t}\) the subvector of sample \(x\) restricted to segment \(\mathcal{S}_t\). The \(t\)-th meta-feature is then \(z_t=g(u_t)\), where \(g(\cdot)\) is a segment-level pooling or projection operator. In our main NSC-pSP instantiation, \(g(\cdot)\) is implemented by segment-wise PCA projection, producing one scalar token per ordered segment. The final compressed representation is therefore \(Z(x)=(z_1,\ldots,z_M)\), which is passed to the frozen TabPFN-2.5 head. Fig.~\ref{fig:meta-feature} in the main paper illustrates this process: GO-LR first reorders the original features, NSC partitions the ordered axis into contiguous neighborhoods, and each neighborhood is compressed into a meta-feature. Additional details on ordered segmentation, subunit pooling, and meta-feature construction are provided in Sec.~\ref{sec:nsc} in the main paper.\newline
\paragraph{Similarity vs. dissimilarity metric.}
We use \(1-|\mathrm{corr}(i,j)|\) as a dependence-aware dissimilarity so that strongly coupled feature pairs have small distance regardless of sign. Concretely, both strongly positive and strongly negative correlations satisfy \(|\mathrm{corr}(i,j)| \approx 1\), hence \(d_{ij}=1-|\mathrm{corr}(i,j)| \approx 0\). This is the intended behavior for GO-LR: the goal is not to preserve the sign of association, but to place strongly dependent or redundant features into the same local neighborhood before compression. This choice is also consistent with our neuro-inspired motivation. In Sec.~\ref{sec:nsc}, we discuss evidence that dendritic inputs are organized into local subunits rather than summed globally, and that correlated synapses may exhibit local clustering within such compartments. Our algorithmic analogue is therefore that GO-LR first brings strongly coupled features close along the ordered axis, and NSC then pools these local neighborhoods into subunit-level meta-features. In this sense, \(1-|\mathrm{corr}(i,j)|\) should be read as a practical measure of lack of coupling, chosen to support local clustering and subunit-style aggregation, rather than as a broader semantic notion of dissimilarity. \newline
\paragraph{Cross-domain evaluation.}
To evaluate whether GOTabPFN generalizes beyond biomedical HDLSS datasets, we extend the benchmark with 8 additional cross-domain datasets spanning text, face images, camera-sensor data, image features, and RNA-seq measurements. These datasets cover HDLSS, HDHSS, and mixed regimes under the empirical categorization rule adopted from DynaTab~\cite{dynatab} and expanded in Appendix~\ref{app:when_ordering_locality}. This extension is important because recent HDLSS-specific tabular models are still evaluated largely on biomedical benchmarks, with only limited coverage of text, face, or sensor domains. For example, ProtoGate~\cite{protogate} evaluates primarily on 7 biomedical datasets, while LSPIN/LLSPIN~\cite{spin} reports real-world experiments on 3 text and 3 biomedical datasets. Similarly, high-dimensional TabPFN-style extensions remain limited in domain coverage: TabPFN-Wide~\cite{tabpfnwide} uses 4 biomedical datasets, BETA~\cite{beta} includes a mixture of biomedical, text, and face-image datasets, and TuneTables~\cite{tune} is evaluated through the broader TabZilla benchmark~\cite{tabzilla}. We did not identify a clear real-world financial HDLSS dataset suitable for inclusion in this evaluation. The resulting cross-domain suite, summarized in Table~\ref{tab:cross_domain_rebuttal}, therefore broadens the empirical scope of our study while retaining high-dimensional settings where feature ordering and locality-aware compression are relevant.\newline
\begin{table*}[t]
\centering
\caption{\textbf{Additional cross-domain datasets.}
Datasets are categorized using the empirical regime rule in Appendix~\ref{app:when_ordering_locality}, following DynaTab~\cite{dynatab}. Here, \(n\) is the number of instances, \(m\) is the number of features, and \(\rho=m/n\) is the feature-to-sample ratio. CIFAR-10 uses subsampled ResNet-50 image embeddings.}
\label{tab:cross_domain_rebuttal}
\footnotesize
\setlength{\tabcolsep}{4pt}
\renewcommand{\arraystretch}{1.03}
\resizebox{\textwidth}{!}{%
\begin{tabular}{lllrcccc}
\toprule
\textbf{Dataset} & \textbf{Source} & \textbf{Type} & \(\boldsymbol{n}\) & \(\boldsymbol{m}\) & \textbf{\#Classes} & \(\boldsymbol{\rho}\) & \textbf{Category} \\
\midrule
BASEHOCK    & scikit-feature~\cite{hdlsss}\footnotemark[1] & Text (bag-of-words) & 1993  & 4862  & 2          & 2.44   & MixedRegime \\
PCMAC       & scikit-feature~\cite{hdlsss}\footnotemark[1] & Text (bag-of-words) & 1943  & 3289  & 2          & 1.69   & MixedRegime \\
RELATHE     & scikit-feature~\cite{hdlsss}\footnotemark[1] & Text (bag-of-words) & 1427  & 4322  & 2          & 3.03   & MixedRegime \\
orlraws10P  & scikit-feature~\cite{hdlsss}\footnotemark[1] & Face image          & 100   & 10304 & 10         & 103.04 & HDLSS \\
DrivFace-C  & UCI~\cite{drivface}\footnotemark[2]                         & Camera sensor       & 606   & 6400  & 7          & 10.56  & HDLSS \\
DrivFace-R  & UCI~\cite{drivface}\footnotemark[2]                          & Camera sensor       & 606   & 6400  & Reg.       & 10.56  & HDLSS \\
CIFAR-10    & Kaggle~\cite{cifa}\footnotemark[3]                          & Image features      & 11000 & 2048  & 10         & 0.19   & HDHSS \\
Cell Cycle  & NCBI~\cite{cell1}\footnotemark[4]  & RNA-seq             & 1067  & 42728 & 3          & 40.05  & MixedRegime \\
\bottomrule
\end{tabular}%
}
\vspace{0.25em}

\footnotesize
\footnotemark[1] scikit-feature dataset repository: \url{https://jundongl.github.io/scikit-feature/datasets.html}. 
\footnotemark[2] DrivFace dataset: \url{https://archive.ics.uci.edu/dataset/378/drivface}.
\footnotemark[3] Kaggle CIFAR-10 dataset: \url{https://www.kaggle.com/competitions/cifar-10}.
\footnotemark[4] Cell Cycle GEO accession: \url{https://www.ncbi.nlm.nih.gov/geo/query/acc.cgi?acc=GSE146773}. 
\textbf{Categorization rule.}
A dataset is labeled \textbf{HDLSS} if \(m>1000\), \(n<1000\), and \(\rho>2\);
\textbf{HDHSS} if \(m>1000\), \(n>10^4\), and \(0.005<\rho\le2\);
\textbf{LDHSS} if \(m\le100\), \(n>10^4\), and \(\rho\le0.01\);
\textbf{LDLSS} if \(m\le100\), \(n\le1000\), and \(\rho\le0.05\);
and \textbf{MixedRegime} otherwise.
\vspace{-0.8em}
\end{table*}
\paragraph{Cross-domain dominance.}
Fig.~\ref{fig:gotabpfn_signed_margin} shows that GOTabPFN generalizes beyond its primary HDLSS target regime to a broader cross-domain benchmark. Across the 8 additional datasets, GOTabPFN ranks first on 7/8 datasets, with positive margins over the strongest competing method on ORL (+0.80), BAS (+0.07), PCM (+0.52), Cell Cycle (+1.27), CIFAR-10 (+0.30), DrivFace-R (+0.0043), and DrivFace-C (+1.15). The only exception is RELATHE, where GOTabPFN remains competitive and trails the best baseline by only 0.55 points. These results indicate that, even with limited tuning, the GO-LR+NSC pipeline remains highly effective on HDLSS-style datasets while also preserving competitive performance in adjacent high-dimensional and mixed-regime settings. The larger margins on ORL, Cell Cycle, and DrivFace-C further suggest that locality-aware feature ordering and compression are particularly beneficial when the data retain strong high-dimensional structure.\newline
\begin{figure}[t]
    \centering
    \includegraphics[width=0.92\linewidth]{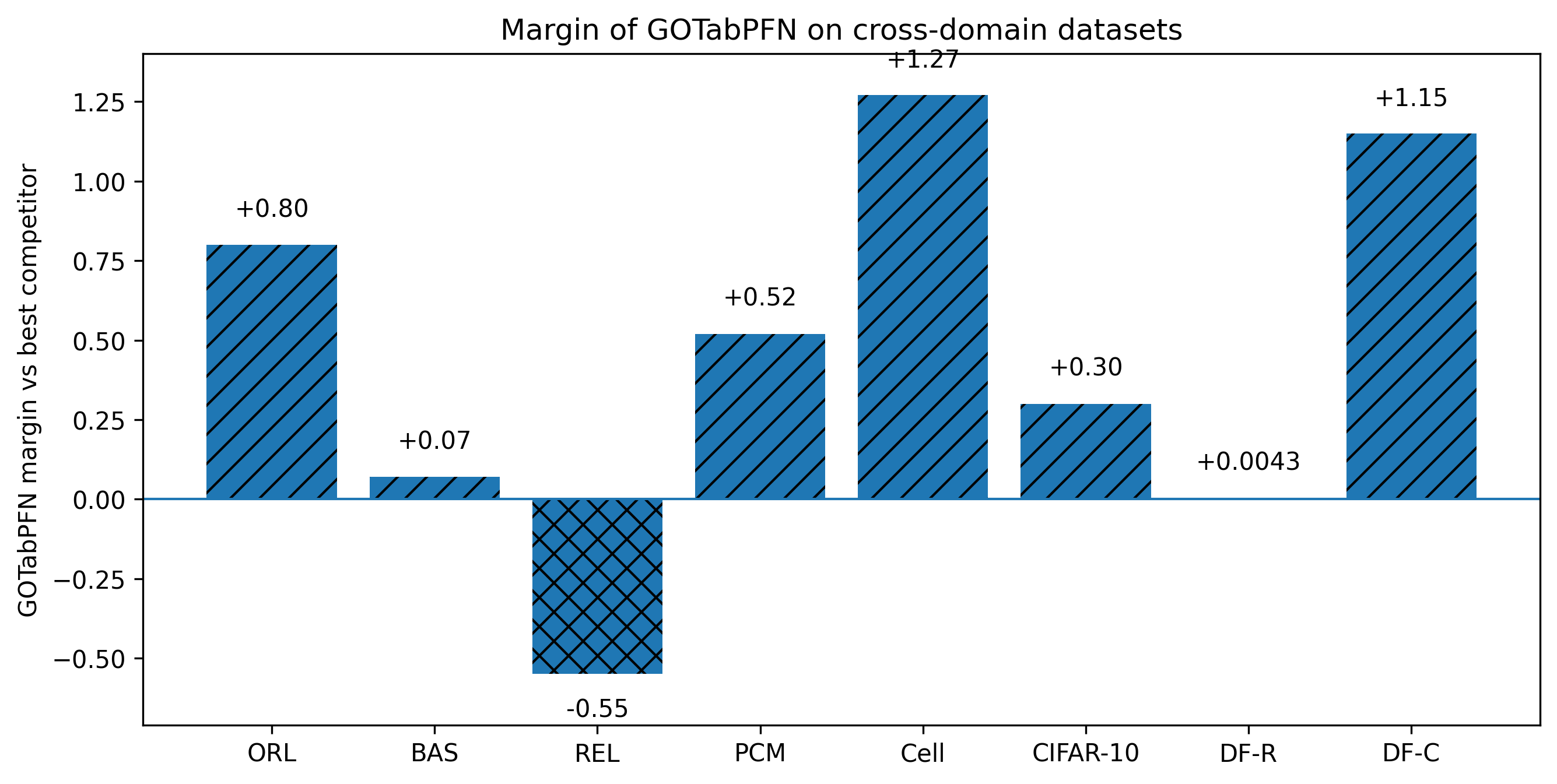}
    \caption{\textbf{Signed margin of GOTabPFN on cross-domain datasets.}
    Positive bars show GOTabPFN's margin over the runner-up when it ranks first; the negative bar shows how far it trails the best baseline when it does not rank first. GOTabPFN ranks first on 7/8 additional datasets, with especially strong margins on ORL, Cell Cycle, and DrivFace-C, and remains competitive on RELATHE.}
    \label{fig:gotabpfn_signed_margin}
    \vspace{-0.8em}
\end{figure}
\paragraph{Clarifying tuning fairness.}
Our tuning protocol follows a distinction that is common in recent tabular-learning practice and is summarized in Fig.~\ref{fig:tuning-regime-venn} in Appendix~\ref{morerelated}. On one side are PFN/ICL-style tabular foundation models, which are typically used close to off-the-shelf because much of the modeling burden is absorbed during pre-training. For example, the official TabICL~\cite{tabicl} repository states that TabICL does not require preprocessing or hyperparameter tuning,\footnote{TabICL official repository: \url{https://github.com/soda-inria/tabicl}.} and the official TabDPT~\cite{tabdpt} repository similarly describes TabDPT as an ICL-based tabular foundation model that generalizes to new tasks without additional training or hyperparameter tuning.\footnote{TabDPT official repository: \url{https://github.com/layer6ai-labs/TabDPT-inference}.} On the other side are conventional tuned tabular learners, such as TabR~\cite{tabr}, TabM~\cite{tabm}, RealMLP~\cite{realmlp}, XGBoost~\cite{xgb}, CatBoost~\cite{catboost}, and LightGBM~\cite{lgbm}, whose standard use often involves dataset-specific hyperparameter search. GOTabPFN naturally lies between these two regimes: the GO-LR+NSC front-end is dataset-adaptive and therefore tunable, while the downstream TabPFN-2.5~\cite{tabpfn2.5} predictor remains a frozen pre-trained backbone that is neither retrained nor structurally modified. Thus, the substantive dataset-specific adaptation in GOTabPFN occurs only in the representation interface before TabPFN inference, not in the foundation-model backbone itself. Regarding the TabPFN seed in Table~\ref{tab:hdlss-hparams}, this seed is not a trainable parameter and does not modify the backbone weights; it is a fixed inference-time configuration selected from the same predefined set under the same outer Optuna~\cite{b35} study as the front-end hyperparameters (see Appendix~\ref{tabpfn} for TabPFN seed sensitivity). Therefore, GOTabPFN should be viewed as a hybrid tunable front-end attached to a frozen PFN-style predictor, rather than as a fully tuned end-to-end tabular learner. See Table~\ref{tab:model_sources} for the source URLs of baselines.\newline
\paragraph{Evaluation scope and statistical validation.}
We provide a consolidated view of the expanded benchmark and statistical analyses supporting our empirical conclusions. The main paper reports results on the original 8 HDLSS datasets in Sec.~\ref{exp}, including the top-model summary in Table~\ref{tab1:hdlss-top-models}; Appendix~\ref{app:comp} further provides the full 55-baseline comparison in Table~\ref{tab:hdlss-top-models}. To broaden the benchmark beyond the original biomedical-heavy HDLSS setting, we add 8 cross-domain datasets spanning text, face-image, camera-sensor, image-feature, and RNA-seq domains, with results summarized in Table~\ref{tab:icml_rebuttal_acc}. Thus, the final evaluation covers both the targeted HDLSS regime and a broader 16-dataset cross-domain setting. For statistical validation, Appendix~\ref{ab5} provides an expanded significance analysis: Fig.~\ref{fig:expanded16_cd} and Table~\ref{tab:expanded16_avg_rank} show that GOTabPFN achieves the best average rank on the 16-dataset benchmark; Table~\ref{tab:expanded16_friedman} reports a significant omnibus Friedman test across the 9-method comparison; and Tables~\ref{tab:expanded16_holm_unified} and~\ref{tab:significance_16datasets} show that GOTabPFN remains significantly better than each strong repeated baseline after Holm correction. We also contextualize our evaluation protocol relative to closely related HDLSS and high-dimensional tabular studies. Prior HDLSS-specific work often relies on small but carefully curated benchmark suites and reports rank-based summaries to compare methods across heterogeneous datasets. For example, ProtoGate~\cite{protogate} evaluates on 7 biomedical HDLSS datasets against 16 baselines and reports average rank as a primary aggregate measure, while LSPIN/LLSPIN~\cite{spin} evaluates on 6 real-world high-dimensional datasets, including 3 text and 3 biomedical datasets, and summarizes performance using median rank. Following this established practice, we report average ranks and statistical tests, but we further expand the evidence by evaluating GOTabPFN against 55 baselines on the original 8 HDLSS datasets and against the strongest repeated baselines on an expanded 16-dataset benchmark. Overall, the conclusions are not based only on the original 8-dataset rank summary, but are supported by detailed 55-baseline HDLSS comparisons, an additional 8-dataset cross-domain evaluation, and both omnibus and pairwise statistical tests on the expanded 16-dataset benchmark.\newline
\paragraph{GOTabPFN in extreme dimensionality.}
GOTabPFN is evaluated across a broad high-dimensional range, with feature counts spanning from \(m=2{,}000\) at the lower end of our HDLSS benchmark to \(m=42{,}728\) on the Cell Cycle RNA-seq dataset with \(n=1067\) samples. On this largest dataset, GOTabPFN remains fully operational and achieves \(79.94\pm2.53\) accuracy, \(92.36\pm1.36\) AUC, and \(79.95\pm2.51\) macro-F1. These results show that the GO-LR+NSC representation interface scales beyond moderate HDLSS feature counts and remains effective in substantially larger transcriptomic feature spaces, where direct use of TabPFN-style predictors would otherwise be constrained by the extreme dimensionality.\newline
\paragraph{Theoretical grounding, novelty, and HDLSS relevance.}
The theoretical results in Sec.~\ref{sec:go_global} are not intended as isolated complexity-theoretic contributions; rather, they formalize why GO-LR is a principled ordering mechanism. Specifically, the MinLA formulation identifies the objective that GO-LR approximates, the NP-hardness result explains why exact scalable optimization is unrealistic, and the TSP-style path construction motivates an efficient surrogate initialization before local MinLA-based refinement. The overall contribution is therefore best understood as an integrated HDLSS-oriented framework: GO-LR provides a theoretically grounded feature ordering objective, NSC converts the ordered axis into stable meta-features through structured local compression, and the resulting representation interface enables a frozen TabPFN-style backbone to operate effectively beyond its native feature counts' limits. This integration yields a new HDLSS-oriented tabular foundation model interface in which feature ordering, locality-preserving compression, and frozen TabPFN-style inference are jointly aligned to overcome the dimensionality bottleneck of existing tabular foundation models.\newline
\paragraph{Fine-tuning as future work.}
An important future direction is to pretrain or fine-tune a TabPFN-style backbone directly on structured representations produced by GO-LR+NSC. In principle, this could be done by generating large collections of synthetic HDLSS-style tasks, applying graph-guided ordering and subunit compression, and then adapting the backbone to operate natively in the resulting meta-feature space. A further extension would be to initialize from the existing TabPFN checkpoint and pretrain or fine-tune the full GO-LR+NSC+TabPFN pipeline end-to-end, so that the entire model becomes a pretrained HDLSS-oriented foundation model rather than a tuned front-end attached to a frozen predictor. Such a model could potentially reduce or eliminate dataset-specific tuning at inference time, because the ordering, compression, and prediction components would be jointly adapted during pretraining. However, this would shift the focus from representation-side adaptation of an existing frozen backbone to the development of a new HDLSS-specific tabular foundation model. We therefore view backbone adaptation or full-pipeline pretraining on GO-LR+NSC representations as a promising but distinct direction beyond the present scope.
\begin{table*}[t]
\caption{List of 55 baseline models and their source URLs.}
\label{tab:model_sources}
\centering
\scriptsize
\renewcommand{\arraystretch}{1.05}
\setlength{\tabcolsep}{2pt}

\begin{tabularx}{\textwidth}{@{}l>{\raggedright\arraybackslash}X@{\hspace{6pt}}l>{\raggedright\arraybackslash}X@{}}
\toprule
\textbf{Model} & \textbf{Source URL} & \textbf{Model} & \textbf{Source URL} \\
\midrule
Naive Bayes & \url{https://scikit-learn.org/stable/supervised_learning.html} &
AutoInt & \url{https://github.com/OpenTabular/DeepTab} \\

KNN & \url{https://scikit-learn.org/stable/supervised_learning.html} &
TabR & \url{https://github.com/OpenTabular/DeepTab} \\

SVM & \url{https://scikit-learn.org/stable/supervised_learning.html} &
ProtoGate & \url{https://github.com/SilenceX12138/ProtoGate} \\

Decision Tree & \url{https://scikit-learn.org/stable/supervised_learning.html} &
LSPIN & \url{https://github.com/jcyang34/lspin} \\

Lasso & \url{https://scikit-learn.org/stable/supervised_learning.html} &
LLSPIN & \url{https://github.com/jcyang34/lspin} \\

MLP & \url{https://scikit-learn.org/stable/supervised_learning.html} &
INVASE & \url{https://github.com/vanderschaarlab/INVASE} \\

1-D CNN & \url{https://github.com/harryjdavies/Python1D_CNNs} &
L2X & \url{https://github.com/Jianbo-Lab/L2X} \\

Random Forest & \url{https://scikit-learn.org/stable/supervised_learning.html} &
Mambular & \url{https://github.com/OpenTabular/DeepTab} \\

AdaBoost & \url{https://scikit-learn.org/stable/supervised_learning.html} &
DANets & \url{https://github.com/manujosephv/pytorch_tabular} \\

GBM & \url{https://scikit-learn.org/stable/supervised_learning.html} &
STG & \url{https://github.com/runopti/stg} \\

LGBM & \url{https://github.com/microsoft/LightGBM} &
REAL-X & \url{https://github.com/rajesh-lab/realx} \\

XGBoost & \url{https://github.com/dmlc/xgboost} &
TabM & \url{https://github.com/OpenTabular/DeepTab} \\

CatBoost & \url{https://github.com/catboost/catboost} &
ModernNCA & \url{https://github.com/OpenTabular/DeepTab} \\

TabNet & \url{https://github.com/dreamquark-ai/tabnet} &
Trompt & \url{https://github.com/OpenTabular/DeepTab} \\

TabTransformer & \url{https://github.com/lucidrains/tab-transformer-pytorch} &
TabulaRNN & \url{https://github.com/OpenTabular/DeepTab} \\

FT-Transformer & \url{https://github.com/lucidrains/tab-transformer-pytorch} &
MambAttention & \url{https://github.com/OpenTabular/DeepTab} \\

TabSeq & \url{https://github.com/zadid6pretam/TabSeq} &
MambaTab & \url{https://github.com/OpenTabular/DeepTab} \\

TANGOS & \url{https://github.com/OpenTabular/DeepTab} &
NDTF & \url{https://github.com/OpenTabular/DeepTab} \\

NODE & \url{https://github.com/OpenTabular/DeepTab} &
ENODE & \url{https://github.com/OpenTabular/DeepTab} \\

SAINT & \url{https://github.com/OpenTabular/DeepTab} &
ResNetTabular & \url{https://github.com/OpenTabular/DeepTab} \\

DeepFM & \url{https://github.com/shenweichen/DeepCTR-Torch} &
CategoryEmbedding & \url{https://github.com/manujosephv/pytorch_tabular} \\

DCN & \url{https://github.com/shenweichen/DeepCTR-Torch} &
TANDEM & \url{https://github.com/erelnaor3/tandem} \\

TabICL & \url{https://github.com/soda-inria/tabicl} &
LoCalPFN & \url{https://github.com/layer6ai-labs/LoCalPFN} \\

BETA & \url{https://github.com/LAMDA-Tabular/BETA} &
TuneTables & \url{https://github.com/penfever/TuneTables} \\

TabPFN-Wide & \url{https://github.com/pfeiferAI/TabPFN-Wide} &
RealMLP & \url{https://github.com/dholzmueller/pytabkit} \\

MLP-PLR & \url{https://github.com/dholzmueller/pytabkit} &
TabDPT & \url{https://github.com/layer6ai-labs/TabDPT-inference} \\

TabPFN v1 & \url{https://github.com/PriorLabs/TabPFN} &
TabPFN v2 & \url{https://github.com/PriorLabs/TabPFN} \\

TabPFN-2.5 & \url{https://github.com/PriorLabs/TabPFN} &
 &  \\

\bottomrule
\end{tabularx}
\end{table*}

\end{document}